\renewcommand{\paragraph}{%
  \@startsection{paragraph}{4}%
  {\z@}{2.25ex \@plus 1ex \@minus .2ex}{-1em}%
  {\normalfont\normalsize\bfseries}%
}
\newcommand\restr[2]{{
  \left.\kern-\nulldelimiterspace 
  #1 
  \vphantom{\big|} 
  \right|_{#2} 
  }}
\newtheorem{theorem}{Theorem}[section]
\newtheorem{lemma}[theorem]{Lemma}
\newtheorem{proposition}[theorem]{Proposition} 
\newtheorem{corollary}[theorem]{Corollary} 
\theoremstyle{definition}
\newtheorem{definition}[theorem]{Definition}
\newtheorem{question}[theorem]{Question}
\theoremstyle{plain}
\DeclareMathOperator{\Var}{Var}
\DeclareMathOperator{\Span}{span}
\newcommand{\norm}[1]{\left \lVert #1 \right \rVert}
\mathchardef\mhyphen="2D
\newtheoremstyle{break}
  {\topsep}{\topsep}%
  {}{}%
  {\bfseries}{}%
  {\newline}{}%
\theoremstyle{break}
\DeclareMathOperator*{\argmin}{argmin}
\newcommand{\RR}{\mathbb{R}}
\newcommand{\NN}{\mathbb{N}}
\DeclareMathOperator{\supp}{supp}
\DeclareMathOperator{\sign}{sign}
\newcommand{\EE}{\mathbb{E}}
\DeclareMathOperator{\rspan}{rspan} 
\DeclareMathOperator{\vspan}{span} 
\DeclareMathOperator{\Proj}{Proj}
\newcommand{\Sigmahat}{\hat{\Sigma}}
\newcommand{\EEE}{\hat{\mathbb{E}}}
\DeclareMathOperator{\dist}{dist}
\newcommand{\pinv}{\dagger}
\newcommand{\Thetat}{\tilde{\Theta}}
\DeclareMathOperator*{\argmax}{arg\,max}
\DeclareMathOperator{\cond}{cond}
\DeclareMathOperator{\poly}{poly}
\DeclareMathOperator{\tw}{tw}
\DeclareMathOperator{\bone}{\mathbbm{1}}
\newcommand{\note}[1]{{\color{red} #1}}
\theoremstyle{remark}
\newtheorem{remark}{Remark}
\newtheorem{example}{Example}
\newcommand{\ignore}[1]{{}}
\title{On the Power of Preconditioning in Sparse Linear Regression}
\author{Jonathan A. Kelner\thanks{\texttt{kelner@mit.edu}. This work was supported in part by NSF Large CCF-1565235, NSF Medium CCF-1955217, and NSF TRIPODS 1740751.} \\ MIT \and Frederic Koehler\thanks{\texttt{fkoehler@mit.edu}. This work was supported in part by NSF CAREER Award CCF-1453261, NSF Large CCF-1565235, A. Moitra's ONR Young Investigator Award and E. Mossel's Vannevar Bush Faculty Fellowship ONR-N00014-20-1-2826.} \\ MIT \and Raghu Meka\thanks{\texttt{raghum@cs.ucla.edu}. This work was supported in part by NSF CAREER Award CCF-1553605 and NSF Small CCF-2007682} \\ UCLA \and Dhruv Rohatgi\thanks{\texttt{drohatgi@mit.edu}. This work was supported in part by NSF Large CCF-1565235, NSF Medium CCF-1955217, and the MIT UROP Office.} \\ MIT}
\begin{document}

\maketitle
\begin{abstract}
Sparse linear regression is a fundamental problem in high-dimensional statistics, but strikingly little is known about how to efficiently solve it without restrictive conditions on the \emph{design matrix}. We consider the (correlated) random design setting, where the covariates are independently drawn from a multivariate Gaussian $N(0,\Sigma)$, for some $n \times n$ positive semi-definite matrix $\Sigma$, and seek estimators $\hat{w}$ minimizing $(\hat{w}-w^*)^T\Sigma(\hat{w}-w^*)$, where $w^*$ is the $k$-sparse ground truth. Information theoretically, one can achieve strong error bounds with only $O(k \log n)$ samples for arbitrary $\Sigma$ and $w^*$; however, no efficient algorithms are known to match these guarantees even with $o(n)$ samples, without further assumptions on $\Sigma$ or $w^*$.

Yet there is little evidence for this gap in the random design setting: computational lower bounds are only known for worst-case design matrices. To date, random-design instances (i.e. specific covariance matrices $\Sigma$) have only been proven hard against the Lasso program and variants. More precisely, these ``hard" instances can often be solved by Lasso after a simple change-of-basis (i.e. preconditioning).

In this work, we give both upper and lower bounds clarifying the power of preconditioning as a tool for solving sparse linear regression problems. On the one hand, we show that the preconditioned Lasso can solve a large class of sparse linear regression problems nearly optimally: it succeeds whenever the \emph{dependency structure} of the covariates, in the sense of the Markov property, has low treewidth --- even if $\Sigma$ is highly ill-conditioned. This upper bound builds on ideas from the wavelet and signal processing literature. As a special case of this result, we give an algorithm for sparse linear regression with covariates from an autoregressive time series model, where we also show that the (usual) Lasso provably fails. 

On the other hand, we construct (for the first time) random-design instances which are provably hard even for an optimally preconditioned Lasso. In fact, we complete our treewidth classification by proving that for \emph{any} treewidth-$t$ graph, there exists a Gaussian Markov Random Field on this graph such that the preconditioned Lasso, with any choice of preconditioner, requires $\Omega(t^{1/20})$ samples to recover $O(\log n)$-sparse signals when covariates are drawn from this model.
\end{abstract}

\thispagestyle{empty}
\newpage
\pagenumbering{arabic} 

\section{Introduction}\label{section:introduction}
In this paper, we study the fundamental statistical problem of \emph{sparse linear regression with (correlated) random design}. In the simplest form of this problem, the learning algorithm is given access to $m$ independent and identically distributed samples $(X_1,Y_1),\ldots,(X_m,Y_m)$ of the form
\begin{equation}\label{eqn:linear-model}
Y_i = \langle w^*, X_i \rangle + \xi_i 
\end{equation}
where each \emph{covariate} $X_i \sim N(0,\Sigma)$ is a Gaussian random vector in $\mathbb{R}^n$, the noise $\xi_i \sim N(0,\sigma^2)$ is independent, and the true \emph{coefficient vector} $w^*$ is $k$-sparse, i.e. $w^*$ has at most $k$ nonzero entries. The goal of the learning algorithm is to output a vector $w$ such that the \emph{out-of-sample prediction error}
\begin{equation}\label{eqn:oos-prediction-error}
\EE[(Y_0 - \langle w, X_0 \rangle)^2] = (w-w^*)^T \Sigma (w-w^*) + \sigma^2
\end{equation}
is as small as possible (i.e. close to $\sigma^2$, the error achieved by $w^*$), where $(X_0,Y_0)$ is a fresh sample from the model.

There is a rich and vast body of work on sparse linear regression with $\ell_1$-regularized approaches such as the Lasso and Dantzig selector (see for example \cite{tibshirani1996regression,candes2007dantzig,bickel2009simultaneous,van2009conditions}) ubiquitous in many domains and applied sciences (see e.g. \cite{levy1981reconstruction,santosa1986linear,wu2009genome,fan2011sparse}). The problem is also extensively studied in the signal processing, compressed sensing and sketching communities (e.g. \cite{donoho1989uncertainty,blumensath2009iterative,baraniuk2010model,hassanieh2012simple,candes2005decoding,donoho2006compressed,candes2006robust,rudelson2006sparse}) where the measurements $X$ are not necessarily Gaussian but may come from other structured distributions. 

It is well known that {\em information theoretically} (see Section \ref{section:preliminaries}), it is possible to achieve error $(1 + \epsilon)\sigma^2$
with $m = O(k \log(n)/\epsilon)$ samples; 
note that the dependence on the ambient dimension $n$ is logarithmic and that there is no dependence on the covariance matrix $\Sigma$. Unfortunately, despite a tremendous amount of work on sparse linear regression, we still do not know an efficient algorithm that for general $(\Sigma,w^*)$ can get a small error (say $O(\sigma^2)$) even with up to $o(n)$ many samples. This limitation holds even when there is no noise (i.e., $\sigma = 0$).

\ignore{
As we explain in Preliminaries,
this is the same goal as estimating the vector $w^*$ in the intrinsic \emph{Mahalanobis norm}
\[ \|w - w^*\|_{\Sigma}^2 := \langle w - w^*, \Sigma (w - w^*) \rangle. \]}



The classical algorithmic results for this problem assume that the covariates satisfy some kind of well-conditioning property such as \emph{incoherence} \cite{donoho1989uncertainty}, or a variant such as
the \emph{Restricted Isometry Property} (RIP) \cite{candes2005decoding},  the \emph{Restricted Eigenvalue Condition} \cite{bickel2009simultaneous}, or the \emph{Compatibility Condition} \cite{van2009conditions}, and achieve up to constants the optimal statistical guarantee described above.
See \cite{van2009conditions} for an extensive discussion of these assumptions\footnote{In the fixed design setting, these conditions are placed on the empirical covariance matrix (or equivalently, the design matrix). The results of \cite{raskutti2010restricted,zhou2009restricted} shows the analogous conditions on the population covariance are inherited by the empirical covariance matrix 
 in the random design setting.}. 
The simplest to state version of these conditions, the RIP property, requires that all small submatrices of the covariance matrix are spectrally close to the identity matrix. When $\Sigma$ is the identity matrix or has a bounded \emph{condition number}\footnote{The ratio of the largest eigenvalue to the smallest eigenvalue.}, the restricted eigenvalue condition holds, and we can solve sparse linear regression with $O(k \log n)$ samples \cite{raskutti2010restricted}. 
However, the above methods leave wide open what happens for general $\Sigma$. Since the population covariance $\Sigma$ is given to us by nature in most statistical applications (for example, if the covariates $X_i$ correspond to answers to survey questions, or observations from a complex scientific experiment), what happens for general $\Sigma$ is a question of significant practical interest.
This was one of the main motivations for studying weaker versions of the RIP property such as the Restricted Eigenvalue condition (see e.g. discussion in \cite{bickel2009simultaneous,raskutti2010restricted,jain2014iterative}) and compatibility condition \cite{van2009conditions,van2013lasso}, and understanding how well the Lasso performs (well or not) with correlated design matrices remains an active area of research (see e.g. \cite{dalalyan2017prediction,van2013lasso,koltchinskii2014l_1,zhang2017optimal,bellec2018noise}).

While there are a few exceptions, such as settings where submodularity holds (e.g. \cite{das2008algorithms,das2011submodular,elenberg2018restricted}), we do not have good algorithms for dealing with ill-conditioned $\Sigma$.
On the other hand, the state-of-the-art computational lower bounds for sparse linear regression \cite{natarajan1995sparse,zhang2014lower,foster2015variable,har2016approximate} apply only to the \emph{fixed-design} setting with worst-case vectors $X_i$. It's unclear that extending these results to the random design setting is even possible, given 
various barriers to proving hardness of average case problems (see, for example, \cite{applebaum2008basing} \footnote{This paper discusses obstacles to improper learning; however, in random-design sparse linear regression where $\Sigma$ is known, an improper learning algorithm can be converted into a proper learning algorithm (by using the former to generate artificial samples, and then running (ordinary) linear regression).}). Indeed, in the random design setting, the state-of-the-art lower bounds are simply against the Lasso \cite{wainwright2009sharp,foygelfast} or related classes of algorithms, such as linear regression with a coordinate-separable regularizer \cite{zhang2017optimal} or local search procedures\footnote{We note that this last work is focused on understanding a constant factor gap in the isotropic setting, a related but fairly different goal vs. understanding the landscape for general $\Sigma$.} \cite{gamarnik2017sparse}.  Such lower bounds by no means imply that the instances are computationally hard: even if the covariance matrix is ill-conditioned and Lasso fails, the sparse linear regression problem may still be tractable. Indeed, there are numerous examples \cite{foygelfast, dalalyan2017prediction, zhang2017optimal, kelner2019learning} of hard instances for Lasso which become solvable after a simple \emph{change-of-basis}, and (to our knowledge) no examples of random designs which provably cannot be solved by Lasso after such a change-of-basis.

\subsection{Preconditioned Lasso}
\emph{Preconditioning} is a powerful and extremely well-studied technique for solving linear systems. 
In that literature, there are two types of preconditioning: from the left, or from the right \cite{saad2003iterative}.

In the vast literature on $\ell_1$ methods for sparse linear regression, 
we are aware of no works that systematically study the power of ``right" preconditioning, i.e. an initial change-of-basis in parameter space (see Section~\ref{section:related} for further discussion, including the substantial differences between ``left" and ``right" preconditioning in sparse linear regression). Are there natural classes of sparse linear regression problems which can be solved by
a preconditioned $\ell_1$ method but not by classical methods? 
Are there examples of designs which provably cannot be helped by appropriate preconditioning? In this paper, we initiate a systematic study of these questions. To formalize the notion of an initial change-of-basis, we define the following large and natural class of convex programs, which we call the \emph{preconditioned Lasso}.

\begin{definition}[Preconditioned Lasso]\label{definition:preconditioned-lasso}
Let $S \in \RR^{n \times s}$ be a matrix. The \emph{$S$-preconditioned Lasso} on samples $(X_i,Y_i)_{i=1}^m$ with tuning parameter $\lambda$ is the program 
\begin{equation} 
\argmin_{w \in \RR^n} \norm{Y - Xw}_2^2 + \lambda\norm{S^T w}_1
\label{eq:preconditioned-lasso}
\end{equation}
where $X : m \times n$ is the design matrix with rows $X_i$.
Taking $\lambda \to 0$, as is done for noiseless samples, yields the \emph{$S$-preconditioned Basis Pursuit (BP)}:
\begin{equation}
\argmin_{w \in \RR^n: Xw = Y} \norm{S^T w}_1.
\label{eq:preconditioned-bp}
\end{equation}
Programs~\ref{eq:preconditioned-lasso} and~\ref{eq:preconditioned-bp} are convex, so can be solved in time $\poly(n,s,m)$. If $S$ is the identity matrix, then they 
are just the well-studied Lasso and Basis Pursuit programs 
(see, e.g. \cite{wainwright2019high}). 
\end{definition}

Program~\ref{eq:preconditioned-lasso} has been previously studied in the literature, under various names including the \emph{generalized Lasso} \cite{tibshirani2011solution}. However, in most applications of the generalized Lasso the motivation is different: the matrix $S$ is introduced into the program because the signal is not sparse in the original basis, but in a different one (e.g. for piecewise constant signals, $S$ is chosen to give the \emph{total variation} norm which penalizes the discrete derivative \cite{tibshirani2011solution,needell2013stable}).
In contrast, we are only interested in recovering signals \emph{sparse in the original basis}, and we seek to choose $S$ based on the design matrix to improve the performance of the Lasso.
To avoid confusion, we therefore refer to this program as the ``preconditioned Lasso'' in this paper.
This should not be confused with a different and largely unrelated 
terminology introduced in prior work \cite{wauthier2013comparative, jia2015preconditioning}; we expand on this distinction in Section~\ref{section:related}.

As the name suggests, the above class of programs essentially corresponds to solving the Lasso after first performing an appropriate change of basis. Indeed, if $S$ is an $n \times n$ invertible matrix, then the $S$-preconditioned Lasso is equivalent to Lasso with a (right) preconditioned design:
\[ \argmin_{u \in \RR^n} \|Y - X(S^T)^{-1} u\|_2^2 + \lambda \|u\|_1. \]
This is a natural class since, as previously remarked, the ability to change basis is powerful enough to fix the Lasso in several examples where it is otherwise known to fail (for instance, see examples in \cite{foygelfast, dalalyan2017prediction, zhang2017optimal, kelner2019learning}). 

As we will explain further, in this paper we present both 
upper and lower bounds for this class of programs. Our results are closely tied to a standard notion of \emph{graphical structure} for the covariate distribution. Before explaining the conditions in general, we start with a motivating example: estimating a sparse linear functional of a simple random walk, studied in \cite{koltchinskii2014l_1}.


\subsection{A motivating example: Random walk/Brownian motion}\label{section:introduction-example}
Suppose we have a sequence of random variables $R_1,\dots,R_n$ where each $R_i$ is generated from $R_{i-1}$ and some independent noise. That is, $Z_1,\dots,Z_n \sim N(0,1)$ are independent Gaussian random variables, with $R_1 = Z_1$ and $$R_i = R_{i-1} + Z_i$$ for $i>1$. This describes a simple random walk, one of the simplest forms of time-series data. If each covariate vector $X_i$ is an i.i.d. copy of $(R_1,\ldots,R_n)$, then the covariance matrix $\Sigma$ 
is just $\Sigma_{ij} = \min(i,j)$. More importantly, $\Sigma$ is quite ill-conditioned and existing guarantees (e.g., restricted eigenvalue etc.) do not
seem useful in this scenario \cite{koltchinskii2014l_1}. In the work \cite{koltchinskii2014l_1}, the authors gave upper bounds on the performance of the Lasso for this version of sparse linear regression, which did not match the performance of the information-theoretically optimal algorithm. 

One of the technical innovations in the present paper is a general and relatively easy-to-use method for proving \emph{lower bounds} on the performance of the Lasso in random design problems. As a simple application of our general result, we clarify the behavior of the Lasso in this model by proving a strong negative result:
\begin{theorem}[Informal version of Theorem~\ref{theorem:random-walk-lb}]\label{thm:random-walk-lb-intro}
For any $k \geq 2$, there is a $k$-sparse signal $w^* \in \RR^n$ such that the Lasso and Basis Pursuit
require at least $m = \Omega(\sqrt{n})$ samples to exactly recovery $w^*$ from noiseless observations $(X_i, Y_i)_{i = 1}^m$, when the covariates $X_i$ are independently drawn from the Gaussian random walk $N(0,\Sigma)$ and $Y_i = \langle w^*, X_i \rangle$. The same holds if the coordinates of the covariates are normalized to all have variance $1$.
\end{theorem}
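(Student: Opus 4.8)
The plan is to certify failure of Basis Pursuit by exhibiting, with high probability over $X$, a nonzero element of $\ker X$ that violates the $\ell_1$ null-space property at $w^*=e_1-e_2$ (support $S=\{1,2\}$, signs $(+,-)$); the Lasso claim then follows for every tuning $\lambda$ — in particular the $\lambda\to 0$ limit relevant to noiseless data — from the same certificate via the KKT conditions, and the whole argument is an instance of the paper's general random-design Lasso lower bound. The first step is a change of variables: writing $\Sigma=LL^\top$ with $L$ the lower-triangular all-ones matrix, each covariate is $X_i=Lg_i$ for i.i.d.\ $g_i\sim N(0,I_n)$, so $\langle h,X_i\rangle=\langle L^\top h,g_i\rangle$ and hence $\ker X=D^\top W$, where $D=L^{-1}$ is the differencing operator ($(D^\top y)_i=y_i-y_{i+1}$, $y_{n+1}:=0$) and $W$ is a uniformly random $(n-m)$-dimensional subspace. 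Thus $h\in\ker X$ is $\ell_1$-concentrated on $\{1,2\}$ exactly when $h=D^\top y$ for some $y\in W$ whose total variation $\sum_i|y_i-y_{i+1}|$ (including $|y_n|$) is concentrated, in the signed sense dictated by the $\ell_1$-descent cone of $w^*$, at the two break-points $1,2$.

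With this reduction I would invoke the escape-through-the-mesh (approximate kinematic) principle: a uniformly random subspace of dimension $n-m$ meets a closed convex cone $K$ nontrivially with high probability once $\delta(K)\gtrsim m+O(\sqrt n)$, where $\delta$ is the statistical dimension. It therefore suffices to show that the cone
\[ K=\Big\{\,y\in\RR^n:\ (y_1+y_3-2y_2)+\sum_{i=3}^{n-1}|y_i-y_{i+1}|+|y_n|\le 0\,\Big\}, \]
which is the image under $(D^\top)^{-1}$ of the descent cone of $\|\cdot\|_1$ at $w^*$, has statistical dimension $\delta(K)=\Omega(\sqrt n)$. This is the technical heart, and the step I expect to be the main obstacle.

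To lower bound $\delta(K)=n-\EE_g\dist(g,K)^2$ I would exhibit, for a typical $g\sim N(0,I_n)$, a point $y\in K$ capturing $\Omega(\sqrt n)$ of $g$'s energy. The construction is multiscale and is the mirror image of the wavelet preconditioner behind the paper's upper bound: set $y_1=g_1$, $y_3=g_3$, place a single \emph{spike} $y_2=+J$ at coordinate $2$ — this makes the linear term $\approx -2J$ and so opens a total-variation budget $B\approx 2J$ on $\{3,\dots,n\}$ — and on $\{3,\dots,n\}$ let $y$ be the best approximation to $g$ among sequences of total variation at most $B$. By Brownian self-similarity, a unit-norm Haar vector at scale $\ell$ costs only $\approx \ell^{-1/2}$ in total variation while capturing $\Theta(1)$ of $g$'s energy, so spending the budget greedily from the coarsest scales down captures $\asymp n^{1/3}B^{2/3}$ of the energy of $g|_{\{3,\dots,n\}}$, against a spike cost of only $\asymp J^2\asymp B^2$. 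Optimizing $n^{1/3}B^{2/3}-B^2$ gives the optimal budget $B\asymp n^{1/4}$ and net captured energy $\asymp \sqrt n$, hence $\delta(K)=\Omega(\sqrt n)$. Turning this sketch into a proof requires a clean construction of the admissible Haar-type directions, a verification that the greedy allocation respects the budget and captures the claimed energy for typical $g$ (with a quantitative failure probability), and the routine checks that $y\in K$ and that the bound survives under the expectation.

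The remaining items are lighter. The Lasso claim reduces to Basis Pursuit since noiseless data forces the optimal $\lambda\to 0$, or directly: the $\ker X$-certificate exhibits a feasible point with strictly smaller penalized objective for every small $\lambda>0$, while for large $\lambda$ the minimizer is near $0\ne w^*$. For variance-normalized covariates one has $\widetilde\Sigma_{ij}=\min(i,j)/\sqrt{ij}$, and conjugating by $\diag(1/\sqrt i)$ replaces the total-variation functional above by a diagonally reweighted one with weights $\sqrt i$; I would check that this only perturbs the per-scale budget/energy exchange by constant (or at worst polylogarithmic) factors, so that $\delta(K)=\Omega(\sqrt n)$ and the sample-complexity lower bound persist. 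Finally I note that running the same construction with $w^*$ supported on $k$ consecutive coordinates with alternating signs yields $\delta(K)=\Omega(\sqrt{kn})$, which both upgrades the probability of failure and feeds into the later treewidth lower bounds.
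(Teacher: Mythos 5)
You take a genuinely different route from the paper. The paper introduces the \emph{Weak $S$-Preconditioned Compatibility Ratio} $\gamma^{(1)}_{\Sigma,S,m,k}$, proves (Theorem~\ref{theorem:ill-conditioned-lasso-failure-l1}) that $\gamma^{(1)} > 18$ forces failure with probability $1 - e^{-\Omega(m)}$ via a short linear-algebra argument that explicitly constructs an alternative feasible $v \in \vspan(V)$ with $Xv = Xw^*$ and smaller penalty, and then applies this to the random walk by picking one sparse test vector ($e_1$, or $e_n - e_{n-1}$ in the normalized case) to bound $\alpha^{(1)}$ and the $2m$-dimensional coordinate subspace $\vspan\{e_t, e_{2t}, \dots, e_n\}$ with $t = n/(2m)$ to bound $\beta^{(1)} \gtrsim n/m^2$; the ratio exceeds $18$ once $m \lesssim \sqrt{n}$. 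Your plan instead passes through conic integral geometry: reduce the null-space property to the intersection of a random $(n-m)$-plane with the cone $K = (D^\top)^{-1}\bigl(\text{descent cone of }\|\cdot\|_1 \text{ at } e_1 - e_2\bigr)$, and lower-bound the statistical dimension $\delta(K)$ by a multiscale Haar construction that trades a spike at $y_2$ (budget $\asymp B$) for $\asymp n^{1/3}B^{2/3}$ of captured Gaussian energy, optimizing to $\delta(K) = \Omega(\sqrt{n})$. The geometry is well-chosen and the reduction $\ker X = D^\top W$ is correct, but the estimate of $\delta(K)$ is the real proof and you have left it at a heuristic: the balance between the spike cost $J^2 = B^2/4$ and the captured energy $\asymp n^{1/3}B^{2/3}$ is tight at the optimum (both terms are $\Theta(\sqrt n)$), so carrying the constants, the quantitative energy-capture bound for the truncated Haar expansion of a Gaussian, and the verification that the construction lands in $K$ (including the $|y_n|$ boundary term) all need care. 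You would also want the sharper form of the kinematic deviation bound whose denominator scales with $\delta(K)$ rather than $n$, since $\delta(K) = \Theta(\sqrt n) \ll n$ here, to recover the $1 - e^{-\Omega(m)}$ failure probability the paper gets for free.

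The paper's Weak Compatibility route is more elementary, needs no conic geometry or wavelet estimates, and handles the variance-normalized covariance in two lines ($\alpha^{(1)} \le 1/(4(n-1))$ from $e_n - e_{n-1}$ plus $\norm{S^\top v}_1 \le \sqrt n \norm{v}_1$), where your proposal currently hand-waves the reweighting. What your approach buys, if completed, is a more geometric picture tying the lower bound to Haar wavelets --- a nice mirror to the paper's upper bound preconditioner --- and possibly a $\delta(K) = \Omega(\sqrt{kn})$ improvement for alternating $k$-sparse $w^*$. One correction: the paper's treewidth lower bounds (Sections~\ref{section:grid-construction}--\ref{section:unminoring}) do not descend from the random-walk example but from a separate grid-graph construction, so the $\Omega(\sqrt{kn})$ upgrade you flag at the end would not feed into those arguments in the way you suggest.
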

The above theorem shows that the most popular algorithmic approach for sparse linear regression problems, $\ell_1$-regularized least squares, 
performs poorly in this problem; in fact, its sample complexity is exponentially sub-optimal in the ambient dimension $n$ (this was also observed experimentally in \cite{kelner2019learning}). This brings up an obvious question, which to the best of our knowledge, was unanswered even in this particular case --- can \emph{any} polynomial time algorithm achieve nearly optimal performance (or even $o(\sqrt{n})$ sample complexity) in this example?

\begin{figure}
    \centering
    \includegraphics[scale=0.5]{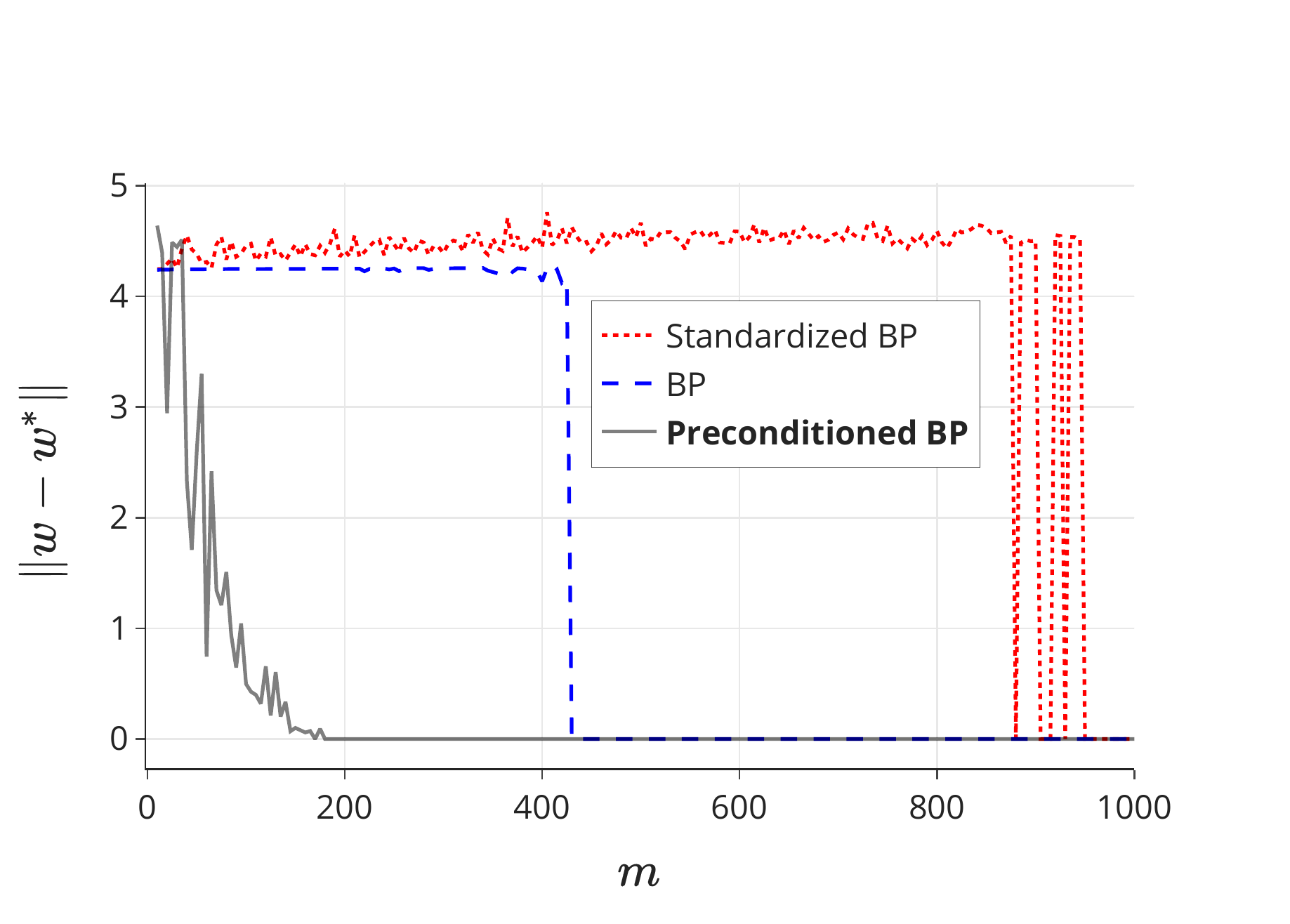}    
    \caption{Basis Pursuit (BP) vs Preconditioned BP vs Standardized BP in the case of the simple random walk with $n = 32768$ ($2^{15}$) variables.
    The $x$-axis is the number of samples $m$ (incremented in steps of $5$) and the $y$-axis is the error in recovering $w^*$ in Euclidean norm in a single independent run of the algorithms. Consistent with our theory (Theorem~\ref{thm:random-walk-lb-intro} and Theorem~\ref{thm:upper-bound-intro}), preconditioned BP succeeds at exact recovery with significantly fewer samples than normal BP or standardized BP (BP with the coordinates of $X_i$ standardized to variance 1, which is a common preprocessing step). The ground truth covariates $X_i$ are i.i.d. copies of a simple random walk with Gaussian steps, and the ground truth labels are $Y_i = 3[(X_i)_{32768} - (X_i)_{32767}]$, i.e. $w^* = (0,\ldots,0,-3,3)$. 
    }
    \label{fig:experiment}
\end{figure}

We show the answer is \emph{yes} --- sparse linear regression on a random walk can be solved efficiently. While Lasso run in the usual way fails, it turns out that if we first make an appropriate change-of-basis, i.e. precondition, Lasso will succeed. More formally, there is a sparse ``preconditioner" matrix $S \in \RR^{n \times n}$, 
such that the $S$-preconditioned Lasso 
recovers any $k$-sparse rule $w^* \in \RR^n$ from $m = O(k\log^3 n)$ samples $Y_i = \langle w^*, X_i \rangle + \xi_i$, where $\xi_i$ is independent Gaussian noise. In the present case we actually choose $S$ to be an invertible matrix, so this can be interpreted simply as a change of basis.
\begin{theorem}[Special case of Theorem~\ref{thm:upper-bound-intro}]\label{thm:special-case}
Suppose that $X_1,\ldots,X_m$ are independent copies of $(R_1,\ldots,R_n)$ and $Y_i = \langle w^*, X_i \rangle + \xi_i$ with $\xi_i \sim N(0,\sigma^2)$ independent and $\sigma^2 \ge 0$.
There is a polynomial-time algorithm (preconditioned Lasso/BP) which outputs $\hat{w}$ such that with high probability,
\begin{equation}\label{eqn:guarantee-upper-bound-intro}
(\hat{w} - w^*)^T \Sigma (\hat{w} - w^*) = O\left(\frac{\sigma^2 k \log^2(n)}{m}\right) 
\end{equation}
provided $m = \Omega(k\log^3(n))$. (When $\sigma = 0$, the rhs is zero.)
\end{theorem}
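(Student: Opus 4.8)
The plan is to exhibit an explicit invertible change of basis under which the random-walk design becomes \emph{exactly isotropic} while the transformed signal stays approximately sparse, and then invoke the textbook analysis of the Lasso for isotropic Gaussian design. Let $D \in \RR^{n\times n}$ be the lower bidiagonal difference operator, $D_{ii}=1$ and $D_{i,i-1}=-1$, so that $DX_i$ recovers the increments $Z_i$ of the walk, and let $L=D^{-1}$ be the lower-triangular all-ones matrix. Then the random-walk covariance factors as $\Sigma = LL^T = D^{-1}D^{-T}$, and consequently $D\Sigma D^T = (DL)(DL)^T = I$. Let $H\in\RR^{n\times n}$ be the orthonormal Haar wavelet matrix (padding $n$ up to a power of two if necessary) and take the preconditioner $S = D^{-1}H = LH$. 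Note that $S$ is sparse: its columns are the cumulative sums of Haar wavelets, i.e.\ the triangular bumps of the Faber--Schauder system, so $S$ has only $\tilde O(n)$ nonzero entries. The key algebraic identity is $SS^T = LHH^TL^T = LL^T = \Sigma$.

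Next I would change variables by $\beta = S^T w$; since $S$ is invertible this is a bijection with inverse $w = D^TH\beta$. One checks directly that the $S$-preconditioned Lasso objective $\norm{Y-Xw}_2^2 + \lambda\norm{S^Tw}_1$ in \eqref{eq:preconditioned-lasso} equals $\norm{Y-\tilde X\beta}_2^2 + \lambda\norm{\beta}_1$, where $\tilde X = XD^TH$ has rows $\tilde X_i = H^TDX_i$. Because $\Cov(H^TDX_i) = H^TD\Sigma D^TH = I$, the rows $\tilde X_i$ are i.i.d.\ $N(0,I_n)$, and in these coordinates the model reads $Y_i = \langle \beta^*,\tilde X_i\rangle + \xi_i$ with $\beta^* = S^Tw^*$ (using $X_i = S\tilde X_i$). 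Moreover the identity $SS^T=\Sigma$ gives, for any output $\hat w$ with $\hat\beta := S^T\hat w$,
\[ (\hat w - w^*)^T\Sigma(\hat w - w^*) = \norm{S^T(\hat w - w^*)}_2^2 = \norm{\hat\beta - \beta^*}_2^2, \]
so it suffices to control the estimation error of $\hat\beta$ in a standard isotropic Lasso/BP problem.

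It remains to bound the sparsity of $\beta^*$. We have $\beta^* = H^T(D^{-T}w^*) = H^T(L^Tw^*)$, and $(L^Tw^*)_i = \sum_{j\ge i} w^*_j$ is the reverse cumulative sum of the $k$-sparse vector $w^*$, hence a piecewise-constant vector with at most $k$ breakpoints. A piecewise-constant vector is orthogonal to every Haar wavelet whose supporting dyadic interval does not contain a breakpoint, and each breakpoint lies in at most $\log_2 n + 1$ dyadic intervals; therefore $\beta^*$ has at most $s := O(k\log n)$ nonzero Haar coefficients. Now apply the classical guarantee for the Lasso (respectively, Basis Pursuit when $\sigma=0$) with isotropic Gaussian design from the preliminaries: with $\lambda \asymp \sigma\sqrt{m\log n}$ and $m = \Omega(s\log n)$, with high probability the restricted eigenvalue / incoherence condition holds for $\tilde X$ and $\norm{\hat\beta - \beta^*}_2^2 = O(\sigma^2 s\log(n)/m)$ (and $\hat\beta = \beta^*$ exactly in the noiseless case). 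Substituting $s = O(k\log n)$ gives $\norm{\hat\beta-\beta^*}_2^2 = O(\sigma^2 k\log^2(n)/m)$; the slightly larger hypothesis $m=\Omega(k\log^3 n)$ in the statement is only slack in the standard concentration and union-bound steps and can be traded off against constants. Combined with the error identity above, this proves the bound, and the estimator $\hat w = D^TH\hat\beta$ — equivalently, the solution of \eqref{eq:preconditioned-lasso} with $S=LH$ — is produced by a single convex program, hence in polynomial time.

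The only genuinely nontrivial inputs are (i) the exact algebraic ``miracle'' $D\Sigma D^T = I$ and $SS^T=\Sigma$, which is special to the random-walk covariance and is what makes $\tilde X$ \emph{perfectly} isotropic (so no condition-number loss) and the prediction error invariant under the change of basis, and (ii) the Haar-sparsity bound for piecewise-constant signals, which is standard in wavelet theory but should be stated with the correct normalization and with care about boundary effects when $n$ is not a power of two. Everything else is a black-box application of the well-conditioned random-design Lasso theory recalled in Section~\ref{section:preliminaries}; I expect step (i), together with verifying the objective-level equivalence between the $w$-space preconditioned program and the $\beta$-space ordinary Lasso, to be the part most worth writing out carefully.
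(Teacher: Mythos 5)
Your proof is correct, and it is precisely the argument the paper sketches in prose immediately after stating this theorem: the inverse preconditioner is differencing composed with the Haar transform, $SS^T=\Sigma$ exactly, the transformed design is perfectly isotropic, $L^Tw^*$ is piecewise constant with $k$ breakpoints so $\beta^*=S^Tw^*$ is $O(k\log n)$-sparse in the Haar basis, and the classical isotropic Lasso/BP guarantee closes the loop. The one genuine difference is in how the result is \emph{formally} derived: the paper proves Theorem~\ref{thm:special-case} as a corollary of the general low-treewidth result (Theorem~\ref{thm:upper-bound-intro}, via \textsc{GraphicalCholesky} and Theorem~\ref{thm:lasso-sst}), which deliberately avoids assuming $\Sigma$ is known exactly and pays an extra $\log n$ for estimating the Cholesky factors from data, yielding the $m=\Omega(k\log^3 n)$ hypothesis. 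Your direct argument exploits the fact that the random-walk covariance is known in closed form, so the preconditioner is deterministic and you legitimately get away with $m=\Omega(k\log^2 n)$ — strictly stronger than what is claimed, so no inconsistency. Two small points worth writing out if you were to formalize this: (i) when $n$ is not a power of two you should state which Haar-type orthonormal basis you use and check the piecewise-constant-to-sparsity bound still gives $O(k\log n)$ nonzeros; and (ii) in the noiseless case the conclusion $\hat\beta=\beta^*$ needs a nullspace/RIP argument for the isotropic $\tilde X$, not just an error bound tending to zero — this is standard but should be invoked explicitly, e.g.\ via \cite{candes2005decoding}, as the paper does in the proof of Theorem~\ref{thm:lasso-sst}.
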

The preconditioner $S^T$ which works is quite simple, and builds upon inspiration from other settings in the compressed sensing and signal processing literature (see  \cite{mallat1999wavelet,needell2013stable,hutter2016optimal} and Appendix~\ref{apdx:example}). The inverse $(S^T)^{-1}$ is the composition of the differencing operator $D[x]_i = x_i - x_{i - 1}$ with the \emph{discrete Haar wavelet transform} \cite{haar1911theorie,mallat1999wavelet}. The first transformation (differencing) transforms the sequence $R_1,\ldots,R_n$ back into the steps $Z_1,\ldots,Z_n$, which completely fixes the ill-conditioning of the basis. However, the first step destroys sparsity: $\langle w^*, R \rangle$ is not a sparse linear functional of $Z$, but instead a dense linear functional with \emph{piecewise constant} coefficients. The second step, the Haar transform, is orthogonal --- hence preserves well-conditioning --- and it restores sparsity, because a piecewise constant signal is sparse in the Haar basis \cite{mallat1999wavelet}. Given this, Theorem~\ref{thm:special-case} follows from classical results on sparse linear regression with isotropic covariates from the compressed sensing literature (e.g. \cite{candes2007dantzig,bickel2009simultaneous}). 



It is remarkable that the random walk model admits a preconditioning matrix $S$ which combines so well with the Lasso. This suggests the question: 
\begin{question}\label{questionmain}
 \emph{For which $\Sigma$ can we construct a preconditioner 
    such that the preconditioned Lasso solves sparse linear regression for covariates drawn from $N(0, \Sigma)$?}
\end{question}

\subsection{Preconditioning and Dependency Graphs}
The answer, as it turns out, depends on the \emph{conditional independence structure} of $X \sim N(0,\Sigma)$, or equivalently the \emph{dependency graph} of the distribution. We first introduce this notion. 

Fix a distribution $D$ on $\mathbb{R}^n$, and a graph $G$ on $n$ vertices. We say $D$ satisfies the \emph{Markov property} with respect to $G$ if the following holds for $X \sim D$: whenever $i,j$ are not adjacent in $G$, $X_i, X_j$ are independent conditioned on $(X_k: \text{$k$ a neighbor of $i$ in $G$})$. That is, $G$ is a \emph{dependency graph} for the distribution $D$. The study of dependency graphs of distributions has a rich history and vast literature within statistics and machine learning, under the general area of \emph{graphical models} --- see e.g. \cite{lauritzen1996graphical,dempster1972covariance,bishop2006pattern,wainwright2008graphical,pearl2009causality}. 

For a multivariate Gaussian distribution $N(0,\Sigma)$ with invertible $\Sigma$, there is a clean characterization of the dependency graph. Let $\Theta = \Sigma^{-1}$ be its \emph{precision matrix}. The dependency graph of $N(0,\Sigma)$ is precisely the graph whose adjacency matrix is the support of $\Theta$ \cite{lauritzen1996graphical,dempster1972covariance}.
Reconsidering the example of random walks, a key property of the distribution of $(R_1,\ldots,R_n)$ as defined above is the following \emph{Markov property}: conditional on $R_i$, the past variables $R_1,\ldots,R_{i - 1}$ are independent of the future variables $R_{i + 1},\ldots,R_n$. Equivalently (see e.g., \cite{lauritzen1996graphical}), the precision matrix $\Theta = \Sigma^{-1}$ of $R_1,\dots,R_n$ is supported on the adjacency matrix of the path graph.  

Our main contribution is an essentially complete answer to Question \ref{questionmain} in terms of the corresponding dependency graph of $\Sigma$. At a high level, we show that whenever the dependency graph of $\Sigma$ has small \emph{treewidth}, then there is a preconditioner such that Lasso succeeds. Conversely, we show that for any graph $G$ with high treewidth, there is a Gaussian distribution with $G$ as the dependency graph on which \emph{no} preconditioning can make Lasso succeed. This shows that treewidth, long used as a natural complexity measure in graphical models, e.g. in the context of the celebrated junction tree algorithm \cite{lauritzen1988local}, also determines the difficulty of solving a sparse linear regression with preconditioned Lasso.
We formally state our results next. 

\subsection{Main Results}

\paragraph{Preconditioning for small treewidth.}


We show that whenever the dependency graph of the covariate distribution has low treewidth, say $t$, there exists a choice of preconditioner which makes the Lasso succeed with $\gg k t \log^3 n$ samples. Furthermore, such a preconditioner can be constructed efficiently without exact knowledge of $\Sigma$: just knowing the dependency graph allows us to efficiently construct the preconditioner based off of the samples. Formally, we show the following:

\begin{theorem}\label{thm:upper-bound-intro}
Let $G$ be a graph on $[n]$, and let $\Theta \in \RR^{n\times n}$ be a positive-definite matrix supported on $G$. Suppose that $G$ has treewidth at most $t$. Let $\sigma \ge 0$. Then there is a polynomial-time algorithm which outputs $\hat{w}$ such that with high probability,
\begin{equation}\label{eqn:guarantee-upper-bound-intro}
(\hat{w} - w^*)^T \Sigma (\hat{w} - w^*) = O\left(\frac{\sigma^2 kt\log^{1/2}(t) \log^2(n)}{m}\right) 
\end{equation}
from (1) knowledge of the graph $G$, and (2) $m = \Omega(kt\log^3 n)$ independent samples $(X_i, Y_i)$, where $Y_i = \langle w^*, X_i \rangle + \xi_i$ with $w^*$ a $k$-sparse vector, and independently $X_i \sim N(0,\Theta^{-1})$ and $\xi_i \sim N(0,\sigma^2).$
\end{theorem}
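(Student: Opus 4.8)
The plan is to reduce Theorem~\ref{thm:upper-bound-intro} to the classical random-design analysis of the Lasso/Dantzig selector with (nearly) isotropic covariates \cite{candes2007dantzig,bickel2009simultaneous,raskutti2010restricted} by exhibiting an explicit, data-dependent preconditioner $S \in \RR^{n\times n}$ that simultaneously \emph{whitens} the covariates and \emph{preserves sparsity of the signal}. Concretely, I will seek an invertible $S$ with (i) $\Sigma = SS^T$ and (ii) every row of $S$ having at most $O(t\log n)$ nonzeros. Given such an $S$, substituting $u = S^T w$ shows that the $S$-preconditioned Lasso/BP is exactly ordinary Lasso/BP on the transformed design $XS^{-T}$ recovering the transformed signal $u^* := S^T w^*$. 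Property (i) makes the rows of $XS^{-T}$ isotropic Gaussians (their covariance is $S^{-1}\Sigma S^{-T} = I$); property (ii) together with $k$-sparsity of $w^*$ forces $u^* = S^T w^*$ to have at most $k' := O(kt\log n)$ nonzeros; and writing $\hat w = S^{-T}\hat u$ one has $(\hat w - w^*)^T\Sigma(\hat w - w^*) = (\hat u - u^*)^T S^{-1}\Sigma S^{-T}(\hat u - u^*) = \|\hat u - u^*\|_2^2$, so the out-of-sample prediction error in the original coordinates equals the $\ell_2$-estimation error in the whitened problem. Plugging effective sparsity $k'$ into the standard isotropic-design bound (prediction error $O(\sigma^2 k'\log n / m)$ for $m = \Omega(k'\log n)$) yields the claimed rate up to the precise polylogarithmic factors.

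The heart of the proof is therefore the construction of such an $S$ --- the ``wavelet on the tree decomposition''. The template is the random-walk case of Section~\ref{section:introduction-example}: there $\Sigma = MM^T$ with $M$ lower-triangular all-ones (the innovation/cumulative-sum representation, whose inverse is the sparse differencing operator), and the orthogonal Haar transform $H$ --- which preserves whitening --- converts this into $\Sigma = (MH^T)(MH^T)^T$ with $S = MH^T$ having $O(\log n)$-sparse rows, since each row of $M$ is a single step and a single step has only $O(\log n)$ nonzero Haar coefficients \cite{haar1911theorie,mallat1999wavelet}. For general $\Theta$ supported on a treewidth-$t$ graph $G$: replace the path by a tree decomposition (junction tree) of width $t$ \cite{lauritzen1996graphical,lauritzen1988local}; replace the cumulative-sum representation by $M := R^{-1}$ where $\Theta = R^T R$ is the Cholesky factorization in a perfect elimination ordering of $G$ (so $M^{-1} = R$ is sparse with $O(t)$ nonzeros per row even though $M$ is dense, and $\Sigma = MM^T$ with $RX$ i.i.d.\ standard Gaussian); and replace the Haar basis by a hierarchical orthonormal system adapted to a recursive balanced-separator decomposition of the junction tree --- at the top pick a bag/separator whose removal splits the tree into pieces each spanning at most $\tfrac23 n$ of the vertices, define ``local detail'' functionals orthogonalized against the separator, and recurse on each piece with the separator carried along as boundary. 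Because each level contributes a separator of size $\le t+1$ and the recursion has depth $O(\log n)$, each row $m_i$ of $M$ --- being essentially localized to a root-to-node path of the decomposed junction tree --- lies in the span of $O(t\log n)$ of the resulting orthonormal vectors, which is exactly the row-sparsity bound (ii) on $S = MH^T$; and the running-intersection property guarantees that conditioning on the carried boundaries genuinely decouples the branches, so the system is \emph{exactly} orthonormal, i.e.\ $\Sigma = SS^T$ holds on the nose.

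Two further ingredients finish the argument. First, \emph{data-dependence}: we know $G$ but not $\Sigma$, yet every step of the wavelet construction only inspects the covariance of $\Sigma$ restricted to an $O(t\log n)$-size set (a bag together with its $O(\log n)$ ancestor separators), which is estimable from the $m$ samples; one must then verify that the construction is \emph{robust}, so that an approximately-whitening estimate $\hat S$ still yields a restricted-eigenvalue/RIP-type condition on the empirical transformed design $X\hat S^{-T}$ and the classical guarantees go through with at most logarithmic loss. Second, the \emph{logarithmic bookkeeping}: with $k' = O(kt\log n)$, the random-design Lasso analysis \cite{bickel2009simultaneous,raskutti2010restricted} needs $m = \Omega(k'\log n) = \Omega(kt\log^2 n)$ for the restricted-eigenvalue condition and gives prediction error $O(\sigma^2 k'\log n/m)$; the extra $\log n$ in the stated sample bound and the $\log^{1/2} t$ in the error come from making the preconditioner estimate and the concentration over all $O(\log n)$ recursion levels hold with high probability, together with a sharpened count of separator sizes. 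I expect the main obstacle to be precisely the orthonormality-and-sparsity claim for the recursive separator wavelet: arranging the local Gram--Schmidt steps so that the final system is exactly (and, in the estimated version, nearly) $\Sigma$-orthonormal while no coordinate functional ``leaks'' into more than $O(t\log n)$ scales is where the treewidth structure must be used in full, whereas the reduction to classical Lasso and the robustness analysis are comparatively routine.
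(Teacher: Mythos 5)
Your high-level strategy is the paper's: build a change of basis $S$ adapted to a recursive centroid decomposition of the tree decomposition, so that $S$ whitens $\Sigma$ and maps $k$-sparse $w^*$ to an $O(kt\log n)$-sparse $u^*$, then run ordinary Lasso in the new coordinates. The reduction in your first paragraph, the identity $(\hat w - w^*)^T\Sigma(\hat w - w^*) = \|\hat u - u^*\|_2^2$, and the log bookkeeping are all correct, and you have the right analogy (random walk/Haar $\to$ tree/wavelet, cf.\ \cite{sharpnack2013detecting}). However, your proposed factorization $S = MH^T$ with $M=R^{-1}$ the inverse Cholesky of $\Theta$ and $H$ an orthogonal ``hierarchical system orthogonalized against separators'' is under-specified: it is not spelled out whether the orthogonalization is in the standard or the $\Sigma$-inner product (these give different matrices), and the claim that each row of the \emph{dense} matrix $R^{-1}$ has only $O(t\log n)$ nonzero $H$-coefficients is asserted rather than derived. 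The paper sidesteps all of this by constructing $S$ directly as a block Cholesky factorization of $\Sigma$ in the centroid preorder (Algorithm \textsc{GraphicalCholesky}, Eq.~\eqref{eqn:block-decomposition}); block lower-triangularity in that ordering immediately gives row-sparsity $O(t\log n)$ (Lemma~\ref{lem:tree-sparsity-preserving}), and $SS^T=\Sigma$ follows because the Markov property kills the cross-terms of the Schur complements.

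More substantively, you have the difficulty inverted. You call the orthonormality-and-sparsity claim ``the main obstacle'' and the data-dependent robustness ``comparatively routine.'' In fact, when $\Sigma$ is known the factorization is a few lines; the genuinely hard part (Theorem~\ref{thm:sst} and Lemmas~\ref{lemma:nodeerror}--\ref{lem:cholesky-bound}) is showing that running the same recursion on the empirical $\hat\Sigma$ still yields a preconditioner whose transformed design satisfies RIP on sparse vectors. The indispensable ingredient there --- which your proposal omits entirely --- is that the recursion must explicitly \emph{zero out} the off-diagonal cross-blocks of the Schur complements of $\hat\Sigma$ (they vanish in the population by the Markov property, but not empirically). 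Without this denoising step, the recursion breaks: $\hat\Sigma$ is rank-degenerate when $m\ll n$, a standard Cholesky of it would not even produce an invertible $S$, and the multiplicative error accumulation across the $O(\log n)$ recursion levels would be uncontrolled. Relatedly, your route through the Cholesky of $\Theta$ is particularly awkward for the data-dependent case since $\hat\Sigma^{-1}$ does not exist; you would have to switch to Schur-complement/regression form, i.e.\ essentially to the paper's construction. A minor point: the $\log^{1/2}(t)$ in the bound comes from the \cite{feige2008improved} approximation to the optimal tree decomposition (it disappears if the decomposition is given), not from sharper separator counting.
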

\begin{remark}
The extra $\log^{1/2}(t)$ factor arises from the approximation algorithm of \cite{feige2008improved} and can be eliminated if the optimal tree decomposition is given as input. Also, in Theorem~\ref{thm:iht-sst} we show how to shave the extra $\log(n)$ factor from \eqref{eqn:guarantee-upper-bound-intro} by combining our preconditioner with model-based Iterative Hard Thresholding instead of the Lasso (cf. \cite{baraniuk2010model}). Finally, we note the results generalize straightforwardly to subgaussian data and noise, in which case the sparsity pattern of $\Theta$ may differ from the graphical structure according to the Markov property.
\end{remark}

Besides giving a characterization of dependency structures which enable the success of preconditioned Lasso, the above also covers several important cases that arise in practice. The simplest case is the random walk discussed above, where the dependency graph is a path and therefore has treewidth $1$. Especially if we are regressing on time series data, the path graph may sometimes be a reasonable assumption on the dependency structure of the covariates. However, even in the specific context of time series, one often has multiple interacting time series and/or longer range interactions (consider e.g. an $AR(2)$ model \cite{brockwell2009time}) which fundamentally change the graph structure. In these situations, the treewidth is bounded by the length of the interactions, and thus may naturally be small. More generally, sparse graphical structure is often a natural assumption in practice and plays, for example, a very important role in causal inference and reasoning \cite{pearl2009causality,peters2017elements}.


\paragraph{Failure of preconditioning for high treewidth.}
We complement our upper bound with a sample complexity lower bound for high-treewidth graphs: 
for \emph{any} graph $G$ with treewidth $t$, there is a multivariate Gaussian distribution
with dependency graph $G$ such that
\emph{for any preconditioner} $S$, the $S$-preconditioned Lasso fails (with high probability) unless the number of samples is $\Omega(t^c)$ for an absolute constant $c > 0$. The preconditioner is allowed to depend on the distribution, and the lower bound result holds in the (easiest) noiseless setting,
where the corresponding notion of success requires exact recovery of the ground truth. 

\begin{theorem}\label{theorem:lower-bound-intro}
Pick $n, t, s \in \NN$, and suppose that $G$ is a graph on $[n]$ with treewidth at least $t$. Then there exists $k = O(\log n)$ and some positive-definite precision matrix $\Theta$, supported on $G$, with condition number $\poly(n)$, such that the following holds: for every preconditioner $S \in \RR^{n \times s}$, the $S$-preconditioned basis pursuit requires $m = \Omega(t^{1/20})$ samples $(X_i,Y_i)$ to exactly recover a $k$-sparse coefficient vector $w^*$ from covariates $X_1,\ldots,X_m$ drawn i.i.d. from $N(0,\Theta^{-1})$ and noiseless responses $Y_i = \langle w^*,  X_i \rangle$, with probability better than $1/t^{1/400}$.

\end{theorem}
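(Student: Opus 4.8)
The plan is to reduce to the case where $G$ is a grid, construct an explicit hard Gaussian Markov Random Field on the grid, and show that its design matrix is ``too incoherent'' for any change of basis to fix. \textbf{Step 1 (reduction to the grid).} By the polynomial excluded-grid theorem — any graph of treewidth at least $g^{9}\cdot\polylog(g)$ contains a $g\times g$ grid minor, and any fixed polynomial bound would do — a graph with $\tw(G)\ge t$ has a $g\times g$ grid minor with $g=\Omega(t^{1/9}/\polylog t)$. If the grid construction below forces $m=\Omega(\sqrt g/\polylog g)$ samples (the analogue, now against arbitrary preconditioners, of the plain-Lasso lower bound of Theorem~\ref{thm:random-walk-lb-intro}), composing the two estimates gives $m=\Omega(t^{1/18}/\polylog t)=\Omega(t^{1/20})$ and likewise drives the success probability below $1/t^{1/400}$. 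It then remains to \emph{lift} the hard instance from the grid minor to $G$: realize each grid vertex on its branch set, coupling the vertices inside a branch set through large off-diagonal precision entries (balanced on the diagonal to preserve positive-definiteness) so that the branch set behaves as a single variable, mimic the grid interactions between branch sets along the connecting $G$-edges, and make the remaining vertices of $G$ near-deterministic functions of their neighbors. One then checks that the resulting $\Theta$ is positive-definite, supported only on edges of $G$ (extra edges of $G$ simply receive zero entries), has condition number $\poly(n)$, and that a signal supported on grid coordinates stays $O(\log n)$-sparse in $G$ while the row space of the design matrix transfers up to $n^{-\Omega(1)}$ error.

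\textbf{Step 2 (the hard GMRF on the grid).} On the $g\times g$ grid I would build a precision matrix $\Theta_{\mathrm{grid}}$, supported on the grid and of condition number $\poly(n)$, whose inverse $\Sigma=\Theta_{\mathrm{grid}}^{-1}$ has a ``dominant'' subspace $V$ of dimension $D$ polynomially larger than $m$, on which $\Sigma$ has eigenvalues of order $\poly(n)$ while on $V^{\perp}$ they are $\Theta(1)$; moreover $V$ should consist of genuinely two-dimensional, spread-out grid functions (morally the low-frequency modes, or the outputs of a recursive multiscale block construction mirroring the dyadic decomposition of the grid). The consequence is that with probability $1-n^{-\Omega(1)}$ each covariate $X_i$ lies within relative error $n^{-\Omega(1)}$ of $V$, so the row space of the design matrix $X$ is, up to a negligible perturbation, a uniformly random $m$-dimensional subspace $R\subseteq V$. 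The first delicate point is that $\Theta_{\mathrm{grid}}$ must be \emph{supported on the grid}: a spectral function of the grid Laplacian would work spectrally but generically fills in, so I expect a recursive block construction, rather than a Laplacian, to be the right vehicle for making $V$ simultaneously spread out and realizable as the dominant subspace of a grid-supported matrix.

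\textbf{Step 3 (defeating every preconditioner).} The $S$-preconditioned basis pursuit recovers $w^{*}$ exactly iff there is a dual certificate, namely a subgradient $c\in\partial\|\cdot\|_{1}(S^{T}w^{*})$ with $Sc$ in the row space of $X$, together with a uniqueness condition off $\supp(S^{T}w^{*})$; hence recovery \emph{fails} as soon as the dual-certificate polytope $\mathcal{P}_{w^{*}}:=S\cdot\partial\|\cdot\|_{1}(S^{T}w^{*})$ — a translate of a zonotope generated by columns of $S$ — is disjoint from $R$. Given $S$, the adversary picks an $O(\log n)$-sparse $w^{*}$ supported on grid coordinates and aligned with the grid's multiresolution structure at all $\Theta(\log g)$ scales (which is where the sparsity budget $k=O(\log n)$ is spent), so that the center of $\mathcal{P}_{w^{*}}$ has a large component orthogonal to $V$ that the zonotopal part of $\mathcal{P}_{w^{*}}$ cannot reach, forcing $\mathcal{P}_{w^{*}}\cap R=\emptyset$ because $R\subseteq V$. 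The point is that, no matter how wide $S$ is, the grid supplies a family of mutually incoherent spread-out directions large enough — via the $\Theta(\log g)$-fold choice of splitting structure — that $S$ cannot have a low-distortion sparsifying column prepared for all of them, so such a $w^{*}$ always exists; an anti-concentration argument over the resulting bad events then keeps the success probability below $1/t^{1/400}$.

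\textbf{Main obstacle.} I expect essentially all of the difficulty to lie in the interaction of Steps 2 and 3: simultaneously (i) producing a \emph{grid-supported}, $\poly(n)$-conditioned precision matrix whose samples concentrate on a spread-out subspace $V$, and (ii) proving a robust incoherence statement — that for \emph{every} preconditioner $S$, of \emph{any} width $s$, there is an $O(\log n)$-sparse signal whose dual-certificate polytope avoids essentially every $m$-dimensional subspace of $V$. Balancing the adversary's after-the-fact choice of $w^{*}$ against the $\poly(n)$ condition-number budget, with no assumption on the size of $S$, is the crux, and is what turns the polynomial of the excluded-grid theorem into the comparatively small exponent $1/20$.
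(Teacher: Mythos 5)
Your high-level architecture matches the paper's: reduce via the Grid Minor Theorem to a grid, build a hard precision matrix there whose covariance has a high-dimensional ``dominant'' subspace $V$ (in the paper this is $\ker(\tilde\Theta)$ for a nearly singular $\tilde\Theta$, perturbed by $\epsilon I$), lift to the general graph by coupling branch sets tightly (the paper does this with Gaussian-free-field blocks and a Schur complement power series), and do the exponent arithmetic. That skeleton is correct, and you also correctly identify the two places where the work actually lives. However, both proposed solutions in those two places have genuine gaps.

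First, in Step 2, the construction you sketch for $V$ — low-frequency Laplacian modes, or a ``recursive multiscale block construction mirroring the dyadic decomposition'' — will not deliver the property the argument needs. What the paper requires is not merely that $V$ be spread out, but that \emph{every nonzero vector in $V$} be quantitatively far from $\tau$-sparse on a fixed coordinate set (``robustly dense''), with $\dim V \gg m$ and $V$ realized as the kernel of a PSD matrix supported on the grid. The paper explicitly discusses why approaches in the spirit of yours fail: with locally random or periodic equation weights, vectors in the kernel decay exponentially across rows, so setting the top row to a sparse vector makes the rest nearly zero and the density property breaks. The actual construction is an arithmetic circuit on the \emph{simplicized} grid (grid plus one diagonal per cell, because the plain grid is triangle-free and thus cannot support three-variable equations) that enforces $v_{\mathcal{Y}} = A\,v_{\mathcal{X}}$ for a Gaussian random $A$ between a designated top-row set $\mathcal{X}$ and bottom-row set $\mathcal{Y}$, with swap gadgets at wire crossings; density then follows from an uncertainty principle for random Gaussian matrices. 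Nothing resembling this appears in your sketch, and without it the required kernel property does not hold.

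Second, Step 3 is where the argument must be made quantitative and your sketch does not get there. Reframing the KKT condition as ``does a zonotope $\mathcal{P}_{w^*}$ meet the random row space $R$'' is fine as a restatement, but the claim that ``the grid supplies a family of mutually incoherent spread-out directions ... so $S$ cannot have a low-distortion sparsifying column prepared for all of them'' is precisely the assertion that needs a proof, and the multiscale ``$\Theta(\log g)$-fold choice of splitting structure'' intuition for how $w^*$ is chosen and why $k=O(\log n)$ suffices is not how the bound actually arises. The paper's mechanism is different: it introduces a \emph{weak $S$-preconditioned compatibility ratio} and shows that (a) if this ratio exceeds an absolute constant, basis pursuit already fails with exponentially high probability (Theorem~\ref{theorem:ill-conditioned-lasso-failure-l1}); (b) otherwise, every column of $S$ is forced to be either near $\ker(\tilde\Theta)$ (hence dense over $\mathcal{X}\cup\mathcal{Y}$) or to have tiny norm (Theorem~\ref{thm:columns-dense-or-small-l1}), and crucially this step uses the \emph{sparsity of $\Theta$} in an essential way. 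The sparse signal $w^*$ is then chosen by a greedy set-cover argument (Lemma~\ref{lemma:2to1-norm-reduction}) so that $\supp(S^T w^*)$ captures all but a $1/\mathrm{poly}(n)$ fraction (in $\|\cdot\|_{2\to 1}$ mass) of the dense columns — this, not multiresolution, is where the $O(\log n)$ sparsity budget goes — and a KKT violation is exhibited using the fact that, for $m \ll \dim\ker(\tilde\Theta)$, the row span of $X$ is nearly a random low-dimensional subspace of $\ker(\tilde\Theta)$ (Lemma~\ref{lem:no-alignment}). An ``anti-concentration over bad events,'' as you propose, is not what controls the probability; the $O(m/r)$ bound comes from that random-subspace projection estimate. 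You flagged the interaction of Steps 2 and 3 as the crux, which is accurate, but the specific mechanisms you propose for resolving it would not go through.
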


To the best of our knowledge, this result provides the first class of examples of random design problems where a change of basis \emph{provably cannot} fix the performance of the Lasso. To prove this result, we develop an easy-to-use machinery for proving lower bounds on the performance of the Lasso in random design settings, which is of independent interest. 

\ignore{

\subsection{Our Contribution}
The previous example suggests the following natural question about the power of preconditioning in sparse linear regression:

\begin{center}
    \emph{For which graphical structures of $\Theta = \Sigma^{-1}$ can we construct a preconditioner 
    such that the preconditioned Lasso solves sparse linear regression for covariates drawn from $N(0, \Sigma)$?}
\end{center}

Our main contribution is essentially a complete resolution of this question. 
In a word, we show that the graphical property that governs the existence of good preconditioners is \emph{treewidth}\footnote{Treewidth famously appears in graphical models via the junction tree algorithm \cite{lauritzen1988local} which allows for computationally efficient inference for graphical models on low-tree-width graphs. However, in the present context (Gaussians), inference is already computationally efficient so the connection seems limited.}. On the one hand, we show that if $\Theta$ is supported on a graph with treewidth $t$, then there is a preconditioner such that preconditioned Lasso succeeds with $O(kt\log(n)/\epsilon^2)$ samples, and this preconditioner can be efficiently computed from samples given only \emph{knowledge of the underlying graph structure} and without requiring any other knowledge of $\Sigma$. This means the result applies in the practically important settings where we have high-level knowledge of the (graphical) structure of the data but do not know the exact details of the data generating process (see e.g. \cite{pearl2009causality} for further discussion).

On the other hand, we show that for any graph with treewidth $t$, there is a precision matrix $\Theta$ supported on this graph, such that no preconditioner can enable the success of preconditioned Lasso when covariates are drawn from $N(0,\Theta^{-1})$, unless the number of samples is $\Omega(t^\epsilon)$ for a constant $\epsilon>0$. The latter lower bound holds even in the simple noiseless setting with $\sigma = 0$, and provides (to the best of our knowledge) the first class of examples of random design problems where a change of basis \emph{provably cannot} fix the performance of the Lasso. To prove this result, we develop an easy-to-use machinery for proving lower bounds on the performance of the Lasso in random design settings, which is of independent interest. 

\begin{itemize}
    \item Start with sparse linear regression on random walk/time-series data.
    \item Lasso does not work, but Lasso in a different basis does. Lead up to preconditioned Lasso.
    \item Small tree-width.
    \item Limitations of preconditioning. 
\end{itemize}
}

\section{Overview of Techniques}\label{section:overview}

\subsection{Algorithms for Low-treewidth}
\paragraph{Known $\Sigma$ setting.} First, we describe the simplified version of the low-treewidth algorithm, which assumes knowledge of the population covariance matrix $\Sigma$. The algorithm generalizes the one for the path described earlier.
We show that there exists a sparse matrix $S$ such that $\Sigma = SS^T$ (i.e., a \emph{sparse Cholesky factorization}) and such that both $S^T$ and $(S^T)^{-1}$ are \emph{sparsity preserving} (within $poly(t,\log(n))$ factors, where $t$ is the treewidth). This enables us to precondition the Lasso exactly, transforming from the original problem to a new problem where the covariates have identity covariance, and the unknown signal is transformed but still sparse --- the ideal setting to apply classical results on sparse linear regression from compressed sensing literature.

The preconditioner $S$ we use is closely related to the Haar wavelet transform from signal processing and its natural generalization to trees (see, e.g., \cite{sharpnack2013detecting}). We extend this transform to work with low-treewidth graphs. A variant of Haar wavelets also appears in the recent work \cite{dong2020nearly} on solving linear programming instances with low treewidth; however, the details and the motivation are different from ours. 


Concretely, the tree we start with is given by computing a \emph{tree decomposition} of our low-treewidth graph, using for example the algorithm of \cite{feige2008improved}. This tree provides a natural hierarchical decomposition of the graph, because we can always break a tree into roughly equal size pieces by removing its \emph{centroid} \cite{guibas1987linear}. We can then exploit the Markov property to reduce the problem of preconditioning the entire model to preconditioning each of the smaller pieces. Recursing, we get a sparse block Cholesky factorization of $\Sigma$ that we use as the preconditioner.


Because our preconditioner has a natural tree structure, we also show that we can use algorithmic tools from the area of \emph{model-based compressed sensing} (see, e.g., \cite{baraniuk2010model}) to shave an extra log factor from the rate that arises when using the preconditioned Lasso. 
This algorithm, based on a version of Iterative Hard Thresholding \cite{blumensath2009iterative}, allows us to recover the information-theoretically optimal $O(\sigma^2 k \log(n)/m)$ rate in the bounded treewidth setting.

\paragraph{Data-Dependent Sparse Preconditioner.}
It's often the case that the true population covariance matrix $\Sigma$ is unknown to the algorithm. Furthermore, since we assume access to only a small number of samples $X_i$ from the covariate distribution, the empirical covariance matrix cannot stand in as a suitable replacement for the true matrix $\Sigma$ (for example, the empirical covariance matrix may not be invertible even if the true $\Sigma$ is). 

We show how to overcome these difficulties under the more realistic assumption that the graphical structure of the distribution, i.e., the support of the precision matrix $\Theta = \Sigma^{-1}$, is known. This kind of modeling assumption is prevalent in the causal inference and graphical models literature (see, e.g., \cite{pearl2009causality}) and is generally more plausible than knowing the exact matrix $\Sigma$. Also, even if initially unknown, the graph structure may be recoverable from a small number of samples using GGM learning algorithms (e.g., \cite{meinshausen2006high,friedman2008sparse,kelner2019learning}).

Algorithmically, we build our preconditioner by performing an approximate \emph{block Cholesky factorization} of the empirical covariance matrix $\hat \Sigma$, following the tree structure described above. By using the Markov property, we can handle the poor approximation quality of the empirical covariance matrix $\hat \Sigma$ to $\Sigma$ by zeroing out all of the entries of various Schur complements which arise during the Cholesky factorization and which must be zero due to the Markov property. If $A$ is the centroid from the tree decomposition and $P, Q$ are the resulting subforests given by removing $A$, the approximate block factorization is given by
\[
S := \begin{bmatrix}
\tilde{\Sigma}_{AA}^{1/2} & 0 & 0 \\
\tilde{\Sigma}_{PA} \tilde{\Sigma}_{AA}^{-1/2} & S_P & 0 \\
\tilde{\Sigma}_{QA} \tilde{\Sigma}_{AA}^{-1/2} & 0 & S_Q.
\end{bmatrix} \]
where $\tilde \Sigma$ is the empirical covariance matrix, and the bottom right is a (recursively defined) approximate block Cholesky factorization of the Schur complement $\tilde{\Sigma}/A$ with zerod out bottom-left and top-right sub-blocks. This factorization is \emph{not} a spectral approximation of the empirical covariance matrix $\tilde \Sigma$ (which may be rank degenerate); instead, we show that changing basis by $S$ results in a new Lasso problem which satisfies the \emph{Restricted Isometry Property} \cite{candes2005decoding}. The proof of this fact is quite involved, as we need to precisely track the accumulation of errors in the factorization from the perspective of a sparse test vector.

\paragraph{Aside: sparse linear regression with sparse covariance.} The assumption that $\Theta$ is sparse is very common and natural from a modeling perspective. That being said, we also consider what happens when $\Sigma$, instead of $\Theta$, is sparse. In Appendix~\ref{sec:sparse-covariance}, we give an algorithm for $k$-sparse linear regression with runtime roughly $d^k \cdot poly(n)$, where the rows of $\Sigma$ are $d$-sparse. It's again based on preconditioning the Lasso but uses a randomized preconditioner based on a \emph{site percolation} process on the graph (see, e.g., \cite{krivelevich2015phase}), where each vertex of the graph is kept with probability $p$. 
\subsection{Impossibility of preconditioning in high-treewidth models}

In this section, we outline the proof of Theorem~\ref{theorem:lower-bound-intro}, the sample complexity lower bound for high-treewidth graphs. There are three main elements to the proof:
\begin{enumerate}
    \item Identifying conditions on a precision matrix $\Theta = \Sigma^{-1}$ and preconditioner $S$, under which the $S$-preconditioned Lasso will fail (for some sparse signal, with covariates drawn from $N(0,\Sigma)$)
    \item Constructing a precision matrix on (a slight variant of) the \emph{grid graph} which satisfies these conditions for any preconditioner
    \item Extending the lower bound for the grid graph variant, in a black-box manner, to a lower bound for any high-treewidth graph
\end{enumerate}


\paragraph{Conditions under which (preconditioned) Lasso fails.} There are two distinct reasons why classical Lasso might fail to recover some signal: either the covariates are ill-conditioned, or the ground truth is not sparse. For preconditioned Lasso, the situation is \emph{roughly} analogous, and we have two cases: if the preconditioned covariates are ill-conditioned, then recovery should intuitively fail; and if the preconditioner has dense rows, meaning that the ground truth may be dense in the preconditioned basis, then recovery should intuitively fail. While making these statements precise requires additional assumptions, this intuition is accurate in spirit.

We first formalize the first case above. For a fixed design matrix $X$, the standard KKT conditions can determine whether Lasso/Basis Pursuit succeed at exact recovery (see, e.g., Theorem 7.8 of \cite{wainwright2019high}). However, to show the Lasso fails in random design, we need a condition on $\Sigma$ that guarantees failure with a high probability over $X$.
Despite a vast literature on conditions for success and failure of Lasso, we are not aware of a broad, sufficient condition on the covariance matrix in the random design setting under which there must exist some sparse signal that causes
Lasso to fail (even in the more straightforward non-preconditioned setting). We rectify this gap by introducing the \emph{Weak ($S$-Preconditioned) Compatibility Condition}. This condition is defined analogously to stronger compatibility conditions (cf. \cite{raskutti2010restricted,van2009conditions}), which are sufficient for Lasso's success. That is, it roughly states that $SS^T$ (identity in the case of unpreconditioned Lasso) approximates $\Sigma$. However, unlike classical compatibility conditions, the condition we introduce is \emph{necessary}\footnote{An interesting result with related motivation is Theorem 3.1 of \cite{bellec2018noise}, which shows that if the Lasso succeeds for arbitrary sparse signals while (a variant of) the $\ell_1$-eigenvalue/compatibility constant of the design matrix is large, then the regularization parameter $\lambda$ must be small. However, it leaves open the possibility that Lasso may succeed with an appropriately small choice of $\lambda$.} as opposed to sufficient: if it is not satisfied, then the $S$-preconditioned Lasso will fail with high probability on some sparse signal. 

Of course, it's easy to construct a preconditioner $S$ such that $SS^T$ does approximate (or even equals) $\Sigma$; it just might not be sparsity-preserving (i.e., $S^T w^*$ need not be sparse even for sparse $w^*$). We formalize the intuition that a preconditioner with dense rows should also cause preconditioned Lasso to fail (i.e., the second mode of failure we alluded to above). This is surprisingly challenging. The main obstacle is that this is false without additional restrictions: for example, replacing the preconditioner $S$ with the column-wise concatenation $[S; S]$ doubles the size of the support of $S^T w^*$ but doesn't affect the output of the $S$-preconditioned basis pursuit, so sample complexity cannot be directly tied to $|\supp(S^T w^*)|$.

More technically, even if $S^T w^*$ is dense, we cannot simply change the basis and use the fact that classical Lasso fails on dense signals\footnote{Recall that $S$-preconditioned Lasso can be intuitively viewed as standard Lasso where the actual signal is $S^T w^*$.}. The issue is that $S^T$ may map to a higher-dimensional space, so changing basis introduces a new subspace constraint into the program, which could make KKT optimality conditions easier to satisfy. Instead, to find violations of the KKT optimality conditions, we must use additional structural properties of $\Theta$ and $S$ (e.g., that every \emph{column} of $S$ is either dense or has a very small norm). In the next paragraph, we discuss a specific framework for constructing $\Theta$; it is under this framework that we can show that a dense preconditioner causes recovery to fail.

\paragraph{Framework for constructing $\Theta$.} How do we construct a positive-definite matrix $\Theta$ such that any preconditioner $S$ is either dense or poorly approximates $\Theta$ (i.e. $SS^T$ is spectrally far from $\Sigma = \Theta^{-1}$)? Obviously, $\Theta$ must be ill-conditioned. Taking this intuition to the extreme, we can consider a nearly-degenerate matrix $\Theta = \Thetat + \epsilon I$, where $\Thetat$ is PSD with an $r$-dimensional kernel, and $\epsilon$ is arbitrarily small. If the preconditioner $S$ satisfies $SS^T \approx \Sigma$ in an appropriate sense, then it can be seen that every column of $S$ lies arbitrarily near $\ker \Thetat$ (as $\epsilon \to 0$) and that the columns must, in the limit, span $\ker \Theta$. So for $S$ to necessarily be dense, it's enough that $\ker\Thetat$ is high-dimensional and contains no sparse vectors. This is the key insight in understanding what properties a hard instance should have: 
\begin{center} \emph{To construct a precision matrix $\Theta = \Thetat+\epsilon I$ which is hard to precondition, it suffices to show that $\ker \Thetat$ is high-dimensional and dense, i.e., contains no sparse vectors}.\footnote{In a way, this property of $\Thetat$ resembles the Restricted Isometry Property, which is a property of the covariance matrix $\Sigma$ that enables the \emph{success} of Lasso. However, in our case, the condition is placed on the inverse covariance matrix $\Theta$, which means it obstructs preconditioning the large eigendirections of $\Sigma$.}\end{center}

Of course, this ``story" has several issues. First, the assumption that $SS^T$ spectrally approximates $\Sigma$ is too strong because the converse does not imply that $S$-Preconditioned Lasso fails. Instead, we only assume that the $S$-Preconditioned Weak Compatibility Condition holds, which introduces new difficulties in proving that $S$ is dense. Notably, a vital step of the proof requires that $\Theta$ is \emph{very sparse}. This further motivates our investigation of sparse linear regression when covariates are drawn from Gaussian Graphical Models: the sparse dependency structure is crucial.

Second, we do not want $\Theta$ to be arbitrarily close to degenerate; we want it to have $\poly(n)$ condition number. This introduces a new wrinkle, but the same insight still mostly holds: we want to find a PSD matrix $\Thetat$ (supported on some graph), such that the kernel is high-dimensional and is ``robustly" dense, i.e., not too close to any sparse vector. This matrix should also have a polynomial condition number on $\text{span}(\Thetat)$. The following theorem formalizes the above framework, i.e., conditions on $\Theta$ under which $S$-preconditioned Lasso cannot succeed:


\begin{theorem}[Restatement of Theorem~\ref{theorem:lower-bound}]\label{theorem:lower-bound-overview}
Let $\Thetat \in \RR^{n \times n}$ be a PSD matrix. Let $k,m,s > 0$. Let $\tau > 0$ and $V \subseteq [n]$ and let $\eta$ be the infimum of $\norm{x_V-y}_2/\norm{x}_2$ over all nonzero $x \in \ker(\Thetat)$ and $\tau$-sparse $y \in \RR^V$. 
Also, let $\lambda$ be the smallest non-zero eigenvalue of $\Thetat$. Suppose that the following hold:
\begin{itemize}\itemsep 0em
    \item The rows (and columns) of $\Thetat$ are $k$-sparse
    \item $r := \dim \ker(\Thetat) > 2m$
    \item $k > 3(|V|/\tau)\log(n)$
\end{itemize}

Pick any positive $\epsilon < \eta^2 \lambda^3/(16200n^3 \lVert\Thetat\rVert_F^2).$ Define $\Theta = \Thetat + \epsilon I$. For any preconditioner $S \in \RR^{n \times s}$, there is some $k$-sparse signal such that $S$-preconditioned Lasso fails at exact recovery with probability at least $1 - \frac{4m}{3r} - \exp(-\Omega(m))$, from independent covariates $X_1,\dots,X_m \sim N(0,\Theta^{-1})$ and noiseless responses $Y_i = \langle w^*, X_i \rangle$.
\end{theorem}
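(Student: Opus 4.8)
The plan is to reduce exact recovery by the $S$-preconditioned basis pursuit to the Weak $S$-Preconditioned Compatibility Condition, and then show that this condition must fail for \emph{every} $S$ when $\Theta = \Thetat + \epsilon I$ with $\epsilon$ taken small enough relative to the density parameter $\eta$ and the spectral gap $\lambda$. First I would invoke the earlier development (the KKT-based analysis underlying the Weak Compatibility Condition): if $S$-preconditioned basis pursuit recovers all $k$-sparse signals with decent probability from $m$ samples, then $SS^T$ must approximate $\Sigma = \Theta^{-1}$ well in the relevant $\ell_1$/restricted sense. Concretely I expect the right statement to be: success for all $k$-sparse signals with probability exceeding the stated bound forces, via a union bound over sparsity patterns and a second-moment/anticoncentration argument on the Gaussian design, that the columns of $S$ essentially lie in the top eigenspace of $\Sigma$ and span a space of dimension at least $n - O(m)$ — i.e. they must cover the large eigendirections of $\Sigma$, which are exactly the directions near $\ker(\Thetat)$.

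The core of the argument is then a linear-algebra/perturbation step. Since $\Theta = \Thetat + \epsilon I$, the eigenvalues of $\Sigma = \Theta^{-1}$ split into $r = \dim\ker(\Thetat)$ "large" eigenvalues of size $\approx 1/\epsilon$ (eigenvectors spanning $\ker(\Thetat)$) and $n-r$ "small" eigenvalues between $1/\lVert\Thetat\rVert$ and $1/\lambda$. The hypothesis $r > 2m$ means $S$ (having only $s$ columns, but what matters is the span it must cover relative to the $m$-sample KKT obstruction) cannot avoid aligning a nontrivial unit vector with the large-eigenvalue block: roughly, there is a unit vector $v$ in the column span of $S$ with $\lVert \Proj_{\ker\Thetat} v\rVert \ge 1 - O(\sqrt{\epsilon})$ times spectral factors, by a dimension count $r > 2m$. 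Then $v$ is $\eta/2$-far (say) from every $\tau$-sparse vector supported on $V$, by definition of $\eta$ and the closeness of $v$ to $\ker(\Thetat)$. This density of a column of $S$, combined with the sparsity of $\Thetat$ (rows $k$-sparse) and the counting inequality $k > 3(|V|/\tau)\log n$, is exactly what the framework needs: I would use the sparsity of $\Theta$ to localize the failure — build a $k$-sparse $w^*$ whose support is chosen (probabilistically, using $k > 3(|V|/\tau)\log n$ to hit enough of $V$) so that $S^T w^*$ is forced to be dense, and then show the KKT/Weak Compatibility certificate fails for this $w^*$ with probability $\ge 1 - \tfrac{4m}{3r} - \exp(-\Omega(m))$ over the Gaussian design (the $\tfrac{m}{r}$ term is the chance that $m$ Gaussian samples accidentally avoid the large-eigenvalue subspace enough to help, the $\exp(-\Omega(m))$ term is standard concentration of the empirical covariance / restricted eigenvalue on the small block).

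For the bookkeeping I would carry the quantitative chain carefully: the threshold $\epsilon < \eta^2\lambda^3/(16200 n^3 \lVert\Thetat\rVert_F^2)$ should emerge from requiring (i) the large eigenvalues of $\Sigma$ dominate the small ones by a factor large enough that the relevant projection of a column of $S$ onto $\ker\Thetat$ is within $\eta/2$ in Euclidean norm (this costs roughly $\epsilon \lesssim \eta^2\lambda/\lVert\Thetat\rVert$-type factors, with extra $n$'s from converting operator to Frobenius bounds and from Cauchy–Schwarz on $n$-dimensional vectors), and (ii) the perturbation $\epsilon I$ doesn't spoil the Weak Compatibility violation certificate. The factor $\lambda^3$ and $n^3$ are just the slack absorbed through three successive crude estimates (spectral gap, spectral-to-Frobenius, and a dimension/union-bound loss), so I would not try to optimize them.

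The step I expect to be the main obstacle is the second mode of failure flagged in the overview: showing that a \emph{dense column} of $S$ actually forces $S$-preconditioned basis pursuit to fail, rather than merely that $S^T w^*$ is dense. As the excerpt notes, density of $S^T w^*$ alone is not enough (e.g. duplicating columns), and because $S$ may map into a higher-dimensional space the preconditioned program carries an extra affine constraint $Xw = Y$ that can only \emph{help} feasibility of the KKT dual. Overcoming this is precisely where the sparsity of $\Thetat$ and the structural dichotomy "every column of $S$ is dense or has tiny norm" must be used: I would argue that the tiny-norm columns contribute negligibly to $\lVert S^T w\rVert_1$ and hence can be discarded, after which the surviving columns are all $\eta/2$-dense, and then a probabilistic choice of the $k$-sparse support of $w^*$ (exploiting $k > 3(|V|/\tau)\log n$ to guarantee, with positive probability, a support on which every dense column of $S$ has $\ell_1$ mass bounded below) produces a concrete violation of the dual feasibility inequality. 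Making this robust to the affine constraint — i.e. producing a dual certificate orthogonal to the row space of $X$ that still witnesses non-optimality — is the delicate part, and I expect it to consume the bulk of the proof.
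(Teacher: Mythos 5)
Your proposal is essentially the same route the paper takes: (i) if the Weak $S$-Preconditioned Compatibility Condition fails, failure follows directly; (ii) if it holds, then for small enough $\epsilon$ the columns of $S$ must be either close to $\ker(\Thetat)$ (hence dense on $V$) or of negligible $\ell_1$-contribution; (iii) pick a $k$-sparse $w^*$ forcing $S^T w^*$ to cover the heavy dense columns; (iv) exhibit a dual perturbation $d \in \ker(X)$ (concretely $d = (I-P_{\rspan X}) S\,\mathrm{sign}(S^T w^*)$) that strictly decreases $\|S^T(\cdot)\|_1$, using $r>2m$ to guarantee the random $m$-dimensional row span cannot swallow the fixed direction $S\,\mathrm{sign}(S^T w^*)$ inside the $r$-dimensional kernel. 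So the plan is sound and you have correctly isolated the genuinely hard step (the affine constraint versus a dense certificate) and the right mechanism for it.

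Two points where your sketch diverges from the actual argument in a way worth correcting. First, the column-density step does \emph{not} go through a union bound over sparsity patterns or anticoncentration of the Gaussian design; it is a purely deterministic linear-algebra argument that applies the lower bound $\alpha^{(1)}$ to the $k$-sparse rows $\Theta_i$ of $\Theta$ and the upper bound $\beta^{(1)}$ to a single witness in $\ker(\Thetat)$, yielding $\sum_j \norm{(S^T)_j - \Proj_{\ker\Thetat}(S^T)_j}_2 \le \delta \norm{S^T}_{2\to 1}$ directly. This is exactly where the $k$-sparsity of $\Thetat$ enters, and if you try to replace this with a random-design/union-bound argument you will find there is no randomness to exploit at that stage (the design only enters in the final projection step). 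Second, the hypothesis $r>2m$ is not used to show that the column span of $S$ must have dimension near $n - O(m)$ or to extract a single dense unit vector from that span; it is used (a) once in the $\beta^{(1)}$ bound, to ensure $\ker\Thetat$ is not contained in the low-$\Sigma$-energy set $W_{\Sigma,S,\beta}$ (which forces $\norm{S^T}_{2\to 1} \ge 1/\sqrt{\epsilon\beta}$), and (b) again in the random-projection lemma, to bound the probability $\tfrac{4m}{3r}$ that the $m$-dimensional row span aligns too much with the certificate. The columns of $S$ need not span anything close to $n-O(m)$ dimensions (indeed $s$ is unconstrained, and duplicate or tiny columns are precisely the nuisance the $D/D^c$ dichotomy handles). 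With these corrections the rest of your plan, including the greedy/probabilistic-method choice of a support $K\subseteq V$ hitting every heavy dense column and the treatment of the tiny-norm columns, matches the paper's Lemmas on the $2\to 1$ norm.
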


\paragraph{The expander graph.} Ultimately, we will need to construct a positive semi-definite matrix $\Thetat$ supported on a variant of the grid graph, whose kernel has the above properties. However, we first discuss a simple construction on an expander graph, which shares several ideas with the more intricate grid graph construction. Our approach is to define $\Thetat = M^TM$ for an appropriate matrix $M \in \RR^{n-r \times n}$. Then $\Thetat$ is necessarily PSD, and its kernel must have dimension at least $r$. As $\ker(\Thetat) = \ker(M)$, we can view each row of $M$ as an equation that constrains the kernel. Specifically, we let each row of $M$ be a sparse Bernoulli random vector. With high probability, $M$ is the adjacency matrix of a bipartite expander graph, and classical results show that $\ker(M)$ is robustly dense.

This construction is noteworthy in several ways. First, unlike the lower bounds achieved in Theorem~\ref{theorem:lower-bound-intro}, this construction yields a linear sample complexity lower bound (in the number of variables $n$, or equivalently in the treewidth) for recovering $\text{polylog}(n)$-sparse vectors, for the expander graph. Second, the proof utilizes well-known results from compressed sensing and random matrix theory, providing an interesting example of how techniques originally developed to prove the \emph{success} of sparse recovery methods can be applied to establish \emph{failure}. Third, this construction may be a good candidate for stronger lower bounds (e.g., computational or statistical query).

\paragraph{The grid graph.} We now turn to constructing a positive-semidefinite matrix $\Thetat$ supported on the grid graph, such that the kernel has the above properties. As with the expander graph, we define $\Thetat = M^TM$ for an appropriate matrix $M \in \RR^{n-r \times n}$.

The requirement that $\Thetat$ be supported on the grid graph means that the equations must be in some sense ``local". An entry $\Thetat_{ij}$ is nonzero if there is some equation containing both variables $i$ and $j$. This is problematic if we want $\Thetat$ to truly be supported on the grid graph (since it is triangle-free, no equation could have more than $2$ variables). Instead, we relax that condition slightly, to require that $\Thetat$ be supported on the \emph{simplicized grid graph}, depicted in Figure~\ref{fig:simplicized-grid} for $n=4$.

\begin{figure}
    \centering
    \includegraphics[width=0.2\textwidth]{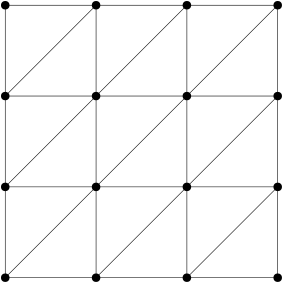}
    \caption{$4\times 4$ simplicized grid graph}
    \label{fig:simplicized-grid}
\end{figure}

Now, for every triangle of the simplicized grid graph, there can be an equation constraining the triangle's vertices. We define a subset $\mathcal{X}$ of the top row and a subset $\mathcal{Y}$ of the bottom row. We construct equations so that $\mathcal{X}$ is a set of free variables, and every other vertex has precisely one constraint, and any solution is robustly dense on either $\mathcal{X}$ or $\mathcal{Y}$, which suffices for our needs.

That such a construction exists is a priori unclear; for example, if the equations have random weights, then it turns out that in most solutions, the norm on row $r$ decays exponentially as $r$ increases. Hence, if the variables in $\mathcal{X}$ are set to some sparse vector, then $\mathcal{Y}$ will be very close to $0$ and thus not robustly dense. Similar issues arise if the equations' weights are periodic; e.g., if the first row avoids high-frequency Fourier vectors, then subsequent rows may decay exponentially.

Instead of these approaches, we take inspiration from a simple construction that does not observe the ``locality" conditions but instead is essentially a complete bipartite graph between $\mathcal{X}$ and $\mathcal{Y}$. 
Specifically, if there are no conditions on the locality of the equations, then we may introduce constraints so that each variable in $\mathcal{Y}$ is a Gaussian random linear combination of the variables in $\mathcal{X}$, i.e., $v_\mathcal{Y} = Av_\mathcal{X}$ where $A$ is a Gaussian random matrix, and $v$ is any solution. Standard matrix concentration results imply that if $v$ is nonzero, then either $v_\mathcal{Y}$ or $v_\mathcal{X}$ must be robustly dense (this can be thought of as an uncertainty principle, as in \cite{donoho1989uncertainty}).

Obviously, the complete bipartite graph cannot be directly embedded in the simplicized grid graph. However, if the grid is sufficiently large (specifically, having side length $\Omega(|\mathcal{X}|^2)$), then the complete bipartite circuit defining $\mathcal{Y}$ in terms of $\mathcal{X}$ can in fact be simulated on the simplicized grid graph. Paths between all pairs of $\mathcal{X}$ and $\mathcal{Y}$ are constructed to avoid overlap. Vertex crossings are inevitable, but they can be replaced by constant-size ``swap gadgets" which simulate crossing paths via the XOR/addition swapping trick:
$$x := x+y; \qquad y := x-y; \qquad x := x-y.$$
See Figure~\ref{fig:circuit} for a schematic of the implementation on the grid graph (not showing swap gadgets).

\begin{figure}[t]
    \centering
    \includegraphics[width=0.4\textwidth, trim = {0 0 0 5cm}]{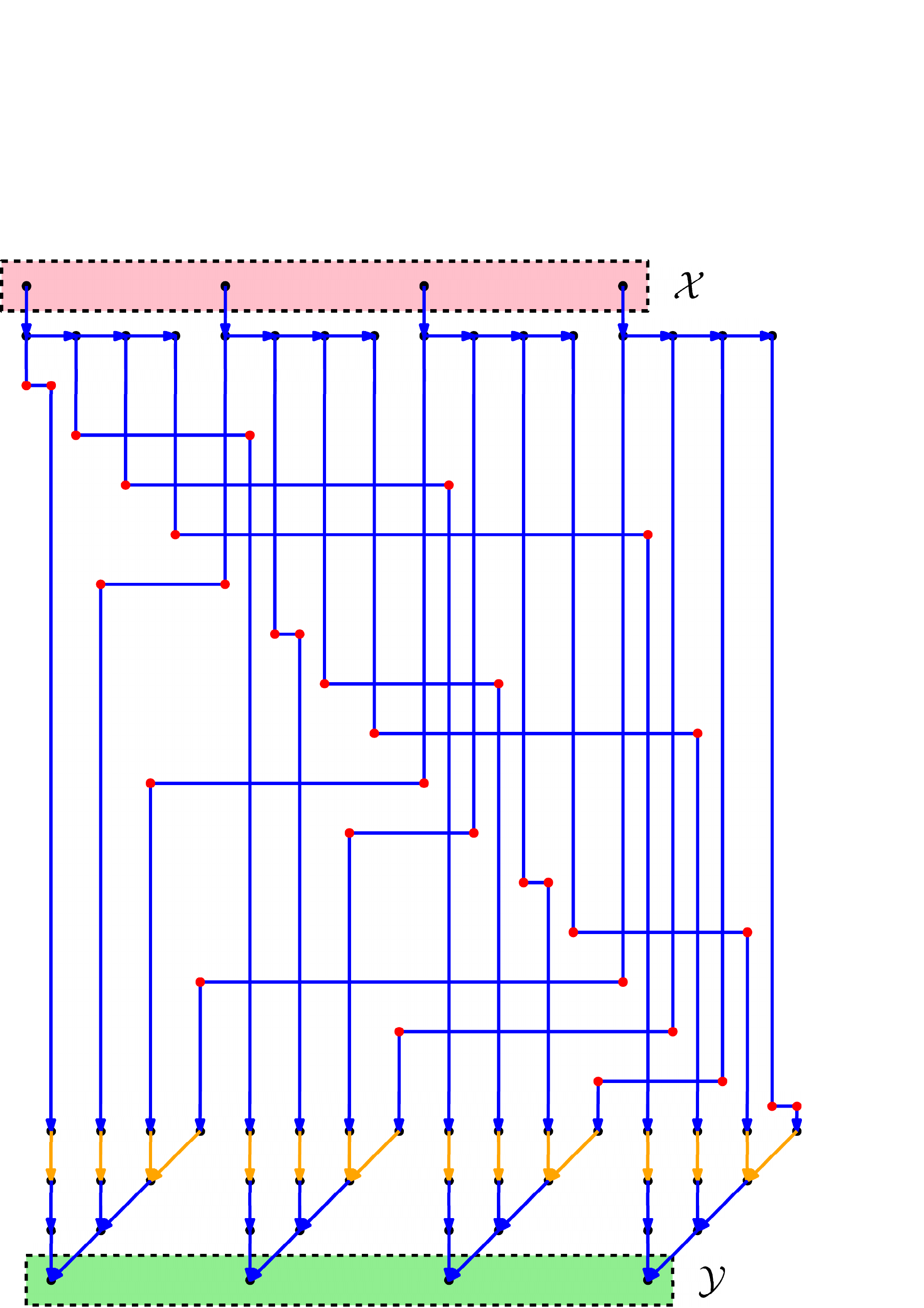}
    \caption{Schematic of equations defining the kernel of a positive-semidefinite matrix $\tilde \Theta$ supported on the simplicized grid graph. The kernel has dimension $|\mathcal{X}|$, and every vector in the kernel (i.e. solution to the equations) is robustly dense on either $\mathcal{X}$ or $\mathcal{Y}$. Note that there is a path from every vertex of $\mathcal{X}$ to every vertex of $\mathcal{Y}$. This allows us to enforce the constraint that for any solution, each variable of $\mathcal{Y}$ is equal to a random linear combination of the variables of $\mathcal{X}$. Every blue directed path denotes that all vertices on the path are constrained to be equal. A vertex with multiple incoming arrows indicates that the vertex is constrained to be the sum of its predecessors. The orange arrows are assigned Gaussian random weights. Every crossing between two paths is replaced by a ``swap gadget" ensuring that the paths do not interfere. Vertices not on any path are constrained to be $0$.}
    \label{fig:circuit}
\end{figure}

\paragraph{Unminoring.} With the above techniques, we can prove that there is a precision matrix $\Theta$ supported on the simplicized grid graph, such that for any preconditioner, the preconditioned Lasso needs a polynomial number of samples to succeed when covariates are drawn from $N(0,\Theta^{-1})$. To extend this result, we make the following simple observation: if a covariance matrix $\Sigma$ can be $S$-preconditioned so that Lasso succeeds at sparse recovery with covariates from $N(0,\Sigma)$, then certainly the same holds for any submatrix $\Sigma_{VV}$; the preconditioner is just $S_V$. In the language of precision matrices, this means that a precision matrix $\Theta$ is a hard instance (against all preconditioners) if it has a Schur complement $\Theta/\Theta_{\bar{V}\bar{V}}$ which is a hard instance.

Our goal is therefore to prove that for any high-treewidth graph $G$, there is a precision matrix $\Theta$ supported on $G$ and a vertex subset $V$ such that the Schur complement $\Theta/\Theta_{\bar{V}\bar{V}}$ approximates the hard simplicized grid instance. We appeal to the celebrated Grid Minor Theorem, which states that any graph with treewidth $t$ contains a grid minor of size $t^{\Omega(1)} \times t^{\Omega(1)}$ \cite{chekuri2016polynomial, chuzhoy2021towards}. Finally, we prove that if $G, H$ are graphs and $H$ is a minor of $G$, then any positive-definite matrix supported on $H$ can be approximated to arbitrary accuracy as a Schur complement of some positive-definite matrix supported on $G$. This last step is technically involved, but the construction is fairly simple: since $H$ is a minor of $G$, each vertex of $H$ corresponds to a connected component of $G$, and any edge in $H$ corresponds to an edge between the respective components in $G$. Given a matrix $\Gamma$ supported on $H$, we construct a nearly block-diagonal matrix $\Theta$ on $G$, where each block is a large multiple of the Laplacian of the induced subgraph of a component of $G$. This means each block is approximately a \emph{Gaussian free field} \cite{sheffield2007gaussian}, which induces a strong positive correlation between the variables inside the block. The entries of $\Gamma$ are then assigned to appropriate edges of $G$ and added into $\Theta$.


Using this result, we extend our lower bounds against preconditioned Lasso to all high-treewidth graphs, completing our tight graphical characterization of the power of preconditioned Lasso.

\subsection{Organization}
In Section~\ref{section:related} we include some final discussion of related work. In
Section~\ref{section:preliminaries} we recall some technical preliminaries, e.g. notation and concentration results. In Section~\ref{section:discussion} we remark on potential directions for future research.

\paragraph{Upper bound.} Section~\ref{sec:upper-bounds} discusses the proof of the algorithmic results for low treewidth graphs. As mentioned above, the bulk of the technical content handles the case where the population covariance $\Sigma$ is unknown; given this result, we show how to combine our preconditioner with the analysis of Lasso and model-based Iterative Hard Thresholding from the literature.

\paragraph{Lower bound.}
Sections~\ref{section:ill-conditioned}, \ref{section:lower-bounds}, \ref{section:grid-construction}, and~\ref{section:unminoring} are devoted to the full lower bound proof. Specifically, in Section~\ref{section:ill-conditioned}, the Weak $S$-Preconditioned Compatibility Condition is formally introduced, and we show that it is a necessary condition for $S$-Preconditioned Lasso to succeed. In Section~\ref{section:lower-bounds}, we show that if the precision matrix satisfies certain properties (most crucially, the dense kernel property), and if the Weak Compatibility Condition holds, then the preconditioned Lasso still fails. Culminating in Theorem~\ref{theorem:lower-bound}, this completes the generic framework for constructing hard examples for preconditioned Lasso.

Section~\ref{section:expander} gives a simple instantiation of this framework, on the expander graph, which is tangential to the proof of Theorem~\ref{theorem:lower-bound-intro}. Sections~\ref{section:grid-construction} and~\ref{section:unminoring} return to the main proof. In Section~\ref{section:grid-construction}, we construct a precision matrix supported on the grid graph with the desired properties. In Section~\ref{section:unminoring}, we extend this construction in a black-box manner to all high-treewidth graphs, proving Theorem~\ref{theorem:lower-bound-intro}.

\section{Further Related Work}\label{section:related}

\paragraph{Generalizations of the Lasso.} There is an immense literature on generalizations of the Lasso. However, to our knowledge, our work is the first to study the preconditioned Lasso as defined above, for the purpose of solving sparse linear regression. It is worth contrasting with two related branches of prior work:
\begin{enumerate}
    \item The \emph{generalized Lasso} \cite{she2010sparse, tibshirani2011solution} is defined as $$\argmin_{w \in \RR^n} \norm{Y - Xw}_2^2 + \lambda \norm{Dw}_1,$$ for a penalty matrix $D$. Definition~\ref{definition:preconditioned-lasso} is a certainly a program of this form.
    However, 
    the motivation is quite different: while 
    we consider the preconditioned Lasso as a class of approaches for linear regression with \emph{sparsity in the original basis}, the generalized Lasso
    was introduced to encapsulate ``problems that use the $\ell_1$ norm to enforce certain structural constraints---instead of pure sparsity" \cite{tibshirani2011solution}.
    That line of work has largely focused on algorithms for the generalized Lasso and applications for specific choices of penalty matrix $D$.
    
    \item A different notion of preconditioned Lasso introduced in prior work \cite{wauthier2013comparative, jia2015preconditioning} is the notion of solving the Lasso on ``preconditioned" samples $(AX, AY)$, for some invertible matrix $A$, 
    instead of the original samples $(X, Y)$. This approach has the same motivation as ours: modifying the problem so that Lasso will succeed at signal recovery (or e.g. sign recovery \cite{jia2015preconditioning}) for sparse linear regression. However,
    the two kinds of preconditioning (left vs right) are very different: theirs occurs in the space of samples, whereas ours occurs in the space of parameters. This is most clear in the noiseless setting, where Lasso reduces (if we send $\lambda \to 0$) to the basis pursuit program $$\argmin_{w \in \RR^n: Xw = Y} \norm{w}_1.$$
    As defined in \cite{jia2015preconditioning}, preconditioning has no effect on the basis pursuit program. In contrast, with our definition, preconditioned basis pursuit can often provably succeed where basis pursuit fails, as exhibited in Section~\ref{section:introduction-example}. We do note that \cite{wauthier2013comparative} also suggested a more general definition of preconditioning which includes ours, though they focused on the effect of left preconditioning as discussed above.
\end{enumerate}

\paragraph{Lower bound related work.} Impossibility results for the sparse linear regression problem fall into several categories, depending on whether they address the fixed-design setting or the random design setting, and on what classes of algorithms they rule out. In the fixed design setting, there are computational lower bounds against finding a sparse solution to a system of linear equations \cite{natarajan1995sparse, zhang2014lower, foster2015variable, har2016approximate}. No comparable results are known for random designs; there is a lower bound for \emph{robust} sparse linear regression under an assumption related to hardness of planted clique \cite{brennan2020reducibility}, but this appears to be an unrelated phenomenon, in that it holds even when $\Sigma$ is the identity matrix.

There is a richer literature on lower bounds specifically against the Lasso, for both fixed and random designs. In the random design setting, most focus on well-conditioned or identity covariances, and seek to pinpoint the constant factor in sample $m = ck \log n$ \cite{wainwright2009sharp, chen2016bayes, reeves2019all}. In contrast, we seek asymptotically stronger sample complexity lower bounds, which of course requires passing to ill-conditioned covariance matrices. There is also prior work bounding what rates the Lasso can achieve, in terms of the compatibility constant of the design matrix and the regularization parameter $\lambda$ \cite{van2018tight, bellec2018noise, bellec2018slope}. However, these results in general provide no lower bounds against Lasso in the noiseless setting, where the tuning parameter $\lambda$ is sent to $0$. Moreover, these works do not touch upon the issue of preconditioning.

In that vein, our work is most closely related to \cite{zhang2017optimal}, which constructs a (fixed-design) lower bound against the generalization of Lasso to arbitrary coordinate-separable penalties instead of the $\ell_1$ norm. Analogous to our motivation, they considered this class because there are problems for which the Lasso fails, but succeeds after an appropriate diagonal preconditioning.
Note that when the penalty is a weighted linear combination of the magnitudes of the regression coefficients, this corresponds to a diagonal preconditioner. However, coordinate-separable penalties do not encompass the full power of preconditioning by an arbitrary matrix (and vice versa). Indeed, it is a limitation of the prior work that the constructed lower bound design matrices are block-diagonal with block size $2$, and therefore amenable to being solved by the preconditioned Lasso. This is a limitation we address in our work.






\section{Preliminaries}\label{section:preliminaries}

\paragraph{Linear Algebra Notation.}
The inner product of two vectors is denoted $\langle x, y \rangle = x^T y$.
For a matrix $M$, we let $\rspan M$ denote the span of the rows of $M$ and let $\ker M = \{x : M x = 0 \}$ denote the kernel of $M$. Note that these spaces are orthogonal complements, because if $M^T x$ is an arbitrary vector in $\rspan M$ and $y \in \ker M$, then $\langle M^T x, y \rangle = \langle x, M y \rangle = 0$. For a matrix $M \in \RR^{n \times p}$, a vector $v \in \RR^n$, and a subset $U \subseteq [n]$, we use the notation that $M_U$ is the $|U| \times p$ submatrix consisting of the rows of $M$ indexed by $U$; and $v_U$ consists of the entries of $v$ indexed by $U$; and $U^c = \bar{U} = [n] \setminus U$ is the complement of $U$ in $[n]$.

We use standard notation for vector and matrix norms: $\|x\|_p = \left(\sum_i |x_i|^p\right)^{1/p}$ and 
\[ \|M\|_{p \to q} = \sup_{x : \|x\|_p = 1} \|M x\|_q. \]
If no subscript is given for a vector norm, it denotes the Euclidean norm $\|\cdot\|_2$; if no subscript is given for a matrix, it denotes the operator norm. 
We also denote the set of $k$-sparse vectors as
\begin{equation}\label{eqn:k-sparse}
B_0(k) = \{ x : \#\{i : x_i \ne 0\} \le k \}. 
\end{equation}
Given a positive semidefinite matrix $\Sigma$, the corresponding \emph{Mahalanobis norm} is defined as
\[ \|x\|_{\Sigma}^2 = \langle x, \Sigma x \rangle. \]
For symmetric matrices $A,B$, the Loewner order $A \preceq B$ means that $B - A$ is positive semidefinite. For a graph $G$ on $[n]$, we say that an $n \times n$ matrix $\Theta$ is \emph{$G$-sparse} or \emph{supported on $G$} if $\Theta_{ij} \neq 0$ implies that $(i,j) \in E(G)$ (implicitly adding all self-loops to $G$).
We use the standard notation $U \sqcup V$ to denote the union of sets $U$ and $V$ when $U$ and $V$ are assumed to be disjoint.

\paragraph{Model and objective.} We recall from \eqref{eqn:linear-model} that the generative model of data is 
\[ Y_i = \langle w^*, X_i \rangle + \xi \]
where the covariates $X_i$ are sampled i.i.d. from a (sub-)Gaussian distribution and $\xi_i \sim N(0,\sigma^2)$ is independent. In linear algebra notation, this is written
\[ Y = X w^* + \xi \]
where $Y = (Y_1,\ldots,Y_m)$ is the response vector, $X : m \times n$ is the \emph{design matrix} with rows $(X_1,\ldots,X_m)$ and $\xi = (\xi_1,\ldots,\xi_m)$ is the noise vector.
As stated in the introduction, the objective we focus on is the \emph{out-of-sample prediction error} or \emph{population loss}
\[ L(w) = \EE[(Y_0 - \langle w, X_0 \rangle)^2] \]
where $(X_0,Y_0)$ is a fresh sample from the model. By expanding the definition of $Y_0$ and using independence of the noise, we see this can be written
\[ \EE[(Y_0 - \langle w, X_0 \rangle)^2] = \EE[(Y_0 - \langle w^*, X_0 \rangle)^2] + \EE[(\langle w^* - w, X_0 \rangle)^2] = \sigma^2 + \|w - w^*\|_{\Sigma}^2 \]
where the first term is the optimal \emph{Bayes risk} and the second term, the Mahalanobis norm, is the \emph{excess risk}. Thus, minimizing \eqref{eqn:oos-prediction-error} is equivalent to estimating $w^*$ in the intrinsic Mahalanobis norm $\|\cdot\|_{\Sigma}$. Closely related is the in-sample MSE (Mean Squared Error) objective
\begin{equation}\label{eqn:mse}
\frac{1}{m} \|X w^* - X w\|_2^2 = \|w^* - w\|_{\hat \Sigma}^2. 
\end{equation}
where $\hat \Sigma = \frac{1}{m} X^T X$ is the empirical covariance matrix.
This objective does not account for the cost of generalizing to fresh samples, but it has the benefit of being defined for an arbitrary design matrix $X$, i.e. outside of the random design setting. Our positive results naturally give upper bounds on both objectives.

As we recall below (Lemma~\ref{lem:covest}), for subgaussian covariates the norms $\|\cdot\|_{\Sigma}$ and $\|\cdot\|_{\hat \Sigma}$ are equivalent up to constants when evaluated on $k$-sparse vectors and given $\Omega(k \log(n))$ samples, so as long as we consider sparse $w$ we can ignore the difference between minimizing \eqref{eqn:mse} and \eqref{eqn:oos-prediction-error}. The difference between the metrics is important in a few cases: if $\sigma = 0$, $\Sigma$ is invertible, and $w$ is potentially allowed to be dense, then minimizing \eqref{eqn:oos-prediction-error} and achieving $\|w - w^*\|_{\Sigma} = 0$ implies exact recovery of $w^*$, but any solution $Y = Xw$ minimizes \eqref{eqn:mse}, and there can be many solutions if $\hat \Sigma$ is rank degenerate. For example, the ``slow rate'' guarantee for the Lasso (see e.g. Theorem 2.15 of \cite{rigollet2015high}) with suitably normalized $X$ gives that
\[ \|w - w^*\|_{\hat \Sigma}^2 = \frac{1}{m} \|X(w - w^*)\|^2 \lesssim \sigma \|w^*\|_1 \sqrt{\frac{\log(n)}{m}}  \]
and the bound goes to zero when $\sigma = 0$, but it does not imply exact recovery in this setting, because there is an additional ``generalization'' term of $\|w^*\|_1^2/n$ in the analogous bound on the excess out-of-sample prediction error (see e.g. \cite{srebro2010optimistic,mendelson2014learning}) which does not vanish.

\paragraph{Information-theoretic upper bound (Best Subset Selection).} Instead of solving a convex program, the Best Subset Selection algorithm (see e.g. \cite{bertsimas2016best} and references within) chooses the size-$k$ support which minimizes the square loss, and has very strong statistical guarantees. For example, the following guarantee for Best Subset Selection is well known:
\begin{theorem}
Suppose $\hat{w} = \arg\min_{|\supp(w)| \le k} \|Y - X w\|^2$ in the model described above. Then as long as $m = \Omega(k \log(n/\delta))$, we have
\[ \|w^* - \hat{w}\|_{\Sigma}^2 \lesssim \|w^* - \hat{w}\|_{\hat \Sigma}^2 \lesssim \frac{\sigma^2 (k \log(n) + \log(2/\delta))}{m}. \]
\end{theorem}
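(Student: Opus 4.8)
The plan is to prove the two inequalities in sequence. The first, $\|w^* - \hat w\|_\Sigma^2 \lesssim \|w^* - \hat w\|_{\hat\Sigma}^2$, is exactly the promised equivalence of the population and empirical Mahalanobis norms on sparse vectors (Lemma~\ref{lem:covest}, cited in the excerpt): since both $w^*$ and $\hat w$ are $k$-sparse, their difference is $2k$-sparse, and given $m = \Omega(k\log(n/\delta))$ samples the restricted eigenvalues of $\hat\Sigma$ on $2k$-sparse vectors are within a constant factor of those of $\Sigma$ with probability $1-\delta$. So I would dispatch this step by a direct appeal to that lemma, and focus the real work on the second inequality, which is the statistical heart of the statement.

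For $\|w^* - \hat w\|_{\hat\Sigma}^2 \lesssim \sigma^2(k\log n + \log(2/\delta))/m$, I would run the standard oracle/empirical-process argument for Best Subset Selection. First, by optimality of $\hat w$ we have $\|Y - X\hat w\|^2 \le \|Y - Xw^*\|^2$; substituting $Y = Xw^* + \xi$ and expanding gives the basic inequality
\[
\|X(\hat w - w^*)\|^2 \le 2\langle \xi, X(\hat w - w^*)\rangle.
\]
The right-hand side is a Gaussian linear form in $\xi$; writing $u = X(\hat w - w^*)/\|X(\hat w - w^*)\|$ (a unit vector) it equals $2\|X(\hat w - w^*)\|\cdot \langle \xi, u\rangle$. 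The key point is that $u$ lies in the column space of $X_S$ where $S = \supp(\hat w)\cup\supp(w^*)$ has size at most $2k$, so $\langle \xi, u\rangle$ is controlled by $\sup_{|S| \le 2k}$ of the norm of the projection of $\xi$ onto that $\le 2k$-dimensional coordinate subspace's image under $X$. Dividing through by $\|X(\hat w - w^*)\|$ yields $\|X(\hat w - w^*)\| \le 2\sup_{S} \|\Proj_{\mathrm{col}(X_S)}\xi\|$.

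The remaining step is a union bound over the $\binom{n}{2k} \le n^{2k}$ choices of support: for each fixed $S$, $\|\Proj_{\mathrm{col}(X_S)}\xi\|^2$ is $\sigma^2$ times a $\chi^2$ random variable with at most $2k$ degrees of freedom (conditioning on $X$, the projection is onto a fixed subspace of dimension $\le 2k$), so by a standard $\chi^2$ tail bound it is $O(\sigma^2(k + t))$ except with probability $e^{-\Omega(t)}$; taking $t \asymp k\log n + \log(1/\delta)$ and union bounding gives that $\sup_S \|\Proj_{\mathrm{col}(X_S)}\xi\|^2 = O(\sigma^2(k\log n + \log(2/\delta)))$ with probability $1 - \delta$. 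Combining, $\|X(\hat w - w^*)\|^2 = O(\sigma^2(k\log n + \log(2/\delta)))$, and dividing by $m$ gives $\|w^* - \hat w\|_{\hat\Sigma}^2 = \frac{1}{m}\|X(\hat w - w^*)\|^2 = O(\sigma^2(k\log n + \log(2/\delta))/m)$ as claimed. I expect the main (mild) obstacle to be the bookkeeping in the union bound — making sure the projection dimension, the $\chi^2$ degrees of freedom, and the $\log\binom{n}{2k}$ entropy term are all matched so that the failure probabilities sum to $\delta$ and the sample requirement $m = \Omega(k\log(n/\delta))$ is exactly what is needed for the norm-equivalence step; the core inequality itself is routine.
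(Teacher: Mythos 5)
Your proof is correct and follows essentially the same route the paper points to: the first inequality is dispatched exactly as the paper does, by citing Lemma~\ref{lem:covest} (Wishart concentration on $2k$-sparse submatrices), and for the second inequality the paper simply defers to standard references (Theorem 2.14 of \cite{rigollet2015high} and \cite{foster1994risk}), whose underlying argument is precisely the one you reconstruct — basic inequality from optimality of $\hat w$, reduction to $\sup_{|S|\le 2k}\|\Proj_{\mathrm{col}(X_S)}\xi\|$, $\chi^2$ tail bound conditional on $X$, and a union bound over the $\binom{n}{2k}$ supports. No gaps.
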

\begin{proof}
The second inequality is a standard guarantee for Best Subset Selection and holds for an arbitrary fixed design; it follows, for example, by adapting the proof of Theorem 2.14 from \cite{rigollet2015high} for the (more complex) BIC estimator, or see e.g. \cite{foster1994risk}. The first inequality follows directly from Wishart concentration and the union bound; it's given formally as Lemma~\ref{lem:covest} below.
\end{proof}
Note that, just like Ordinary Least Squares (see e.g. \cite{hsu2012random}), this guarantee holds regardless of whether $\Sigma$ is well-conditioned or not.

\paragraph{Subgaussian random vectors.} For $\Sigma$ a positive semidefinite matrix, we say that a mean-zero random vector $X$ is $(L,\Sigma)$-\emph{subgaussian} if it satisfies 
\begin{equation}\label{eqn:subgaussian}
\log \EE \exp(\langle w, X \rangle) \le L^2 \langle w, \Sigma w \rangle/2.
\end{equation}
for all vectors $w \in \mathbb{R}^n$. When $\Sigma$ is omitted, we implicitly take $\Sigma = \EE X X^T$; note that definition is independent of choice of basis, e.g. any Gaussian $N(0,\Sigma)$ is a $(1,\Sigma)$-subgaussian random vector. This should not be confused with the notion of an $(L,I)$-subgaussian random vector which is defined with respect to a fixed basis/choice of Euclidean norm.
We say a mean-zero random vector $X$ is \emph{isotropic} if $\EE XX^T = I$; note that for an isotropic $L$-subgaussian random vector the rhs of \eqref{eqn:subgaussian} is just $L^2 \|w\|^2/2$.

We will use standard bounds on extremal singular values of Gaussian and sub-Gaussian random matrices (see, e.g., Corollary~5.35 in \cite{vershynin2010rmt}). 

\begin{theorem}\label{thm:rmt}
    Let $n,N \in \NN$. Let $A \in \RR^{N \times n}$ be a random matrix with entries i.i.d. $N(0,1)$. Then for any $t>0$, it holds with probability at least $1 - 2\exp(-t^2/2)$ that $$\sqrt{N} - \sqrt{n} - t \leq \sigma_\text{min}(A) \leq \sigma_\text{max}(A) \leq \sqrt{N} + \sqrt{n} + t.$$
\end{theorem}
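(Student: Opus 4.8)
The plan is the classical two-step argument for extremal singular values of a Gaussian matrix: first bound the \emph{expectations} $\EE\,\sigma_{\min}(A)$ and $\EE\,\sigma_{\max}(A)$ using a Gaussian comparison inequality, then promote these to a high-probability statement via Gaussian concentration of measure. Throughout, write $S^{n-1},S^{N-1}$ for the unit spheres and use the variational formulas $\sigma_{\max}(A)=\max_{u\in S^{n-1}}\max_{v\in S^{N-1}} v^\top A u$ and $\sigma_{\min}(A)=\min_{u\in S^{n-1}}\max_{v\in S^{N-1}} v^\top A u$. For the lower bound we may assume $N\ge n$, since otherwise $\sqrt N-\sqrt n-t<0\le\sigma_{\min}(A)$ and there is nothing to prove; the upper bound argument is insensitive to which of $n,N$ is larger.

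\emph{Step 1: expectation bounds via Gaussian comparison.} Consider the centered Gaussian field $Z_{u,v}=v^\top A u$ and the comparison field $W_{u,v}=\langle g,u\rangle+\langle h,v\rangle$, where $g\sim N(0,I_n)$ and $h\sim N(0,I_N)$ are independent. A short second-moment computation gives the key identity
\[ \EE(W_{u,v}-W_{u',v'})^2-\EE(Z_{u,v}-Z_{u',v'})^2 = 2\,(1-\langle u,u'\rangle)(1-\langle v,v'\rangle)\ \ge\ 0, \]
valid for all unit vectors, with equality whenever $u=u'$. The inequality is exactly the Sudakov--Fernique hypothesis, so
\[ \EE\,\sigma_{\max}(A)=\EE\max_{u,v}Z_{u,v}\ \le\ \EE\max_{u,v}W_{u,v}=\EE\|g\|_2+\EE\|h\|_2\ \le\ \sqrt n+\sqrt N, \]
using $\EE\|g\|_2\le(\EE\|g\|_2^2)^{1/2}=\sqrt n$. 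The additional ``within-block'' condition (equality when $u=u'$) is precisely what Gordon's Gaussian min--max comparison requires, once we first add an independent $N(0,1)$ variable to $Z$ to equalize variances with $W$ (which does not change $\EE\min_u\max_v Z_{u,v}$). Gordon's inequality then yields
\[ \EE\,\sigma_{\min}(A)=\EE\min_u\max_v Z_{u,v}\ \ge\ \EE\min_u\max_v W_{u,v}=\EE\|h\|_2-\EE\|g\|_2, \]
where now $h\in\RR^N$ and $g\in\RR^n$. Since the function $x\mapsto\sqrt x-\EE\|N(0,I_x)\|_2$ is non-increasing in $x$ (an elementary fact about ratios of Gamma functions) and $N\ge n$, the right-hand side is at least $\sqrt N-\sqrt n$.

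\emph{Step 2: concentration.} Identifying $A$ with a point of $\RR^{Nn}$ equipped with the Euclidean (equivalently, Frobenius) norm, both $A\mapsto\sigma_{\max}(A)$ and $A\mapsto\sigma_{\min}(A)$ are $1$-Lipschitz, since $|\sigma_{\max}(A)-\sigma_{\max}(B)|$ and $|\sigma_{\min}(A)-\sigma_{\min}(B)|$ are each at most $\|A-B\|_{\mathrm{op}}\le\|A-B\|_F$. By the Gaussian concentration inequality for Lipschitz functions, the upper tail of $\sigma_{\max}$ about its mean and the lower tail of $\sigma_{\min}$ about its mean each have probability at most $\exp(-t^2/2)$; combining with Step 1, the event $\{\sigma_{\max}(A)>\sqrt n+\sqrt N+t\}\cup\{\sigma_{\min}(A)<\sqrt N-\sqrt n-t\}$ has probability at most $2\exp(-t^2/2)$, which is the claim.

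\emph{Anticipated main obstacle.} The routine part is the upper bound on $\sigma_{\max}$: Sudakov--Fernique needs only the increment inequality, which drops out of the displayed identity. The delicate part is $\EE\,\sigma_{\min}(A)\ge\sqrt N-\sqrt n$: a min--max expression cannot be handled by Sudakov--Fernique, so one must invoke the full Gordon comparison (whose equal-variance hypothesis forces the auxiliary-Gaussian trick), and one must separately verify the monotonicity of $x\mapsto\sqrt x-\EE\|N(0,I_x)\|_2$. An alternative that sidesteps Gordon is an $\varepsilon$-net argument on $S^{n-1}$ combined with $\chi^2$-concentration, but it loses constant (and logarithmic) factors and does not yield the sharp $\sqrt N\pm\sqrt n$ form used elsewhere in the paper, so the comparison-inequality proof is the one to carry out.
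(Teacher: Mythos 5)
The paper does not actually prove Theorem~\ref{thm:rmt}; it states it and cites Corollary~5.35 of \cite{vershynin2010rmt}. What you have written is, in essence, a reconstruction of the Davidson--Szarek/Gordon proof that underlies that citation, and it is correct in all of its main moves: the increment identity $\EE(W-W')^2-\EE(Z-Z')^2=2(1-\langle u,u'\rangle)(1-\langle v,v'\rangle)\ge 0$ is computed correctly and is exactly what Sudakov--Fernique needs for the upper bound on $\EE\,\sigma_{\max}$; adding an independent scalar Gaussian to $Z$ to match the variance $2$ of $W$ (without shifting $\EE\min_u\max_v Z$) is the right fix to place yourself in the hypotheses of Gordon's comparison, and the cross-block covariance inequality reduces to the same nonnegativity; and the concentration step via $1$-Lipschitzness of $\sigma_{\min},\sigma_{\max}$ in Frobenius norm together with the Borell--TIS inequality gives each tail with probability $\le e^{-t^2/2}$, hence $2e^{-t^2/2}$ by a union bound. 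The reduction to $N\ge n$ for the lower bound is also handled correctly.

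The one spot that deserves more care is the parenthetical ``an elementary fact about ratios of Gamma functions.'' The claim that $m\mapsto\sqrt m-\EE\|N(0,I_m)\|_2$ is non-increasing is true and is exactly what is needed to pass from $\EE\|h\|_2-\EE\|g\|_2$ to $\sqrt N-\sqrt n$, but it is not a one-liner. Writing $a_m=\EE\|N(0,I_m)\|_2=\sqrt2\,\Gamma\!\left(\tfrac{m+1}{2}\right)/\Gamma\!\left(\tfrac m2\right)$, the useful identity $a_m a_{m+1}=m$ and the crude bound $a_m<\sqrt m$ give $a_{m+1}-a_m=(a_{m+1}^2-m)/a_{m+1}>0$, but squeezing this all the way to $a_{m+1}-a_m\ge\sqrt{m+1}-\sqrt m$ is genuinely tight (both sides are $\sim 1/(4\sqrt m)$), and the Kershaw-type bound $a_{m+1}>\sqrt{m+1/2}$ alone is \emph{not} sufficient because $\sqrt m+\sqrt{m+1}<2\sqrt{m+1/2}$. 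You should either cite this inequality (it appears, with proof or as an exercise, in Davidson--Szarek and in \cite{vershynin2018high}), or give the extra digamma/log-convexity argument that actually closes it. As written, a reader could reasonably object that the hardest $10\%$ of the lower bound on $\EE\,\sigma_{\min}$ has been waved away; everything else in your write-up is solid and matches the intended source.
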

This Theorem implies a concentration inequality for Wishart matrices, via a change of basis argument. In fact, we cite the following more general result:
\begin{theorem}[Theorem 4.6.1 of \cite{vershynin2018high}]\label{thm:rmt-general}
Suppose $X_1,\ldots,X_m$ are i.i.d. isotropic $L$-sub-Gaussian random vectors in $\mathbb{R}^n$ and let $X : m \times n$ be the matrix with row $i$ equaling $X_i$. Then with probability at least $1 - 2e^{-t}$,
\[ \left\|\frac{1}{m} X^T X - I \right\|_{OP} \le L^2 \max(\delta,\delta^2) \]
where $\delta = C\left(\sqrt{n/m} + t/\sqrt{m}\right)$.
\end{theorem}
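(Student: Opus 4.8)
The plan is to prove this by the textbook $\epsilon$-net plus Bernstein argument (which is exactly why the statement is attributed to \cite{vershynin2018high}). Write $A = \frac{1}{m}X^TX - I$, which is symmetric, and note that $\|A\|_{OP} = \sup_{x \in S^{n-1}} |x^T A x|$, where $x^T A x = \frac{1}{m}\sum_{i=1}^m (\langle X_i, x\rangle^2 - 1)$. So it suffices to control this quadratic form uniformly over the sphere. The general-$\Sigma$ version mentioned in the remark preceding the theorem then follows immediately by conjugating $X^TX$ with $\Sigma^{1/2}$, reducing to the isotropic case.

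First I would discretize: fix a $\tfrac14$-net $\mathcal{N}$ of $S^{n-1}$ with $|\mathcal{N}| \le 9^n$ (standard volumetric bound). The standard approximation lemma for symmetric matrices gives $\|A\|_{OP} \le 2\max_{x \in \mathcal{N}}|x^T A x|$, so it is enough to union-bound the event $\{|x^TAx| > u/2\}$ over $x \in \mathcal{N}$ for an appropriate $u$. Next, for a fixed $x \in S^{n-1}$, isotropy gives $\EE\langle X_i,x\rangle^2 = 1$, and \eqref{eqn:subgaussian} is precisely the sub-gaussian MGF bound with variance proxy $L^2\|x\|_\Sigma^2 = L^2$, so $\langle X_i,x\rangle$ is sub-gaussian with $\psi_2$-norm $\lesssim L$. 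Hence $Z_i := \langle X_i, x\rangle^2 - 1$ is a centered sub-exponential variable with $\|Z_i\|_{\psi_1} \lesssim L^2$, and Bernstein's inequality for independent sub-exponentials yields, for every $u \ge 0$,
\[ \Pr\!\left[\Big|\tfrac1m\textstyle\sum_i Z_i\Big| \ge u\right] \le 2\exp\!\left(-c\,m\min\!\big(u^2/L^4,\; u/L^2\big)\right). \]

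To finish, I would take $u = L^2\max(\delta,\delta^2)$ with $\delta = C(\sqrt{n/m}+t/\sqrt m)$. When $\delta \le 1$ the binding term in Bernstein is $u^2/L^4 = \delta^2$, giving exponent $-c\,m\delta^2 \le -c'(n+t)$; when $\delta > 1$ it is $u/L^2 = \delta$, giving exponent $-c\,m\delta \le -c'(n+t)$. In either regime, multiplying the per-point probability by $|\mathcal{N}| \le 9^n = e^{n\ln 9}$ and choosing the universal constant $C$ large enough that $c'n$ dominates $n\ln 9$, the union bound over $\mathcal{N}$ gives failure probability at most $2e^{-t}$, and then $\|A\|_{OP} \le u = L^2\max(\delta,\delta^2)$, as claimed. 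The only real subtlety is constant bookkeeping: making the single $C$ in $\delta$ large enough to simultaneously beat the $n\ln 9$ entropy of the net in both Bernstein regimes and to absorb the factor $2$ from the net-approximation step. Everything here is routine, and since the theorem is quoted verbatim, it simply suffices to cite \cite{vershynin2018high}.
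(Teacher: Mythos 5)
The paper supplies no proof of this statement at all—it simply cites Theorem 4.6.1 of \cite{vershynin2018high} as a black box, so there is nothing in the paper to compare against. Your reconstruction of the textbook argument (reduce $\|A\|_{OP}$ to a supremum of the quadratic form over a $\tfrac14$-net of the sphere, observe each $\langle X_i,x\rangle^2-1$ is centered sub-exponential with $\psi_1$-norm $\lesssim L^2$, apply Bernstein, and union-bound against $9^n$ net points by taking $C$ large) is exactly Vershynin's proof and is correct, including the two-regime handling of $\max(\delta,\delta^2)$; as you note, for the paper's purposes the citation alone suffices.
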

\begin{corollary}[See e.g. Exercise 4.7.3 of \cite{vershynin2018high}]\label{corr:wishart}
Suppose $X_1,\ldots,X_m \sim N(0,\Sigma)$ with $\Sigma : n \times n$ a positive definite matrix, $t > 0$ and $m = \Omega(n + t^2)$. Let $\hat{\Sigma} = \frac{1}{m} \sum_i X_i X_i^T$. Then with probability at least $1 - 2\exp(-t^2/2)$,
\[ (1 - \epsilon) \Sigma \preceq \hat{\Sigma} \preceq (1 + \epsilon) \Sigma \]
with $\epsilon = O(\sqrt{n/m} + \sqrt{t^2/m})$.
\end{corollary}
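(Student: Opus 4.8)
The plan is to deduce this directly from Theorem~\ref{thm:rmt-general} by the standard whitening trick. Since $\Sigma$ is positive definite, write each covariate as $X_i = \Sigma^{1/2} Z_i$ where $Z_i := \Sigma^{-1/2} X_i \sim N(0,I)$; the $Z_i$ are i.i.d.\ isotropic Gaussians, hence $(1,I)$-subgaussian. Collecting them as the rows of an $m \times n$ matrix $Z$, we have the identity $\hat{\Sigma} = \frac{1}{m}\sum_i X_i X_i^T = \Sigma^{1/2}\left(\frac{1}{m} Z^T Z\right)\Sigma^{1/2}$.

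Next I would apply Theorem~\ref{thm:rmt-general} to $Z$, choosing its free tail parameter so that the stated failure probability becomes $2\exp(-t^2/2)$. On the complementary event this gives $\left\|\frac{1}{m} Z^T Z - I\right\|_{OP} \le \max(\delta,\delta^2)$ with $\delta = O\!\left(\sqrt{n/m} + \sqrt{t^2/m}\right)$. Because $m = \Omega(n + t^2)$, taking the implicit constant in that hypothesis large enough forces $\delta \le 1$, so $\max(\delta,\delta^2) = \delta$ and the bound becomes $-\delta I \preceq \frac{1}{m} Z^T Z - I \preceq \delta I$, i.e. $(1-\delta) I \preceq \frac{1}{m} Z^T Z \preceq (1+\delta) I$.

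Finally, congruence by an invertible matrix preserves the Loewner order: if $A \preceq B$ then $M^T A M \preceq M^T B M$ for any $M$, since $v^T M^T (B-A) M v = (Mv)^T (B-A)(Mv) \ge 0$. Applying this with the symmetric matrix $M = \Sigma^{1/2}$ to the chain above and using the identity for $\hat{\Sigma}$ yields $(1-\delta)\Sigma \preceq \hat{\Sigma} \preceq (1+\delta)\Sigma$. Setting $\epsilon := \delta = O\!\left(\sqrt{n/m} + \sqrt{t^2/m}\right)$ completes the proof.

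There is no real obstacle here; the only thing requiring care is bookkeeping — matching the exponential tail of Theorem~\ref{thm:rmt-general} to the desired $\exp(-t^2/2)$ by the appropriate choice of its parameter, and using the sample-size hypothesis $m = \Omega(n+t^2)$ to ensure $\delta \le 1$ so that the $\max(\delta,\delta^2)$ collapses to the linear term. Everything else is the elementary observation that whitening converts a correlated Gaussian Wishart matrix into a standard one, together with invariance of positive semidefiniteness under congruence.
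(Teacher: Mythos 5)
Your proof is correct and takes essentially the same route as the paper: whiten by $\Sigma^{-1/2}$, apply Theorem~\ref{thm:rmt-general} to the isotropic rows, and conjugate back by $\Sigma^{1/2}$ using invariance of the Loewner order under congruence. You are actually a bit more explicit than the paper about matching the tail parameter to $\exp(-t^2/2)$ and using $m = \Omega(n+t^2)$ to collapse $\max(\delta,\delta^2)$ to $\delta$, but the argument is the same.
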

\begin{proof}
If we let $Z_i = \Sigma^{-1/2} X_i$ so $Z_i \sim N(0,I)$, and define $Z : m \times n$ to be the matrix with rows the $Z_i$, which satisfies $X = Z \Sigma^{1/2}$ the previous Theorem~\ref{thm:rmt-general} implies
\[ (1 - \epsilon) I \preceq \frac{1}{m} Z^T Z \preceq (1 + \epsilon)I.\]
Multiplying on the left and right by $\Sigma^{1/2}$ proves the claim. 
\end{proof}

\paragraph{Interpreting Factorizations of the Precision Matrix.} 
In this paper we often (e.g. in the lower bound constructions) consider factorizations of the precision matrix of the form $\Theta = M^T M$ for some invertible matrix $M$. In the case that $M^T$ is a lower triangular matrix, so the decomposition is a Cholesky decomposition, such factorizations correspond to the choice of a \emph{Structural Equation Model} which generates the distribution $N(0,\Sigma)$. In this case, for $M$ with generic entries, determining the sparsity pattern of $\Theta$ from that of $M$ is a simple process called \emph{moralization}, which converts a directed graphical model to an undirected one. See e.g. \cite{peters2017elements,pearl2009causality,lauritzen1996graphical} for further discussion.

For general $M$ the factorization has a similar interpretation: note that if $X \sim N(0,\Sigma)$ with $\Sigma = \Theta^{-1}$ then
\[ \EE[(MX)(MX)^T] = M (M^T M)^{-1} M^T = I \]
so sampling can be interpreted as solving the equation $M X = Z$ with $Z \sim N(0,I)$. Furthermore, the sparsity pattern of $\Theta$ is still determined in a simple way, from the observation that
\[ \Theta_{ij} = \langle M_i, M_j \rangle \]
where $M_i$ is the $i$th column of $M$. Therefore if we interpret the rows of $M$ as corresponding to equations and the columns as variables, two variables $i$ and $j$ can only be connected in the undirected graphical model (corresponding to the support of $\Theta$) if they both appear in the same equation.
\subsection{$\ell_1$-regularized regression methods and guarantees}
In this section we review the definition of the Lasso and related estimators for high-dimensional linear regression (see e.g. textbooks \cite{vershynin2018high,rigollet2015high,wainwright2019high} for general references). As input, this procedure takes in a \emph{design matrix} $X$ and a \emph{response vector} $Y$. 

The Lasso 
refers to the $\ell_1$-regularized least squares estimator
\begin{equation}\label{eqn:lasso-def}
\min_w \|X w - Y\|_2^2 + \lambda \|w\|_1 
\end{equation}
or its constrained version
\[ \min_{w : \|w\|_1 \le R} \|X w - Y\|_2^2 \]
where $R,\lambda$ are related as $\lambda$ is the corresponding Lagrange multiplier of the $\ell_1$-norm constraint. In the noiseless case, the Lasso with $\lambda \to 0$ reduces to the \emph{Basis Pursuit} (BP) linear program
\[ \min_{Y = X w} \|w\|_1. \]
Note that when the goal is \emph{exact recovery}, i.e. we are interested in solutions to the equation $X w = Y$, then success of the Lasso directly implies success of Basis Pursuit (equivalently, any lower bound against Basis Pursuit implies a lower bound against Lasso). 
Beyond these methods there are further variants such as the Dantzig selector \cite{candes2007dantzig} or SLOPE (see e.g. \cite{bellec2018slope}) with different adaptivity properties, etc. but for the purposes of this paper those differences are not so relevant. All of these methods generally require control of some notion of restricted eigenvalues, as discussed below, in order to achieve nearly optimal statistical performance.

\paragraph{Restricted eigenvalues and compatibility condition.} There is a vast literature of analyses of the Lasso/basis pursuit in compressed sensing and sparse linear regression which we do not attempt to survey; instead we refer the reader to references such as \cite{wainwright2019high,vershynin2018high,rigollet2015high,van2009conditions,candes2007dantzig,bickel2009simultaneous}. 
In particular, there are many related definitions in the literature for properties of the \emph{design matrix} which guarantee the success of the Lasso in exact recovery and other versions of the sparse linear regression problem: e.g. the restricted isometry property (RIP), restricted eigenvalue (RE), and compatibility condition. The paper \cite{van2009conditions} gives a fairly exhaustive overview of the relation between these definitions; one of the weakest conditions identified there is the \emph{compatibility condition} which we define now:
\begin{definition}[Compatibility Condition (e.g. \cite{van2009conditions})]\label{def:compatability-condition}
For a positive semidefinite matrix $\Sigma : n \times n$, $L \ge 1$, and set $U \subset [n]$, we say $\Sigma$ has \emph{$(L,U)$-restricted $\ell_1$-eigenvalue}
\[ \phi_{compatible}^2(\Sigma,L,U) = \min_{w \in \mathcal{C}(L,U)} \frac{|U| \cdot \langle w, \Sigma w \rangle}{\|w_U\|^2_1}  \]
where the cone $\mathcal{C}(L,U)$ is defined as
\[ \mathcal{C}(L,U) = \{ w \ne 0 : \|w_{U^C}\|_1 \le L \|w_{S}\|_1 \}. \]
When $U$ is omitted, the $(L,k)$-restricted $\ell_1$-eigenvalue is the minimum over all $U$ of size at most $k$. We say the \emph{(L,U)-compatability condition} holds if the $(L,U)$-restricted $\ell_1$-eigenvalue is nonzero.
\end{definition}
For the purposes of this discussion, it will be enough to consider the case $L = 2$, in which case the denominator $\|w_S\|_1$ is within a constant factor of $\|w\|_1$. This assumption suffices to prove the following performance guarantee for the Lasso:
\begin{theorem}[Combined Lemma 11.1 and 11.2 of \cite{van2009conditions}]\label{thm:lasso-compatability}
Suppose that $w^*$ is supported on $U$,
$Y = X w^* + \xi$ with $X : m \times n$ and $\xi \sim N(0, \sigma^2 I_{m \times m})$,
$\delta > 0$ and $\lambda = 4\sigma \sqrt{\frac{2\log(2/\delta) + 2 \log n}{m}}$. Let $\hat{\Sigma} = \frac{1}{m} X^T X$ and suppose the diagonal of $\hat{\Sigma}$ is all-ones (i.e. the columns of $X$ are scaled to have $\ell_2$ norm $\sqrt{m}$).
If $w$ is the output of the Lasso with regularization parameter $\lambda$ then
\begin{align*} \max\left\{\| w - w^* \|_{\hat{\Sigma}}^2, \lambda \|(w^*)_{U^C}\|_1\right\} &\lesssim \frac{\lambda^2 |S|}{\phi_{compatible}^2(\hat{\Sigma},2,U)} \\
&\lesssim \frac{1}{\phi_{compatible}^2(\hat{\Sigma},2,U) } \cdot \frac{ \sigma^2|U| (\log(2/\delta) + \log n)}{m} 
\end{align*}
with probability at least $1 - \delta$, assuming the denominator is nonzero (i.e. the compatibility condition holds).
\end{theorem}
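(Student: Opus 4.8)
The plan is to run the standard ``basic inequality'' argument for the Lasso with the stated choice of $\lambda$; everything is elementary linear algebra once the noise term is controlled. Write $h = w - w^*$. Since $w$ minimizes the Lasso objective, comparing its value at $w$ and at $w^*$ and substituting $Y = Xw^* + \xi$ gives, after dividing by $m$, the inequality $\|h\|_{\hat\Sigma}^2 + \lambda\|w\|_1 \le \frac{2}{m}\langle X^T\xi, h\rangle + \lambda\|w^*\|_1$. Every subsequent step is bookkeeping starting from here, split into a probabilistic part (bounding the noise) and a deterministic part (the cone argument plus the compatibility condition).

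First I would control the noise term on a high-probability event. Conditioned on $X$, each coordinate $\frac1m(X^T\xi)_j$ is Gaussian with variance $\sigma^2\|X_{\cdot j}\|_2^2/m^2 = \sigma^2/m$, using the hypothesis that each column of $X$ has $\ell_2$ norm $\sqrt m$ (equivalently, $\hat\Sigma$ has all-ones diagonal). A Gaussian tail bound together with a union bound over the $n$ coordinates shows that $\frac1m\|X^T\xi\|_\infty \le \sigma\sqrt{2(\log(2/\delta)+\log n)/m} = \lambda/4$ with probability at least $1-\delta$. On this event, Hölder's inequality gives $\frac{2}{m}\langle X^T\xi, h\rangle \le \frac2m\|X^T\xi\|_\infty \|h\|_1 \le \frac{\lambda}{2}\|h\|_1$.

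Next comes the deterministic cone argument. Decomposing $\|w\|_1 = \|w^*_U + h_U\|_1 + \|h_{U^c}\|_1 \ge \|w^*\|_1 - \|h_U\|_1 + \|h_{U^c}\|_1$ (using that $w^*$ is supported on $U$), the displayed inequality becomes $\|h\|_{\hat\Sigma}^2 \le \frac{\lambda}{2}\|h\|_1 + \lambda\|h_U\|_1 - \lambda\|h_{U^c}\|_1 = \frac{3\lambda}{2}\|h_U\|_1 - \frac{\lambda}{2}\|h_{U^c}\|_1$. Since the left side is nonnegative, $\|h_{U^c}\|_1 \le 3\|h_U\|_1$, so $h$ lies in a compatibility cone $\mathcal{C}(L,U)$; with the standard normalization of the Lasso objective (or by enlarging the leading constant in $\lambda$ slightly) one gets $L = 2$ exactly. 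If $h = 0$ the theorem is trivial; otherwise the definition of the restricted $\ell_1$-eigenvalue gives $\|h_U\|_1^2 \le |U|\,\|h\|_{\hat\Sigma}^2/\phi_{compatible}^2(\hat\Sigma, L, U)$, i.e. $\|h_U\|_1 \le \sqrt{|U|}\,\|h\|_{\hat\Sigma}/\phi_{compatible}$.

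Finally I would combine these: plugging the last bound into $\|h\|_{\hat\Sigma}^2 \le \frac{3\lambda}{2}\|h_U\|_1$ and cancelling one factor of $\|h\|_{\hat\Sigma}$ yields $\|h\|_{\hat\Sigma}^2 \lesssim \lambda^2|U|/\phi_{compatible}^2$; feeding this back into $\|h_{U^c}\|_1 \le 3\|h_U\|_1 \lesssim \sqrt{|U|}\,\|h\|_{\hat\Sigma}/\phi_{compatible}$ gives $\lambda\|h_{U^c}\|_1 \lesssim \lambda^2|U|/\phi_{compatible}^2$, which bounds both entries of the max (recalling $\|w_{U^c}\|_1 = \|h_{U^c}\|_1$ since $w^*$ is supported on $U$). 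Substituting $\lambda^2 = 32\sigma^2(\log(2/\delta)+\log n)/m$ produces the second displayed bound. The argument is essentially textbook and follows \cite{van2009conditions}; the only places needing care are the Gaussian maximal inequality that pins down the correct scaling of $\lambda$ (where the all-ones-diagonal hypothesis is essential for a uniform per-coordinate variance), and tracking the constants tightly enough in the cone step to land in $\mathcal{C}(2,U)$ rather than a larger cone.
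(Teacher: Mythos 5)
Your proof is the standard basic-inequality argument from van de Geer and B\"uhlmann, which is exactly the source the paper cites (the paper itself does not reprove the lemma), so the route is the same as intended. The argument is sound, and the mechanics of the final combining step (cancelling one factor of $\|h\|_{\hat\Sigma}$, then bootstrapping the $\ell_1$ bound off the compatibility constant) are correct.

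One place you should be more careful is the cone constant. With the stated $\lambda = 4\sigma\sqrt{2(\log(2/\delta)+\log n)/m}$ and the Lasso objective normalized as $\tfrac1m\|Y-Xw\|_2^2 + \lambda\|w\|_1$, the high-probability bound on the noise gives $\tfrac{2}{m}\|X^T\xi\|_\infty \le \lambda/2$, which by the cancellation $\|w\|_1 \ge \|w^*\|_1 - \|h_U\|_1 + \|h_{U^c}\|_1$ yields precisely $\|h_{U^c}\|_1 \le 3\|h_U\|_1$, i.e.\ membership in $\mathcal{C}(3,U)$, not $\mathcal{C}(2,U)$. Since $\mathcal{C}(2,U)\subset\mathcal{C}(3,U)$, membership in $\mathcal{C}(3,U)$ does \emph{not} let you invoke $\phi_{compatible}^2(\hat\Sigma,2,U)$ as written; you would need either $\phi_{compatible}^2(\hat\Sigma,3,U)$ (a weaker, i.e.\ smaller, constant) or an inflated $\lambda$ with leading constant $6$ rather than $4$ so that the noise term is at most $\lambda/3$ and the cone tightens to $L=2$. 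You gesture at this (``by enlarging the leading constant in $\lambda$ slightly'') but do not commit to one of the two fixes; you should, since otherwise the compatibility step uses the wrong cone. This is really a small inconsistency in the paper's own statement, which the paper acknowledges informally after the theorem, but your writeup should resolve it explicitly rather than leave it as an aside. With that said, the substance of your argument---per-coordinate Gaussian tail with the all-ones-diagonal hypothesis, cone decomposition, and the compatibility bound $\|h_U\|_1 \le \sqrt{|U|}\,\|h\|_{\hat\Sigma}/\phi$---is exactly the content of Lemmas 11.1 and 11.2 of the cited reference.
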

For simplicity in the Theorem statement we assumed the noise is Gaussian and $L = 2$ but the result generalizes straightforwardly to sub-Gaussian noise and other values of $L$, see \cite{van2009conditions} and other references above. 

\paragraph{Preconditioned Basis Pursuit.}
In the context of exact recovery problems (where there is no noise), the \emph{$S$-preconditioned Lasso} will generally refer to the limit of the $S$-preconditioned Lasso as $\lambda \to 0$, i.e.
\begin{equation}\label{eqn:preconditioned-lasso-exact}
\min_{w : Y = X w} \|S^T w\|_1.
\end{equation}
We also refer to this as $S$-preconditioned BP (Basis Pursuit).
As mentioned earlier, if Lasso for any $\lambda > 0$ succeeds at exact recovery its output will be a minimizer of \eqref{eqn:preconditioned-lasso-exact} as well.
\subsection{Treewidth}
We use the standard notion of treewidth of a graph; see e.g. \cite{bodlaender2005discovering} for a survey.
\begin{definition}[Treewidth]
For a graph $G = (V,E)$, a \emph{tree decomposition} is a tree $T$ where each vertex $u$ of $T$ is labeled by a \emph{bag} $B_u \subset V$ such that:
\begin{enumerate}
    \item For every edge $(i,j) \in E$, there exists a bag $B_u$ containing both $i$ and $j$.
    \item The collection of bags containing node $i$ form a connected subtree of $T$.
\end{enumerate}
The \emph{width} of $T$ is the one less than the size of the largest bag, i.e. $\max_u |B_u| - 1$, and the \emph{treewidth} of $G$, denoted $\tw(G)$, is the minimum $r$ such that a tree decomposition of $G$ with width $r$ exists.
\end{definition}
If we are given as input a graph $G$, we can compute a tree decomposition of width $O(\tw(G)\sqrt{\log \tw(G)})$ in polynomial time \cite{feige2008improved}, and a decomposition of width $O(\tw(G))$ in time $O(2^{\tw(G)}n)$ \cite{bodlaender2016c}. If $\Theta$ is a $G$-sparse matrix, i.e. the off-diagonal support of $\Theta$ corresponds to the adjacency matrix of $G$, then we let $\tw(\Theta)$ denote the treewidth of $G$. 
\section{Algorithms for Low-treewidth Precision Matrices}\label{sec:upper-bounds}
In this section we describe a computationally efficient algorithm for sparse linear regression over GGMs on low treewidth graphs. The main technical step is the construction of a sparse preconditioner for the covariance matrix $\Sigma$, which is based on a version of tree wavelets (see e.g. \cite{sharpnack2013detecting}).
Given the preconditioner, we can efficiently perform sparse linear regression by running the preconditioned Lasso on the data. In the main technical section below (Section~\ref{section:preconditioner-construction}), we show how to construct the preconditioner, even in the practically relevant situation where $\Sigma$ is unknown but the algorithm knows the graphical structure of $\Theta$ (i.e. the location of its nonzero entries). That this is possible is not at all obvious, since we are given very few samples with which to estimate $\Sigma$. In Section~\ref{section:recovery-guarantees}, we apply the guarantees of our preconditioning algorithm in conjunction with standard $\ell_1$ recovery guarantees, to show that this preconditioned Lasso solves our problem, proving Theorem~\ref{thm:upper-bound-intro}. Finally, in Section~\ref{section:model-based} we show how to obtain a further improvement in the sample complexity, matching the minimax rate for sparse linear regression on bounded treewidth graphs, using combinatorial techniques from \emph{model-based compressive sensing} \cite{baraniuk2010model}.

In what follows, we always assume the tree decomposition of the graph is known. Otherwise, we can simply apply the approximation algorithm of \cite{feige2008improved} to get an approximately optimal tree decomposition, within a factor of $O(\sqrt{\log \tw(\Theta)})$.
\subsection{Constructing a Sparse Preconditioner}\label{section:preconditioner-construction}
In this section, we show how to construct a sparse preconditioner for low-treewidth models. We consider both the case where $\Sigma$ is known exactly, in which case the preconditioner $S$ is an exact factorization $\Sigma = SS^T$, and the case where only the dependency graph is known before hand, in which case we show $SS^T$ is a good approximation of $\Sigma$ on sparse vectors (i.e. in the sense of the Restricted Isometry Property).

Let $\Theta = \Sigma^{-1}$ be the (unknown) precision matrix for an $n$-variable Gaussian Graphical Model with variables $V_1,\dots,V_n$. Let $G$ be the graph on $[n]$ encoding the sparsity structure of $\Theta$, and suppose that we are given a tree decomposition $(T,\{B_u\}_u)$ of $G$ of width $r$. 

Suppose that $X \in \RR^{m \times n}$ is a matrix whose rows are $m$ i.i.d. samples from $N(0,\Sigma)$. 
We construct a preconditioner $S \in \RR^{n \times n}$ for $X$ with sparse rows. More precisely, the goal is to ensure the matrix $X (S^T)^{-1}$, which has row $i$ equal to $S^{-1} X_i$, satisfies the Restricted Isometry Property; the row sparsity condition is needed so that the linear system $Xw^* = X(S^T)^{-1}v$ has a sparse solution $v = S^T w^*$. 

The preconditioner we describe is constructed recursively according to the structure of a \emph{centroid decomposition tree} \cite{chazelle1982theorem,guibas1987linear} built from the tree decomposition, which we define precisely next:
\begin{definition}[Centroid]
For any tree $T$ with $n$ vertices, we say vertex $v$ is a \emph{centroid} of $T$ if removing $v$ and its adjacent edges creates a forest where every tree has size at most $n/2$; this implies the resulting set of trees can be partitioned into two groups such that each group has at least $n/3$ nodes \cite{guibas1987linear}.
We recall that centroids always exist and can be found in linear time \cite{chazelle1982theorem}.

We define the \emph{centroid} of a forest in the same way. Note that the centroid of the largest tree in a forest is always a valid centroid of the entire forest. 
\end{definition}
The centroid is used to recursively construct a \emph{centroid tree} or \emph{balanced decomposition} by recursively splitting the tree at its centroid \cite{guibas1987linear}.  We call the centroid decomposition tree, formally defined below, to be the output of this recursive splitting procedure on the tree decomposition. As mentioned above, use of the centroid decomposition tree to generalize wavelets to trees has been done before, for example in \cite{sharpnack2013detecting}; we apply the same ideas to low-treewidth graphs:
\begin{definition}[Centroid Decomposition]
Given a graph over vertex set $V$ and a tree decomposition of width at most $r$, we define a (binary) \emph{centroid decomposition tree} to be any tree formed by the following recursive procedure:
\begin{enumerate}
    \item Choose a centroid of the tree decomposition and let $A$ be the corresponding bag of nodes. Removes the nodes in $A$ from all other bags of the tree decomposition. This yields a partition $A \sqcup P \sqcup Q = V$ such that $|A| \leq r$ and $|P|,|Q| \leq 2|R|/3$ and there are no edges between $P$ and $Q$.
    \item Return a binary tree with $A$ as its root and subtrees given by centroid decompositions of $P$ and $Q$.
\end{enumerate}
\end{definition}

With this definition, we can introduce our algorithm for constructing the preconditioner:

\noindent
\paragraph{Algorithm \textsc{GraphicalCholesky}($\tilde \Sigma$).}
\begin{enumerate}
\item Order the rows of $\tilde \Sigma$ according to a preorder traversal of the centroid decomposition tree (i.e. the nodes in the root first, then in the left subtree, then in the right). Initialize $S$ as a $n \times 0$ matrix.
\item Perform the following recursive procedure on pairs $(R, \tilde{\Sigma})$ where $R \subseteq [n]$, starting with $[n]$ and $\Sigmahat$:
\begin{enumerate}
\item Let $A \sqcup P \sqcup Q = R$ such that $|A| \leq r$ and $|P|,|Q| \leq 2|R|/3$ be the partition given by the root node, left subtree, and right subtree of the centroid decomposition tree.
\item 
Define $S_P,S_Q$ by recursing on $(P, \tilde{\Sigma}_{PP} - \tilde{\Sigma}_{PA} \tilde{\Sigma}_{AA}^{-1}\tilde{\Sigma}_{AP})$ and $(Q, \tilde{\Sigma}_{QQ} - \tilde{\Sigma}_{QA} \tilde{\Sigma}_{AA}^{-1} \tilde{\Sigma}_{AQ})$.
\item Return the block matrix
\begin{equation}\label{eqn:block-decomposition}
S := \begin{bmatrix}
\tilde{\Sigma}_{AA}^{1/2} & 0 & 0 \\
\tilde{\Sigma}_{PA} \tilde{\Sigma}_{AA}^{-1/2} & S_P & 0 \\
\tilde{\Sigma}_{QA} \tilde{\Sigma}_{AA}^{-1/2} & 0 & S_Q.
\end{bmatrix}
\end{equation}
\end{enumerate}
\end{enumerate}
\begin{remark}\label{rmk:graphical-cholesky}
 This algorithm can be interpreted as a natural variant of a (block) Cholesky factorization of $\tilde{\Sigma}$, with the goal of outputing $S$ such that $\tilde{\Sigma} \approx S S^T \approx \Sigma$ on sparse vectors. If desired, it is possible to replace $\tilde{\Sigma}_{AA}^{-1/2}$ with the appropriate Cholesky factors so that $S$ is genuinely lower triangular, without changing the algorithm's guarantee. In this case, the algorithm differs from Cholesky factorization of $\tilde \Sigma$ in only one crucial way: after eliminating $A$, the algorithm proceeds on a denoised version of $\tilde{\Sigma}_{P,Q | A}$ with block factorization
\begin{equation}
\begin{bmatrix} \tilde\Sigma_{P | A} & 0 \\ 0 & \tilde \Sigma_{Q | A}\end{bmatrix}
\end{equation}
since we know the off-diagonal blocks of the true conditional covariance $\Sigma_{P,Q | A}$ are indeed zero, by the Markov property. From the perspective of approximating the true covariance matrix, the denoising step avoids the accumulation of errors which would otherwise occur in the Cholesky factorization: e.g. since $\tilde{\Sigma}$ is low rank, there is no hope that the standard Cholesky factorization would even output an invertible $S$.
\end{remark}
We note that in the recursive step, the recursive call is applied to a Schur complement matrix (see e.g. \cite{ouellette1981schur}), which is just a formal definition of the result of performing a conditioning step.
\begin{definition}[Schur Complement]
For a matrix $M$ with block decomposition
\[ M = \begin{bmatrix} A & B \\ C & D \end{bmatrix} \]
the \emph{Schur Complement} of block $A$ is defined to be $M/A := D - CA^{-1}B$ provided that $A$ is invertible. Likewise, the Schur complement of block $D$ is $M/D := A - BD^{-1}C$.
\end{definition}
When $\Sigma$ is known, we have the following guarantee which arises by applying $\textsc{GraphicalCholesky}$ to the true covariance matrix $\Sigma$:
\begin{lemma}
Suppose $\Sigma$ is a positive definite matrix and $S = \textsc{GraphicalCholesky}(\Sigma)$. Then $\Sigma = SS^T$.
\end{lemma}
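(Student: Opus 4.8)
The plan is to prove this by induction on $|R|$, following exactly the recursive structure of \textsc{GraphicalCholesky}. In the base case $R$ is empty (or a single bag handled directly), where the claim is immediate since $S = \tilde\Sigma_{AA}^{1/2}$ and $SS^T = \tilde\Sigma_{AA} = \Sigma_{RR}$. For the inductive step, fix the partition $R = A \sqcup P \sqcup Q$ given by the centroid decomposition tree, and write $\tilde\Sigma = \Sigma_{RR}$ in the corresponding $3 \times 3$ block form with blocks indexed by $A, P, Q$. The key structural fact to invoke is the Markov property: since removing the bag $A$ disconnects $P$ from $Q$ in the tree decomposition, there are no edges of $G$ between $P$ and $Q$, hence in the conditional distribution of $(V_P, V_Q)$ given $V_A$ the blocks $V_P$ and $V_Q$ are independent. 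Concretely, this says the Schur complement $\Sigma/A := \Sigma_{RR}/\Sigma_{AA}$ is block-diagonal: its $(P,Q)$ block $\Sigma_{PQ} - \Sigma_{PA}\Sigma_{AA}^{-1}\Sigma_{AQ}$ vanishes. (Strictly, I should justify this: conditional covariance of a jointly Gaussian vector is exactly the Schur complement, and conditional independence across $P,Q$ forces the off-diagonal conditional covariance block to be zero.)

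Next I would compute $SS^T$ directly from the block formula \eqref{eqn:block-decomposition}. Multiplying out the $3\times 3$ lower-triangular block matrix by its transpose gives:
the $(A,A)$ block is $\tilde\Sigma_{AA}^{1/2}(\tilde\Sigma_{AA}^{1/2})^T = \Sigma_{AA}$;
the $(P,A)$ block is $\tilde\Sigma_{PA}\tilde\Sigma_{AA}^{-1/2}(\tilde\Sigma_{AA}^{1/2})^T = \Sigma_{PA}$, and similarly $(Q,A)$ gives $\Sigma_{QA}$, with the transposes handling $(A,P)$ and $(A,Q)$;
the $(P,P)$ block is $\tilde\Sigma_{PA}\tilde\Sigma_{AA}^{-1}\tilde\Sigma_{AP} + S_P S_P^T$;
the $(Q,Q)$ block is $\tilde\Sigma_{QA}\tilde\Sigma_{AA}^{-1}\tilde\Sigma_{AQ} + S_Q S_Q^T$;
and the $(P,Q)$ block is $\tilde\Sigma_{PA}\tilde\Sigma_{AA}^{-1}\tilde\Sigma_{AQ}$ (the $S_P$–$S_Q$ cross terms are zero because of the off-diagonal zero blocks). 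Now apply the inductive hypothesis: the recursive call on $P$ was made with input matrix $\Sigma_{PP} - \Sigma_{PA}\Sigma_{AA}^{-1}\Sigma_{AP} = \Sigma/A$ restricted to $P$, which is positive definite (it is a principal submatrix of a conditional covariance, and $\Sigma$ positive definite implies all Schur complements are positive definite), so by induction $S_P S_P^T = \Sigma_{PP} - \Sigma_{PA}\Sigma_{AA}^{-1}\Sigma_{AP}$, and likewise for $Q$. Substituting, the $(P,P)$ block becomes $\Sigma_{PP}$, the $(Q,Q)$ block becomes $\Sigma_{QQ}$, and the $(P,Q)$ block is $\Sigma_{PA}\Sigma_{AA}^{-1}\Sigma_{AQ}$, which equals $\Sigma_{PQ}$ precisely because the Markov/Schur-complement argument above showed $\Sigma_{PQ} - \Sigma_{PA}\Sigma_{AA}^{-1}\Sigma_{AQ} = 0$. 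Hence $SS^T = \Sigma_{RR}$, completing the induction; taking $R = [n]$ gives $SS^T = \Sigma$.

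The main obstacle — really the only non-bookkeeping point — is the justification that the $(P,Q)$ block of the Schur complement of $\Sigma$ with respect to $A$ is zero. This is where the hypothesis that $\Sigma^{-1} = \Theta$ is supported on $G$ and that $A$ is a separator (coming from the tree decomposition, where removing a bag's vertices disconnects the two subtrees) is essential: one identifies the Schur complement with the conditional covariance matrix of $(V_P, V_Q) \mid V_A$, and then observes that $V_P \perp V_Q \mid V_A$ by the global Markov property, forcing that conditional covariance block to vanish. A secondary point worth stating carefully is that all the matrix inverses and square roots that appear are well-defined, which follows from positive definiteness of $\Sigma$ being inherited by every principal submatrix and every Schur complement that arises in the recursion. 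Both of these are standard facts about Gaussians and the Schur complement, so the write-up should cite them (e.g. \cite{lauritzen1996graphical} for the Markov property and \cite{ouellette1981schur} for Schur complement facts) rather than reprove them.
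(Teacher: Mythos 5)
Your proof is correct and takes the same route as the paper, which compresses the whole argument into a single sentence (the algorithm is a block Cholesky factorization, and the entries it zeroes out are already zero by the Markov property). The one refinement worth making explicit in a write-up is that the induction hypothesis cannot be stated for arbitrary positive-definite inputs --- the off-diagonal zeroing in step 2(b) would then be incorrect --- but only for inputs whose inverse is supported on the corresponding induced subgraph; fortunately this is exactly what the recursive Schur complements inherit, by the same Markov/quotient-formula argument you already invoke to kill the $(P,Q)$ block.
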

\begin{proof}
As discussed in Remark~\ref{rmk:graphical-cholesky}, the algorithm is exactly performing a block Cholesky factorization in this case; the entries which are zerod out in the algorithm must equal exactly zero by the Markov property.
\end{proof}
We next state the precise combinatorial sense in which the preconditioner $S$ is sparsity-preserving.
\begin{definition}[cf. \cite{baraniuk2010model}]
Given a rooted tree $\mathcal{T}$ with vertices $G_1,\ldots,G_{\ell}$ corresponding to a partition of $[n]$, we say a vector $v \in \mathbb{R}^n$ is \emph{$k$-group-tree-sparse} if its support is contained in a union of $k$ parts which form a rooted subtree of $\mathcal{T}$. We say a vector is \emph{exactly $k$-group-tree-sparse} if it is $k$-group-tree-sparse and not $(k - 1)$-group-tree-sparse.
\end{definition}
We will use the fact that projection onto the set of $k$-group-tree-sparse vectors can be performed efficiently using dynamic programming. It may be possible to obtain a faster runtime for this projection using a variant of CSSA (Condensing Sort and Select Algorithm) from \cite{baraniuk1994signal}.
\begin{proposition}[cf. \cite{baraniuk1994signal,baraniuk2010model}]
Suppose that $\mathcal{T}$ is a tree with maximum degree $O(1)$. There exists a algorithm which runs in time $poly(k) \cdot n$ to compute the projection onto $k$-group-tree-sparse vectors with respect to $\mathcal{T}$, i.e. to solve the minimization problem $\min_{x'} \|x - x'\|$ where $x'$ ranges over $k$-group-tree-sparse vectors, for arbitrary input $x$.
\end{proposition}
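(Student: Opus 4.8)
The plan is to set up a dynamic program over the rooted tree $\mathcal{T}$ that computes, for each node $G_i$ and each budget $0 \le j \le k$, the optimal value of selecting a rooted subtree of the subtree hanging below $G_i$ that uses exactly $j$ parts and is forced to contain $G_i$ itself. Since $x'$ must be supported inside a union of selected parts, and the $\ell_2$ error decomposes additively over parts, the ``profit'' of selecting part $G_i$ is exactly $g_i := \|x_{G_i}\|_2^2$ (we keep the coordinates in selected parts and zero out the rest, so each unselected part contributes its full squared norm to the error, and each selected part contributes nothing). Thus minimizing $\|x-x'\|_2^2$ is equivalent to choosing a rooted subtree of $\mathcal{T}$ with at most $k$ parts maximizing the total profit $\sum_{i \text{ selected}} g_i$; a rooted subtree can always be extended to use exactly $\min(k,\ell)$ parts without decreasing profit since all $g_i \ge 0$, so we may fix the count to $k$ (or $\ell$ if $\ell<k$).

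First I would define $f_i(j)$ for $j \in \{1,\dots,k\}$ as the maximum profit of a rooted subtree of the subtree of $\mathcal{T}$ rooted at $G_i$, containing $G_i$, with exactly $j$ parts, and $f_i(0) = 0$ by convention (meaning $G_i$ is not selected, in which case none of its descendants may be selected). At a leaf, $f_i(1) = g_i$ and $f_i(j) = -\infty$ for $j \ge 2$. At an internal node with children $c_1,\dots,c_d$ (with $d = O(1)$ by the bounded-degree hypothesis), I would compute $f_i$ by a secondary knapsack-style DP over the children: iterating $t = 1,\dots,d$, maintain $h_t(j') = $ best profit distributing $j'$ parts among the subtrees of the first $t$ children, via $h_t(j') = \max_{0 \le a \le j'} h_{t-1}(j'-a) + f_{c_t}(a)$, and finally set $f_i(j) = g_i + h_d(j-1)$. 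The answer is $\max_{1 \le j \le k} f_{\text{root}}(j)$, and the optimal $x'$ is recovered by standard backpointer tracing; coordinates outside the chosen parts are set to zero.

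For the running time: the secondary DP at node $G_i$ with $d$ children costs $O(d \cdot k^2)$ time (each of the $d$ combination steps convolves two length-$(k{+}1)$ arrays), and since $d = O(1)$ this is $O(k^2)$ per node, for a total of $O(k^2 \ell) = O(k^2 n)$, which is $\poly(k)\cdot n$ as claimed. (If one wants $O(k)$ per node one can cap each $f_i(j)$ at $j \le k$ and note profits are monotone, but $\poly(k)\cdot n$ already suffices.) Correctness follows by induction on the tree: $f_i(j)$ as defined satisfies exactly the stated recurrence because any rooted subtree below $G_i$ containing $G_i$ partitions its remaining $j-1$ parts among the children's subtrees, each such allocation being independently optimizable, and conversely any such allocation yields a valid rooted subtree.

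The main obstacle — really the only subtle point — is verifying that the $\ell_2$ projection genuinely reduces to this purely combinatorial ``maximum-weight rooted subtree of bounded size'' problem, i.e., that for a \emph{fixed} choice of selected parts $U$ the optimal $x'$ supported on $\bigcup_{i \in U} G_i$ is simply $x$ restricted to that support (obvious, since $\|x - x'\|_2^2 = \sum_{i \notin U}\|x_{G_i}\|^2 + \sum_{i \in U}\|x_{G_i} - x'_{G_i}\|^2$ is minimized coordinatewise by $x' = x$ there), and that enlarging a rooted subtree is always free because all weights $g_i$ are nonnegative so we may assume exactly $k$ parts are used. Once that reduction is in hand, everything else is a routine tree DP, and the bounded-degree assumption on $\mathcal{T}$ is exactly what keeps the per-node cost (and hence the total cost) at $\poly(k) \cdot n$.
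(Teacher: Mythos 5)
Your proof is correct and follows exactly the approach the paper alludes to: the paper defers to the cited references and simply remarks that the projection ``can be performed efficiently using dynamic programming,'' and your reduction to a maximum-profit rooted-subtree problem with per-node weights $g_i = \|x_{G_i}\|_2^2$, followed by a knapsack-style tree DP with arrays of length $k+1$ combined at each node, is precisely that dynamic program, with the stated $O(k^2 n)$ running time.
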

\begin{lemma}\label{lem:tree-sparsity-preserving}
Let $S$ be the output of Algorithm~\textsc{GraphicalCholesky} and suppose $S$ is invertible. Then $w$ is exactly $k$-group-tree-sparse with respect to the centroid decomposition tree iff $S^T w$ is exactly $k$-group-tree-sparse.
\end{lemma}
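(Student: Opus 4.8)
The claim to prove is Lemma~\ref{lem:tree-sparsity-preserving}: that $S^T$ and $(S^T)^{-1}$ both preserve exact $k$-group-tree-sparsity with respect to the centroid decomposition tree. The natural approach is induction on the recursive structure of $\textsc{GraphicalCholesky}$, reading off the block structure~\eqref{eqn:block-decomposition}. Observe that the rows/columns of $S$ are indexed by $R = A \sqcup P \sqcup Q$, where $A$ is the root bag of the current subtree and $P, Q$ are the two child subforests. The block-lower-triangular form means: for $w = (w_A, w_P, w_Q)$, the vector $S^T w$ has $A$-block equal to $\tilde\Sigma_{AA}^{1/2} w_A + \tilde\Sigma_{AA}^{-1/2}\tilde\Sigma_{AP} w_P + \tilde\Sigma_{AA}^{-1/2}\tilde\Sigma_{AQ} w_Q$ (since $S^T$ is block-upper-triangular), and $P$-block equal to $S_P^T w_P$, and $Q$-block equal to $S_Q^T w_Q$. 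The key structural point is that the $P$- and $Q$-blocks of $S^T w$ depend \emph{only} on $w_P$ and $w_Q$ respectively, via the recursively-constructed preconditioners, while the $A$-block depends on everything but is ``downstream'' of nothing.

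First I would set up the induction carefully. The base case is a leaf bag, where $S = \tilde\Sigma_{AA}^{1/2}$ is invertible and the group-tree-sparse sets at that level are just ``empty'' or ``all of $A$'', so the statement is immediate (a full invertible square block either kills nothing or can only be the single group). For the inductive step: a nonempty rooted subtree of the centroid decomposition tree containing $A$'s node is exactly $A$'s node, together with a rooted subtree of the $P$-side and a rooted subtree of the $Q$-side (at most one of which can be nonempty unless... — actually both may be nonempty since $A$ is their common parent). So $w$ is $k$-group-tree-sparse with support in groups $\{A\} \cup \mathcal{P} \cup \mathcal{Q}$ where $\mathcal{P}$ is a rooted subtree on the $P$-side, $\mathcal{Q}$ on the $Q$-side, and $1 + |\mathcal{P}| + |\mathcal{Q}| = k$. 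Then: (i) if $w$ is supported away from $A$ entirely, i.e. $w_A = 0$, then $S^T w = (\ast, S_P^T w_P, S_Q^T w_Q)$ where the first block is $\tilde\Sigma_{AA}^{-1/2}(\tilde\Sigma_{AP}w_P + \tilde\Sigma_{AQ}w_Q)$, which may be nonzero — but this is exactly the situation governed by the group structure, because group $A$ is the parent of everything in $\mathcal{P}, \mathcal{Q}$, so including $A$ in the support is ``free'' in the sense of rooted subtrees only if $\mathcal{P}\cup\mathcal{Q}$ is nonempty... Here I need to be careful: the cleanest route is to argue directly that $\supp(S^T w) \subseteq \bigcup(\{A\}\cup\mathcal{P}'\cup\mathcal{Q}')$ where $\mathcal{P}' = $ the groups hit by $S_P^T w_P$ (which by induction is contained in a rooted subtree on the $P$-side of the same cardinality as the one containing $\supp w_P$) and similarly $\mathcal{Q}'$, and conversely.

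The ``exactly'' part is where the main work lies, and I expect it to be the main obstacle: I need that $S^T w$ is \emph{not} $(k-1)$-group-tree-sparse whenever $w$ is exactly $k$-group-tree-sparse. The forward containment gives $\leq k$ groups; the reverse requires showing the support can't shrink. Since $S$ is invertible, $S^T$ is a bijection, so the inverse direction follows from the forward direction applied to $(S^T)^{-1}$ — but $(S^T)^{-1}$ is \emph{also} block-triangular with the same block partition (the inverse of a block-lower-triangular matrix with invertible diagonal blocks is block-lower-triangular), with diagonal blocks $\tilde\Sigma_{AA}^{-1/2}, S_P^{-1}, S_Q^{-1}$, so the exact same inductive argument applies to it. Thus it suffices to prove one containment — say, that $w$ being supported in a rooted-subtree union of $\leq k$ groups implies $S^T w$ is too — and then apply it to both $S^T$ and $(S^T)^{-1}$ to get the ``iff'' and, combined, the ``exactly'' statement (if $S^T w$ were $(k-1)$-group-tree-sparse, then $w = (S^T)^{-1}(S^T w)$ would be $(k-1)$-group-tree-sparse, contradiction). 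So the real content reduces to: \textbf{block-triangular matrices whose nonzero blocks respect the parent-descendant structure of the decomposition tree preserve the rooted-subtree support structure.} I would prove this by tracking, in the recursion, that the image of a vector supported on groups forming rooted subtrees $\mathcal{P}$ (on $P$-side) $\cup\, \mathcal{Q}$ (on $Q$-side) $\cup\, \{A\}$ or $\emptyset$ stays inside the union of: $\{A\}$ (possibly added, but only relevant when the $A$-component is genuinely nonzero, in which case $A$ must be in the original support or $A$ is an ancestor — here the rooted-subtree condition saves us since $A$ is the root of the current subtree, hence an ancestor of every group in $\mathcal{P}\cup\mathcal{Q}$), plus the inductively-controlled images on the $P$- and $Q$-sides. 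The one subtlety to nail down is the case $w_A = 0$ but $S^T w$ has nonzero $A$-block: then $\supp(S^T w)$ includes $A$, giving $1 + |\mathcal{P}| + |\mathcal{Q}|$ groups which is one more than $|\supp(w)|$'s group count — but this is still a valid rooted subtree (since adding the root of a subtree to a collection of rooted subtrees of its children yields a rooted subtree), and to keep the count at $\leq k$ I should instead define $k$ as the size of the rooted subtree, not the raw support; unwinding, the honest statement is about rooted-subtree \emph{closures}, and I would state and prove the cleaner claim ``$\supp(w)$ is contained in the union of the groups of a rooted subtree of size $k$ $\iff$ same for $S^T w$'' and note it is equivalent to the Lemma as stated. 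I'd close by remarking that invertibility of each $\tilde\Sigma_{AA}$ (hence of $S$) is exactly what makes the diagonal blocks non-degenerate so that no further support collapse can occur.
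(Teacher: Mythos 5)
Your proof is correct and takes essentially the same approach as the paper: induction on the centroid decomposition tree, reading off from the block form~\eqref{eqn:block-decomposition} that $S^T$ is block-upper-triangular in a way that respects the parent--descendant relation (the $P$-block and $Q$-block of $S^Tw$ depend only on $w_P$ and $w_Q$, while the $A$-block, being the root, is always allowed). This gives the one-sided containment. For the ``exactly'' half, you close the argument by observing that $(S^T)^{-1}$ inherits the same block-triangular structure (the diagonal blocks $\tilde\Sigma_{AA}^{1/2}$ being invertible is exactly what makes this true), so the same containment applies to it and a strict shrink of the rooted subtree would contradict minimality of $w$'s subtree. The paper states this half more tersely --- that positive-definiteness of the block diagonal prevents the tree sparsity from decreasing --- which one can also unpack directly by looking at a leaf $L$ of the minimal subtree: there $(S^Tw)_L = \tilde\Sigma_{LL}^{1/2}w_L$ with no lower contributions, and invertibility of $\tilde\Sigma_{LL}^{1/2}$ forces $(S^Tw)_L\ne 0$. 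Your $(S^T)^{-1}$-based closing step is a clean, equivalent way to make that same fact do the work, and the hedges in your write-up resolve to the right statement.
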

\begin{proof}
By induction, observe that each row of $S$ has nonzero entries only for nodes which are in the same part of the centroid decomposition or in the part of an ancestor. This proves the tree sparsity is not increased; because the block diagonal of $S$ is positive-definite (based on the fact $S$ is a block Cholesky decomposition and invertible) 
we see that the tree sparsity is not decreased either. Hence, $S^T$ exactly preserves group-tree-sparsity.
\end{proof}

\subsubsection{Analysis for unknown $\Sigma$}

For known $\Sigma$, it was easy to see that $SS^T = \Sigma$. In this section, we consider the more difficult (and realistic) case where the graph structure is known but the exact entries of $\Theta$ are unknown. In this case, we run \textsc{GraphicalCholesky} with the empirical covariance matrix $\hat{\Sigma} = \frac{1}{m} \sum_i X_i X_i^T$, i.e. we set $\tilde{\Sigma} = \hat{\Sigma}$. 

The main result of this section is the following Theorem, which shows that after preconditioning with $S$, the design matrix satisfies the \emph{Restricted Isometry Property} \cite{candes2005decoding}.

\begin{theorem}\label{thm:sst}
Let $\Sigma$ be an arbitrary positive definite matrix and let $\Theta = \Sigma^{-1}$. 
Let $\delta > 0$, $k \ge \tw(\Theta)$ and suppose the number of samples $m = \Omega(\frac{k\log^2(n)\log(2n/\delta)}{\epsilon^2})$. 
Then with probability at least $1 - \delta$, 
$S$ is invertible and
$$1-\epsilon \leq \frac{v^T SS^T v}{v^T \Sigmahat v} \leq 1 + \epsilon$$ 
uniformly over all $k$-sparse $v \in \RR^n$. 
\end{theorem}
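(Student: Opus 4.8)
The plan is to prove the slightly stronger statement that $SS^T$ is a restricted isometry for the \emph{population} covariance, $(1-\epsilon')\,v^T\Sigma v \le v^T SS^T v \le (1+\epsilon')\,v^T\Sigma v$ for all $k$-sparse $v$, and then combine it with the fact that $\Sigmahat$ is itself a restricted isometry for $\Sigma$ on $k$-sparse vectors (a standard consequence of Wishart concentration and a union bound over supports, cf.\ Lemma~\ref{lem:covest} and Corollary~\ref{corr:wishart}); these two facts give the claimed comparison between $SS^T$ and $\Sigmahat$. Let $S$ be the output of \textsc{GraphicalCholesky} on $\Sigmahat$, and recall that running the same algorithm on $\Sigma$ returns some $S^*$ with $S^*(S^*)^T = \Sigma$ exactly. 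The structural observation driving everything is that both factorizations split as sums over the nodes $u$ of the centroid decomposition tree, $SS^T = \sum_u L_uL_u^T$ and $\Sigma = \sum_u L^*_u(L^*_u)^T$, where $r := \tw(\Theta)$, each block $L_u$ (resp.\ $L^*_u$) has at most $r$ columns, is supported on the block $R_u\subseteq[n]$ attached to $u$, and is assembled from the $A_u$-diagonal block and $A_u$-rows of the iterated Schur complement $\tilde\Sigma^{(u)}$ of $\Sigmahat$ obtained by eliminating $D_u := \bigsqcup_{v\prec u} A_v$, the (disjoint) union of the $A$-sets of the strict ancestors of $u$; note $|D_u| \le r\cdot\mathrm{depth}(u) = O(r\log n)$. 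For the true covariance, $(\Sigma^{(u)}/A_u)_{P_uQ_u} = 0$ by the Markov property (the centroid $A_u$ separates $P_u$ from $Q_u$), which is exactly why the algorithm run on $\Sigma$ performs an exact block Cholesky factorization. Invertibility of $S$ will fall out once we know each diagonal block $(\tilde\Sigma^{(u)})_{A_uA_u}$ is positive definite, since then $\det S = \prod_u \det\big((\tilde\Sigma^{(u)})_{A_uA_u}\big)^{1/2} \neq 0$.

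The core is an inductive per-node error bound, proved by descending the centroid tree. The invariant is that $\tilde\Sigma^{(u)}$ is a $(1\pm\epsilon_0)$-restricted isometry for the true iterated Schur complement $\Sigma^{(u)}$ on $s$-sparse vectors, where $s = k + O(r\log n)$ and $\epsilon_0$ is the restricted-isometry tolerance of $\Sigmahat$ for $\Sigma$ on $s$-sparse vectors. The inductive step uses the characterization of the Schur complement as a partial minimization, $w^T(M/A)w = \min_z (z;w)^T M (z;w)$: the auxiliary variables $z$ appearing when we eliminate $A_u$ are supported on a set of size $\le |A_u| + |D_u| = O(r\log n)$, so a $k$-sparse $w$ only ever probes $\Sigmahat$ on $s$-sparse vectors, and the elementary fact that pointwise $(1\pm\epsilon_0)$-close quadratic forms have $(1\pm\epsilon_0)$-close minima carries the restricted isometry through each elimination. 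This immediately gives positive-definiteness of $(\tilde\Sigma^{(u)})_{A_uA_u}$ (hence invertibility of $S$) and, crucially, control of the off-diagonal child block $(\tilde\Sigma^{(u)}/A_u)_{P_uQ_u}$, whose population counterpart is exactly $0$: applying the restricted isometry to $v_{P_u} + v_{Q_u}$ and to $v_{P_u} - v_{Q_u}$ and subtracting (polarization) bounds $\big|2\,v_{P_u}^T(\tilde\Sigma^{(u)}/A_u)_{P_uQ_u}v_{Q_u}\big|$ by $\epsilon_0\big(B(P_u) + B(Q_u)\big)$, where $B(\cdot)$ denotes the population Schur-complement quadratic form at a child. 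Combining this with the Schur identity, which writes $w^T M w$ as the sum of $w^T(M/A)w$ and a term supported on $A$, shows that the contribution of $u$ to $|v^T(SS^T - \Sigma)v|$ is $O(\epsilon_0)\cdot B(u)$.

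Summing over nodes is then easy because the population quadratic forms telescope along the tree: $B(u) = B_A(u) + B(P_u) + B(Q_u)$ with $B_A(u)\ge0$ (again the Markov property kills the cross term), so $B(\text{root}) = \sum_u B_A(u)$ and $\sum_u B(u) = \sum_u B_A(u)\cdot\mathrm{depth}(u) \le O(\log n)\cdot B(\text{root}) = O(\log n)\,\|v\|_\Sigma^2$. Hence $|v^T(SS^T - \Sigma)v| \le O(\epsilon_0\log n)\,\|v\|_\Sigma^2$. Taking $\epsilon_0 = c\,\epsilon/\log n$ for a suitable small constant $c$, and feeding this tolerance into the sparse restricted-isometry estimate for $\Sigmahat$ on $s = O(k\log n)$-sparse vectors (union bounding over the $O(n)$ tree nodes), one checks that $m = \Omega\big(k\log^2(n)\log(2n/\delta)/\epsilon^2\big)$ samples suffice, after which $SS^T$, $\Sigma$, and $\Sigmahat$ are all $(1\pm O(\epsilon))$-equivalent on $k$-sparse vectors.

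The main obstacle is keeping the entire argument free of any dependence on $\mathrm{cond}(\Sigma)$ (which is only assumed finite and may be as large as $\poly(n)$ or worse): a direct ``Schur complements are Lipschitz'' estimate for $M/A = D - CA^{-1}B$ would introduce a factor $\|A^{-1}\| = \|(\tilde\Sigma^{(u)})_{D_uD_u}^{-1}\|$ of order $1/\lambda_{\min}(\Sigma)$, forcing the sample complexity to blow up with the condition number. Working throughout with multiplicative restricted-isometry errors via the min-characterization, and handling the cross term by polarization against the exactly-zero population block rather than by an additive operator-norm bound, is what makes the estimate scale-invariant; carrying this bookkeeping through the $O(\log n)$ levels of recursion, while tracking the $\le r$-per-level growth in the sparsity of the relevant test vectors, is the technical heart of the proof.
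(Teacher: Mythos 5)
Your proposal is correct and follows essentially the same high-level route as the paper's proof: use the centroid decomposition tree, show that the empirical block-Cholesky process approximately tracks the population one, exploit the Markov property to make the population cross-terms vanish exactly, control the empirical cross-terms by polarization (the parallelogram identity) against that exact zero, and telescope the resulting errors over the $O(\log n)$ levels of the centroid tree. In particular your use of the polarization trick to convert an additive Frobenius-type bound into a condition-number-free multiplicative bound is precisely the mechanism in the paper's Lemma~\ref{lemma:nodeerror}, and your ``$B(u) = B_A(u) + B(P_u) + B(Q_u)$ with $\sum_u B(u) \le \mathrm{depth}\cdot B(\mathrm{root})$'' telescoping is the same accounting that drives the inductive recurrence $|v^T(\Gamma^R - \tilde\Sigma)v| \le Ct\epsilon\,v^T\Sigma^Rv$ in Lemma~\ref{lem:cholesky-bound}.

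The one genuine technical difference is worth noting. You propagate the restricted-isometry estimate through the Schur complements via the variational characterization $w^T(M/A)w = \min_z (z;w)^T M(z;w)$, so that close quadratic forms automatically have close partial minima; the paper instead carries the estimate through explicit $\|\cdot\|_\Sigma$-Cauchy--Schwarz bookkeeping (Lemmas~\ref{lemma:covbound} and~\ref{lem:covbound2}) combined with the denoising identity $(\hat\Sigma - \hat\Sigma_A^T\hat\Sigma_{AA}^{-1}\hat\Sigma_A)E_A = 0$ in Lemma~\ref{lemma:nodeerror}. The variational route is arguably cleaner and makes the scale-invariance more transparent; the paper's route gives finer pointwise control of the rank-$|A|$ correction term $\hat\Sigma_A^T\hat\Sigma_{AA}^{-1}\hat\Sigma_A$, which is why it can phrase the key lemma as a statement about the empirical covariance itself rather than about nested Schur complements. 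Also, your invariant is formulated directly as a comparison between $\tilde\Sigma^{(u)}$ and the population conditional covariance $\Sigma^{(u)}$, whereas the paper's invariant (Lemma~\ref{lem:cholesky-bound}) compares the algorithm's output $\Gamma^R = S^R(S^R)^T$ to the input $\tilde\Sigma$, normalized by $\Sigma^R$; these are equivalent statements that organize the same bookkeeping in slightly different ways.

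One small caveat about your sparsity/sample-complexity arithmetic. You correctly identify that the test vectors involved in conditioning at depth $d$ live on support $k + |D_u|$ with $|D_u| = O(r\log n)$, so the empirical RIP hypothesis you actually invoke is at sparsity $s = O(k\log n)$, not $O(k)$. Plugging $s = O(k\log n)$ and tolerance $\epsilon_0 = c\epsilon/\log n$ into the Wishart-concentration estimate gives $m = \Omega\bigl(s\log(n/\delta)/\epsilon_0^2\bigr) = \Omega\bigl(k\log^3(n)\log(n/\delta)/\epsilon^2\bigr)$, one $\log n$ more than the $m = \Omega\bigl(k\log^2(n)\log(2n/\delta)/\epsilon^2\bigr)$ you and the theorem statement claim. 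This discrepancy is actually inherited from the paper: Lemma~\ref{lem:cholesky-bound}'s hypothesis tracks only $3k$-sized submatrices, but its application of Lemma~\ref{lemma:nodeerror} at a node of depth $d$ implicitly conditions on the union of all ancestor centroids (size $O(r\log n)$), which should force the same $O(k\log n)$ sparsity. So your bookkeeping is more honest than the paper's; the extra $\log$ is a minor quibble that doesn't affect the substance of the argument, but the claimed power of $\log n$ in your last paragraph should be checked against the $s = O(k\log n)$ you yourself derive.
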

The only property of the empirical covariance matrix (i.e. the only probabilistic argument) needed is the fact that its small submatrices spectrally approximate the true covariance. This fact is well-known (see e.g. \cite{raskutti2010restricted}) and we include its proof for completeness.
\begin{lemma}\label{lem:covest}
Let $k\ge 1$ and $\epsilon,\delta \in (0,1/2)$. Suppose that $m = \Omega(k \log(n/\delta)/\epsilon^2)$ . With probability at least 
$1 - \delta$,
we have that for all $A \subseteq [n]$ with $|A| \leq k$, $$(1 - \epsilon)\Sigma_{AA} \preceq \Sigmahat_{AA} \preceq (1+\epsilon)\Sigma_{AA}$$ 
and $$(1 - \epsilon)\Sigma_{AA}^{-1} \preceq \Sigmahat_{AA}^{-1} \preceq (1+\epsilon)\Sigma_{AA}^{-1}.$$
\end{lemma}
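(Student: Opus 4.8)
The plan is to prove Lemma~\ref{lem:covest} by combining a uniform spectral concentration bound for submatrices of $\hat\Sigma$ with a union bound over all subsets $A \subseteq [n]$ of size at most $k$. First I would fix a single subset $A$ with $|A| = a \le k$. Conditioned on this choice, the rows $(X_i)_A$ are i.i.d.\ samples from $N(0,\Sigma_{AA})$ in $\RR^{a}$, and $\hat\Sigma_{AA} = \frac{1}{m}\sum_i (X_i)_A (X_i)_A^T$ is the corresponding empirical covariance. By Corollary~\ref{corr:wishart} (with $n \to a \le k$ and an appropriate choice of $t$), we get $(1-\epsilon)\Sigma_{AA} \preceq \hat\Sigma_{AA} \preceq (1+\epsilon)\Sigma_{AA}$ with probability at least $1 - 2\exp(-t^2/2)$, provided $m = \Omega(a + t^2)$ and $\epsilon = O(\sqrt{a/m} + \sqrt{t^2/m})$.

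Next I would choose $t$ so that the failure probability for a single $A$ is at most $\delta / \binom{n}{\le k} \ge \delta \cdot n^{-k}$, i.e.\ take $t^2 = \Theta(k\log(n/\delta))$. Plugging this into the requirement $\epsilon = O(\sqrt{k/m} + \sqrt{t^2/m})$ and solving for $m$ yields the stated sample complexity $m = \Omega(k\log(n/\delta)/\epsilon^2)$: with this many samples, $\sqrt{k/m} \lesssim \epsilon$ and $\sqrt{t^2/m} = \sqrt{\Theta(k\log(n/\delta))/m} \lesssim \epsilon$, and also $m = \Omega(k + t^2)$ is satisfied. A union bound over the at most $\sum_{a \le k}\binom{n}{a} \le (en/k)^k \le n^k$ (for $n \ge k$; more carefully $\binom{n}{\le k} \le 2 n^k$) choices of $A$ then gives that with probability at least $1-\delta$, the two-sided spectral bound $(1-\epsilon)\Sigma_{AA} \preceq \hat\Sigma_{AA} \preceq (1+\epsilon)\Sigma_{AA}$ holds simultaneously for every $A$ of size at most $k$.

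Finally, the inverse bound $(1-\epsilon)\Sigma_{AA}^{-1} \preceq \hat\Sigma_{AA}^{-1} \preceq (1+\epsilon)\Sigma_{AA}^{-1}$ follows from the forward bound by a standard fact: for positive-definite matrices, $A \preceq B$ with both invertible implies $B^{-1} \preceq A^{-1}$ (conjugate by $B^{-1/2}$, apply $X \preceq I \Rightarrow X^{-1} \succeq I$, conjugate back). Applying this to $(1-\epsilon)\Sigma_{AA} \preceq \hat\Sigma_{AA}$ gives $\hat\Sigma_{AA}^{-1} \preceq \frac{1}{1-\epsilon}\Sigma_{AA}^{-1} \preceq (1+2\epsilon)\Sigma_{AA}^{-1}$ for $\epsilon < 1/2$, and similarly for the other direction; one then absorbs the constant into $\epsilon$ (or restates with $\epsilon$ replaced by $O(\epsilon)$, which is harmless since the target lemma is used only up to constants). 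The main bookkeeping obstacle is simply tracking the constants through the union bound and the $\epsilon$-vs-$m$ tradeoff so that the final sample complexity matches the claimed $\Omega(k\log(n/\delta)/\epsilon^2)$; there is no real conceptual difficulty, as this is the textbook restricted-eigenvalue-style argument (cf.\ \cite{raskutti2010restricted}).
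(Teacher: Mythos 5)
Your proof is correct and takes essentially the same route as the paper: apply the Wishart concentration bound (Corollary~\ref{corr:wishart}) to each $a\times a$ submatrix, union-bound over the $O(n^k)$ subsets $A$, and derive the inverse bound by inverting the Loewner inequality (accepting the harmless $\epsilon\to 2\epsilon$ slack). The paper's own proof is just a terser version of the same argument, with the constant-tracking left implicit.
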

\begin{proof}
Let $A \subseteq [n]$ with $|A| \leq k$. Then by Corollary~\ref{corr:wishart}
we have $$(1 - \epsilon)\Sigma_{AA} \preceq \Sigmahat_{AA} \preceq (1+\epsilon)\Sigma_{AA}$$
with probability at least $1 - e^{-\Omega(\epsilon^2 m)}.$ Since $M \preceq N$ implies $N^{-1} \preceq M^{-1}$ for positive-definite matrices $M,N$, it follows that $$(1-\epsilon)\Sigma_{AA}^{-1} \preceq \frac{1}{1+\epsilon} \Sigma_{AA}^{-1} \preceq \Sigmahat_{AA}^{-1} \preceq \frac{1}{1-\epsilon}\Sigma_{AA}^{-1} \preceq (1+2\epsilon)\Sigma_{AA}^{-1}.$$
Union bounding over the $O(n^k)$ possible subsets $A$ completes the result.
\end{proof}


The following lemma enscapsulates the application of the Cauchy-Schwarz inequality with respect to the $\|\cdot\|_A$ norm;  it lets us bound the error between terms from the empirical and population covariance matrices.
\begin{lemma}\label{lemma:covbound}
Suppose that $A$ and $\hat{A}$ are symmetric matrices and let $\epsilon \in (0,1)$. Suppose that $A$ is positive definite and $$(1-\epsilon)A \preceq \hat{A} \preceq (1+\epsilon)A.$$
Then for any vectors $v,w$ we have that $|\langle v, (A - \hat{A}) w \rangle| \leq \epsilon \sqrt{\langle v, A v \rangle \langle w, A w \rangle}$.
\end{lemma}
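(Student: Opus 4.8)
The plan is to reduce the claimed bilinear inequality to the trivial statement that a symmetric matrix of operator norm at most $\epsilon$ has bilinear form bounded by $\epsilon$ times the product of Euclidean norms, after conjugating into the right basis. Write $B := A - \hat{A}$, which is symmetric since $A$ and $\hat A$ are. The two-sided hypothesis $(1-\epsilon)A \preceq \hat A \preceq (1+\epsilon)A$ is equivalent to $-\epsilon A \preceq B \preceq \epsilon A$.

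Since $A$ is positive definite, $A^{1/2}$ and $A^{-1/2}$ exist. I would then observe that conjugation by $A^{-1/2}$ preserves the Loewner order (because $x^T (A^{-1/2} C A^{-1/2}) x = (A^{-1/2} x)^T C (A^{-1/2} x)$ for all $x$), so that $-\epsilon I \preceq M \preceq \epsilon I$ where $M := A^{-1/2} B A^{-1/2}$; as $M$ is symmetric this says exactly $\|M\|_{OP} \le \epsilon$. Finally, writing $\langle v, B w\rangle = \langle A^{1/2} v,\, M A^{1/2} w\rangle$ and applying Cauchy--Schwarz together with the operator-norm bound gives
\[ |\langle v, (A - \hat A) w\rangle| \le \|A^{1/2} v\| \cdot \|M\|_{OP} \cdot \|A^{1/2} w\| \le \epsilon \sqrt{\langle v, A v\rangle \langle w, A w\rangle}, \]
using $\|A^{1/2} v\|^2 = \langle v, A v\rangle$ and likewise for $w$. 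This is exactly the claim.

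An alternative that avoids square roots, if one prefers, is to apply the Cauchy--Schwarz inequality for the positive semidefinite bilinear forms associated to $\epsilon A - B \succeq 0$ and $\epsilon A + B \succeq 0$ (namely $|\langle v, P w\rangle|\le \sqrt{\langle v,Pv\rangle\langle w,Pw\rangle}$ for $P \succeq 0$) and combine the two resulting estimates; but the conjugation argument above is cleaner. There is essentially no obstacle here: the only point requiring a moment's care is the claim that conjugating a Loewner inequality by $A^{-1/2}$ is order-preserving, which is immediate from the change of variables $x \mapsto A^{-1/2} x$.
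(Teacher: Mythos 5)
Your proof is correct and follows essentially the same route as the paper: both conjugate by $A^{-1/2}$ to turn the Loewner hypothesis into the operator-norm bound $\|A^{-1/2}(A-\hat A)A^{-1/2}\|_{\mathrm{op}} \le \epsilon$, then apply Cauchy--Schwarz after writing $\langle v,(A-\hat A)w\rangle = \langle A^{1/2}v,\, A^{-1/2}(A-\hat A)A^{-1/2}\,A^{1/2}w\rangle$. The paper simply writes the conjugated matrix as $I - A^{-1/2}\hat A A^{-1/2}$, which is the same object as your $M$.
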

\begin{proof}
Since $A$ is positive definite, it has an invertible square root $A^{1/2}$. Moreover, by rewriting the assumption we have
$$-\epsilon I \preceq I - A^{-1/2} \hat{A} A^{-1/2} \preceq \epsilon I.$$ 
So by Cauchy-Schwarz and the above operator norm bound,
\begin{align*}
|v^T (A - \hat{A}) w|
&= |(A^{1/2} v)^T (I - A^{-1/2} \hat{A} A^{-1/2}) (A^{1/2} w)| \\
&\leq \norm{A^{1/2} v}_2 \norm{(I - A^{-1/2} \hat{A} A^{-1/2}) A^{1/2} w}_2 \\
&\leq \epsilon \norm{A^{1/2}v}_2 \norm{A^{1/2} w}_2
\end{align*}
as desired.
\end{proof}

The following lemma is essentially used to bound the errors incurred in a step of block Cholesky elimination, i.e. step 2 (c) of Algorithm~\textsc{GraphicalCholesky}; we will use sparsity to always apply this Lemma in situations where the dimension $n$ in this lemma is small (i.e. not equal to the ambient dimension in our regression problem, but proportional to the sparsity).
\begin{lemma}\label{lem:covbound2}
Suppose that $\Sigma : n \times n$ is positive definite and $\tilde \Sigma$  satisfies
\[ (1 - \epsilon)\Sigma \preceq \tilde{\Sigma} \preceq (1 + \epsilon)\Sigma. \]
Then for any $v \in \mathbb{R}^n$ and $A \subset [n]$,
\[ \|\Sigma_{AA}^{-1/2} (\tilde{\Sigma}_A - \Sigma_A) v\|_2 \le \epsilon \sqrt{\langle v, \Sigma v \rangle} \]
where $\Sigma_{AA} : |A| \times |A|$ is the submatrix of $\Sigma$ given by selecting rows and columns of $A$, $\Sigma_A : |A| \times n$ is the submatrix of $\Sigma$ given by selecting the rows in $A$, and likewise for $\tilde{\Sigma}_A$.
\end{lemma}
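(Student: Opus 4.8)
The plan is to view the left-hand side as a supremum of inner products over an ellipsoid and then reduce to the bilinear Cauchy--Schwarz estimate of Lemma~\ref{lemma:covbound}. Set $E := \tilde{\Sigma} - \Sigma$, so the hypothesis $(1-\epsilon)\Sigma \preceq \tilde{\Sigma} \preceq (1+\epsilon)\Sigma$ is precisely $-\epsilon\Sigma \preceq E \preceq \epsilon\Sigma$, which is the hypothesis of Lemma~\ref{lemma:covbound}. Two preliminary observations: first, $(\tilde{\Sigma}_A - \Sigma_A)v = E_A v = (Ev)_A$, the restriction to the coordinates in $A$ of the vector $Ev \in \RR^n$; second, $\Sigma_{AA}$ is a principal submatrix of the positive definite matrix $\Sigma$, hence itself positive definite, so $\Sigma_{AA}^{-1/2}$ is well defined and symmetric.

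Next I would rewrite the Euclidean norm variationally and perform a change of variables to pass to the $\Sigma_{AA}$-ellipsoid:
\[
\norm{\Sigma_{AA}^{-1/2}(\tilde{\Sigma}_A - \Sigma_A)v}_2
= \sup_{u \in \RR^{|A|}: \norm{u}_2 = 1} \langle \Sigma_{AA}^{-1/2} u,\, E_A v\rangle
= \sup_{z \in \RR^{|A|}: \langle z, \Sigma_{AA} z\rangle = 1} \langle z,\, E_A v\rangle,
\]
where the last equality uses the substitution $z = \Sigma_{AA}^{-1/2} u$, under which $\norm{u}_2 = 1$ becomes $\langle z, \Sigma_{AA} z\rangle = 1$. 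Finally, for each such $z$ let $\bar{z} \in \RR^n$ be its zero-extension ($\bar{z}_A = z$, $\bar{z}_{A^c} = 0$); then $\langle z, E_A v\rangle = \langle \bar{z}, E v\rangle$ and $\langle \bar{z}, \Sigma \bar{z}\rangle = \langle z, \Sigma_{AA} z\rangle = 1$. Applying Lemma~\ref{lemma:covbound} to the vectors $\bar{z}$ and $v$ gives $\langle \bar{z}, Ev\rangle \le \epsilon\sqrt{\langle \bar{z}, \Sigma \bar{z}\rangle\, \langle v, \Sigma v\rangle} = \epsilon\sqrt{\langle v, \Sigma v\rangle}$, and taking the supremum over $z$ yields the claimed bound.

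There is no serious analytic obstacle here; the only point requiring care is the bookkeeping in the change of variables, namely the identification of $\Sigma_{AA}$-normalized vectors on $A$ with $\Sigma$-normalized vectors in $\RR^n$ that are supported on $A$, which is what lets us invoke the already-established Lemma~\ref{lemma:covbound} on the ambient space.
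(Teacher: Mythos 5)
Your proof is correct and follows essentially the same route as the paper's: both use the dual characterization of the Euclidean norm, move the symmetric factor $\Sigma_{AA}^{-1/2}$ onto the test vector, zero-extend to $\RR^n$, and invoke Lemma~\ref{lemma:covbound} together with the identity $\langle \bar z, \Sigma \bar z\rangle = \langle z, \Sigma_{AA} z\rangle$ (the paper writes this as $\Sigma_{AA} = E_A^T \Sigma E_A$ with $E_A$ the embedding operator). The only cosmetic difference is that you carry out the change of variables $z = \Sigma_{AA}^{-1/2}u$ explicitly before invoking the lemma, whereas the paper feeds the compound test vector $E_A\Sigma_{AA}^{-1/2}w$ directly into Lemma~\ref{lemma:covbound}.
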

\begin{proof}
The conclusion is equivalent to showing for all $v \in \mathbb{R}^n,w \in \mathbb{R}^{|A|}$ that
\[ |\langle \Sigma_{AA}^{-1/2} (\tilde{\Sigma}_A - \Sigma_A) v, w \rangle| \le \epsilon \sqrt{\langle v, \Sigma v \rangle} \|w\|_2. \]
We can rewrite the left-hand side by observing the term inside the absolute value is
\[ \langle (\tilde{\Sigma}_A - \Sigma_A) v,  \Sigma_{AA}^{-1/2} w \rangle = \langle (\tilde{\Sigma} - \Sigma) v, E_A \Sigma_{AA}^{-1/2} w \rangle \]
where $E_A : n \times |A|$ is the embedding operator which takes a vector $w \in \mathbb{R}^{|A|}$ to an $A$-sparse vector by padding with zeros. It follows by Lemma~\ref{lemma:covbound} that
\[ |\langle (\tilde{\Sigma} - \Sigma) v, E_A \Sigma_{AA}^{-1/2} w \rangle| \le \epsilon \sqrt{\langle v, \Sigma v \rangle \langle E_A \Sigma_{AA}^{-1/2} w, \Sigma E_A \Sigma_{AA}^{-1/2} w \rangle} = \epsilon \sqrt{\langle v, \Sigma v \rangle} \|w\|_2 \]
using in the last equality that $\Sigma_{AA} = E_A^T \Sigma E_A$.
\end{proof}
By the Markov property, we have $\langle v, (\Sigma - \Sigma_A^T \Sigma_{AA}^{-1} \Sigma_A) w \rangle = 0$ whenever $\supp(v),\supp(w)$ are separated by $A$ in the dependency graph corresponding to $\supp(\Theta)$, where $\Theta = \Sigma^{-1}$. This was the crucial property used in showing that \textsc{GraphicalCholesky} gives an exact block Cholesky factorization when applied to the true covariance matrix $\Sigma$. In the following lemma, we show that Schur complements of $\tilde{\Sigma}$ still approximately satisfy this ``conditional independence'' property when we consider sparse vectors.
\begin{lemma}\label{lemma:nodeerror}
Let $\Theta$ be a positive definite matrix and let $\Sigma = \Theta^{-1}$.
Let $A \sqcup P \sqcup Q = [n]$ be a vertex-separator partition, i.e. all paths from $P$ to $Q$ go through $A$ in the graph corresponding to the support of $\Theta$. Let $\delta \in (0,1)$ and $1 \le k \le n$ and suppose $|A| \le k$. Suppose that $\hat{\Sigma}$ is a matrix satisfying
\[ (1 - \delta) \Sigma_{BB} \preceq \hat{\Sigma}_{BB} \preceq (1 + \delta) \Sigma_{BB} \]
and
\[ (1 - \delta) \Sigma_{BB}^{-1} \preceq \hat{\Sigma}^{-1}_{BB} \preceq (1 + \delta) \Sigma^{-1}_{BB} \]
for all $B \subset [n]$ of size at most $3k$.
Then for all $k$-sparse $v,w \in \RR^n$ with $\supp(v) \subseteq P$ and $\supp(w) \subseteq Q$, we have 
\begin{equation}\label{eqn:original-goal}
|v^T(\Sigmahat - \Sigmahat_A^T \Sigmahat_{AA}^{-1} \Sigmahat_A)w| \le C \delta \sqrt{(v^T(\Sigma - \Sigma_A^T\Sigma_{AA}^{-1}\Sigma_A)v)(w^T(\Sigma - \Sigma_A^T \Sigma_{AA}^{-1} \Sigma_A)w)}.
\end{equation}
for some absolute constant $C > 0$.
\end{lemma}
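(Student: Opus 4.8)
The plan is to reduce to a principal submatrix of size at most $3k$ on which the spectral approximation hypotheses apply, and then to observe that the Schur-complement operation interacts cleanly with both the Loewner order and with the support structure. Write $N = \Sigma - \Sigma_A^T \Sigma_{AA}^{-1}\Sigma_A$ and $M = \hat\Sigma - \hat\Sigma_A^T\hat\Sigma_{AA}^{-1}\hat\Sigma_A$ for the two conditional covariance matrices appearing in \eqref{eqn:original-goal}. Fix $k$-sparse $v,w$ with $\supp(v)\subseteq P$ and $\supp(w)\subseteq Q$. Since $P,Q,A$ are pairwise disjoint and each of $\supp(v),\supp(w),A$ has size at most $k$, the set $B := \supp(v)\cup\supp(w)\cup A$ satisfies $|B|\le 3k$, so both displayed spectral bounds hold on $B$. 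Let $B' := B\setminus A = \supp(v)\cup\supp(w)$. Because $v$ and $w$ are supported on $B'$, we have $v^T M w = v_{B'}^T M_{B'B'}\,w_{B'}$ and likewise for $N$; the key point is that $M_{B'B'}$ is exactly the Schur complement $\hat\Sigma_{BB}/\hat\Sigma_{AA}$, while $N_{B'B'} = \Sigma_{BB}/\Sigma_{AA}$.

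Next I would push the spectral approximation through the Schur complement. Using the block-inverse identity — the $B'$-principal submatrix of an inverse equals the inverse of the Schur complement of the complementary block $A$ — the hypothesis $(1-\delta)\Sigma_{BB}^{-1}\preceq\hat\Sigma_{BB}^{-1}\preceq(1+\delta)\Sigma_{BB}^{-1}$, restricted to the $B'$ block (restriction to a principal submatrix preserves Loewner order), becomes $(1-\delta)N_{B'B'}^{-1}\preceq M_{B'B'}^{-1}\preceq(1+\delta)N_{B'B'}^{-1}$. Inverting (for $\delta$ below a fixed constant, say $\delta\le 1/4$; otherwise the claimed bound is essentially vacuous) gives $(1-C'\delta)N_{B'B'}\preceq M_{B'B'}\preceq(1+C'\delta)N_{B'B'}$ for an absolute constant $C'$. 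Note also that $N_{B'B'}=\Sigma_{BB}/\Sigma_{AA}$ is positive definite, being a Schur complement of the positive definite matrix $\Sigma_{BB}$, so this is a genuine relative spectral approximation. (One could equivalently obtain this by applying monotonicity of the Schur complement directly to the $\Sigma_{BB}$-approximation hypothesis.)

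Finally, I would apply Lemma~\ref{lemma:covbound} with the pair $(N_{B'B'},M_{B'B'})$ and parameter $C'\delta$, obtaining
\[ |v_{B'}^T(M_{B'B'}-N_{B'B'})w_{B'}| \le C'\delta\sqrt{(v_{B'}^T N_{B'B'}v_{B'})(w_{B'}^T N_{B'B'}w_{B'})} = C'\delta\sqrt{(v^T N v)(w^T N w)}. \]
The proof then concludes by invoking the Markov property exactly as recorded just before the lemma statement: since $A$ separates $\supp(v)\subseteq P$ from $\supp(w)\subseteq Q$, we have $v^T N w = v_{B'}^T N_{B'B'}w_{B'} = 0$, so $v^T M w = v_{B'}^T M_{B'B'}w_{B'}$ is equal to the error term above, yielding \eqref{eqn:original-goal} with $C=C'$.

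The main obstacle — and the reason the argument is not completely routine — is obtaining the right-hand side in terms of the \emph{conditional} covariance $N$ rather than the full covariance $\Sigma$: a direct application of Lemma~\ref{lemma:covbound} to $\hat\Sigma-\Sigma$ on $B$ only yields an error of order $\delta\sqrt{(v^T\Sigma v)(w^T\Sigma w)}$, and $v^T\Sigma v$ (the total variance) can be arbitrarily larger than $v^T N v$ (the conditional variance), so such a bound would be far too weak for the claimed statement. The content of the lemma is precisely that the spectral comparison survives the conditioning/Schur-complement step; everything else (the support bookkeeping, the identification of the relevant blocks with Schur complements, and the final use of the Markov property) is routine.
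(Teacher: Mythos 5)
Your proof is correct, and it takes a genuinely different and arguably cleaner route than the paper's. Your key move is to observe that, since $v,w$ are supported in $B'=\supp(v)\cup\supp(w)\subseteq B\setminus A$, the quantity $v^T(\Sigmahat - \Sigmahat_A^T\Sigmahat_{AA}^{-1}\Sigmahat_A)w$ only sees the $B'B'$-block of $\Sigmahat - \Sigmahat_A^T\Sigmahat_{AA}^{-1}\Sigmahat_A$, and that this block is exactly the Schur complement $\Sigmahat_{BB}/\Sigmahat_{AA}$ (and likewise $N_{B'B'}=\Sigma_{BB}/\Sigma_{AA}$). You then push the spectral comparison through the Schur complement and finish with a single application of Lemma~\ref{lemma:covbound} plus the Markov identity $v^T N w = 0$. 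In contrast, the paper first reduces to a symmetrized form via AM-GM, introduces the residual vectors $f = (I - E_A\Sigma_{AA}^{-1}\Sigma_A)v$ and $g = (I - E_A\Sigma_{AA}^{-1}\Sigma_A)w$, rewrites $v^T(\Sigmahat - \Sigmahat_A^T\Sigmahat_{AA}^{-1}\Sigmahat_A)w = f^T(\cdots)g$, and then tracks three separate error terms (using the parallelogram identity, Lemma~\ref{lemma:covbound}, and Lemma~\ref{lem:covbound2}); the change of variables $v\mapsto f$ plays the same conditioning role as your passage to the Schur complement, but requires more bookkeeping. Your parenthetical alternative --- using Loewner monotonicity of the Schur complement map applied directly to $(1-\delta)\Sigma_{BB}\preceq\Sigmahat_{BB}\preceq(1+\delta)\Sigma_{BB}$ --- is actually the cleanest version: it gives $(1-\delta)N_{B'B'}\preceq M_{B'B'}\preceq(1+\delta)N_{B'B'}$ exactly, yields $C=1$, works for all $\delta\in(0,1)$, and does not use the inverse-approximation hypothesis at all (showing it is superfluous for this particular lemma). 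The only soft spot in the version you feature as primary is the dismissal of the $\delta\ge 1/4$ regime as ``essentially vacuous''; it is not literally vacuous, but it can be patched (e.g. Cauchy--Schwarz with $M_{B'B'}\preceq 2N_{B'B'}$ gives the bound with a constant $O(1/\delta_0)\cdot\delta$ on that range), or sidestepped entirely by using the monotonicity route.
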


\begin{proof}
First, we argue it is equivalent to show the same result with the conclusion \eqref{eqn:original-goal} replaced by
\begin{equation}\label{eqn:goal-modified}
|v^T(\Sigmahat - \Sigmahat_A^T \Sigmahat_{AA}^{-1} \Sigmahat_A)w| \le (C\delta/2) (v^T(\Sigma - \Sigma_A^T\Sigma_{AA}^{-1}\Sigma_A)v + w^T(\Sigma - \Sigma_A^T \Sigma_{AA}^{-1} \Sigma_A)w).
\end{equation}
Clearly \eqref{eqn:goal-modified} is implied by the original conclusion and the AM-GM inequality $ab \le a^2/2 + b^2/2$; in the reverse direction, we show the original conclusion by applying \eqref{eqn:goal-modified} with $v' = rv$ and $w' = v/r$ with $r > 0$; if $a = v^T(\Sigma - \Sigma_A^T\Sigma_{AA}^{-1}\Sigma_A)v$ and $b = w^T(\Sigma - \Sigma_A^T \Sigma_{AA}^{-1} \Sigma_A)w$ then using that the minimum of $f(r) = ra + b/r$ is attained at $r^*=\sqrt{b/a}$ where $f(r^*) = 2\sqrt{ab}$ proves the original conclusion \eqref{eqn:original-goal}.

We now proceed to show \eqref{eqn:goal-modified}.
Let $E_A : n \times |A|$ be the linear map which embeds a vector $v \in \mathbb{R}^{|A|}$ as an $A$-sparse vector in $\mathbb{R}^n$ by zero-padding. 
Let $f = (I - E_A \Sigma_{AA}^{-1} \Sigma_A)v$ and $g = (I - E_A \Sigma_{AA}^{-1} \Sigma_A)w$. Since $$E_A^T(\Sigmahat - \Sigmahat_A^T \Sigmahat_{AA}^{-1} \Sigmahat_A) = \Sigmahat_A - \Sigmahat_{AA} \Sigmahat_{AA}^{-1} \Sigmahat_A = 0,$$ 
and likewise $(\hat{\Sigma} - \hat{\Sigma}_A^T \hat{\Sigma}_{AA}^{-1} \hat{\Sigma}_A) E_A = 0$,
it follows that 
$$\langle v, (\Sigmahat - \Sigmahat_A^T \Sigmahat_{AA}^{-1} \Sigmahat_A)w \rangle = \langle f, (\Sigmahat - \Sigmahat_A^T \Sigmahat_{AA}^{-1} \Sigmahat_A)g \rangle$$
and it remains to upper bound the magnitude of the right-hand side. We control the terms $\langle f, \hat{\Sigma} g \rangle$ and $\langle f, \Sigmahat_A^T \Sigmahat_{AA}^{-1} \Sigmahat_A g \rangle$ separately, starting with the former.
Observe that
\begin{align*}
f^T \Sigma g
&= v^T(I - \Sigma_A^T \Sigma_{AA}^{-1} E_A^T)\Sigma(I - E_A\Sigma_{AA}^{-1} \Sigma_A)w \\
&= v^T(\Sigma - \Sigma_A^T \Sigma_{AA}^{-1} \Sigma_A)w = 0
\end{align*}
where the last equality is by the Markov property,
since $\supp(v) \subseteq P$, $\supp(w) \subseteq Q$, and $\Theta_{PQ}$ is identically $0$; equivalently, it follows from Schur complement identities. Now since $f$, $g$, and $f+g$ are $2k+|A| \le 3k$ sparse, we have that
\begin{align*}
2|f^T(\Sigma - \Sigmahat)g|
&\leq |f^T(\Sigma - \Sigmahat)f| + |g^T(\Sigma - \Sigmahat)g| + |(f+g)^T(\Sigma-\Sigmahat)(f+g)| \\
&\leq \delta|f^T\Sigma f| + \delta|g^T \Sigma g| + \delta |(f+g)^T \Sigma (f+g)| \\
& \leq 2\delta|f^T \Sigma f| + 2\delta |g^T \Sigma g| + 2\delta |f^T \Sigma g| \\
&\leq 3\delta |f^T \Sigma f| + 3\delta |g^T \Sigma g|.
\end{align*}
where in the first line we used the Parallelogram identity $2\langle f, g \rangle = \langle f + g, f + g \rangle - \langle f, f \rangle - \langle g, g \rangle$ for an arbitrary inner product space, in the second line we used Lemma~\ref{lemma:covbound}, and in the last line we used Cauchy-Schwarz and the AM-GM inequality.
To bound $f^T \Sigmahat_A^T \Sigmahat_{AA}^{-1} \Sigmahat_A g$, we decompose $$f^T \Sigmahat_A^T \Sigmahat_{AA}^{-1} \Sigmahat_A g = f^T \Sigmahat_A^T \Sigma_{AA}^{-1} \Sigmahat_A g + f^T \Sigmahat_A^T (\Sigmahat_{AA}^{-1} - \Sigma_{AA}^{-1}) \Sigmahat_A g.$$
Since $$\Sigma_{AA}^{-1} \Sigma_A g = \Sigma_{AA}^{-1}\Sigma_A(I - E_A \Sigma_{AA}^{-1} \Sigma_A)w = 0$$ and similarly $\Sigma_{AA}^{-1} \Sigma_A f = 0$, Cauchy-Schwartz and Lemma~\ref{lemma:covbound} (applied on the submatrix of $\Sigma$ indexed by $|\supp(f) \cup \supp(g) \cup A| \le 3k$) imply that $$|f^T \Sigmahat_A^T \Sigma_{AA}^{-1} \Sigmahat_A g| \leq \norm{\Sigma_{AA}^{-1/2} \Sigmahat_A f}_2 \norm{\Sigma_{AA}^{-1/2}\Sigmahat_A g}_2 \leq \delta^2 \sqrt{(f^T \Sigma f)(g^T \Sigma g)}.$$

The second term can be bounded by applying Lemma~\ref{lemma:covbound} and Lemma~\ref{lem:covbound2} as
\begin{align*}
|f^T \Sigmahat_A^T (\Sigmahat_{AA}^{-1} - \Sigma_{AA}^{-1}) \Sigmahat_A g| 
&\leq \delta \sqrt{|f^T \Sigmahat_A^T \Sigma_{AA}^{-1} \Sigmahat_A f| \cdot |g^T \Sigmahat_A^T \Sigma_{AA}^{-1} \Sigmahat_A g|} \\
&\leq \delta^3 \sqrt{(f^T \Sigma f)(g^T \Sigma g)}.
\end{align*}
Since $f^T \Sigma f = v^T (\Sigma - \Sigma_A^T \Sigma_{AA}^{-1} \Sigma_A)v$ and $g^T \Sigma g = w^T (\Sigma - \Sigma_A^T \Sigma_{AA}^{-1} \Sigma_A)w$, the claim \eqref{eqn:goal-modified} is proved.
\end{proof}

For each node $(R, \tilde{\Sigma})$ of the recursion tree with input $\tilde{\Sigma} = \Sigmahat_{RR} - \Sigmahat_{RU} \Sigmahat_{UU}^{-1} \Sigmahat_{UR}$, let $\Sigma^R = \Sigma_{RR} - \Sigma_{RU} \Sigma_{UU}^{-1} \Sigma_{UR}$. Let $S^R$ denote the matrix returned by this node in the recursion and let $\Gamma^R = (S^R)(S^R)^T$ be the corresponding estimate of $\tilde{\Sigma}$.

\begin{lemma}\label{lem:cholesky-bound}
Suppose $\Sigma$ is a positive definite matrix and $k \ge 1$.
Suppose that $\hat{\Sigma}$ is a matrix satisfying
\[ (1 - \epsilon) \Sigma_{BB} \preceq \hat{\Sigma}_{BB} \preceq (1 + \epsilon) \Sigma_{BB} \]
and
\[ (1 - \epsilon) \Sigma_{BB}^{-1} \preceq \hat{\Sigma}^{-1}_{BB} \preceq (1 + \epsilon) \Sigma^{-1}_{BB} \]
for all $B \subset [n]$ of size at most $3k$. 
Consider an arbitrary node $(R, \tilde{\Sigma})$ of the recursion tree of \textsc{GraphicalCholesky} run with input $\Sigmahat$.
Let $t$ be the depth of the recursion subtree rooted at $R$. Then
for every $k$-sparse $v$, 
$$|\langle v, (\Gamma^R - \tilde{\Sigma})v \rangle| \leq Ct\epsilon \langle v, \Sigma^R v \rangle$$
for some absolute constant $C > 0$.
\end{lemma}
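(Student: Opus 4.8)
The plan is to induct on the depth $t$ of the recursion subtree of \textsc{GraphicalCholesky} rooted at $R$, peeling off the single block-elimination step performed at $R$ and controlling it with Lemma~\ref{lemma:nodeerror}. Before the induction I would record two structural facts about a node $R$, writing $U$ for the union of the bags $A$ of all strict ancestors of $R$ (so $U=\emptyset$ at the root). By repeatedly applying the quotient identity for Schur complements along the recursion, the input matrix at $R$ is $\tilde\Sigma=(\hat\Sigma/\hat\Sigma_{UU})|_R$, i.e. $\tilde\Sigma_{BB}=\hat\Sigma_{B\cup U}/\hat\Sigma_{UU}$ for every $B\subseteq R$; all inversions here are legitimate once $m$ exceeds the size of the largest submatrix encountered, which is $3k+|U|=O(k\log n)$ since the centroid decomposition has depth $O(\log n)$ and bags of size $\le r\le\tw(\Theta)\le k$. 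Since the centroid decomposition is built so that $U$ separates $R$ from $[n]\setminus(R\cup U)$ in the dependency graph, the Markov property gives $\Theta_{R,\,[n]\setminus(R\cup U)}=0$, hence $\Sigma^R=\Sigma_{RR}-\Sigma_{RU}\Sigma_{UU}^{-1}\Sigma_{UR}=(\Theta_{RR})^{-1}$; that is, $\Sigma^R$ is the covariance of the Gaussian graphical model with precision $\Theta_{RR}$, in which the centroid bag $A$ separates $P$ from $Q$, and (the same reasoning one level down) $\Sigma^P=(\Theta_{PP})^{-1}$, $\Sigma^Q=(\Theta_{QQ})^{-1}$, while $\Theta_{PQ}=0$.

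The inductive step then goes as follows. Expanding \eqref{eqn:block-decomposition} one computes $\Gamma^R=\tilde\Sigma_{RA}\tilde\Sigma_{AA}^{-1}\tilde\Sigma_{AR}+\diag(0,\Gamma^P,\Gamma^Q)$, whereas $\tilde\Sigma=\tilde\Sigma_{RA}\tilde\Sigma_{AA}^{-1}\tilde\Sigma_{AR}$ plus the $(P\sqcup Q)$-block of $\tilde\Sigma/\tilde\Sigma_{AA}$, whose diagonal blocks are exactly the children's input matrices $\tilde\Sigma^{(P)}:=\tilde\Sigma_{PP}-\tilde\Sigma_{PA}\tilde\Sigma_{AA}^{-1}\tilde\Sigma_{AP}$ and $\tilde\Sigma^{(Q)}$ and whose off-diagonal block is $(\tilde\Sigma/\tilde\Sigma_{AA})_{PQ}$. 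Subtracting, for any $k$-sparse $v=(v_A,v_P,v_Q)$,
\[ \langle v,(\Gamma^R-\tilde\Sigma)v\rangle=\langle v_P,(\Gamma^P-\tilde\Sigma^{(P)})v_P\rangle+\langle v_Q,(\Gamma^Q-\tilde\Sigma^{(Q)})v_Q\rangle-2\langle v_P,(\tilde\Sigma/\tilde\Sigma_{AA})_{PQ}v_Q\rangle. \]
The subtrees at $P$ and $Q$ have depth $\le t-1$ and $v_P,v_Q$ are $k$-sparse, so the induction hypothesis bounds the first two terms by $C(t-1)\epsilon(\langle v_P,\Sigma^P v_P\rangle+\langle v_Q,\Sigma^Q v_Q\rangle)$. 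The cross term is precisely what Lemma~\ref{lemma:nodeerror} bounds, applied with $\Sigma\to\Sigma^R$, $\hat\Sigma\to\tilde\Sigma$, separator $A$ (so $\Theta\to\Theta_{RR}$): recognizing the right-hand side via the quotient identity $(\Sigma^R/\Sigma^R_{AA})_{PP}=\Sigma^P$, and then AM-GM, gives a bound $C'\epsilon(\langle v_P,\Sigma^P v_P\rangle+\langle v_Q,\Sigma^Q v_Q\rangle)$. Finally, because $\Theta_{PQ}=0$ the precision of $X_{P\sqcup Q}\mid X_{U\cup A}$, namely $\Theta_{(P\sqcup Q)(P\sqcup Q)}$, is block diagonal, hence so is its inverse $\Sigma^R/\Sigma^R_{AA}$; therefore $\langle v_P,\Sigma^P v_P\rangle+\langle v_Q,\Sigma^Q v_Q\rangle=\langle(v_P,v_Q),(\Sigma^R/\Sigma^R_{AA})(v_P,v_Q)\rangle\le\langle v,\Sigma^R v\rangle$, the last inequality by the variational characterization of the Schur complement. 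Taking $C\ge C'$ closes the induction; the base case $t=1$ is immediate since then $P=Q=\emptyset$, $\Gamma^R=\tilde\Sigma$, and $\Sigma^R\succeq0$.

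The one genuinely nontrivial prerequisite — and the step I expect to be the main obstacle — is verifying that the hypotheses of Lemma~\ref{lemma:nodeerror} actually hold at $R$, i.e. that $\tilde\Sigma$ spectrally approximates $\Sigma^R$ (both the matrices and their inverses) on all $\le 3k$ submatrices of $R$, with error $O(\epsilon)$. This is where the structural facts above are used: since $\tilde\Sigma_{BB}=\hat\Sigma_{B\cup U}/\hat\Sigma_{UU}$ and $\Sigma^R_{BB}=\Sigma_{B\cup U}/\Sigma_{UU}$, and the ambient spectral approximation of $\hat\Sigma$ to $\Sigma$ on the submatrix $B\cup U$ is preserved up to a factor $1+O(\epsilon)$ both by taking the Schur complement of the fixed block indexed by $U$ (via the minimization characterization $\langle x,(M/M_{11})x\rangle=\min_y\langle(y,x),M(y,x)\rangle$) and by inverting, the desired bound follows from Lemma~\ref{lem:covest}. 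Note this requires the ambient guarantee at the enlarged scale $3k+|U|=O(k\log n)$ rather than $3k$ — exactly the source of the extra $\log n$ factor in the sample complexity of Theorem~\ref{thm:sst} — and the remaining work is just careful bookkeeping of these submatrix sizes and of the absolute constants absorbed into $C$.
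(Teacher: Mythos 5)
Your argument matches the paper's proof almost step for step: induct on the recursion depth; compute the identity
\[
\langle v,(\Gamma^R-\tilde\Sigma)v\rangle=\langle v_P,(\Gamma^P-\tilde\Sigma^{(P)})v_P\rangle+\langle v_Q,(\Gamma^Q-\tilde\Sigma^{(Q)})v_Q\rangle-2\langle v_P,(\tilde\Sigma/\tilde\Sigma_{AA})_{PQ}v_Q\rangle;
\]
bound the first two terms by the induction hypothesis; bound the cross term by Lemma~\ref{lemma:nodeerror} (applied with separator $A$, precision $\Theta_{RR}$, $\tilde\Sigma$ in place of $\hat\Sigma$, and $\Sigma^R$ in place of $\Sigma$) together with AM--GM; and finish with the observation that $\langle v_P,\Sigma^P v_P\rangle+\langle v_Q,\Sigma^Q v_Q\rangle=\langle(v_P,v_Q),(\Sigma^R/\Sigma^R_{AA})(v_P,v_Q)\rangle\le\langle v,\Sigma^R v\rangle$ via the Markov property (block-diagonality of $\Sigma^R/\Sigma^R_{AA}$) and the minimization characterization of the Schur complement. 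This is exactly the paper's route.

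The one place where you go beyond the paper's write-up is worth calling out. The paper invokes Lemma~\ref{lemma:nodeerror} at the node $(R,\tilde\Sigma)$ without explicitly verifying its hypothesis — namely, that $\tilde\Sigma_{BB}$ spectrally approximates $\Sigma^R_{BB}$ (and likewise for inverses) on all $|B|\le 3k$ — whereas the stated hypothesis of Lemma~\ref{lem:cholesky-bound} only concerns $\hat\Sigma$ versus $\Sigma$. Your patch — use the quotient identity $\tilde\Sigma_{BB}=\hat\Sigma_{B\cup U}/\hat\Sigma_{UU}$, $\Sigma^R_{BB}=\Sigma_{B\cup U}/\Sigma_{UU}$, and push the two-sided Loewner bound through a fixed Schur complement via $\langle x,(M/M_{11})x\rangle=\min_y\langle(y,x),M(y,x)\rangle$ and then through inversion — is correct and is exactly what is needed. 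You are also right to flag that this forces the ambient guarantee at the enlarged scale $|B\cup U|=O(k\log n)$ rather than $3k$; as stated, the hypothesis of Lemma~\ref{lem:cholesky-bound} is slightly too weak to carry out the argument, and making this explicit would clean up the accounting of logarithmic factors flowing into Theorem~\ref{thm:sst}. So: same approach, but your explicit treatment of the inductive hypothesis verification is a genuine and worthwhile addition, not a redundancy.
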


\begin{proof}
We induct on $t$. We can write $\tilde{\Sigma} = \Sigmahat_{RR} - \Sigmahat_{RU} \Sigmahat_{UU}^{-1} \Sigmahat_{UR}$ for some $U \subseteq [n]$ with $|U| = O(r \log n)$, where $r$ is the width of the input tree decomposition. Let the partition be $A \sqcup P \sqcup Q = R$. If $t = 0$, then $A = R$, so $\Gamma^R = \tilde{\Sigma} \tilde{\Sigma}^{-1} \tilde{\Sigma} = \tilde{\Sigma}$, and the bound holds. Suppose $t>0$. Then
$$v^T\Gamma^R v = v^T \tilde{\Sigma}_A^T \tilde{\Sigma}_{AA}^{-1} \tilde{\Sigma}_A v + v_P^T\Gamma^P v_P + v_Q^T \Gamma^Q v_Q.$$
Let $\Xi = \tilde{\Sigma} -  \tilde{\Sigma}_A^T \tilde{\Sigma}_{AA}^{-1} \tilde{\Sigma}_A$. Then
\begin{align*}
v^T (\tilde{\Sigma} - \Gamma^R) v
&= v^T \Xi v - v_P^T\Gamma^P v_P - v_Q^T \Gamma^Q v_Q \\
&= v_P^T (\Xi_{PP} - \Gamma^P) v_P + v_Q^T (\Xi_{QQ} - \Gamma^Q) v_Q + 2v_P^T \Xi_{PQ} v_Q.
\end{align*}
By the inductive hypothesis, $$|v_P^T (\Xi_{PP} - \Gamma^P)v_P| \leq C(t-1)\epsilon \cdot v_P^T \Sigma^P v_P$$
$$|v_Q^T (\Xi_{QQ} - \Gamma^Q)v_Q| \leq C(t-1)\epsilon \cdot v_Q^T \Sigma^P v_Q.$$
By Lemma~\ref{lemma:nodeerror} and the AM-GM inequality, 
$$2|\langle v_P, \Xi_{PQ} v_Q \rangle| \leq C\epsilon(v_P^T \Sigma^P v_P + v_Q^T \Sigma^Q v_Q).$$
Letting $D = U \cup A$,
\begin{align*}
v_P^T \Sigma^P v_P + v_Q^T \Sigma^Q v_Q 
&= v_P^T (\Sigma_{PP} - \Sigma_{PD} \Sigma_{DD}^{-1} \Sigma_{DP}) v_P + v_Q^T (\Sigma_{QQ} - \Sigma_{QD} \Sigma_{DD}^{-1} \Sigma_{DQ}) v_Q \\
&= v^T(\Sigma_{RR} - \Sigma_{RD} \Sigma_{DD}^{-1} \Sigma_{DR}) v \\
&= v^T (\Sigma^R - \Sigma^R_A (\Sigma^R)_{AA}^{-1} \Sigma^R_A) v \\
&\leq v^T \Sigma^R v
\end{align*}
where the second equality uses that $D$ separates $P$ from $Q$ so that the cross-term $\Sigma_{PQ} - \Sigma_{PD}\Sigma_{DD}^{-1}\Sigma_{DQ} = 0$; the third equality uses Schur complements; and the last uses positive semi-definiteness. It follows from the above bound that $$|v^T(\tilde{\Sigma} - \Gamma^R)v| \leq Ct\epsilon v^T \Sigma^R v$$ as desired.
\end{proof}
\begin{proof}[Proof of Theorem~\ref{thm:sst}] 
Applying the above Lemma~\ref{lem:cholesky-bound} with parameter $\epsilon' > 0$ to the top node of the recursion tree, where $\Gamma^R = SS^T$ and $\tilde{\Sigma} = \hat{\Sigma}$, we get that
\[ |v^T(SS^T - \Sigmahat)v| \leq O(\epsilon' \log n) \cdot v^T \Sigma v \] 
for $k$-sparse $v$. Taking $\epsilon' = O(\epsilon/\log n)$ and combining this bound with Lemma~\ref{lem:covest}, we have proved the main claim from Theorem~\ref{thm:sst}. The fact that $S$ is invertible follows by taking the determinant of its block diagonal decomposition \eqref{eqn:block-decomposition} and using that this can be expressed as the product of determinants of small submatrices of $SS^T$, which are invertible as a consequence of the main claim. 
\end{proof}

\subsection{Recovery Guarantees}\label{section:recovery-guarantees}
In this section, we now suppose there is a ground truth sparse vector $w^*$ and give guarantees for algorithms which attempt to recover this vector.
More precisely, we consider the following well-specified setup, which slightly generalizes the model described in the Introduction to its obvious subgaussian-noise analogue. 
Suppose that $X_0 \sim N(0,\Sigma)$ (where $\Sigma$ is invertible with inverse $\Theta$) and 
\[ Y_0 = \langle w^*, X_0 \rangle + \xi_0 \]
where, conditional on $X_0$, the noise $\xi_0$ is mean zero and $\sigma^2$-subgaussian. We suppose $w^*$ is $k$-sparse. In this case, $w^*$ is a minimizer of the population squared loss
$\EE[(Y_0 - \langle w^*, X_0 \rangle)^2$.
 We will consider algorithms which attempt to recover $w^*$ given $m$ iid copies of $(X_0,Y_0)$ labeled  $(X_1,Y_1),\ldots,(X_m,Y_m)$. As before, we use the notation $X$ for the matrix with rows $X_1,\ldots,X_m$, and $Y,\xi$ for the corresponding vectors so $Y = X w^* + \xi$.
Our goal will be to prove bounds for recovering $w^*$ in the Mahalanobis norm $\|\cdot\|_{\Sigma}$, i.e. prove an upper bound on the quantity
\[ \| w - w^*\|_{\Sigma}^2 = \EE[\langle w - w^*, X_0 \rangle^2] \]
where $w$ is the output of some algorithm. More specifically, the goal will to prove an recovery guarantee of the form
\[ \|w - w^*\|_{\Sigma} \le \epsilon \]
for some absolute constant $C > 1$ and $\epsilon$ small. In fact, we will prove guarantees where $\epsilon = 0$ when $\sigma = 0$ and the number of samples $m = polylog(n)$, which means (since $\Sigma$ is positive definite) that we achieve exact recovery of the vector $w^*$.

First, we need to recall some standard terminology and results from the literature.
\begin{definition}[Restricted Isometry Constant \cite{candes2005decoding}]
For $\Sigma : n \times n$ a positive semidefinite matrix, the $k$-\emph{restricted isometry constant} is the smallest $\delta \ge 0$ such that
\[ (1 - \delta) I \preceq \Sigma_{SS} \preceq (1 + \delta) I \]
for all $S \subset [n]$ of size at most $k$. We abbreviate this condition as $(k,\delta)$-RIP for future use.
\end{definition}
It's well-known that a sufficiently small Restricted Isometry Constant implies the (weaker) Restricted Eigenvalue (RE) condition. This condition is very similar to the compatibility condition introduced before, but uses the conventional $\ell_2$-notion of eigenvalue which makes it a slightly stronger assumption \cite{van2009conditions}. To be consistent with our previous definition, we say that RE is a property of an $n \times n$ (possibly empirical) covariance matrix, whereas in \cite{bickel2009simultaneous}, the condition is equivalently stated in terms of a matrix $X : m \times n$, we state it in terms of $\hat \Sigma = \frac{1}{m} X^T X$.
\begin{definition}[Restricted Eigenvalue \cite{bickel2009simultaneous}]\label{def:restricted-eigenvalue}
We say that a matrix $\Sigma : n \times n$ satisfies the \emph{restricted eigenvalue condition}\footnote{In the notation of \cite{bickel2009simultaneous} this is $RE(k,n - k,c_0)$. Our convention agrees with Definition 7.12 of \cite{wainwright2019high}.} $RE(k,c_0)$ if 
\[ \kappa^2(k,c_0) = \min_{|U| \le s} \min_{\delta \ne 0, \|\delta_{\sim U}\|_1 \le c_0 \|\delta_U\|_1} \frac{\langle \delta, \Sigma \delta \rangle}{\|\delta\|^2} > 0.\]
\end{definition}
\begin{lemma}[\cite{bickel2009simultaneous,van2009conditions}]\label{lem:rip-to-re}
For any $\alpha > 0$, there constants $c = c(\alpha),c' = c'(\alpha) \in (0,1)$ such that the following is true.
If $\Sigma$ has a $2s$-restricted isometry constant at most $c$, then $\Sigma$ satisfies $RE(k,3 + 4/\alpha)$ with $\kappa^2(s,3 + 4/\alpha) \le c'$.
\end{lemma}
\begin{theorem}[Theorem 7.2 of \cite{bickel2009simultaneous}]\label{thm:lasso-recovery}
In the setting described above, suppose that $X : m \times n$ and $\hat \Sigma = \frac{1}{m} X^T X$ satisfies the $RE(k,3)$ assumption and the columns of $X$ are norm $\sqrt{m}$. Suppose that $m \ge k$ and $A \ge 2\sqrt{2}$.
The Lasso estimator with regularization parameter $\lambda = A \sigma\sqrt{\log(n)/m}$ outputs $\hat{w}$ satisfying
\[ \|\hat{w} - w^*\|_2^2 \lesssim \frac{A^2}{\kappa^2(k,3)} \frac{\sigma^2 k \log(n)}{m}. \]
with probability at least $1 - n^{1 - A^2/8}$.
\end{theorem}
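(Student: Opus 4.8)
This is exactly Theorem~7.2 of \cite{bickel2009simultaneous}, and the plan is the standard ``basic inequality'' argument for the Lasso under a restricted eigenvalue condition (see also \cite{wainwright2019high}). Write $\delta = \hat w - w^*$ and let $U = \supp(w^*)$, so $|U| \le k$. First I would exploit optimality of $\hat w$: since $\hat w$ minimizes $\|Y - Xw\|_2^2 + \lambda\|w\|_1$ and $Y = Xw^* + \xi$, expanding the square and rearranging gives the basic inequality
\[ \tfrac1m\|X\delta\|_2^2 \le \tfrac2m\langle X^T\xi, \delta\rangle + \lambda(\|w^*\|_1 - \|\hat w\|_1) \le \tfrac2m\|X^T\xi\|_\infty\,\|\delta\|_1 + \lambda(\|w^*\|_1 - \|\hat w\|_1), \]
using H\"older in the last step (and normalizing so that $\hat\Sigma = \frac1m X^TX$; the constants are immaterial).

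\textbf{Controlling the noise and choosing $\lambda$.} Condition on $X$. Since each column of $X$ has norm $\sqrt m$ and the coordinates of $\xi$ are independent, mean-zero, and $\sigma^2$-subgaussian, each $\langle X_j,\xi\rangle$ is $\sigma^2 m$-subgaussian, so $\Pr[\,|\langle X_j,\xi\rangle| > \lambda m/4\,] \le 2\exp(-\lambda^2 m/(32\sigma^2))$. Taking $\lambda = A\sigma\sqrt{\log n/m}$ (this is precisely where the stated form of $\lambda$ comes from, so that $\lambda$ dominates $\frac4m\|X^T\xi\|_\infty$) makes each tail $\le 2n^{-A^2/32}$; a union bound over the $n$ columns then gives, with probability at least $1 - n^{1-A^2/8}$ after tracking constants, that $\frac2m\|X^T\xi\|_\infty \le \lambda/2$. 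On this event the basic inequality becomes $\frac1m\|X\delta\|_2^2 \le \frac\lambda2\|\delta\|_1 + \lambda(\|w^*\|_1 - \|\hat w\|_1)$. The hypothesis $A \ge 2\sqrt2$ is what renders the failure probability nontrivial.

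\textbf{Cone membership and the RE condition.} Using $\|w^*\|_1 - \|\hat w\|_1 \le \|\delta_U\|_1 - \|\delta_{U^c}\|_1$ and $\|\delta\|_1 = \|\delta_U\|_1 + \|\delta_{U^c}\|_1$, the previous display rearranges to
\[ \tfrac1m\|X\delta\|_2^2 + \tfrac\lambda2\|\delta_{U^c}\|_1 \le \tfrac{3\lambda}{2}\|\delta_U\|_1, \]
so in particular $\|\delta_{U^c}\|_1 \le 3\|\delta_U\|_1$ and $\delta$ lies in the cone defining $RE(k,3)$. Then the $RE(k,3)$ assumption on $\hat\Sigma$ gives $\frac1m\|X\delta\|_2^2 = \langle\delta,\hat\Sigma\delta\rangle \ge \kappa^2(k,3)\|\delta\|_2^2$, while Cauchy--Schwarz gives $\|\delta_U\|_1 \le \sqrt k\,\|\delta_U\|_2 \le \sqrt k\,\|\delta\|_2$. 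Substituting both into the displayed inequality yields $\kappa^2(k,3)\|\delta\|_2^2 \le \frac{3\lambda}{2}\sqrt k\,\|\delta\|_2$, hence $\|\delta\|_2 \lesssim \lambda\sqrt k/\kappa^2(k,3)$, and substituting $\lambda = A\sigma\sqrt{\log n/m}$ gives the claimed bound $\|\hat w - w^*\|_2^2 \lesssim \frac{A^2}{\kappa^2(k,3)}\cdot\frac{\sigma^2 k\log n}{m}$ (with the prediction-norm bound $\frac1m\|X\delta\|_2^2 \lesssim \lambda^2 k/\kappa^2(k,3)$ dropping out of the same chain of inequalities).

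\textbf{Main obstacle.} Essentially everything here is routine bookkeeping; the only step needing genuine care is the noise control, namely verifying that the conditionally-subgaussian noise together with the column normalization $\|X_j\|_2 = \sqrt m$ produces the exact failure probability $n^{1-A^2/8}$ for the stated choice of $\lambda$, and confirming that $A \ge 2\sqrt 2$ is the correct threshold for the event $\{\frac2m\|X^T\xi\|_\infty \le \lambda/2\}$ to control the basic inequality.
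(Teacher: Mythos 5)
This theorem is not proved in the paper --- it is simply cited as Theorem~7.2 of \cite{bickel2009simultaneous} --- and your argument is precisely the standard ``basic inequality'' proof given there, so the approach matches exactly. The structure is entirely correct: optimality gives the basic inequality, the choice of $\lambda$ controls $\frac{2}{m}\|X^T\xi\|_\infty$, this forces $\delta$ into the $RE(k,3)$ cone, and $RE$ plus Cauchy--Schwarz finishes. One small point worth pinning down is the exact constant in the failure probability. With the event you set up ($\frac{2}{m}\|X^T\xi\|_\infty \le \lambda/2$, i.e.\ $|\langle X_j,\xi\rangle| \le \lambda m/4$), the tail exponent you compute is $A^2/32$, which yields failure probability $2n^{1-A^2/32}$, not $n^{1-A^2/8}$; you acknowledge this (``after tracking constants'') but do not close the gap. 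The discrepancy is a normalization artifact: Bickel--Ritov--Tsybakov write the Lasso penalty as $2r\|\beta\|_1$ and need only $\frac{2}{m}\|X^T\xi\|_\infty \le r$, so their threshold on $|\langle X_j,\xi\rangle|$ is twice as large as yours, recovering the $A^2/8$ exponent. Working with the paper's $\lambda\|w\|_1$ penalty literally, the threshold should be stated as $\frac{1}{m}\|X^T\xi\|_\infty \le \lambda/2$ (note: $\frac{1}{m}$, not $\frac{2}{m}$), which gives the cone constraint in the same way but with the factor-of-two room needed for the stated exponent. This is a constant-level bookkeeping issue, not a gap in the argument.
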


Our Lasso-based recovery algorithm follows by combining the preconditioner, the above result, and a projection step. The projection step can be removed if we cite a more precise result about the behavior of the Lasso (see e.g. \cite{van2009conditions,wainwright2019high}), but we include it as it keeps the analysis simple.
\begin{enumerate}
    \item Use Theorem~\ref{thm:sst} to construct a preconditioner $S$ and tree $T$.
    \item Let $\hat{u}$ be the minimizer of the Lasso program
    \[ \min \|Y - X(S^T)^{-1} u\|_2^2 + \lambda \|u\|_1. \]
    \item Let $\hat{u}'$ be the projection of $\hat{u}$ onto the set of $O(k\log(n))$-group-tree sparse vectors and let $\hat{w} = (S^T)^{-1} u$. 
    \item Return $\hat{w}$. 
\end{enumerate}
This procedure has the following guarantee, completing the proof of Theorem~\ref{thm:upper-bound-intro}:
\begin{theorem}\label{thm:lasso-sst}
Suppose that $\Sigma$ is positive-definite with $\Theta = \Sigma^{-1}$. Provided that $$m = \Omega(k \tw(\Theta) \log^2(n)\log(n/\delta)),$$ the output $\hat{w}$ of the above preconditioned Lasso procedure satisfies
\[ \|w - w^*\|_{\Sigma}^2 \lesssim \frac{\sigma^2 k \tw(\Theta) \log(n) \log(n/\delta)}{m} \]
with probability at least $1 - \delta$. When $\sigma = 0$, the same guarantee holds using preconditioned BP.
\end{theorem}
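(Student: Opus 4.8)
The plan is to treat Theorem~\ref{thm:lasso-sst} as a thin wrapper around Theorem~\ref{thm:sst}, which already carries the substantive analysis of the data-dependent preconditioner; what remains is a change of basis, a citation of a standard $\ell_1$-recovery guarantee, and transferring the resulting bound back through the change of basis. Write $t=\tw(\Theta)$ and $\Sigmahat=\frac1m X^TX$. First I would run \textsc{GraphicalCholesky}($\Sigmahat$) to obtain the preconditioner $S$ (which Theorem~\ref{thm:sst} certifies is $n\times n$ and invertible) and the centroid decomposition tree $T$. Since the block factorization \eqref{eqn:block-decomposition} is block-lower-triangular for a pre-order of $T$, both $S^T$ and $S^{-T}$ \emph{exactly} preserve group-tree-sparsity (Lemma~\ref{lem:tree-sparsity-preserving}). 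Because $w^*$ is $k$-sparse, its support meets at most $k$ parts of the centroid decomposition, and those parts together with all their ancestors form a rooted subtree of $T$; as $T$ has depth $O(\log n)$ and part-size $O(t)$, this subtree has $O(k\log n)$ parts, so $u^*:=S^Tw^*$ is $O(k\log n)$-group-tree-sparse and, as a vector, $s$-sparse with $s=O(kt\log n)$. Substituting $u=S^Tw$ rewrites the $S$-preconditioned Lasso on $(X,Y)$ as the ordinary Lasso on the preconditioned design $\tilde X:=X(S^T)^{-1}$ with $s$-sparse ground truth $u^*$.

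Next I would establish the Restricted Isometry Property for $\tilde X$ on the group-tree-sparse vectors that actually arise. For $O(k\log n)$-group-tree-sparse $u$, the vector $\nu:=S^{-T}u$ is again $O(k\log n)$-group-tree-sparse, hence $O(kt\log n)$-sparse, so Theorem~\ref{thm:sst} (run with sparsity parameter $\Theta(kt\log n)$ and restricted-isometry target a small absolute constant) together with Lemma~\ref{lem:covest} gives $\frac1m\|\tilde Xu\|_2^2=\nu^T\Sigmahat\nu\in[1\pm\epsilon]\nu^TSS^T\nu=[1\pm\epsilon]\|u\|_2^2$ and also $\nu^T\Sigmahat\nu\in[1\pm\epsilon]\nu^T\Sigma\nu$. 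Applied to singletons this already shows the columns of $\tilde X$ have norm $(1\pm\epsilon)\sqrt m$, so after absorbing a diagonal rescaling into $S$ a small restricted-isometry constant yields the $RE$ condition with constant $\Omega(1)$ via Lemma~\ref{lem:rip-to-re}, and Theorem~\ref{thm:lasso-recovery} (with $\lambda=A\sigma\sqrt{\log n/m}$ and $A=\Theta(\sqrt{\log(n/\delta)/\log n})$) outputs $\hat u$ with $\|\hat u-u^*\|_2^2\lesssim\sigma^2 s\log(n/\delta)/m$ with probability $1-\delta/2$.

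Then I would transfer the bound back to $w$-space. Projecting $\hat u$ onto the $O(k\log n)$-group-tree-sparse vectors gives $\hat u'$ with $\|\hat u'-u^*\|_2\le2\|\hat u-u^*\|_2$ (since $u^*$ lies in this set), and $\hat w:=(S^T)^{-1}\hat u'$ satisfies $\hat w-w^*=S^{-T}(\hat u'-u^*)$ with $\hat u'-u^*$ still $O(k\log n)$-group-tree-sparse; writing $\nu:=S^{-T}(\hat u'-u^*)$, which is $O(kt\log n)$-sparse, Theorem~\ref{thm:sst} and Lemma~\ref{lem:covest} once more give $\|\hat w-w^*\|_\Sigma^2=\nu^T\Sigma\nu\lesssim\nu^TSS^T\nu=\|\hat u'-u^*\|_2^2\lesssim\sigma^2 s\log(n/\delta)/m$, and with $s=O(kt\log n)$ this is the claimed rate $O(\sigma^2 k\,\tw(\Theta)\log n\log(n/\delta)/m)$. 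In the noiseless case I would instead run $S$-preconditioned BP, i.e.\ $\argmin\{\|u\|_1:\tilde Xu=Y\}$ (no projection needed); since $\tilde X$ satisfies $(2s,\epsilon)$-RIP for small $\epsilon$ and $u^*$ is $s$-sparse, the classical exact-recovery guarantee forces $\hat u=u^*$, hence $\hat w=(S^T)^{-1}u^*=w^*$.

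The only real obstacle is internal to the second step and it is exactly the content of Theorem~\ref{thm:sst}: because $S$ is built from the rank-deficient, ill-conditioned empirical matrix $\Sigmahat$ rather than from $\Sigma$, there is no spectral approximation $SS^T\approx\Sigmahat$ to lean on, and the fact that $\tilde X=X(S^T)^{-1}$ nonetheless satisfies RIP is the delicate error-propagation argument behind Lemmas~\ref{lemma:nodeerror} and~\ref{lem:cholesky-bound}. Everything else is bookkeeping, but it must be done carefully: $S^{-T}$ does not preserve ordinary sparsity, so every appeal to a restricted-isometry bound has to be routed through group-tree-sparsity — which \emph{is} preserved in both directions — so that the sparsity parameter fed to Theorem~\ref{thm:sst} stays $O(kt\log n)$ and the sample complexity remains polylogarithmic in $n$ (and linear in $t$) rather than blowing up by further factors of $t$ and $\log n$.
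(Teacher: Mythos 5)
Your proposal tracks the paper's own proof essentially step for step: construct $S$ and the centroid tree via \textsc{GraphicalCholesky} and Theorem~\ref{thm:sst}, observe that $u^*=S^Tw^*$ is $O(k\log n)$-group-tree-sparse hence $O(kt\log n)$-sparse via Lemma~\ref{lem:tree-sparsity-preserving}, certify restricted isometry for the preconditioned design by routing through Theorem~\ref{thm:sst} and Lemma~\ref{lem:covest}, invoke Lemma~\ref{lem:rip-to-re} and Theorem~\ref{thm:lasso-recovery}, project onto group-tree-sparse vectors, transport the $\ell_2$ error back to the Mahalanobis norm via one more appeal to Theorem~\ref{thm:sst}, and handle $\sigma=0$ by classical BP recovery under RIP. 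The only cosmetic difference is that you make explicit the diagonal rescaling needed to meet the column-norm normalization of Theorem~\ref{thm:lasso-recovery} and the use of singletons to certify it, which the paper leaves implicit.
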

\begin{proof}
First, we formally check that Theorem~\ref{thm:sst} ensures the RIP property after change of basis, which follows by explicitly writing out the definition.
Observe that if $X' = X (S^T)^{-1}$ then the empirical covariance matrix of the rows of $X'$ is
\[ \hat \Sigma_{X'} := [X']^T X' = [X (S^T)^{-1}]^T X (S^T)^{-1} = S^{-1} X^T X (S^T)^{-1}. \]
Now for an arbitrary vector $v$ and $u := (S^T)^{-1} v$, 
\[ \langle v, S^{-1} X^T X (S^T)^{-1} v \rangle = \langle (S^{-1})^T v, X^T X (S^T)^{-1} v \rangle = \langle u, X^T X u \rangle. \]
The guarantee of Theorem~\ref{thm:sst} ensures that
\[ (1 - \epsilon) \langle u, SS^T u \rangle \le \langle u, X^T X u \rangle \le (1 + \epsilon) \langle u, SS^T u \rangle \]
and $\langle u, SS^T u \rangle = \langle v, v \rangle$. This means that $\hat \Sigma_{X'}$ satisfies the restricted isometry condition with $\alpha, \beta = O(\epsilon)$. Second, we use that the ground truth vector $w^*$ is $k$-sparse, which means it is $O(k \log(n))$-group-tree-sparse (because the tree corresponding to the centroid decomposition has depth $O(\log n)$), and so by Lemma~\ref{lem:tree-sparsity-preserving}, the corresponding vector $u^* = S^T w^*$ under the change of basis is also $O(k \log(n))$-group-tree-sparse, hence $k' := O(k \tw(\Theta) \log(n))$ sparse in the ordinary sense.
As a consequence, we can apply Lemma~\ref{lem:rip-to-re} and Theorem~\ref{thm:lasso-recovery} to ensure 
\[ \|\hat{u} - u^*\|_2^2 \lesssim \frac{A^2}{\kappa^2(k,3)} \frac{\sigma^2 k' \log(n)}{m} \]
where $u^* = S^T w^*$, 
and because $u^*$ is $O(k \log(n))$-group-tree-sparse the same guarantee holds for the  $\hat{u}'$. Note (by Lemma~\ref{lem:tree-sparsity-preserving}) that $\hat{w}$ is $O(k')$ sparse and satisfies
\[ \|(S^T) \hat{w} - (S^T) w^*\|_2^2 \lesssim \frac{A^2}{\kappa^2(k,3)} \frac{\sigma^2 k' \log(n)}{m} \]
so be appealing to Theorem~\ref{thm:sst} again, the same guarantee holds for $\|\hat{w} - w^*\|_{\Sigma}^2$ provided $m = \Omega(k' \log^2(n)\log(n/\delta))$. The conclusion follows by taking $A^2 = \Theta(1 + \log(2/\delta)/\log(n))$.

The guarantee for preconditioned BP follows by considering the limit $\sigma \to 0$ (which means that $\lambda \to 0$), or by using directly the result of e.g. \cite{candes2005decoding}.
\end{proof}
\subsection{Model-Based Iterative Hard Thresholding}\label{section:model-based}
We can slightly improve the guarantee obtained with the preconditioned Lasso, using the model-based Iterative Hard Thresholding (IHT) approach \cite{baraniuk2010model}, which takes advantage of the fact that our preconditioner preserves $k$-group-tree-sparsity. In this case, the algorithm is:
\begin{enumerate}
    \item Use Theorem~\ref{thm:sst} to construct a preconditioner $S$.
    \item Apply group-tree-sparse IHT (Lemma~\ref{lem:iht}) to the sparse linear model
    \[ Y = X (S^T)^{-1} u^* + \xi \]
    where $u^* = S^T w^*$, let $\hat u$ be the output of IHT, and let $\hat{w} = (S^T)^{-1} u^*$.
    \item Return $\hat{w}$.
\end{enumerate}
and its guarantee is given in Theorem~\ref{thm:iht-sst} below.\\\\
\noindent
Algorithm~\textbf{IHT}($\mathcal{S},T,X,Y$):
\begin{enumerate}
    \item Set $w_0 = 0$.
    \item For $t = 1$ to $T$:
    \begin{enumerate}
        \item Set $u_{t} = w_{t - 1} + \frac{1}{m} X^T(Y - X w_{t - 1})$.
        \item Set $w_{t} = \Proj_{w : \supp(w) \in \mathcal{S}}[u_t]$.
    \end{enumerate}
    \item Return $w_T$.
\end{enumerate}
The key observation motivating tree sparsity is the following Lemma, which can be proved using properties of Catalan numbers; it shows that the number of $k$-(group)-tree sparse supports grows like $e^k$ instead of $n^k$ for the number of total $k$-(group)-sparse vectors.
\begin{lemma}[cf. Proposition 1 of \cite{baraniuk2010model}]\label{lem:num-tree-sparse}
The number of distinct $k$-group-tree-sparse supports is at most $\frac{(2e)^k}{k + 1}$.
\end{lemma}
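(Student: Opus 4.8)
The plan is to reduce this to the classical enumeration of rooted subtrees of a binary tree, and then apply a Stirling-type estimate on the Catalan numbers. By definition, every $k$-group-tree-sparse support is (a subset of) a set of the form $\bigcup_{i \in U} G_i$, where $U$ indexes a rooted subtree of $\mathcal{T}$ — i.e. a connected subgraph of $\mathcal{T}$ containing the root — with $|U| \le k$; so the collection of distinct model supports (unions of at most $k$ groups forming a rooted subtree) has size at most the number of such $U$. Since the groups $G_1,\dots,G_\ell$ partition $[n]$ and are nonempty, distinct index sets $U$ give distinct unions, so this number is exactly the number of rooted subtrees of $\mathcal{T}$ with at most $k$ vertices. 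Each node of the centroid decomposition tree has at most two children, so $\mathcal{T}$ is binary, and the count is largest when $\mathcal{T}$ is a complete binary tree of depth at least $k$; hence it suffices to bound $\sum_{j=0}^{k} f(j)$, where $f(j)$ is the number of size-$j$ rooted subtrees of the complete binary tree.

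\textbf{Main step: Catalan numbers.} A rooted subtree of size $j \ge 1$ decomposes uniquely into its root together with the rooted subtrees hanging below its left and right children, which gives the recurrence $f(j) = \sum_{i=0}^{j-1} f(i)\, f(j-1-i)$ with $f(0) = 1$. This is the Catalan recurrence, so $f(j) = C_j = \frac{1}{j+1}\binom{2j}{j}$.

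\textbf{Finishing and expected obstacle.} Using $j! \ge (j/e)^j$ we get $\binom{2j}{j} \le \frac{(2j)^j}{j!} \le (2e)^j$, hence $C_j \le \frac{(2e)^j}{j+1}$. Since the $C_j$ grow at least geometrically (indeed $C_{j+1}/C_j = \frac{2(2j+1)}{j+2} \ge 2$ for $j \ge 1$), the sum $\sum_{j=0}^{k} C_j$ is within a small constant factor of its last term $C_k = \frac{1}{k+1}\binom{2k}{k} \le \frac{4^k}{k+1}$, and the gap between $\frac{4^k}{k+1}$ and the target $\frac{(2e)^k}{k+1}$ absorbs this factor for all large $k$, with the finitely many small values of $k$ checked directly. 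I do not anticipate a genuine difficulty here; the only care needed is the elementary bookkeeping between ``at most $k$'' and ``exactly $k$'' groups (and the small-$k$ verification), while the core content — the identification with Catalan numbers and the bound $C_k \le \frac{(2e)^k}{k+1}$ — is exactly Proposition~1 of \cite{baraniuk2010model}, which may be cited for the fully spelled-out constants.
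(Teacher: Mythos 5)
Your proposal is correct and follows precisely the route the paper points at: the paper states this lemma without proof, remarking only that it "can be proved using properties of Catalan numbers" and citing Proposition~1 of Baraniuk et al., which is exactly the identification you make (rooted subtrees of a binary tree $\leftrightarrow$ Catalan numbers, then $C_j \le (2e)^j/(j+1)$ via the Stirling-type bound). Your extra care about ``at most $k$'' versus ``exactly $k$'' groups, and the observation that the partial sum $\sum_{j\le k} C_j$ is absorbed by the slack between $4^k$ and $(2e)^k$, resolves a genuine (if minor) ambiguity in the paper's statement correctly.
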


We proceed to state a variant of the guarantee for Iterative Hard Thresholding in our setting. We defer the proof to the Appendix, since it is a fairly straightforward variant of known results (see e.g. \cite{blumensath2009iterative,jain2014iterative,jain2016structured,baraniuk2010model}). The key difference vs. considering unstructured sparse vectors is that we can eliminate the presence of $\log(n)$ from the upper bound; in our application the $k$ we use will itself have a factor of $\log(n)$, which is why we end up with a single instead of a squared log factor.
\begin{lemma}[See Appendix~\ref{apdx:iht}]\label{lem:iht}
Suppose $X : m \times n$ and $\hat \Sigma = \frac{1}{m} X^T X$ is $(3kr,\alpha)$-RIP 
with $\alpha < 1/3$ and $\mathbb{Y} = X w^* + \xi$ where $w^*$ is $k$-group-tree-sparse with respect to a tree with node groups of size at most $r$, 
and $\xi$ is a random vector in $\mathbb{R}^m$ with independent $\sigma^2$-subGaussian entries. Then Iterative Hard Thresholding with projection onto the set of $k$-group-tree-sparse vectors and $T = O(\log(\|w^*\|nm/\sigma))$ succeeds to recover $w$ such that
\[ \|w^* - w_t\| \lesssim \sigma \sqrt{\frac{kr + \log(2/\delta)}{m}} \]
with probability at least $1 - \delta$ over the randomness of $\xi$.
\end{lemma}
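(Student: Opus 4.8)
The plan is to follow the standard analysis of Iterative Hard Thresholding (cf. \cite{blumensath2009iterative,baraniuk2010model}), tracking the error $e_t := w_t - w^*$ and establishing a one-step contraction of the form $\|e_t\| \le 2\alpha\|e_{t-1}\| + 2\nu$ for a fixed ``noise level'' $\nu$. The group-tree structure enters in exactly two places: to certify that the support on which RIP must hold has size at most $3kr$, and to obtain a $\log(n)$-free noise term by union-bounding over the \emph{small} family of group-tree-sparse supports counted in Lemma~\ref{lem:num-tree-sparse}.

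First I would set up the error recursion. From $u_t = w_{t-1} + \frac{1}{m}X^T(\mathbb{Y} - X w_{t-1})$ and $\mathbb{Y} = X w^* + \xi$, expanding gives $u_t - w^* = (I - \frac1m X^TX)e_{t-1} + \frac1m X^T\xi$; note the noise contribution $\frac1m X^T\xi$ is identical in every iteration. Since $w_t$ is a closest $k$-group-tree-sparse vector to $u_t$ and $w^*$ is $k$-group-tree-sparse, $\|w_t - u_t\| \le \|w^* - u_t\|$, and squaring and rearranging yields $\|e_t\|^2 \le 2\langle e_t, u_t - w^*\rangle$. Now let $\Gamma := \supp(w_{t-1}) \cup \supp(w^*) \cup \supp(w_t)$: each of the three supports is a union of at most $k$ node-groups forming a rooted subtree of $\mathcal{T}$, so $\Gamma$ is again a union of at most $3k$ groups forming a rooted subtree, hence $|\Gamma| \le 3kr$. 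Both $e_t$ and $e_{t-1}$ are supported on $\Gamma$, so $\langle e_t, (I - \frac1m X^TX)e_{t-1}\rangle = \langle (e_t)_\Gamma, (I - \frac1m X_\Gamma^TX_\Gamma)(e_{t-1})_\Gamma\rangle$ and $(3kr,\alpha)$-RIP gives $\|I - \frac1m X_\Gamma^TX_\Gamma\|_{OP} \le \alpha$, so this term is at most $\alpha\|e_t\|\|e_{t-1}\|$ in absolute value; similarly $\langle e_t, \frac1m X^T\xi\rangle \le \|e_t\| \cdot \|\frac1m X_\Gamma^T\xi\|$. Dividing through by $\|e_t\|$ gives $\|e_t\| \le 2\alpha\|e_{t-1}\| + 2\nu$ where $\nu := \max_{\Gamma}\|\frac1m X_\Gamma^T\xi\|$, the maximum over all $3k$-group-tree-sparse supports.

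Next I would bound $\nu$ by subgaussian concentration. Conditioning on $X$ (the RIP is a hypothesis on $X$), for a fixed $\Gamma$ the vector $\frac1m X_\Gamma^T\xi$ is $\frac1m$ times a linear image, under a matrix of operator norm $\|X_\Gamma\|_{OP} \le \sqrt{(1+\alpha)m} \le \sqrt{2m}$ (again by RIP, applied to singletons), of the vector $\xi$ which has independent $\sigma^2$-subgaussian coordinates; a standard $\tfrac12$-net argument over the unit sphere of $\mathbb{R}^\Gamma$ then gives $\|\frac1m X_\Gamma^T\xi\| \lesssim \sigma\sqrt{(|\Gamma| + s)/m}$ with probability at least $1 - e^{-s}$. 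By Lemma~\ref{lem:num-tree-sparse} there are at most $(2e)^{3k}/(3k+1)$ such supports, so taking $s = 3k\log(2e) + \log(2/\delta)$ and union bounding gives, with probability at least $1 - \delta$ over $\xi$, that $\nu \lesssim \sigma\sqrt{(kr + \log(2/\delta))/m}$ (absorbing the $O(k)$ term into $O(kr)$). Finally, unrolling the recursion with $\alpha < 1/3$ (so $2\alpha < 2/3$ and $\frac{2}{1-2\alpha} < 6$) from $e_0 = -w^*$ yields $\|e_T\| \le (2\alpha)^T\|w^*\| + \frac{2\nu}{1-2\alpha} \le (2/3)^T\|w^*\| + 6\nu$, and choosing $T = O(\log(\|w^*\|nm/\sigma))$ makes the first term at most $\nu$ (since $\nu \gtrsim \sigma/\sqrt m$), giving $\|w^* - w_T\| \lesssim \sigma\sqrt{(kr + \log(2/\delta))/m}$; when $\sigma = 0$ only the geometrically decaying term remains.

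I do not expect a serious obstacle, since this is a routine variant of the IHT analysis; the one genuinely model-specific point, and the place to be careful, is the combinatorial claim used twice above: that the union of the three relevant $k$-group-tree-sparse supports is itself a $3k$-group-tree-sparse support. This is what simultaneously lets $(3kr,\alpha)$-RIP apply in the contraction step and lets the support count of Lemma~\ref{lem:num-tree-sparse} control the union bound in the noise step, and it is precisely this structure that replaces the usual $\sqrt{kr\log n}$ noise scaling by $\sqrt{kr}$.
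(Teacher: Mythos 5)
Your proposal is correct and takes essentially the same approach as the paper's proof in Appendix~\ref{apdx:iht}: you establish the one-step contraction $\|e_t\| \le 2\alpha\|e_{t-1}\| + 2\nu$ using RIP on the $3kr$-sparse union of the three relevant group-tree-sparse supports, and you control the noise term $\nu$ via a union bound over the small family of group-tree-sparse supports counted by Lemma~\ref{lem:num-tree-sparse} (which is precisely the content of the paper's Lemma~\ref{lem:process-bound}). The only difference is cosmetic: you derive the contraction from the identity $\|e_t\|^2 \le 2\langle e_t, u_t - w^*\rangle$, while the paper passes through the restricted vector $v_t = (u_t)_S$ and the Pythagorean/triangle inequalities; both yield the same factor $2\alpha$.
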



\begin{remark}
Note that we can apply this Lemma with $\sigma' > \sigma$ an error parameter if the original $\sigma$ is very small or zero.
We expect it is possible to eliminate the logarithmic dependence in the runtime on the bit-complexity related term $\|w^*\|/\sigma$ if exact real arithmetic is used and the algorithm is slightly modified; cf. the Appendix to \cite{needell2009cosamp}.
\end{remark}
\begin{theorem}\label{thm:iht-sst}
Provided $m = \Omega(k \tw(\Theta) \log^2(n)\log(n/\delta))$, the output $\hat{w}$ of the above preconditioned IHT procedure satisfies
\[ \|w - w^*\|_{\Sigma}^2 \lesssim \frac{\sigma^2 (k \tw(\Theta) \log(n) + \log(2/\delta))}{m} \]
with probability at least $1 - \delta$.
\end{theorem}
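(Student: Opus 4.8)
The plan is to reproduce the three-step argument behind Theorem~\ref{thm:lasso-sst}, replacing the Lasso recovery guarantee (Theorem~\ref{thm:lasso-recovery}) by the model-based Iterative Hard Thresholding guarantee (Lemma~\ref{lem:iht}); the \emph{improvement} in the rate comes entirely from the fact that the bound of Lemma~\ref{lem:iht} carries $\log(2/\delta)$ additively and has one fewer factor of $\log n$, because model-based IHT pays only $\log(\#\{\text{group-tree-sparse supports}\}) = O(k\,\tw(\Theta)\log n)$ rather than $\log\binom{n}{k'}$. First, exactly as in the proof of Theorem~\ref{thm:lasso-sst}, set $X' = X(S^T)^{-1}$, so that the empirical covariance of the preconditioned covariates is $\hat\Sigma_{X'} = S^{-1}X^TX(S^T)^{-1}$; for $u = (S^T)^{-1}v$ one has $v^T\hat\Sigma_{X'}v = u^TX^TXu$ and $u^TSS^Tu = \|v\|_2^2$, so the guarantee of Theorem~\ref{thm:sst} immediately gives $(1-\epsilon)\|v\|_2^2 \le v^T\hat\Sigma_{X'}v \le (1+\epsilon)\|v\|_2^2$ uniformly over $k$-sparse $v$, i.e.\ $\hat\Sigma_{X'}$ satisfies the RIP once $\epsilon$ is taken to be a sufficiently small absolute constant ($<1/3$ below).

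Next comes the sparsity bookkeeping. The ground truth $w^*$ is $k$-sparse, and since the centroid decomposition tree has depth $O(\log n)$ and every bag has size at most $r := O(\tw(\Theta))$ (exactly $\tw(\Theta)+1$ with the optimal decomposition), the support of $w^*$ lies in a rooted subtree of $O(k\log n)$ bags, so $w^*$ is $O(k\log n)$-group-tree-sparse; by Lemma~\ref{lem:tree-sparsity-preserving}, $u^* := S^Tw^*$ is $O(k\log n)$-group-tree-sparse as well, hence $\kappa$-sparse with $\kappa = O(k\,\tw(\Theta)\log n)$. We apply Theorem~\ref{thm:sst} with sparsity parameter $3\kappa$, so that $\hat\Sigma_{X'}$ is $(3\kappa,\epsilon)$-RIP with probability at least $1-\delta$, which holds for $m$ at least the $\Omega(k\,\tw(\Theta)\log^2(n)\log(n/\delta))$ of the statement (to get the literal log count one restricts the union bound in the proof of Theorem~\ref{thm:sst}, i.e.\ in Lemma~\ref{lem:covest}, to group-tree-sparse index sets and invokes Lemma~\ref{lem:num-tree-sparse} for their count).

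Finally, the preconditioned model $Y = X'u^* + \xi$ has $u^*$ being $O(k\log n)$-group-tree-sparse over a tree whose node groups have size $\le r$, with $\xi$ having independent $\sigma^2$-subgaussian entries; since $\hat\Sigma_{X'}$ is $(3\kappa,\epsilon)$-RIP with $\epsilon < 1/3$, Lemma~\ref{lem:iht} (run for $T = O(\log(\|w^*\|nm/\sigma))$ iterations) outputs a group-tree-sparse $\hat u$ with
\[ \|\hat u - u^*\|_2^2 \lesssim \frac{\sigma^2\left(k\,\tw(\Theta)\log n + \log(2/\delta)\right)}{m} \]
with probability at least $1-\delta$. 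Since $\hat u - u^* = S^T(\hat w - w^*)$ is again $O(k\log n)$-group-tree-sparse, hence $O(\kappa)$-sparse, Theorem~\ref{thm:sst} together with Lemma~\ref{lem:covest} shows $\|\hat w - w^*\|_\Sigma^2$ is within a constant factor of $\|S^T(\hat w - w^*)\|_2^2 = \|\hat u - u^*\|_2^2$, and a union bound over the two failure events finishes the proof.

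The only genuinely delicate point is the bookkeeping: the sparsity level given to Theorem~\ref{thm:sst} must at once be large enough to contain the supports of $u^*$ and of all the IHT iterates (the model-based IHT analysis lives in a constant-factor-enlarged model class, forcing the factor $3\kappa$), small enough that the sample requirement of Theorem~\ref{thm:sst} does not exceed the bound in the statement, and consistent with the $(3kr,\alpha)$-RIP hypothesis of Lemma~\ref{lem:iht} for an absolute constant $\alpha<1/3$. Verifying that these constraints are mutually compatible --- and, for the sharpest log count, that it suffices to certify RIP only over the $2^{O(k\,\tw(\Theta)\log n)}$ group-tree-sparse supports rather than over all $O(\kappa)$-sparse supports --- is where essentially all the care goes; everything else is identical to the proof of Theorem~\ref{thm:lasso-sst}.
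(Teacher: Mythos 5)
Your proposal is correct and faithfully reconstructs the argument the paper compresses into a single sentence (``This follows by combining Theorem~\ref{thm:sst} with Lemma~\ref{lem:iht}, just as in Theorem~\ref{thm:lasso-sst}''). Your three steps --- establishing RIP of the preconditioned design $X(S^T)^{-1}$ via Theorem~\ref{thm:sst}, using Lemma~\ref{lem:tree-sparsity-preserving} together with the $O(\log n)$ depth of the centroid decomposition to certify that $u^* = S^Tw^*$ is $O(k\log n)$-group-tree-sparse over groups of size $O(\tw(\Theta))$, and then invoking Lemma~\ref{lem:iht} followed by a final round trip through Theorem~\ref{thm:sst} to transfer the $\ell_2$ bound on $\hat u - u^*$ to the $\|\cdot\|_\Sigma$ bound on $\hat w - w^*$ --- are exactly what the paper's proof of Theorem~\ref{thm:lasso-sst} does, with Lemma~\ref{lem:iht} substituted for Theorem~\ref{thm:lasso-recovery} to drop one $\log n$ from the rate.

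One caveat on your parenthetical remark that ``the literal log count'' in the sample requirement can be recovered by restricting the union bound in Lemma~\ref{lem:covest} to group-tree-sparse index sets and appealing to Lemma~\ref{lem:num-tree-sparse}: this is less clean than you suggest. First, the internals of Theorem~\ref{thm:sst} (Lemmas~\ref{lemma:nodeerror} and~\ref{lem:cholesky-bound}) apply the covariance estimate to auxiliary vectors $f = (I - E_A\Sigma_{AA}^{-1}\Sigma_A)v$, $g$, and $f+g$, whose supports are $\supp(v)\cup A$ and so on; arguing these stay inside the group-tree-sparse family (with a tolerable constant-factor blowup) needs to be done explicitly at each level of the recursion. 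Second, and more to the point, even granting that restriction, the dominant contribution to the sample requirement from Lemma~\ref{lem:covest} is the $\kappa/\epsilon^2$ term (dimension of the submatrix, not the log of the number of supports); once the proof of Theorem~\ref{thm:sst} sets $\epsilon' = O(\epsilon/\log n)$, the leading term $\Omega(\kappa\log^2 n)$ is unaffected by tightening the union bound. So the observation does not, on its own, resolve the apparent extra $\log n$ factor. This is orthogonal to your main argument, which is correct and is exactly the paper's.
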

\begin{proof}
This follows by combining Theorem~\ref{thm:sst} with Lemma~\ref{lem:iht},
just as in Theorem~\ref{thm:lasso-sst}.
\end{proof}
\section{Failure of poorly preconditioned Lasso}\label{section:ill-conditioned}

In this section, we begin studying the conditions under which preconditioned Lasso fails. Through this section (and subsequent sections), we will be studying \emph{noiseless} sparse linear regression, and proving lower bounds against the $S$-preconditioned Lasso in the limit $\lambda \to 0$, which may also be called the $S$-preconditioned basis pursuit:
$$\min_{w: Xw = Y} \norm{S^T w}_1.$$
A ``lower bound" for a positive definite precision matrix $\Theta$ and a preconditioner $S$ is a statement of the following form: for some $k$-sparse signal, $S$-preconditioned basis pursuit requires at least $m$ samples to succeed at exact recovery with high probability, when the covariates are drawn independently from $N(0,\Theta^{-1})$. Ultimately, we will construct precision matrices $\Theta$ such that for every preconditioner, we can prove a lower bound. As previously mentioned, exact recovery lower bounds against $S$-preconditioned basis pursuit automatically apply to $S$-preconditioned Lasso for any $\lambda>0$, so in our lower bound statements we simply refer to the $S$-preconditioned Lasso.

We start by introducing the \emph{Weak ($S$-Preconditioned) Compatibility Condition}, together with a quantitative \emph{Weak Compatibility Ratio}, and compare it to the well-studied compatibility condition \cite{van2009conditions}. The main result of this section is that if the Weak $S$-Preconditioned Compatibility Ratio exceeds an absolute constant, then $S$-preconditioned Lasso necessarily fails at exact recovery.


For a covariance matrix $\Sigma$ and a preconditioner $S$, we define the following measure of how well $S$ preconditions $\Sigma$; it depends on the signal sparsity $k$ and the number of samples $m$. This definition requires that $S$ is not identically zero:\footnote{We will ignore this corner case henceforth, since the $0$-preconditioned basis pursuit is an underdetermined linear program and therefore cannot provably succeed.}

\begin{definition}[Weak $S$-Preconditioned Compatibility Condition]\label{def:weak-compatibility} 
We say that $$\alpha^{(1)}_{\Sigma,S,k} = \inf_{w \in B_0(k) \setminus \{0\}} \frac{\langle w, \Sigma w \rangle}{\norm{S^T w}_1^2}$$
where we recall from \eqref{eqn:k-sparse} that $B_0(k)$ denotes the set of $k$-sparse vectors
and
$$\beta^{(1)}_{\Sigma,S,m,k} = \sup\{\beta \in \RR: \dim W_{\Sigma,S,\beta} \geq 2m\}$$
where $$W_{\Sigma,S,\beta} = \left\{w: \langle w, \Sigma w \rangle \geq \beta \norm{S^T w}_1^2\right\},$$
and $\dim W_{\Sigma,S,\beta}$ is defined as the largest dimension of any subspace contained in $W_{\Sigma,S,\beta}$.

We say the $(\Sigma,S,k)$-\emph{weak $S$-preconditioned compatibility condition} is satisfied if $\alpha^{(1)}_{\Sigma,S,k} > 0$, in which case we define the \emph{weak $S$-preconditioned compatibility ratio} to be $$\gamma^{(1)}_{\Sigma,S,m,k} = \frac{\beta^{(1)}_{\Sigma,S,m,k}}{\alpha^{(1)}_{\Sigma,S,k}} > 0.$$
\end{definition}

We will only be concerned with invertible $\Sigma$, in which case the weak $S$-preconditioned compatibility condition always holds, so the weak compatibility ratio is always defined.

\begin{remark}[Comparison to Compatibility Condition]\label{rmk:compatibility-vs-weak}
In the non-preconditioned case $S = I$, the above definition is similar to the compatibility condition (Definition~\ref{def:compatability-condition}) with some important differences that make Definition~\ref{def:weak-compatibility} \emph{weaker}, i.e. less difficult to satisfy:
\begin{enumerate}
    \item The $\ell_1$-eigenvalue in the compatibility condition is defined with respect to the larger \emph{cone} $\mathcal{C}(L,S)$, for all sets $S$ of size at most $k$, instead of with respect to the set of $k$-sparse vectors as in $\alpha^{(1)}_{\Theta,S,k}$. The stronger requirement in the compatibility condition is needed for the Lasso analysis to succeed.\footnote{Note that \emph{sufficiently strong control} over $k$-sparse vectors as in a sufficiently small RIP (Restricted Isometry Property) constant suffices to also control conditioning over the set $\mathcal{C}(L,S)$, see e.g. \cite{candes2007dantzig,wainwright2019high,van2009conditions}, but for large RIP constants this argument breaks down which is why more general Lasso guarantees are stated in terms of the behavior over the cone.} 
    \item When the compatibility condition is used (e.g. as in its use in Theorem~\ref{thm:lasso-compatability}) it is assumed that the diagonal of $\Sigma$ is at most $1$. Since (by convexity) the maximum of a convex function on a convex polytope is obtained at the extreme points,
    \[ \sup_{\|w\|_1 = 1} \langle w, \Sigma w \rangle = \max_i \Sigma_{ii}, \]
    this is the same as requiring $\dim W_{\Sigma,S,\beta} = 0$ for all $\beta > 1$, i.e. there are no ``large'' directions of $\Sigma$. In contrast, Definition~\ref{def:weak-compatibility} requires only that $\dim W_{\Sigma,S,\beta} < 2m$, i.e. there are not too many ``large'' directions.
\end{enumerate}
Combined, these differences mean that $\gamma^{(1)}_{\Sigma,S,m,k}$ can be significantly smaller than the $\ell_1$-eigenvalue $\phi_{compatibility}^2(\Sigma,k)$ or similar quantities. We illustrate this in Example~\ref{ex:weak-compatibility} below.
\end{remark}
\begin{example}\label{ex:weak-compatibility} 
We consider the example $\Theta = \Sigma = I$ with no preconditioning, i.e. $S = I$. In this case $\alpha^{(1)}_{I,I,k} = \inf_{w \in B_0(k) \setminus \{0\}} \frac{ \|w\|_2^2}{\|w\|_1^2} = 1/k$ as $\|w\|_1^2 \le k \|w\|_2^2$ by Cauchy-Schwarz and sparsity. Furthermore \[ \dim W_{I,I,\beta} = \dim \left\{w \in \mathbb{R}^n : \frac{ \|w\|_2^2}{\|w\|_1^2} \ge \beta \right\} = \lfloor 1/\beta \rfloor \]
by Theorem~\ref{lem:botelho} below, so $\beta^{(1)}_{I,I,m,k} = 1/2m$. Combining, we see that $\gamma^{(1)}_{I,I,m,k} = \frac{k}{2m}$. 
\end{example}
\begin{theorem}[Theorem~2.5 of \cite{botelho2017exact}]\label{lem:botelho}
A $k$-dimensional subspace $V$ of $\mathbb{R}^n$ satisfies the inequality
\[ \|x\|_1^2 \le k \|x\|_2^2 \]
iff $V$ is the span of a set of $k$ standard basis vectors.
\end{theorem}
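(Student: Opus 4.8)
The plan is to treat the two implications separately, with the ``if'' direction being a one-line Cauchy--Schwarz argument and the ``only if'' direction resting on a duality reformulation followed by a tightness-on-average observation.

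For the ``if'' direction: if $V=\Span\{e_{i_1},\dots,e_{i_k}\}$ then every $x\in V$ is supported on at most $k$ coordinates, so Cauchy--Schwarz restricted to the support gives $\norm{x}_1\le\sqrt{k}\,\norm{x}_2$, i.e.\ $\norm{x}_1^2\le k\norm{x}_2^2$.

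For the converse, write $P=\Proj_V$, the orthogonal projection onto $V$, so that $P$ is symmetric, $P^2=P$, and $\tr(P)=k$. First I would reformulate the hypothesis in dual terms: using $\norm{x}_1=\max_{s\in\{\pm1\}^n}\langle s,x\rangle$ together with the identity $\max_{x\in V,\,\norm{x}_2\le1}\langle s,x\rangle=\max_{x\in V,\,\norm{x}_2\le1}\langle Ps,x\rangle=\norm{Ps}_2$, one obtains
\[ \sup_{x\in V\setminus\{0\}}\frac{\norm{x}_1}{\norm{x}_2}=\max_{s\in\{\pm1\}^n}\norm{Ps}_2, \]
so the hypothesis is \emph{equivalent} to $\norm{Ps}_2^2\le k$ for every sign vector $s\in\{\pm1\}^n$. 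The key step is then to average: for $s$ uniform on $\{\pm1\}^n$ we have $\EE_s[s_is_j]=\delta_{ij}$, hence
\[ \EE_s\norm{Ps}_2^2=\EE_s\,s^{T}Ps=\tr\!\big(P\,\EE_s[ss^{T}]\big)=\tr(P)=k. \]
Since every value $\norm{Ps}_2^2$ is at most $k$ while their average equals $k$, we must have $s^{T}Ps=k$ for \emph{all} $s\in\{\pm1\}^n$. Expanding $s^{T}Ps=\tr(P)+\sum_{i\ne j}P_{ij}s_is_j$ yields $\sum_{i\ne j}P_{ij}s_is_j=0$ for every $s$; since the Walsh functions $s\mapsto s_is_j$ over distinct pairs $\{i,j\}$ are linearly independent on the cube, $P_{ij}=0$ for all $i\ne j$. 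Thus $P$ is diagonal; $P^2=P$ forces each diagonal entry into $\{0,1\}$, and $\tr(P)=k$ forces exactly $k$ of them to equal $1$. Therefore $V=\operatorname{range}(P)$ is the span of the corresponding $k$ standard basis vectors.

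The hard part is really just spotting the averaging identity, which converts the one-sided hypothesis into an exact equality holding at every vertex of the cube; after that everything is bookkeeping (the dual formula for $\sup\norm{x}_1/\norm{x}_2$ and the linear independence of Walsh characters). Minor points to verify: the dual reformulation degenerates harmlessly when $Ps=0$ for some $s$ (the corresponding constraint is vacuous), and the edge cases $k=0$ and $k=n$ are trivial.
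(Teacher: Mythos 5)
Your proof is correct, and in fact the paper does not present its own argument for this statement --- it is quoted as Theorem~2.5 of the cited reference (Botelho), so there is no internal proof to compare against.

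The argument is sound in both directions. The ``if'' direction is the standard Cauchy--Schwarz bound on a $k$-sparse vector. For the converse, the dual reformulation
\[ \sup_{x\in V\setminus\{0\}} \frac{\|x\|_1}{\|x\|_2} \;=\; \max_{s\in\{\pm1\}^n} \|Ps\|_2 \]
is correctly derived by exchanging the max over sign vectors with the sup over the unit ball of $V$ and using $\langle s,x\rangle = \langle Ps,x\rangle$ for $x\in V$. The averaging step
\[ \EE_s\,s^T P s = \tr\bigl(P\,\EE_s[ss^T]\bigr) = \tr(P) = k \]
combined with the pointwise bound $s^T P s \le k$ then forces equality at every vertex of the hypercube, and the linear independence of the degree-two Walsh characters $s\mapsto s_is_j$ (together with $P=P^T$) gives $P_{ij}=0$ for $i\ne j$. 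Idempotence and trace then pin down $P$ as a coordinate projection. One small remark: the averaging identity actually proves the stronger quantitative fact $\sup_{x\in V\setminus\{0\}}\|x\|_1^2/\|x\|_2^2 \ge \dim V$ for \emph{every} subspace $V$, with equality exactly for coordinate subspaces --- this is the form implicitly used in Example~\ref{ex:weak-compatibility} of the paper to evaluate $\dim W_{I,I,\beta}$, so your derivation conveniently recovers both the cited theorem and the way it is applied.
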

As explained in Remark~\ref{rmk:compatibility-vs-weak},
to prove that preconditioned Lasso succeeds, stronger conditions present in the literature must be assumed. 
Thus, assuming that $\gamma^{(1)}_{\Sigma,S,m,k}$ be near $1$ is \emph{not a sufficient condition} for exact recovery. 
However, we show that it is \emph{necessary}, which will aid us in proving lower bounds against preconditioned Lasso.

We could also have defined similar quantities $\alpha$ and $\beta$ purely in terms of $\ell_2$ norms as follows; this definition is closer to the classical Restricted Isometry Property (RIP) or Restricted Eigenvalue (RE) condition, but the corresponding lower bound loses a factor of $s$ (where $s$ is the number of columns in the preconditioner, i.e. $S \in \RR^{n \times s}$).

\begin{definition}[Weak RE Condition]
We say that $$\alpha_{\Sigma, S, k} = \inf_{w \in B_0(k) \setminus \{0\}} \frac{w^T \Sigma w}{w^T SS^T w}$$
and
$$\beta_{\Sigma,S,m,k} = \inf \{\beta \in \RR: \lambda_{2m}(\Sigma - \beta SS^T) \leq 0\}.$$
Define $\gamma_{\Sigma,S,m,k} = \beta_{\Sigma,S,m,k}/\alpha_{\Sigma,S,k}$.
\end{definition}
The following Lemma relates the two definitions at the cost of the aforementioned dimension factor.
\begin{lemma}\label{lem:l1-to-l2-condition}
For any positive-definite $\Sigma \in \RR^{n \times n}$, preconditioner $S \in \RR^{n \times s}$, and integer $m,k>0$, the following relations hold: $\alpha_{\Sigma,S,k} \ge \alpha^{(1)}_{\Sigma,S,k}$, $\beta_{\Sigma,S,m,k} \le s \cdot \beta^{(1)}_{\Sigma,S,m,k}$, and $\gamma^{(1)}_{\Sigma,S,m,k} \ge \gamma_{\Sigma,S,m,k}/s$.
\end{lemma}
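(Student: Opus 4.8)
The plan is to derive all three inequalities from two elementary comparisons of norms in $\RR^s$, applied to $v = S^Tw$ together with the identity $w^TSS^Tw = \norm{S^Tw}_2^2$: namely $\norm{v}_2 \le \norm{v}_1$ and $\norm{v}_1^2 \le s\norm{v}_2^2$ (Cauchy--Schwarz), which combine to give $w^TSS^Tw \le \norm{S^Tw}_1^2 \le s\, w^TSS^Tw$ for every $w \in \RR^n$. Abbreviate $\alpha = \alpha_{\Sigma,S,k}$, $\alpha^{(1)} = \alpha^{(1)}_{\Sigma,S,k}$, and similarly for $\beta,\beta^{(1)},\gamma,\gamma^{(1)}$; I will assume $2m \le n$ throughout, since otherwise the eigenvalue-based quantities are vacuous.

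For $\alpha \ge \alpha^{(1)}$ I would argue pointwise: for a nonzero $k$-sparse $w$ with $S^Tw \ne 0$, the numerator $w^T\Sigma w$ is nonnegative and the denominator only shrinks when passing from $\norm{S^Tw}_1^2$ to $w^TSS^Tw$, so $w^T\Sigma w / (w^TSS^Tw) \ge w^T\Sigma w/\norm{S^Tw}_1^2$; vectors with $S^Tw = 0$ give ratio $+\infty$ in both definitions. Taking the infimum over the common set $B_0(k)\setminus\{0\}$ yields the inequality.

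The bound $\beta \le s\beta^{(1)}$ is the crux of the argument and the place where the factor $s$ and the sign bookkeeping must be handled carefully. First note $\beta^{(1)} \ge 0$: since $\Sigma \succeq 0$ we have $W_{\Sigma,S,0} = \RR^n$, of dimension $n \ge 2m$, so $0$ belongs to the set whose supremum defines $\beta^{(1)}$. Now fix any $\beta' > \beta^{(1)}$ (hence $\beta' > 0$); I claim $\lambda_{2m}(\Sigma - s\beta' SS^T) \le 0$, which by definition of $\beta$ gives $\beta \le s\beta'$, and letting $\beta' \downarrow \beta^{(1)}$ finishes. Suppose instead $\lambda_{2m}(\Sigma - s\beta' SS^T) > 0$, and let $V$ be the span of eigenvectors of $\Sigma - s\beta'SS^T$ for its $2m$ largest eigenvalues, so $\dim V = 2m$ and $w^T(\Sigma - s\beta'SS^T)w \ge 0$ for all $w \in V$. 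Then for such $w$, $w^T\Sigma w \ge s\beta'\, w^TSS^Tw \ge \beta'\norm{S^Tw}_1^2$, using $s\,w^TSS^Tw \ge \norm{S^Tw}_1^2$ and $\beta' > 0$; hence $V \subseteq W_{\Sigma,S,\beta'}$, so $\dim W_{\Sigma,S,\beta'} \ge 2m$, contradicting $\beta' > \beta^{(1)} = \sup\{\beta : \dim W_{\Sigma,S,\beta} \ge 2m\}$.

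Finally $\gamma^{(1)} \ge \gamma/s$ follows by combining the first two bounds. Since $\Sigma$ is positive definite the weak $S$-preconditioned compatibility condition holds, i.e. $\alpha^{(1)} > 0$, and also $\alpha \ge \alpha^{(1)} > 0$; moreover $\beta \ge 0$ because $\Sigma - \beta SS^T \succeq \Sigma \succ 0$ whenever $\beta \le 0$. Hence $\gamma = \beta/\alpha \le \beta/\alpha^{(1)} \le s\beta^{(1)}/\alpha^{(1)} = s\gamma^{(1)}$. I do not expect a genuine obstacle here; the only thing requiring care is applying the two norm inequalities in the correct directions (this is exactly why the factor $s$ lands on $\beta$ rather than on $\alpha$) and keeping track of the positivity facts $\beta^{(1)} \ge 0$, $\beta \ge 0$, and nonnegativity of $w^T\Sigma w$.
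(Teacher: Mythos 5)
Your proposal is correct and takes essentially the same route as the paper: both use the pointwise norm comparisons $\norm{v}_2 \le \norm{v}_1$ and $\norm{v}_1^2 \le s\norm{v}_2^2$ applied to $v = S^Tw$, the first giving $\alpha \ge \alpha^{(1)}$ directly, and the second combined with the span of the top $2m$ eigenvectors of $\Sigma - s\beta SS^T$ giving $\beta \le s\beta^{(1)}$. The only real difference is one of bookkeeping: the paper picks $\beta < \beta_{\Sigma,S,m,k}/s$ and deduces $\beta^{(1)} \ge \beta$, while you argue contrapositively from $\beta' > \beta^{(1)}$; you also make explicit the sign facts ($\beta^{(1)}\ge 0$, $\beta \ge 0$) that the paper uses silently to apply the norm inequality in the right direction.
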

\begin{proof}
Notice that for any $w \in B_0(k) \setminus \{0\}$, $$\frac{w^T \Sigma^{-1} w}{w^T SS^T w} \geq \frac{w^T \Sigma^{-1} w}{\norm{S^T w}_1^2} \geq \alpha^{(1)}_{\Sigma,S,k}$$
and thus $\alpha_{\Sigma,S,k} \geq \alpha^{(1)}_{\Sigma,S,k}$. Similarly, pick any $\beta < \beta_{\Sigma,S,m,k}/s$. We know that $\lambda_{2m}(\Sigma - s\beta SS^T) > 0$. So 
$$w^T \Sigma w \geq s\beta\norm{S^T w}_2^2 \geq \beta\norm{S^T w}_1^2$$ 
for any $w$ in the span of the top $2m$ eigenvectors of $\Sigma - s\beta SS^T$. Therefore $\dim W_{\Sigma,S,\beta} \geq 2m$ and hence $\beta^{(1)}_{\Sigma,S,m,k} \geq \beta$. We conclude that $\beta^{(1)}_{\Sigma,S,m,k} \geq \beta_{\Sigma,S,m,k}/s$, so $\gamma^{(1)}_{\Sigma,S,m,k} \geq \gamma_{\Sigma,S,m,k}/s$.
\end{proof}

Using the Weak Compatibility Condition, we prove an upper bound on the probability that preconditioned Lasso succeeds at exact recovery with $m$ samples, if $\gamma^{(1)}_{\Sigma,S,m,k}$ is large.

\begin{theorem}\label{theorem:ill-conditioned-lasso-failure-l1}
Let $\Sigma \in \RR^{n \times n}$ be positive-definite and let $S \in \RR^{n \times s}$. Let $m,k \in \NN$. If $\gamma^{(1)}_{\Sigma,S,m,k} > 18$, then there is a $k$-sparse signal $w^* \in \RR^n$ such that the $S$-preconditioned Lasso exactly recovers $w^*$ with probability at most $\exp(-\Omega(m))$, from $m$ samples with independent covariates $X_1,\dots,X_m \sim N(0,\Sigma)$ and noiseless responses $Y_i = \langle w^*, X_i \rangle$.
\end{theorem}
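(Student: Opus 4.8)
The plan is to whiten the covariates, reduce the claim to exhibiting a feasible point of the basis-pursuit program that strictly improves on $w^*$, and then build such a point by projecting onto a ``large'' subspace of $\RR^n$ while preserving feasibility. First I would fix scalars $\alpha'>\alpha^{(1)}_{\Sigma,S,k}$ and $\beta<\beta^{(1)}_{\Sigma,S,m,k}$, close enough to these values that $\beta>18\alpha'$; this is possible exactly because $\gamma^{(1)}_{\Sigma,S,m,k}>18$ (and we may assume $2m\le n$, else $\beta^{(1)}_{\Sigma,S,m,k}=-\infty$ and the hypothesis is vacuous). By definition of $\beta^{(1)}$ there is a subspace $W\subseteq\RR^n$ of dimension $d\ge 2m$ with $\langle w,\Sigma w\rangle\ge\beta\lVert S^Tw\rVert_1^2$ for all $w\in W$, and by definition of $\alpha^{(1)}$ there is a $k$-sparse $w^*$, normalized so that $\lVert S^Tw^*\rVert_1=1$, with $\langle w^*,\Sigma w^*\rangle\le\alpha'$. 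Writing $\Sigma=\Sigma^{1/2}\Sigma^{1/2}$ and setting $\tilde X=X\Sigma^{-1/2}$ (whose rows are i.i.d.\ $N(0,I_n)$), $\tilde w^*=\Sigma^{1/2}w^*$, $\tilde W=\Sigma^{1/2}W$, $\tilde S=\Sigma^{-1/2}S$, we get $\lVert\tilde w^*\rVert_2^2\le\alpha'$; moreover for $\tilde w=\Sigma^{1/2}w$ one has $Xw=\tilde X\tilde w$, $\lVert S^Tw\rVert_1=\lVert\tilde S^T\tilde w\rVert_1$, and $\lVert\tilde w\rVert_2^2\ge\beta\lVert\tilde S^T\tilde w\rVert_1^2$ whenever $\tilde w\in\tilde W$. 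Hence it suffices to produce $\tilde w\in\tilde W$ with $\tilde X\tilde w=\tilde X\tilde w^*$ and $\lVert\tilde w\rVert_2<\sqrt\beta$: then $\lVert\tilde S^T\tilde w\rVert_1\le\lVert\tilde w\rVert_2/\sqrt\beta<1=\lVert\tilde S^T\tilde w^*\rVert_1$, so undoing the whitening yields $w=\Sigma^{-1/2}\tilde w$ that is feasible for and strictly beats $w^*$ in the $S$-preconditioned basis pursuit (hence in the $S$-preconditioned Lasso for every $\lambda$, as recalled in Section~\ref{section:preliminaries}), so $w^*$ is not recovered.

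To build $\tilde w$, I would take an orthonormal basis $Q\in\RR^{n\times d}$ of $\tilde W$ and set $B=\tilde XQ\in\RR^{m\times d}$ and $y=\tilde X\tilde w^*\in\RR^m$; by rotational invariance $B$ has i.i.d.\ $N(0,1)$ entries, and since $d\ge 2m$ it has full row rank almost surely. Then $\theta^\star=B^T(BB^T)^{-1}y$ solves $B\theta^\star=y$ with $\lVert\theta^\star\rVert_2^2=y^T(BB^T)^{-1}y\le\lVert y\rVert_2^2/\sigma_{\min}(B)^2$, and $\tilde w:=Q\theta^\star$ lies in $\tilde W$, satisfies $\tilde X\tilde w=B\theta^\star=y=\tilde X\tilde w^*$, and has $\lVert\tilde w\rVert_2^2=\lVert\theta^\star\rVert_2^2$. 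It remains to bound the two random quantities. Since $y=\tilde X\tilde w^*$ has i.i.d.\ $N(0,\lVert\tilde w^*\rVert_2^2)$ entries with $\lVert\tilde w^*\rVert_2^2\le\alpha'$, the standard $\chi^2$ tail gives $\lVert y\rVert_2^2\le 2m\alpha'$ with probability $1-e^{-\Omega(m)}$. Applying Theorem~\ref{thm:rmt} to $B^T$ (a $d\times m$ Gaussian matrix with $d\ge 2m$, whose singular values coincide with those of $B$) with $t=\Theta(\sqrt m)$ chosen small enough, and using $\sqrt2-1>1/3$, gives $\sigma_{\min}(B)\ge\sqrt m/3$, i.e.\ $\sigma_{\min}(B)^2\ge m/9$, with probability $1-e^{-\Omega(m)}$. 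On the intersection of these two events $\lVert\tilde w\rVert_2^2\le 2m\alpha'/(m/9)=18\alpha'<\beta$, which is exactly what was required; taking $w^*$ as the signal, we conclude the $S$-preconditioned Lasso recovers $w^*$ with probability at most $e^{-\Omega(m)}$.

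The main obstacle — and the key idea — is to enforce feasibility \emph{within} $\tilde W$ rather than over the whole null space $\ker\tilde X$: the comparison $\lVert\tilde S^T\tilde w\rVert_1\le\lVert\tilde w\rVert_2/\sqrt\beta$ only holds on $\tilde W$, and restricting the whitened design to $\tilde W$ via $\tilde w=Q\theta$ turns it into an \emph{unstructured} $m\times d$ Gaussian matrix with $d\ge 2m$, so that classical random-matrix estimates force the minimum-norm feasible point to have squared norm $O(m\alpha')/\Theta(m)=O(\alpha')$, beating the ``budget'' $\beta$. Everything else is bookkeeping of the constant: the threshold $18$ comes from $(\sqrt2-1)^{-2}<9$ together with the factor $2$ lost in the $\chi^2$ bound, and there is slack to spare (which could be traded for a sharper constant or a slightly smaller failure probability). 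A minor point to check carefully is the reduction in the first paragraph — that a strictly improving feasible point really does rule out exact recovery by the preconditioned Lasso for all $\lambda$, not just by basis pursuit — which follows from the observation in Section~\ref{section:preliminaries} that exact recovery via Lasso implies the output minimizes $\lVert S^Tw\rVert_1$ over the feasible affine space.
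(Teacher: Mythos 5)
Your proof is correct and follows essentially the same route as the paper: pick the worst $k$-sparse $w^*$ for $\alpha^{(1)}$, take a $2m$-dimensional subspace witnessing $\beta^{(1)}$, and show that projecting the system $Xw = Xw^*$ onto that subspace has a low-$\ell_1$-penalty solution, via the same two random-matrix ingredients ($\chi^2$ concentration for $\|Xw^*\|_2^2$ and $\sigma_{\min}$ of an $m\times 2m$ Gaussian matrix via Theorem~\ref{thm:rmt}), landing on the same constant $18 = 2\cdot 9$. Your explicit whitening by $\Sigma^{1/2}$ is only cosmetically different from the paper's normalization of the subspace basis by $N^{-1}$ (where $N^T N = V^T \Sigma V$); the resulting matrix $B = \tilde X Q$ is exactly the paper's $XVN^{-1}$ and the minimum-norm solution $\theta^\star = B^T(BB^T)^{-1}y$ is the paper's $c = N^{-1}(XVN^{-1})^\dagger X w^*$. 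One small point where your write-up is slightly more careful than the paper's: you work with $\alpha' > \alpha^{(1)}$ and $\beta < \beta^{(1)}$ rather than assuming the infimum/supremum are attained (the paper invokes compactness of $\{w : \|S^T w\|_1 = 1\}$, which is false when $S^T$ has a nontrivial kernel — though within a fixed $k$-sparse support it is harmless once one argues the relevant ratio is bounded), and you note that $2m > n$ renders the hypothesis vacuous.
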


\begin{proof}
For convenience of notation let $\alpha = \alpha^{(1)}_{\Sigma,S,k}$ and $\beta = \beta^{(1)}_{\Sigma,S,m,k}$ and $\Theta = \Sigma^{-1}$. We want to show that there is $k$-sparse $w^* \in \RR^n$ such that with high probability, the $S$-preconditioned Lasso \eqref{eqn:preconditioned-lasso-exact} fails to recover $w^*$, i.e. $$w^* \not \in \argmin_{w: Xw = Xw^*} \norm{S^T w}_1$$
where $X$ has rows $X_1,\dots,X_m \sim N(0,\Sigma)$. 
The set $\{w : \|S^T w\|_1 = 1\}$ is a compact set, so by homogeneity and the definition of $\alpha$ we can find $k$-sparse $w^* \in \RR^n$ such that $$(w^*)^T \Sigma w^* = \alpha \norm{S^T w^*}_1^2.$$

By definition of $\beta$, there is a subspace $U \subseteq \RR^n$ of dimension $2m$ such that $w^T \Sigma w \geq \beta \norm{S^T w}_1^2$ for all $w \in U$. Let $v_1,\dots,v_{2m} \in U$ form an orthonormal basis for $U$, and let $V \in \RR^{n \times 2m}$ be the matrix with columns $v_1,\dots,v_{2m}$.

We construct $v \in \RR^n$ to satisfy $Xw^* = Xv$ and $\norm{S^T v}_1 < \norm{S^T w^*}_1$ as follows. Let $\Gamma = V^T \Sigma V \in \RR^{2m \times 2m}$. The columns of $V$ have no linear dependencies, and $\Sigma$ is symmetric positive-definite, so $\Gamma$ is symmetric positive-definite. Thus, there is an invertible matrix $N \in \RR^{2m \times 2m}$ such $\Gamma = N^T N$. Define $$c = N^{-1} (XVN^{-1})^\pinv Xw^* \in \RR^{2m}$$ and define $v = Vc \in \RR^n$. By construction we have $v \in U$, so \begin{equation} v^T \Sigma v \geq \beta \norm{S^T v}_1^2.\end{equation}
Second, note that $$\EE[(XVN^{-1})^T(XVN^{-1})] = m(N^{-1})^T V^T \Sigma V N^{-1} = m(N^{-1})^T \Gamma N^{-1} = mI_{2m}.$$
Moreover, the rows of $XVN^{-1}$ are independent and Gaussian. So in fact $XVN^{-1}$ has i.i.d. $N(0,1)$ entries. Thus, with probability $1 - \exp(-\Omega(m))$, we have $\sigma_\text{min}((XVN^{-1})^T) \geq \sqrt{m}/3$ since the dimensions of $(XVN^{-1})^T$ are $2m \times m$ (by Theorem~\ref{thm:rmt}). Hence, $\sigma_\text{max}((XVN^{-1})^\pinv) \leq 3/\sqrt{m}$. We can conclude that \begin{equation} v^T \Sigma v = c^T N^T N c = (w^*)^T X^T (XVN^{-1})^{\pinv T} (XVN^{-1})^\pinv Xw^* \leq (9/m)(w^*)^T X^T X w^*. \end{equation}
We can now check that $\norm{S^Tv}_1 < \norm{S^T w}_1$. Indeed, $(w^*)^T X^T X w^* \leq 2m (w^*)^T \Sigma w^*$ with probability at least $1 - \exp(-\Omega(m))$, so
\begin{align*}
\norm{S^T v}_1
&\leq \sqrt{\frac{1}{\beta} v^T \Sigma v} \\
&\leq \sqrt{\frac{9}{m\beta} (w^*)^T X^T X w^*} \\
&\leq \sqrt{\frac{18}{\beta} (w^*)^T \Sigma w^*} \\
&\leq \sqrt{\frac{18 \alpha}{\beta}} \norm{S^T w^*}_1
\end{align*}
which produces the desired inequality as long as $\beta/\alpha > 18$.

Finally, since $XVN^{-1}$ is rank-$m$ with probability $1$, we have $(XVN^{-1})(XVN^{-1})^\pinv = I_m$, and thus \begin{equation} Xv = XVN^{-1}(XVN^{-1})^\pinv Xw^* = Xw^* \end{equation}
as desired.
\end{proof}

The above result can be directly extended to use the Weak RE Condition instead:

\begin{corollary}
Let $\Sigma \in \RR^{n \times n}$ be positive-definite and let $S \in \RR^{n \times s}$. If $\gamma_{\Sigma,S,m,k} > 18s$, then there is a $k$-sparse signal $w^* \in \RR^n$ such that the $S$-preconditioned Lasso exactly recovers $w^*$ from $m$ samples with probability at most $\exp(-\Omega(m))$, over the randomness of independent covariates $X_1,\dots,X_m \sim N(0,\Sigma)$.
\end{corollary}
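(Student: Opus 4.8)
This corollary is essentially immediate from Theorem~\ref{theorem:ill-conditioned-lasso-failure-l1} once we pass from the $\ell_2$-flavored quantities $\alpha_{\Sigma,S,k}, \beta_{\Sigma,S,m,k}, \gamma_{\Sigma,S,m,k}$ to the $\ell_1$-flavored quantities $\alpha^{(1)}_{\Sigma,S,k}, \beta^{(1)}_{\Sigma,S,m,k}, \gamma^{(1)}_{\Sigma,S,m,k}$. The plan is simply to invoke Lemma~\ref{lem:l1-to-l2-condition}, which records exactly this translation: for any positive-definite $\Sigma$, any preconditioner $S \in \RR^{n\times s}$, and any $m,k>0$ we have $\gamma^{(1)}_{\Sigma,S,m,k} \ge \gamma_{\Sigma,S,m,k}/s$. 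The only content there is the two one-sided comparisons $\alpha_{\Sigma,S,k} \ge \alpha^{(1)}_{\Sigma,S,k}$ (from $\|S^T w\|_1^2 \ge \|S^T w\|_2^2$) and $\beta_{\Sigma,S,m,k} \le s\cdot\beta^{(1)}_{\Sigma,S,m,k}$ (from $\|S^T w\|_1^2 \le s\|S^T w\|_2^2$ applied on the span of the top $2m$ eigenvectors of $\Sigma - s\beta SS^T$), which is where the dimension factor $s$ enters and cannot in general be improved.

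Granting this, suppose $\gamma_{\Sigma,S,m,k} > 18s$. Then $\gamma^{(1)}_{\Sigma,S,m,k} \ge \gamma_{\Sigma,S,m,k}/s > 18$, so the hypothesis of Theorem~\ref{theorem:ill-conditioned-lasso-failure-l1} is met. That theorem then produces a $k$-sparse signal $w^* \in \RR^n$ on which the $S$-preconditioned Lasso (equivalently, in the noiseless limit $\lambda\to 0$, the $S$-preconditioned basis pursuit) fails at exact recovery with probability at least $1 - \exp(-\Omega(m))$ over $X_1,\dots,X_m \sim N(0,\Sigma)$, which is the claimed conclusion. There is no real obstacle here: the work is entirely in Theorem~\ref{theorem:ill-conditioned-lasso-failure-l1} (constructing the competing solution $v = Vc$ inside the large-eigenvalue subspace $U$ and using Gaussian matrix concentration, via Theorem~\ref{thm:rmt}, to control $\sigma_{\min}$ and $\sigma_{\max}$ of the relevant Gaussian block) and in the elementary norm comparisons of Lemma~\ref{lem:l1-to-l2-condition}; the corollary is just their composition, paying a factor of $s$ in the threshold.
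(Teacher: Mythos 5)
Your proposal is correct and matches the paper's proof exactly: apply Lemma~\ref{lem:l1-to-l2-condition} to convert $\gamma_{\Sigma,S,m,k} > 18s$ into $\gamma^{(1)}_{\Sigma,S,m,k} > 18$, then invoke Theorem~\ref{theorem:ill-conditioned-lasso-failure-l1}. No further comment is needed.
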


\begin{proof}
By the assumption that $\gamma_{\Sigma,S,m,k} > 18s$ we get from Lemma~\ref{lem:l1-to-l2-condition} that $\gamma^{(1)}_{\Sigma,S,m,k} > 18$, so Theorem~\ref{theorem:ill-conditioned-lasso-failure-l1} yields the result.
\end{proof}

\subsection{The random walk example}

In this section, we illustrate the utility of the Weak Compatibility Condition by applying it to the random walk example introduced in Section~\ref{section:introduction}. Let $Z_1,\dots,Z_n \sim N(0,1)$ be independent standard Gaussian random variables, and let $R_1 = Z_1$ and $R_i = R_{i-1} + Z_i$ for $2 \leq i \leq n$. Then $(R_1,\dots,R_n) \sim N(0,\Sigma)$ where $\Sigma_{ij} = \min(i,j)$ for $i,j \in [n]$. We derive lower bounds for (a) directly applying Lasso, and (b) normalizing the variances to $1$ before applying Lasso. These correspond to two different diagonal preconditioners $S$.

\paragraph{No preconditioning ($S = I$).} We take $S = I_n$. Then we can see that
$$\alpha_{\Sigma,S,k}^{(1)} \leq \frac{e_1^T \Sigma e_1}{\norm{S^Te_1}_1^2} = 1.$$
On the other hand, let $t = n/(2m)$ and define $$W = \text{span}\{e_t,e_{2t},\dots,e_n\}.$$
Let $v \in W$. Then
\begin{align*}
\Var(vR)
&= \Var(v_tR_t + v_{2t}R_{2t} + \dots + v_nR_n) \\
&= \Var\left(\sum_{i=1}^{n/t} (v_{it} + v_{(i+1)t} + \dots + v_n)(Z_{(i-1)t+1}+\dots+Z_{it})\right) \\
&= t\sum_{i=1}^{n/t} (v_{it} + v_{(i+1)t} + \dots + v_n)^2.
\end{align*}
By the inequality $x^2+(a+x)^2 \geq a^2/2$, note that for any $1 \leq i < n/t$ we have
$$(v_{it}+v_{(i+1)t}+\dots+v_n)^2 + (v_{(i+1)t} + \dots + v_n)^2 \geq \frac{v_{it}^2}{2}.$$
Thus, $$\Var(vR) \geq \frac{tv_n^2}{2} + \frac{t}{2}\sum_{i=1}^{n/t-1} \frac{v_{it}^2}{2} \geq \frac{t}{4}\norm{v}_2^2.$$
Since $v$ is $2m$-sparse, this means that $$\frac{v^T\Sigma v}{\norm{S^T v}_1^2} \geq \frac{t}{4}\frac{\norm{v}_2^2}{\norm{v}_1^2} \geq \frac{t}{8m} = \frac{n}{16m^2}$$ for all $v \in W$. Since $\dim(W) = 2m$, it follows that $\beta_{\Sigma,S,m,k}^{(1)} \geq n/(16m^2)$. By Theorem~\ref{theorem:ill-conditioned-lasso-failure-l1}, we get that for any $k \geq 1$ and $m < \sqrt{n}/(12\sqrt{2})$, there is a $k$-sparse combination of $X_1,\dots,X_n$ which Lasso with high probability fails to learn with $m$ samples.

\paragraph{Normalize variance.} We take $S$ to be the diagonal matrix with $S_{ii} = \sqrt{i}$ for $i \in [n]$. Then
$$\alpha_{\Sigma,S,k}^{(1)} \leq \frac{(e_n-e_{n-1})^T\Sigma(e_n-e_{n-1})}{\norm{S^T(e_n-e_{n-1})}_1^2} = \frac{1}{\left(\sqrt{n} + \sqrt{n-1}\right)^2} \leq \frac{1}{4(n-1)}.$$
On the other hand, define $t = n/(2m)$ and $W = \Span\{e_{t},e_{2t},\dots,e_n\}$. Let $v \in W$. As in the previous setting, it holds that $$\Var(vR) \geq \frac{t}{4}\norm{v}_2^2.$$
To bound $\norm{S^Tv}_1^2$ we simply use that $\norm{S^Tv}_1 \leq \norm{v}_1\sqrt{n}$. Thus, using that $v$ is $2m$-sparse, $$\frac{v^T \Sigma v}{\norm{S^T v}_1^2} \geq \frac{t}{8n} \frac{\norm{v}_2^2}{\norm{v}_1^2} \geq \frac{t}{16mn} = \frac{1}{32m^2}.$$
As $\dim(W) = 2m$ it follows that $\beta_{\Sigma,S,m,k}^{(1)} \geq 1/(32m^2)$, so $\gamma_{\Sigma,S,m,k}^{(1)} \geq (n-1)/(8m^2).$
So by Theorem~\ref{theorem:ill-conditioned-lasso-failure-l1}, we get that for any $k \geq 1$ and $m < \sqrt{n-1}/12$, there is a $k$-sparse combination of $X_1,\dots,X_n$ which Lasso with high probability fails to learn with $m$ samples.

To summarize these two impossibility results, we have the following theorem:

\begin{theorem}\label{theorem:random-walk-lb}
Let $R_1,\dots,R_n \sim N(0,\Sigma)$ be the standard Gaussian random walk defined above. For any $k\geq 2$ and $m < \sqrt{n}/(12\sqrt{2})$, we have that for some $k$-sparse signal, Lasso (with no preconditioning) with $m$ independent noiseless samples from $N(0,\Sigma)$ succeeds at exact recovery with probability at most $\exp(-\Omega(m))$. Moreover, if $S$ is the diagonal preconditioner with $S_{ii} = \sqrt{i}$ for $i \in [n]$, and if $m < \sqrt{n-1}/12$, then once again the $S$-preconditioned Lasso succeeds with probability at most $\exp(-\Omega(m))$ for some $k$-sparse signal.
\end{theorem}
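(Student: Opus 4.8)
The plan is to invoke Theorem~\ref{theorem:ill-conditioned-lasso-failure-l1}, which reduces the claim to showing that the weak $S$-preconditioned compatibility ratio $\gamma^{(1)}_{\Sigma,S,m,k} = \beta^{(1)}_{\Sigma,S,m,k}/\alpha^{(1)}_{\Sigma,S,k}$ exceeds $18$ for each of the two diagonal preconditioners in question: $S = I$, and $S = \diag(1,\sqrt 2,\dots,\sqrt n)$ (the variance-standardizing preconditioner). For each $S$ I would (a) upper bound $\alpha^{(1)}_{\Sigma,S,k}$ by evaluating $\langle w,\Sigma w\rangle/\norm{S^T w}_1^2$ at a single convenient $k$-sparse test vector, and (b) lower bound $\beta^{(1)}_{\Sigma,S,m,k}$ by exhibiting an explicit $2m$-dimensional subspace $W$ on which $\langle w,\Sigma w\rangle \ge \beta\norm{S^T w}_1^2$ with $\beta$ as large as possible.

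The technical core is the construction and analysis of $W$. Set $t = n/(2m)$ and take $W = \Span\{e_t, e_{2t},\dots,e_n\}$, the span of $2m$ coordinate vectors spaced $t$ apart. For $v \in W$ I would pass to the independent increments $Z_1,\dots,Z_n$ (recall $R_j = Z_1+\cdots+Z_j$): regrouping the $Z_i$ into consecutive blocks of size $t$ gives $vR = \sum_{\ell=1}^{n/t}(v_{\ell t}+v_{(\ell+1)t}+\cdots+v_n)(Z_{(\ell-1)t+1}+\cdots+Z_{\ell t})$, and hence $\Var(vR) = t\sum_{\ell}(v_{\ell t}+\cdots+v_n)^2$. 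Applying the elementary inequality $x^2+(a+x)^2 \ge a^2/2$ to each pair of consecutive tail sums telescopes this to $\Var(vR) \ge (t/4)\norm{v}_2^2$. Since every $v \in W$ is $2m$-sparse we also have $\norm{v}_1^2 \le 2m\norm{v}_2^2$ (and $\norm{S^Tv}_1 \le \sqrt n\,\norm{v}_1$ in the standardized case, since $S_{ii} \le \sqrt n$). These combine to give $\beta^{(1)}_{\Sigma,I,m,k} \ge t/(8m) = n/(16m^2)$ when $S=I$, and $\beta^{(1)}_{\Sigma,S,m,k} \gtrsim 1/m^2$ in the standardized case.

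For the matching upper bounds on $\alpha^{(1)}$ I would use $w = e_1$ when $S=I$, giving $\alpha^{(1)}_{\Sigma,I,k} \le e_1^T\Sigma e_1/\norm{e_1}_1^2 = \Sigma_{11} = 1$; and $w = e_n - e_{n-1}$ in the standardized case, for which $(e_n-e_{n-1})^T\Sigma(e_n-e_{n-1}) = \Sigma_{nn}-2\Sigma_{n,n-1}+\Sigma_{n-1,n-1} = 1$ while $\norm{S^T(e_n-e_{n-1})}_1 = \sqrt n + \sqrt{n-1}$, so $\alpha^{(1)}_{\Sigma,S,k} \le 1/(4(n-1))$. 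Both test vectors are admissible once $k \ge 2$. Combining (a) and (b), $\gamma^{(1)}_{\Sigma,I,m,k} \ge n/(16m^2)$ and $\gamma^{(1)}_{\Sigma,S,m,k} \gtrsim n/m^2$, each of which exceeds $18$ for $m$ below the stated thresholds ($m < \sqrt n/(12\sqrt 2)$ and $m < \sqrt{n-1}/12$ respectively); Theorem~\ref{theorem:ill-conditioned-lasso-failure-l1} then yields, for each $S$, a $k$-sparse signal on which the corresponding (preconditioned) Lasso succeeds with probability at most $\exp(-\Omega(m))$. Every step other than the variance identity/inequality for $v \in W$ is a direct substitution into the definitions, so I expect the block-telescoping estimate $\Var(vR) \ge (t/4)\norm{v}_2^2$ --- the assertion that widely spaced coordinates of a random walk behave like a ``large'' eigendirection of $\Sigma$ --- to be the main obstacle.
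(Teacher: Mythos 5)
Your proposal matches the paper's proof essentially step for step: same reduction via Theorem~\ref{theorem:ill-conditioned-lasso-failure-l1}, same test vectors ($e_1$ for the unpreconditioned case, $e_n - e_{n-1}$ for the standardized case), same subspace $W = \Span\{e_t,e_{2t},\dots,e_n\}$ with $t = n/(2m)$, and the same block-telescoping bound $\Var(vR) \ge (t/4)\norm{v}_2^2$ via $x^2 + (a+x)^2 \ge a^2/2$. The paper just carries through the explicit constants (e.g.\ $\beta^{(1)} \geq 1/(32m^2)$ and $\gamma^{(1)} \geq (n-1)/(8m^2)$ in the standardized case) where you wrote $\gtrsim$; otherwise there is no substantive difference.
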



\section{Lower bounds from dense least-eigenspace}\label{section:lower-bounds}

In the previous section, we showed that for preconditioned Lasso to succeed, the preconditioner must be ``compatible" with $\Sigma$, in the sense that $\gamma_{\Sigma,S,m,k}$ cannot be too large. In this section, we show that if the precision matrix has certain structure, then any compatible preconditioner must be ``dense" in a strong sense, that we show suffices to prove that preconditioned Lasso fails.

More concretely, let $\Theta$ be positive semi-definite, and define the precision matrix to be $\Thetat = \Theta + \epsilon I$, where $\epsilon>0$ is very small. For a preconditioner $S$ to be compatible with $\Sigma = \Thetat^{-1}$, it roughly holds that $SS^T$ spectrally approximates $\Sigma$. However, $\Sigma$ is very large in directions correlated with $\ker(\Theta)$, and small on $\rspan(\Theta)$. This means that the columns of $S$ must roughly lie in $\ker(\Theta)$.

Guided by the intuition that dense preconditioners should cause Lasso to fail, we make the assumption that $\ker(\Theta)$ only contains vectors that are dense in a quantitative and robust sense. In this case, the columns of $S$ must be either dense or have very small norm (so as to not contribute bad directions to $SS^T$). This is formalized and proven in Section~\ref{section:dense-structure}.

But does this structural property of $S$ cause $S$-preconditioned Lasso to fail? Intuitively, the number of dense columns of $S$ should be at least $\dim \ker(\Theta)$, so if the number of samples $m$ is smaller, then the preconditioned Lasso should fail. This argument can be formalized but is technically involved; see Section~\ref{section:dense-failure} for details.



\subsection{Structural properties of compatible preconditioners}\label{section:dense-structure}

First, we will want the following notation to express robust density:

\begin{definition}\label{def:dist}
For $v \in \RR^n$ and $k \in \NN$ define $\dist_k(v) = \inf_{w \in B_0(k)} \norm{v-w}_2$. Moreover, for $V \subseteq [n]$ define $\dist_{k,V}(v) = \dist_k(v_V)$.
\end{definition}

Now we can formally state the theorem:


\begin{theorem}\label{thm:columns-dense-or-small-l1}
Let $\Theta \in \RR^{n \times n}$ be a PSD matrix, and let $\epsilon > 0$. Let $k,m > 0$. Suppose that $\Theta$ is $k$-sparse (i.e. every row of $\Theta$ has at most $k$ nonzero entries) and that $r := \dim \ker(\Theta) > 2m$. Let $\tau > 0$ and $V \subseteq [n]$ and suppose that $$\eta := \min_{x \in \ker(\Theta) \setminus \{0\}} \frac{\dist_{\tau,V}(x)}{\norm{x}_2} > 0.$$
Let $\lambda$ be the smallest non-zero eigenvalue of $\Theta$. Let $\tilde{\Theta} = \Theta + \epsilon I$.
Let $S : n \times s$ and $\gamma = \beta^{(1)}_{\Thetat^{-1},S,k,m}/\alpha^{(1)}_{\Thetat^{-1},S,k}$. Let $$\delta = \frac{3}{\eta} \sqrt{\frac{\epsilon \gamma n}{\lambda^3}} \norm{\Theta}_F.$$ Then there is a subset of column indices $D \subseteq [s]$ (of the matrix $S$) with the following properties:
\begin{itemize}
\item For any $i \in D$, the column $v = (S^T)_i$ satisfies $\dist_{\tau,V}(v) \geq (\eta/3)\norm{v}_2$ (quantitatively dense)
\item The submatrix $(S^T)_{D^c}$ satisfies $\norm{(S^T)_{D^c}}_{2\to 1} \leq \delta \norm{S^T}_{2\to 1}.$ (small norm)
\end{itemize}
\end{theorem}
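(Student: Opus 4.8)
The plan is to make precise the intuition that, because $\Sigma := \Thetat^{-1}$ is of size $1/\epsilon$ along $\ker(\Theta)$ but only $O(1/\lambda)$ along $\rspan(\Theta)$, any preconditioner $S$ whose weak compatibility ratio $\gamma$ is not astronomically large must have all of its columns essentially contained in $\ker(\Theta)$; the robust-density hypothesis on $\ker(\Theta)$ then forces each column to be either quantitatively dense on $V$ or to have negligible norm. Write $P=\Proj_{\rspan(\Theta)}$, let $v_1,\dots,v_s$ denote the columns of $S$ (so $v_j=(S^T)_j$), put $q_j = Pv_j$ and $p_j=(I-P)v_j=\Proj_{\ker(\Theta)}v_j$, and set $N=\norm{S^T}_{2\to 1}$. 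I will simply take $D:=\{j\in[s]:\dist_{\tau,V}(v_j)\ge (\eta/3)\norm{v_j}_2\}$, so that the first bullet holds by definition; everything then reduces to bounding $\norm{(S^T)_{D^c}}_{2\to 1}$. (Since $\Sigma$ is positive definite, $\alpha^{(1)}_{\Sigma,S,k}>0$, so $\gamma$ is defined; if $\gamma=\infty$ then $\delta=\infty$ and the claim is vacuous, so assume $\gamma<\infty$.)

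First I would extract the needed consequence of weak compatibility. On $\ker(\Theta)$ one has $w^T\Sigma w=\tfrac1\epsilon\norm{w}_2^2$ while $\norm{S^Tw}_1\le N\norm{w}_2$, so $\ker(\Theta)\subseteq W_{\Sigma,S,1/(\epsilon N^2)}$; since $\dim\ker(\Theta)=r>2m$ this gives $\beta^{(1)}_{\Sigma,S,m,k}\ge 1/(\epsilon N^2)$, hence $\alpha^{(1)}_{\Sigma,S,k}=\beta^{(1)}_{\Sigma,S,m,k}/\gamma\ge 1/(\epsilon\gamma N^2)$, i.e.
\[ \norm{S^Tw}_1^2\le \epsilon\gamma N^2\, w^T\Sigma w \qquad\text{for every $k$-sparse $w$.}\]
Because $\Theta$ is $k$-sparse and symmetric, each of its columns $\Theta_i$ is $k$-sparse, and $\Theta_i^T\Sigma\Theta_i = e_i^T\Theta(\Theta+\epsilon I)^{-1}\Theta e_i\le e_i^T\Theta e_i=\Theta_{ii}$ using $\Theta(\Theta+\epsilon I)^{-1}\Theta\preceq\Theta$ (valid since $\Theta\succeq 0$); summing over $i$,
\[ \sum_i\norm{S^T\Theta_i}_1^2\ \le\ \epsilon\gamma N^2\sum_i\Theta_{ii}\ =\ \epsilon\gamma N^2\,\tr(\Theta)\ \le\ \epsilon\gamma N^2\,\norm{\Theta}_F^2/\lambda, \]
where the last step uses $\mu\le\mu^2/\lambda$ for each nonzero eigenvalue $\mu\ge\lambda$ of $\Theta$.

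The crucial step — the one I expect to be the main obstacle, since the naive route through $\norm{PS}_F^2=\sum_j\norm{q_j}_2^2$ only yields a bound carrying a spurious $\sqrt{|D^c|}$ factor (fatal when $S$ has far more than $n$ columns) — is to bound $\sum_j\norm{q_j}_2$ directly, exploiting $\ell_1$ structure. Since $\Theta q_j=\Theta v_j$ and $\Theta\succeq\lambda I$ on $\rspan(\Theta)$, we get $\norm{q_j}_2\le\tfrac1\lambda\norm{\Theta v_j}_2\le\tfrac1\lambda\norm{\Theta v_j}_1$, and the entrywise $\ell_1$-mass of $\Theta S$ may be summed in either order:
\[ \sum_j\norm{\Theta v_j}_1\ =\ \sum_{i,j}|(\Theta S)_{ij}|\ =\ \sum_i\norm{S^T\Theta_i}_1\ \le\ \sqrt{n}\Big(\sum_i\norm{S^T\Theta_i}_1^2\Big)^{1/2}. \]
Combining with the previous display, $\sum_j\norm{q_j}_2\le\tfrac{\sqrt n}{\lambda}\big(\epsilon\gamma N^2\norm{\Theta}_F^2/\lambda\big)^{1/2}=N\sqrt{\epsilon\gamma n/\lambda^3}\,\norm{\Theta}_F=\tfrac{\eta}{3}\,\delta N$. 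The decisive point is that summing the entries of $\Theta S$ column-by-column exposes $\ell_1$-norms of vectors in $\rspan(\Theta)$ (which control $\norm{q_j}_2$), whereas summing row-by-row exposes the $k$-sparse vectors $S^T\Theta_i$ governed by weak compatibility, and the only Cauchy–Schwarz incurred is over the $n$ columns of $\Theta$, not over $[s]$.

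Finally I would verify the second bullet. For $j\notin D$: $\dist_{\tau,V}$ is $1$-Lipschitz in the $V$-restriction and $p_j\in\ker(\Theta)$, so $\dist_{\tau,V}(v_j)\ge \dist_{\tau,V}(p_j)-\norm{q_j}_2\ge \eta\norm{p_j}_2-\norm{q_j}_2\ge \eta\norm{v_j}_2-(\eta+1)\norm{q_j}_2$; combined with $\dist_{\tau,V}(v_j)<(\eta/3)\norm{v_j}_2$ and $\eta\le 1$ (since $\dist_{\tau,V}(x)\le\norm{x}_2$), this rearranges to $\norm{v_j}_2\le\tfrac3\eta\norm{q_j}_2$. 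Therefore
\[ \norm{(S^T)_{D^c}}_{2\to1}=\sup_{\norm{w}_2\le1}\sum_{j\in D^c}|\langle v_j,w\rangle|\ \le\ \sum_{j\in D^c}\norm{v_j}_2\ \le\ \tfrac3\eta\sum_{j\in D^c}\norm{q_j}_2\ \le\ \tfrac3\eta\cdot\tfrac\eta3\,\delta N\ =\ \delta\,\norm{S^T}_{2\to1}, \]
which is the claim. Along the way one should dispatch the trivial edge cases: $S\not\equiv 0$ gives $N>0$; any zero column $v_j$ lands in $D$ harmlessly; and $\ker(\Theta)\ne\{0\}$ since $r>2m\ge 0$ (so $\eta$ and $\lambda$ are meaningful).
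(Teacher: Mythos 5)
Your proof is correct and follows essentially the same strategy as the paper's: decompose each column of $S$ into its projection onto $\ker(\Theta)$ and $\rspan(\Theta)$, use the $k$-sparse rows of $\Theta$ as test vectors for the $\alpha^{(1)}$ bound, exploit that the $\ell_1$-mass of $\Theta S$ can be summed by rows or by columns so that the only Cauchy--Schwarz is over the $n$ rows of $\Theta$, and use the $r>2m$-dimensional kernel to lower-bound $\beta^{(1)}$ by $1/(\epsilon\norm{S^T}_{2\to 1}^2)$. The only (cosmetic) differences from the paper are that you establish $\beta^{(1)}\geq 1/(\epsilon N^2)$ directly by exhibiting $\ker(\Theta)\subseteq W_{\Sigma,S,1/(\epsilon N^2)}$ rather than via a contrapositive argument, you use the sharper intermediate inequality $\Theta_i^T\Sigma\Theta_i\leq\Theta_{ii}$ together with $\tr\Theta\leq\norm{\Theta}_F^2/\lambda$ in place of the paper's $\Theta_i^T\Sigma\Theta_i\leq\norm{\Theta_i}_2^2/\lambda$, and you define $D$ by the density condition itself rather than by kernel-closeness (a superset of the paper's $D$), none of which affects the final bound.
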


Before proving this theorem, we parse the notation. Essentially, the theorem shows that if $\ker(\Theta)$ contains only robustly dense vectors with respect to some subset of coordinates $V \subseteq [n]$ (i.e. $\eta$ is bounded away from $0$), and if the preconditioner well-conditions $\Theta$ (i.e. $\gamma$ is not too large) then for sufficiently small $\epsilon$, every column of the preconditioner is also quantitatively dense over $V$, except for a set of potentially sparse columns $D^c$ with small total norm.

As $\epsilon$ approaches $0$, the condition number of $\Thetat$ degrades (which is why we do not wish to take the limit $\epsilon \to 0$), but the bound $\delta$ on the norm of the sparse columns improves. Indeed, in the definition of $\delta$, the parameters $\eta$, $\lambda$, and $\norm{\Theta}_F$ all depend on $\Theta$ but not on $\epsilon$. So if we assume that the weak compatibility ratio $\gamma$ is bounded by a constant (so that Theorem~\ref{theorem:ill-conditioned-lasso-failure-l1} does not apply), then if $\eta, \lambda \geq 1/\poly(n)$ and $\norm{\Theta}_F \leq \poly(n)$, we can take $\epsilon = \Omega(1/\poly(n))$ and get $\delta$ to be inverse polynomially small.

\begin{remark}
An interesting caveat of the above theorem is the requirement that $\Theta$ be $k$-sparse. This is not a limitation for our final results, since we ultimately are able to construct hard instances with sparse precision matrices, but it illustrates why Gaussian Graphical Models are a useful tool in the broader context of constructing hard random-design instances for Lasso. A priori, it is not clear whether assuming sparsity of the precision matrix should help or hinder lower bounds, but in this framework it plays a crucial role.
\end{remark}

\begin{proof}[\textbf{Proof of Theorem~\ref{thm:columns-dense-or-small-l1}}]
Let $\Theta = \sum_{i=1}^n \lambda_i v_i v_i^T$ be an orthonormal diagonalization with $\lambda_1 \geq \dots \geq \lambda_n \geq 0$. 
Then $(\Theta + \epsilon I)^{-1} = \sum_{i=1}^n (\lambda_i + \epsilon)^{-1} v_i v_i^T$. Recalling that $\lambda_{n-r+1} = \dots = \lambda_n = 0$, for any $w \in \RR^n$, $$w^T \Thetat^{-1} w = \sum_{i=1}^{n-r} \frac{1}{\lambda_i + \epsilon} \langle v_i, w \rangle^2 + \epsilon^{-1}\sum_{i=n-r+1}^n \langle v_i, w \rangle^2.$$
Hence,\begin{equation} \epsilon^{-1}\norm{\Proj_{\ker\Theta} w}_2^2 \leq w^T \Thetat^{-1} w \leq \epsilon^{-1}\norm{\Proj_{\ker \Theta} w}_2^2 + \frac{1}{\lambda} \norm{w}_2^2.\label{eq:thetat-form-ineq}\end{equation}
Since $\Theta_1,\dots,\Theta_n$, the rows of $\Theta$, are $k$-sparse, we have from the definition of $\alpha = \alpha^{(1)}_{\Thetat^{-1},S,k}$ that $\Theta_i^T \Thetat^{-1}\Theta_i \geq \alpha \norm{S^T\Theta_i}_1^2$ for all $i \in [n]$, so by Equation~\ref{eq:thetat-form-ineq}, $$\norm{S^T \Theta_i}_1^2 \leq \frac{1}{\lambda \alpha} \norm{\Theta_i}_2^2.$$
Therefore summing over $i = 1,\dots,n$,
$$\frac{1}{\sqrt{\lambda \alpha}} \sum_{i=1}^n \norm{\Theta_i}_2 \geq \sum_{i=1}^n \norm{S^T \Theta_i}_1 = \sum_{j=1}^s \norm{\Theta (S^T)_j}_1.$$
The first term in the inequality is at most $\sqrt{n/(\lambda\alpha)}\norm{\Theta}_F$. On the other hand $\norm{\Theta (S^T)_j}_1 \geq \norm{\Theta (S^T)_j}_2 \geq \lambda \norm{(S^T)_j - w_j}_2$ where $w_j = \Proj_{\ker \Theta} (S^T)_j$. Therefore
\begin{equation}\sum_{j=1}^s \norm{(S^T)_j - w_j}_2 \leq \sqrt{\frac{n}{\lambda^3\alpha}} \norm{\Theta}_F.\label{eq:column-dist-to-kernel-0} \end{equation}
We can rewrite this bound in terms of $\delta$ as follows. Pick any $\beta > \beta^{(1)}_{\Thetat^{-1},S,k,m}$. By definition, $\dim W_{\Thetat^{-1},S,\beta} < 2m$, so there is no dimension-$2m$ subspace contained in $W_{\Thetat^{-1},S,\beta}$. In particular, $\dim \ker\Theta > 2m$, so $\ker \Theta \not \subseteq W_{\Thetat^{-1},S,\beta}$. Certainly $0 \in W_{\Thetat^{-1},S,\beta}$, so there is some nonzero $u \in \ker(\Theta)$ for which $u^T \Thetat^{-1} u < \beta \norm{S^T u}_1^2$. Together with Equation~\ref{eq:thetat-form-ineq}, this gives $$\epsilon^{-1} \norm{u}_2^2 \leq u^T \Theta^{-1} u < \beta \norm{S^T u}_1^2 \leq \beta \norm{u}_2^2 \norm{S^T}_{2 \to 1}^2.$$
Hence $1 \leq \sqrt{\epsilon \beta} \norm{S^T}_{2\to 1}$. Since $\beta > \beta^{(1)}_{\Thetat^{-1},S,k,m}$ was arbitrary it follows that $1 \leq \sqrt{\epsilon \beta^{(1)}_{\Thetat^{-1},S,k,m}} \norm{S^T}_{2\to 1}$. Multiplying into Equation~\ref{eq:column-dist-to-kernel-0} gives that 
\begin{equation}\sum_{j=1}^s \norm{(S^T)_j - w_j}_2 \leq \delta \frac{\eta}{3} \norm{S^T}_{2 \to 1}\label{eq:column-dist-to-kernel}\end{equation}
so the columns of $S$ are close to $\ker \Theta$ in an absolute sense. Now define $D \subseteq [s]$ to be the set of columns close to $\ker \Theta$ in a relative sense: $$D = \left\{j \in [s]: \norm{(S^T)_j - w_j}_2 \leq \frac{\eta}{3} \norm{(S^T)_j}_2\right\}.$$
For any $j \in [s]$, its projection onto the kernel is quantitatively dense, i.e. $\dist_{\tau,V}(w_j) \geq \eta \norm{w_j}_2$ by definition of $\eta$. So for any $j \in D$, by the triangle inequality, \begin{align*}
\dist_{\tau,V}((S^T)_j) 
&\geq \dist_{\tau,V}(w_j) - \norm{(S^T)_j - w_j}_2 \\
&\geq \eta \norm{w_j}_2 - \norm{(S^T)_j - w_j}_2 \\
&\geq \eta(\norm{(S^T)_j}_2 - \norm{(S^T)_j - w_j}_2) - \norm{(S^T)_j - w_j}_2 \\
&\geq \frac{\eta}{3} \norm{(S^T)_j}_2
\end{align*}
as desired, where the last inequality uses the assumption that $j \in D$. It remains to bound $\norm{(S^T)_{D^c}}_{2\to 1}$. But by Cauchy-Schwarz, the assumption that $j \in D^c$, and Equation~\ref{eq:column-dist-to-kernel}, $$\norm{(S^T)_{D^c}}_{2 \to 1} \leq \sum_{j \in D^c} \norm{(S^T)_j}_2 \leq \frac{3}{\eta} \sum_{j \in D^c} \norm{(S^T)_j - w_j}_2 \leq \delta \norm{S^T}_{2 \to 1}$$ as claimed.
\end{proof}

\subsection{Failure of compatible preconditioners}\label{section:dense-failure}

The next step is to convert the above statement about density of columns of $S$ into a sample complexity lower bound for the success of $S$-preconditioned Lasso:

\begin{theorem}\label{theorem:dense-lasso-failure-l1}
Under the conditions of Theorem~\ref{thm:columns-dense-or-small-l1}, let $X_1,\dots, X_m \sim N(0, \Thetat^{-1})$ be independent samples. Suppose $r = \dim \ker \Theta > 2m$, and $k > 3(|V|/\tau)\log(n)$. Suppose that $$\delta := \sqrt{\epsilon \gamma n} \cdot 3 \norm{\Theta}_F \eta^{-1} \lambda^{-3/2} < \frac{1}{10n}.$$ Then there is some $k$-sparse signal such that $S$-preconditioned Lasso fails at exact recovery with probability at least $1 - \frac{4m}{3r} - \exp(-\Omega(m))$, over the randomness of the samples $X_1,\ldots,X_m$.
\end{theorem}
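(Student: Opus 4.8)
The plan is to exhibit a single $k$-sparse signal $w^*$ (depending on $\Theta,\epsilon,S,V,\tau$ but not on the samples) and then to show that, with probability at least $1-\tfrac{4m}{3r}-\exp(-\Omega(m))$ over $X_1,\dots,X_m$, there is a competitor $v$ with $Xv=Xw^*$ but $\norm{S^Tv}_1<\norm{S^Tw^*}_1$; this certifies that $w^*$ is not the $S$-preconditioned basis pursuit solution, and hence (since exact recovery by $S$-preconditioned Lasso for any $\lambda>0$ implies it for basis pursuit) that $S$-preconditioned Lasso fails. Throughout write $K=\ker(\Theta)$; since $K$ is an eigenspace of $\Sigma=\Thetat^{-1}$ with eigenvalue $1/\epsilon$, the projected covariates $P_{K}X_i$ are i.i.d.\ $N(0,\tfrac1\epsilon P_{K})$.

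First I would invoke Theorem~\ref{thm:columns-dense-or-small-l1} with the given parameters to split the column index set $[s]=D\sqcup D^c$, where every column $(S^T)_i$ with $i\in D$ satisfies $\dist_{\tau,V}((S^T)_i)\ge(\eta/3)\norm{(S^T)_i}_2$ and $\norm{(S^T)_{D^c}}_{2\to1}\le\delta\norm{S^T}_{2\to1}$. Tracing through that proof also gives that each $D$-column lies within relative distance $\eta/3$ of its projection $w_i:=\Proj_{K}(S^T)_i$, that $\sum_{i}\norm{(S^T)_i-w_i}_2\le(\eta\delta/3)\norm{S^T}_{2\to1}$, and that $\sum_{i\in D^c}\norm{(S^T)_i}_2\le\delta\norm{S^T}_{2\to1}$, so $\sum_{i\in D}\norm{(S^T)_i}_2$ is within a constant factor of $\norm{S^T}_{2\to1}$.

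Next I would choose $w^*$ by the probabilistic method: take $w^*$ supported on a uniformly random size-$k$ subset $T\subseteq V$ with independent Rademacher coefficients, normalized to $\norm{w^*}_2=1$. For each $i\in D$, after discarding the $\tau$ largest coordinates of $(S^T)_i$ on $V$ the remaining tail has $\ell_2$-mass at least $(\eta/3)\norm{(S^T)_i}_2$ while every remaining coordinate has magnitude at most $\norm{(S^T)_i}_2/\sqrt{\tau}$, so it is supported on $\gtrsim\eta^2\tau$ coordinates; combining this ``flatness'' with the hypothesis $k>3(|V|/\tau)\log n$ (which drives a union bound over the coordinates of $V$ and a Khintchine/Paley--Zygmund estimate over the signs and over $T$) I would show $\mathbb{E}\,\norm{S^Tw^*}_1\gtrsim\tfrac{\eta}{\sqrt{|V|}}\sum_{i\in D}\norm{(S^T)_i}_2$, and fix a realization of $(T,\text{signs})$ attaining at least this expectation. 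Then, for the competitor, set $W:=K\cap\ker(X)$ and $v:=w^*-\Proj_W w^*$. Since $\Proj_W w^*\in\ker(X)$ we get $Xv=Xw^*$, and because $W$ is the orthocomplement of $\mathrm{span}\{P_KX_i\}$ inside $K$ one computes $P_K v=\Proj_{\mathrm{span}\{P_KX_i\}}(P_K w^*)$, a projection of a fixed vector onto a uniformly random $m$-dimensional subspace of the $r$-dimensional $K$; its squared norm has mean $\tfrac mr\norm{P_K w^*}_2^2$, so Markov's inequality controls $\norm{P_K v}_2$ outside an event of probability $\tfrac{4m}{3r}$. Splitting $\norm{S^Tv}_1$ over $D^c$ (bounded by $\delta\norm{S^T}_{2\to1}\norm{v}_2$) and over $D$ (writing $(S^T)_i=w_i+\text{error}$ and using $\langle w_i,v\rangle=\langle w_i,P_Kv\rangle$, together with $\sum_{i\in D}|\langle w_i,u\rangle|\le\norm{S^Tu}_1+\text{(summable errors)}\le 2\norm{S^T}_{2\to1}\norm{u}_2$), one obtains $\norm{S^Tv}_1\lesssim\norm{S^T}_{2\to1}\big(\norm{P_K v}_2+\delta\big)$; on the event of Theorem~\ref{thm:rmt} controlling the extreme singular values of the relevant Gaussian matrix (contributing the $\exp(-\Omega(m))$), and taking $\epsilon$ small enough that $\delta<1/(10n)$ is negligible, this is strictly below the lower bound on $\norm{S^Tw^*}_1$.

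The main obstacle is making that final comparison genuinely hold: $r>2m$ alone only gives $\norm{P_K v}_2\lesssim\norm{P_K w^*}_2$, which is not small compared to $\eta\norm{S^T}_{2\to1}/\sqrt{|V|}$, so the argument cannot be carried out with crude bounds — it must simultaneously use the density parameter $\eta$ (so that a sparse $w^*$ still $\ell_1$-tests $S^T$ at a real fraction of $\norm{S^T}_{2\to1}$), the $k$-sparsity of $\Theta$ (which forces the heavy columns of $S$ to hug $K$, so the competitor ``sees'' almost none of $\norm{S^Tw^*}_1$ despite having a large $K$-component), and the tininess of $\delta$, and I expect the delicate part of the proof is choosing $w^*$ so that its kernel component and its mass on $V$ are balanced well enough to close this gap while only paying the probability loss $\tfrac{4m}{3r}+\exp(-\Omega(m))$.
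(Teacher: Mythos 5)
Your overall target is right: exhibit a $k$-sparse $w^*$ and a feasible competitor with strictly smaller $\ell_1$ penalty. But the specific route you take cannot be completed, for a reason you half-recognize at the end, and the paper's proof differs at exactly the step you worry about.

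The central problem is that you try to make one large jump to the competitor $v = w^* - \Proj_{W}w^*$ with $W = \ker(\Theta)\cap\ker(X)$, and then to verify $\|S^T v\|_1 < \|S^T w^*\|_1$ by bounding $\|S^T v\|_1 \lesssim \|S^T\|_{2\to 1}\bigl(\|P_K v\|_2 + \delta\bigr)$. Since $W\subseteq K$, the kernel component of your competitor is $P_K v = \Proj_{\mathrm{span}\{P_K X_i\}}(P_K w^*)$, and the only probabilistic control you can buy at cost $4m/(3r)$ is $\|P_K v\|_2 \le \tfrac{\sqrt3}{2}\|P_K w^*\|_2$ — i.e.\ a constant-factor saving. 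But $\|P_K w^*\|_2$ is not small: from $\|P_K w^*\|_2^2 = \langle w^*, P_K w^*\rangle \le \|(P_K w^*)_{\supp w^*}\|_2$ and the robust-density bound one only gets $\|P_K w^*\|_2 \le \sqrt{1-\eta^2}$, which is essentially $1$ in the regime of interest ($\eta$ is polynomially small in the grid construction). Meanwhile your lower bound on $\|S^T w^*\|_1$ is of order $\eta\|S^T\|_{2\to 1}/\sqrt{|V|}$. These two quantities differ by a polynomial factor in the wrong direction; no amount of balancing the choice of $w^*$ can close this gap when the competitor is built by subtracting $\Proj_W w^*$, because the estimate is dominated by the $O(1)$ term $\|P_K w^*\|_2$ regardless.

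The paper sidesteps this entirely by arguing via a first-order optimality violation rather than a global comparison. Writing $z = \sign(S^T w^*)$, $U = \supp(S^T w^*)$ and $P = \Proj_{\mathrm{rspan}(X)}$, it sets $d = (I-P)Sz \in \ker X$. Because $I-P$ is an orthogonal projection, $\langle S^T d, z\rangle = \langle d, Sz\rangle = \|d\|_2^2$ exactly, so the directional derivative of $\|S^T(w^*+t d)\|_1$ at $t=0^+$ is $-\|d\|_2^2 + \|(S^T d)_{U^c}\|_1$; convexity then guarantees $w^* + \epsilon d$ beats $w^*$ for small $\epsilon>0$ as soon as $\|(S^T d)_{U^c}\|_1 < \|d\|_2^2$. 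The choice $d = (I-P)Sz$ is the steepest-descent direction of the penalty restricted to $\ker X$, and that optimality is what makes the inner product automatically equal to $\|d\|_2^2$ — a structural fact your competitor lacks. The randomness (Lemma~\ref{lem:no-alignment}, which is where $r>2m$ enters) is then used to \emph{lower}-bound $\|d\|_2$ by a constant multiple of $\|Sz\|_2 \ge \|S^T w^*\|_1/\|w^*\|_2$, not to upper-bound a projection of $w^*$; that inversion of roles is essential.

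There is a second, independent difference in how $w^*$ is constructed. The first-order argument needs $U = \supp(S^T w^*)$ to absorb (nearly) all of the dense columns of $S$, so that $U^c$ touches only columns of tiny total norm; this is exactly what the greedy covering step (Lemma~\ref{lemma:2to1-norm-reduction}, applied with parameter $n^{-2}$ and followed by appending the heaviest row via Lemma~\ref{lemma:2to1-subsampling}) delivers, using the hypothesis $k > 3(|V|/\tau)\log n$ purely to bound the covering size. Your random-$T$-with-Rademacher-signs choice is aimed at lower-bounding $\mathbb{E}\|S^T w^*\|_1$, which is the wrong invariant: you need $N(K)\subseteq U$ and $\|(S^T)_{D\setminus N(K)}\|_{2\to 1}$ small, not a uniform $\ell_1$ lower bound, and the probabilistic construction does not obviously supply the covering property. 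I'd suggest re-deriving the statement along the paper's lines: fix $w^*$ by the covering lemma, perturb along $(I-P)S\sign(S^T w^*)$, and split the task into (i) $\|(S^T d)_{U^c}\|_1$ small via the column decomposition, and (ii) $\|d\|_2$ large via Lemma~\ref{lem:no-alignment} and Cauchy--Schwarz.
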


Before rigorously proving this theorem, we give a proof sketch and prove several technical lemmas. In broad strokes, the idea is as follows. It should be possible to choose a sparse vector $w^*$ such that the support of $S^T w^*$ contains all coordinates corresponding to dense columns of $S$. Due to Theorem~\ref{thm:columns-dense-or-small-l1}, every column of $S$ is either dense or very low norm, so $(S^T w^*)_i = 0$ only if $(S^T)_i$ has very low norm. In this case, if we consider perturbing $w^*$ by some sufficiently small vector $d$, then the penalty of $w^*+d$ can be approximated (with error proportional to $\norm{d}_2$) as 
\begin{equation}\label{eq:kkt-approximation}
\norm{S^T (w^* - d)}_1 \approx \norm{S^T w^*}_1 - \langle S^T d, \text{sign}(S^T w^*)\rangle.
\end{equation}

Ultimately, to show that $S$-preconditioned Lasso does not exactly recover $w^*$, it suffices to exhibit $d \in \RR^n$ such that $\norm{S^T(w^*-d)}_1 < \norm{S^T w^*}_1$ and $Xd = 0$. Motivated by \eqref{eq:kkt-approximation}, we define $d = \Proj_{\ker(X)}[S\sign(S^T w^*)]$; this is the optimal choice of $d$, in the sense that it maximizes $\langle S^T d, \sign(S^T w^*)\rangle$ for fixed $\norm{d}_2$, over $\ker(X)$. 

However, showing that $\norm{S^T(w^*-d)}_1 < \norm{S^T w}_1$ requires lower bounding $$\norm{\Proj_{\ker(X)}[S\sign(S^T w^*)]}.$$ That is, we need to prove that $S\sign(S^T w^*)$ does not lie near $\rspan(X)$.

This is where we use the assumption that $\ker(\Theta)$ is high-dimensional (which is a necessary assumption, because constructing PSD matrices $\Theta$ with dense, low-dimensional kernels is trivial even for low-treewidth dependency graphs). Indeed, $\rspan(X)$ is concentrated near $\ker(\Theta)$, so if $m > \dim \ker(\Theta)$ then $\rspan(X)$ may approximately contain $\ker(\Theta)$. As the columns of $S$ are near $\ker(\Theta)$, it may then follow that $S\sign(S^T w^*)$ is near $\ker(\Theta)$ and thus near $\rspan(X)$. Fortunately, if $m \ll \dim \ker(\Theta)$ then this is unlikely to happen: intuitively, $\rspan(X)$ is nearly a random subspace of $\ker(\Theta)$, so it is unlikely to align with \emph{any} fixed direction, and the probability of alignment is $O(m/\dim\ker(\Theta))$. The following lemma formalizes this intuition.


\begin{lemma}\label{lem:no-alignment}
Let $\Theta \in \RR^{n \times n}$ be a PSD matrix with minimum nonzero eigenvalue $\lambda$. Let $\epsilon, m > 0$ and let $\Thetat = \Theta + \epsilon I$. Let $X_1,\dots,X_m \sim N(0, \Thetat^{-1})$. If $\epsilon \leq c\lambda/n$ for a sufficiently small absolute constant $c>0$, and $r := \dim \ker \Theta > 2m$, then for any fixed $v \in \RR^n$, we have $$\Pr_{X_1,\dots,X_m}[v^T (I - P) v \geq (v^T v)/8] \geq 1 - \frac{4m}{3r} - \exp(-\Omega(m)),$$ where $P = X^T (XX^T)^{-1} X$ is the projection map onto $\vspan\{X_1,\dots,X_m\}$, and where $X: m \times n$ is the matrix with rows $X_1,\ldots,X_m$.
\end{lemma}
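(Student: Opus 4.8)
The plan is to work in an eigenbasis of $\Theta$, so that after an orthogonal change of coordinates (which affects neither the hypotheses nor the conclusion) the kernel $\ker\Theta$ is a coordinate subspace of dimension $r$ and $\rspan\Theta=(\ker\Theta)^\perp$ has dimension $d=n-r$. Write $\Pi=\Proj_{\ker\Theta}$ and $\Pi^\perp=I-\Pi$. Since $\Thetat^{-1}=\tfrac1\epsilon\Pi+\Pi^\perp\Thetat^{-1}\Pi^\perp$ with $\Pi^\perp\Thetat^{-1}\Pi^\perp\preceq\tfrac1\lambda\Pi^\perp$, each sample splits as $X_i=\Pi X_i+\Pi^\perp X_i$ with the two parts independent, $\sqrt\epsilon\,\Pi X_i$ a standard isotropic Gaussian on $\ker\Theta$, and $\Pi^\perp X_i\sim N(0,M)$ for some $M\preceq\tfrac1\lambda I$ supported on $\rspan\Theta$. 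Split the fixed vector likewise as $v=v^k+v^\perp$ with $v^k=\Pi v$, $v^\perp=\Pi^\perp v$. We may assume $m\ge 1$ and $v\neq 0$, since otherwise $P=0$ and the claim is trivial.

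The first key step is to show $\rspan X$ is almost contained in $\ker\Theta$: there is an event $E_1$ of probability $1-\exp(-\Omega(m))$ on which $\|\Pi^\perp X^T u\|_2\le \zeta\,\|\Pi X^T u\|_2$ for every $u\in\RR^m$, where $\zeta\le 1/4$. For this I would apply Theorem~\ref{thm:rmt} twice: to the $d\times m$ Gaussian matrix representing $\Pi^\perp X^T$ (bounding $\sigma_{\max}$ by $O(\sqrt n/\sqrt\lambda)$, using $M\preceq\tfrac1\lambda I$, $m\le n$), and to the $r\times m$ matrix $\sqrt\epsilon\,\Pi X^T$ of i.i.d.\ $N(0,1)$ entries, whose smallest singular value is at least $\sqrt r-\sqrt m-t$; since $r>2m$ we have $\sqrt r-\sqrt m>(\sqrt2-1)\sqrt m$, so taking $t=\tfrac{\sqrt2-1}{2}\sqrt m$ gives $\sigma_{\min}(\Pi X^T)=\Omega(\sqrt m/\sqrt\epsilon)$ with probability $1-\exp(-\Omega(m))$. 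Dividing the two estimates yields $\zeta=O(\sqrt{\epsilon n/\lambda})$, which is at most $1/4$ once $\epsilon\le c\lambda/n$ for a small enough absolute constant $c$. In particular, on $E_1$ every $w\in\rspan X$ obeys $\|\Pi^\perp w\|_2\le\zeta\|w\|_2$.

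The second step uses this to control $\Proj_{\rspan X}v=\Proj_{\rspan X}v^k+\Proj_{\rspan X}v^\perp$. Since $v^\perp\in\rspan\Theta$, writing $q=\Proj_{\rspan X}v^\perp$ gives $\|q\|^2=\langle q,v^\perp\rangle=\langle\Pi^\perp q,v^\perp\rangle\le\zeta\|q\|\|v^\perp\|$, hence $\|q\|\le\zeta\|v^\perp\|$. For $q'=\Proj_{\rspan X}v^k$, using $v^k\in\ker\Theta$ we get $\|q'\|^2=\langle\Pi q',v^k\rangle=\langle\Pi q',\Proj_{W'}v^k\rangle\le\|q'\|\,\|\Proj_{W'}v^k\|$, where $W':=\Pi(\rspan X)\subseteq\ker\Theta$, so $\|q'\|\le\|\Proj_{W'}v^k\|$. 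Now $W'$ is spanned by the i.i.d.\ rotationally invariant Gaussians $\Pi X_1,\dots,\Pi X_m$, so almost surely it is a uniformly random $m$-dimensional subspace of the $r$-dimensional space $\ker\Theta$; therefore $\|\Proj_{W'}v^k\|^2/\|v^k\|^2\sim\mathrm{Beta}(m/2,(r-m)/2)$ has mean $m/r$, and Markov's inequality gives an event $E_2$ with $\Pr[E_2]\ge 1-\tfrac{4m}{3r}$ on which $\|\Proj_{W'}v^k\|^2\le\tfrac34\|v^k\|^2$. The part of $E_1$ concerning $\Pi^\perp X$ is independent of everything else, while the part concerning $\Pi X$ and the event $E_2$ together cost $\tfrac{4m}{3r}+\exp(-\Omega(m))$ by a union bound, matching the claimed probability.

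Finally, on $E_1\cap E_2$, combining the two bounds with Cauchy–Schwarz gives $\|\Proj_{\rspan X}v\|\le\tfrac{\sqrt3}{2}\|v^k\|+\zeta\|v^\perp\|\le\sqrt{\tfrac34+\zeta^2}\,\|v\|$, so that $v^T(I-P)v=\|v\|^2-\|\Proj_{\rspan X}v\|^2\ge(\tfrac14-\zeta^2)\|v\|^2\ge\|v\|^2/8$ as soon as $\zeta\le1/4$. I expect the main obstacle to be the bookkeeping in the first step: establishing $\zeta\le 1/4$ while the random-matrix bounds are essentially tight (the regime where $r$ is only slightly larger than $2m$, so that $\sqrt r-\sqrt m$ must be handled via $\sqrt r\ge\sqrt{2m}$ with explicit small constants), together with carefully separating the probabilistic events so that precisely $1-\tfrac{4m}{3r}-\exp(-\Omega(m))$ comes out. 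The remaining ingredients — that the span of $m$ i.i.d.\ isotropic Gaussians is a uniform random subspace, and the resulting Beta distribution — are standard but worth stating explicitly.
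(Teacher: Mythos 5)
Your proof is correct and follows essentially the same strategy as the paper's: work in an eigenbasis of $\Theta$, use Wishart/singular-value concentration (Theorem~\ref{thm:rmt}) to show that $\rspan(X)$ is almost contained in $\ker\Theta$ (your $\zeta$-bound is the paper's $\|(X^Tu)_{[r]^c}\|\le\tfrac14\|(X^Tu)_{[r]}\|$), and then use the isotropy of the kernel component of the samples together with Markov's inequality (yielding the $1-\tfrac{4m}{3r}$ term). The one cosmetic difference is in the final assembly: you upper-bound $\|Pv\|$ by splitting $Pv=\Proj_{\rspan X}v^k+\Proj_{\rspan X}v^\perp$ and combining with Cauchy--Schwarz, whereas the paper proves two separate lower bounds on $\|v-Pv\|$ (via $\|w-P_Yw\|\ge\|w\|/2$ and via $\|(v-Pv)_{[r]^c}\|\ge\|v_{[r]^c}\|-\|v\|/4$) and combines them at the end; your version is arguably a bit cleaner as it avoids the implicit case analysis. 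Two small points of bookkeeping in your write-up: the ratio of the singular-value bounds is $O(\sqrt{\epsilon n/(\lambda m)})$ rather than $O(\sqrt{\epsilon n/\lambda})$ as written (harmless since $m\ge1$), and it is worth noting explicitly that $\sigma_{\min}(\Pi X^T)$ can in fact be bounded below by $\Omega(\sqrt{r/\epsilon})$ using $r>2m$, which is what the paper does and gives a bit more slack.
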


\begin{proof}
The statement of the lemma is basis-independent (e.g. does not depend on sparsity of $\Theta$ or $v$), so we can assume without loss of generality that $\Theta$ is diagonal. Then $\Thetat^{-1}$ is diagonal, and we can choose a basis ordering such that the first $r = \dim \ker \Theta$ diagonal entries are each $\epsilon^{-1}$. Let $w =v_{[r]}$ be the first $r$ coordinates of $v$. For $i \in [m]$ let $Y_i = (X_i)_{[r]}$ be the first $r$ coordinates of $X_i$. Then $Y_1,\dots,Y_m$ are i.i.d. $N(0, \epsilon^{-1} I_r)$. So if $P_Y = Y^T (YY^T)^{-1} Y$, then $P_Y$ is projection onto an isotropically random dimension-$m$ subspace of $\RR^r$. Hence, $$\EE \norm{P_Y w}_2^2 = \frac{m}{r} \norm{w}_2^2.$$
With probability at least $1 - 4m/(3r)$ we have $\norm{P_Y w}_2^2 \leq \frac{3}{4}\norm{w}_2^2$. So $\norm{w - P_Y w}_2^2 \geq \norm{w}_2^2/4$. Now $\norm{w-P_Yw}_2$ is the distance from $w$ to the subspace $\vspan\{Y_1,\dots,Y_m\}$. For any vector in $\vspan\{X_1,\dots,X_m\}$, its first $r$ coordinates lie in $\vspan\{Y_1,\dots,Y_m\}$, so the distance to $v$ must be at least $\norm{w-P_Yw}_2$. Thus,
\begin{equation} \norm{v - Pv}_2^2 \geq \norm{w - P_Yw}_2^2 \geq \frac{1}{4} \norm{w}_2^2. \label{eq:dist-to-projection-1}\end{equation}
Next, note that $(X^T)_{[r]}$ is a $r \times m$ matrix with i.i.d. $N(0, \epsilon^{-1})$ entries, so $\sigma_\text{min}((X^T)_{[r]}) \geq c\epsilon^{-1/2}\sqrt{r}$ with probability at least $1 - \exp(-\Omega(m))$, for some constant $c>0$ (by Theorem~\ref{thm:rmt}). On the other hand, since the entries of $\Thetat^{-1}_{[r]^c,[r]^c}$ are bounded by $1/\lambda$, we also have $\sigma_\text{max}((X^T)_{[r]^c}) \leq C\sqrt{n/\lambda}$ with probability at least $1 - \exp(-\Omega(n))$, for some constant $C$. This means that for any $u \in \RR^m$, $$\norm{(X^T u)_{[r]^c}}_2 \leq C\sqrt{\frac{n}{\lambda}} \norm{u}_2 \leq \frac{C\sqrt{n\epsilon}}{c\sqrt{\lambda r}} \norm{(X^T u)_{[r]}}_2.$$ By assumption, $\epsilon \leq (c/4C)^2 \lambda r/n$, so that $\norm{(X^T u)_{[r]^c}}_2 \leq \norm{(X^T u)_{[r]}}_2/4$. Now $Pv$ lies in the span of $X_1,\dots,X_m$, so there is some $u \in \RR^m$ with $Pv = X^T u$. This means that $$\norm{(Pv)_{[r]^c}}_2 \leq \frac{1}{4} \norm{Pv}_2 \leq \frac{1}{4} \norm{v}_2.$$
So $$\norm{v-Pv}_2 \geq \norm{(v - Pv)_{[r]^c}}_2 \geq \norm{v_{[r]^c}}_2 - \frac{1}{4} \norm{v}_2.$$
Together with Equation~\ref{eq:dist-to-projection-1}, which states that $\norm{v-Pv}_2 \geq \frac{1}{2} \norm{v_{[r]}}_2$, we get that $\norm{v-Pv}_2^2 \geq \frac{1}{8} \norm{v}_2^2.$
\end{proof}

There is one detail missing from the above proof sketch. Namely, if the preconditioner has $s$ dense columns, and we want a $k$-sparse vector $w^*$ such that $\supp(S^T w^*)$ contains the coordinates of all of these columns, then it may be necessary to take $k \geq \Omega(\log s)$ (and $s$ may be much larger than $n$). To avoid paying this, we instead show how to construct $w^*$ such that $\supp(S^T w^*)$ contains \emph{enough} coordinates that the total norm of the remaining columns of $S$ is a $\poly(n)$ factor smaller than the norm of $S$. This suffices for our purposes, and allows taking $k = O(\log n)$, as the following two lemmas show.

\begin{lemma}\label{lemma:2to1-approximation}
Let $S \in \RR^{n \times s}$ be a matrix. Then $$\frac{1}{n}\sum_{i=1}^n \norm{S_i}_1 \leq \norm{S^T}_{2\to 1} \leq \sum_{i=1}^n \norm{S_i}_1.$$
\end{lemma}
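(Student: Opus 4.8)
The plan is to verify both inequalities directly from the definition $\norm{S^T}_{2\to 1} = \sup_{\norm{x}_2 \le 1}\norm{S^T x}_1$, unwinding the matrix–vector product coordinatewise: the $j$th entry of $S^T x$ is $\sum_{i=1}^n S_{ij} x_i$, so $\norm{S^T x}_1 = \sum_{j=1}^s \bigl|\sum_{i=1}^n S_{ij} x_i\bigr|$. Both bounds are elementary consequences of this identity together with standard norm comparisons.

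For the lower bound, I would test the supremum against the standard basis vectors $x = e_i$, which are unit vectors in $\ell_2$. For each $i \in [n]$ we get $\norm{S^T e_i}_1 = \sum_{j=1}^s |S_{ij}| = \norm{S_i}_1$, hence $\norm{S^T}_{2\to1} \ge \norm{S_i}_1$ for every $i$. Taking the maximum over $i$ yields $\norm{S^T}_{2\to1} \ge \max_i \norm{S_i}_1 \ge \frac{1}{n}\sum_{i=1}^n \norm{S_i}_1$.

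For the upper bound, I would use $\norm{x}_\infty \le \norm{x}_2$, so every $x$ in the $\ell_2$ unit ball satisfies $|x_i| \le 1$ for all $i$; then by the triangle inequality $\norm{S^T x}_1 = \sum_{j=1}^s \bigl|\sum_{i=1}^n S_{ij} x_i\bigr| \le \sum_{j=1}^s \sum_{i=1}^n |S_{ij}|\,|x_i| \le \sum_{j=1}^s \sum_{i=1}^n |S_{ij}| = \sum_{i=1}^n \norm{S_i}_1$, and taking the supremum over such $x$ gives the claim.

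There is no real obstacle here: the lemma is exactly the observation that the $2\to1$ operator norm of $S^T$ is sandwiched between the largest row $\ell_1$-norm of $S$ and its entrywise $\ell_1$-norm, and both directions are one-line estimates. It is included only as a bookkeeping tool — it lets one certify that choosing the right sparse signal $w^*$ forces $\supp(S^T w^*)$ to capture all but a $\poly(n)$-small fraction of the total mass $\norm{S^T}_{2\to1}$ while keeping $\norm{w^*}_0 = O(\log n)$.
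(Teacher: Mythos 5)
Your proof is correct and the lower bound argument (testing $S^T$ on standard basis vectors $e_i$ and then using $\max_i \norm{S_i}_1 \ge \frac{1}{n}\sum_i \norm{S_i}_1$) is identical to the paper's. For the upper bound you use $\norm{x}_\infty \le \norm{x}_2$ together with the triangle inequality, whereas the paper applies Cauchy--Schwarz row-by-row on $S^T$ followed by $\norm{\cdot}_2 \le \norm{\cdot}_1$; both are one-line elementary chains ending at the same entrywise $\ell_1$ bound, so this is essentially the same approach.
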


\begin{proof}
On the one hand, for any $u \in \RR^n$, we have by Cauchy-Schwartz that $$\norm{S^T u}_1 = \sum_{j=1}^s |(S^T)_j u| \leq \norm{u}_2 \sum_{j=1}^s \norm{(S^T)_j}_2 \leq \norm{u}_2 \sum_{j=1}^s \norm{(S^T)_j}_1 = \norm{u}_2 \sum_{i=1}^n \norm{S_i}_1.$$
On the other hand, for any $i \in [n]$, we have $\norm{S^T e_i}_1 = \norm{S_i}_1$, so $$\norm{S^T}_{2\to 1} \geq \max_{i \in [n]} \norm{S_i}_1 \geq \frac{1}{n}\sum_{i=1}^n \norm{S_i}_1$$
as claimed.
\end{proof}

\begin{lemma}\label{lemma:2to1-norm-reduction}
Let $A \in \RR^{n\times p}$ be a matrix. Let $V \subseteq [n]$ and $\tau \in \NN$, and suppose the ``density condition" holds that $A_{Vj}$ has at least $\tau$ nonzero entries, for all $j \in [p]$. Let $\mu \in (0,1)$. Then there is a subset $K \subseteq V$ with the following properties:
\begin{itemize}
    \item $|K| \leq (|V|/\tau)\log(n/\mu)$
    \item Define $N(K) := \{j \in [p]: A_{Kj} \neq 0\}$. Then $$\norm{(A^T)_{N(K)^c}}_{2\to 1} \leq \mu \norm{A^T}_{2\to 1}.$$
\end{itemize}
\end{lemma}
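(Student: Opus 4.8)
The plan is to reduce the statement to a weighted hitting‑set argument, using Lemma~\ref{lemma:2to1-approximation} to pass back and forth between the $\norm{\cdot}_{2\to 1}$ operator norm and the total $\ell_1$‑mass of a matrix. First I would introduce the column weights $c_j := \norm{(A^T)_j}_1 = \sum_{i=1}^n |A_{ij}|$ (here $(A^T)_j$ is the $j$‑th column of $A$) and set $M := \sum_{j=1}^p c_j = \sum_{i,j}|A_{ij}|$. The lower half of Lemma~\ref{lemma:2to1-approximation} gives $\norm{A^T}_{2\to 1} \ge \frac1n\sum_{i=1}^n\norm{A_i}_1 = M/n$. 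For the upper side, fix any $K\subseteq V$ and let $B$ be the submatrix of $A$ keeping only the columns in $N(K)^c$, so $(A^T)_{N(K)^c}=B^T$. By definition of $N(K)$, every row $A_i$ with $i\in K$ is supported on columns of $N(K)$, hence the $i$‑th row of $B$ is identically zero for $i\in K$; the upper half of Lemma~\ref{lemma:2to1-approximation} then yields
\[
\norm{(A^T)_{N(K)^c}}_{2\to 1} \;\le\; \sum_{i=1}^n\norm{B_i}_1 \;=\; \sum_{i\notin K}\sum_{j\in N(K)^c}|A_{ij}| \;\le\; \sum_{j\in N(K)^c} c_j .
\]
Combining the two bounds, it suffices to find $K\subseteq V$ with $|K|\le (|V|/\tau)\log(n/\mu)$ and $\sum_{j\in N(K)^c}c_j \le \mu M/n$.

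The second step is where the density condition enters. For each $j$ write $\supp_V(j):=\{i\in V: A_{ij}\neq 0\}$, so the hypothesis says $|\supp_V(j)|\ge\tau$, and note $j\in N(K)^c$ exactly when $K\cap\supp_V(j)=\emptyset$. I would then do one of two equivalent things. Greedy: repeatedly add to $K$ the row $i\in V$ that maximizes the total $c$‑weight of not‑yet‑hit columns incident to $i$; averaging over $i\in V$, the total weight hit is $\sum_{i\in V}\sum_{j\text{ unhit},\,A_{ij}\neq0}c_j=\sum_{j\text{ unhit}}c_j|\supp_V(j)|\ge\tau\,W$ where $W$ is the residual weight, so the best row removes at least a $\tau/|V|$ fraction of $W$, and after $T:=\lceil(|V|/\tau)\log(n/\mu)\rceil$ steps the residual weight is at most $M(1-\tau/|V|)^T\le M e^{-T\tau/|V|}\le \mu M/n$. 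Alternatively (cleaner to write), take $K$ to be $T$ independent uniform samples from $V$: for each $j$, $\Pr[K\cap\supp_V(j)=\emptyset]\le(1-\tau/|V|)^T\le\mu/n$, so $\EE\big[\sum_{j\in N(K)^c}c_j\big]\le(\mu/n)\sum_j c_j=\mu M/n$ and a good $K$ exists. Either way $|K|\le T$, matching the claimed bound up to integer rounding.

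I do not anticipate a real obstacle: the content is just (a) aiming the two applications of Lemma~\ref{lemma:2to1-approximation} in the correct directions and observing that the $K$‑indexed rows of $B$ vanish, and (b) the hitting‑set estimate, which is standard once the density condition is invoked. The only minor points are the edge cases: if $M=0$ then $A=0$ and $K=\emptyset$ works trivially, and since the density condition forces $\tau\le|V|$, the quantity $(|V|/\tau)\log(n/\mu)$ is at least $\log(n/\mu)>0$, so $T$ is well defined; if one wishes to remove the ceiling entirely, one can simply terminate the greedy process the first time the residual weight drops below $\mu M/n$ and bound the number of steps by $(|V|/\tau)\log(n/\mu)$ directly.
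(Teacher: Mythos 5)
Your proof is correct and follows essentially the same approach as the paper's: reduce the $2\to1$ norm to the total $\ell_1$-mass via Lemma~\ref{lemma:2to1-approximation}, then apply an averaging/greedy hitting-set argument over $V$ whose per-step shrinkage factor $1-\tau/|V|$ comes directly from the density condition. The only cosmetic difference is that the paper tracks the residual mass $F(A_{[n],N(K)^c})$ directly and removes one row at a time by induction, whereas you rephrase the identical quantity through the column weights $c_j$ (and note the equivalent all-at-once i.i.d.-sampling variant); the observation about the $K$-rows of $B$ vanishing is true but not actually needed for the equality $\sum_i\norm{B_i}_1=\sum_{j\in N(K)^c}c_j$.
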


\begin{proof}
For any matrix $B$, define $F(B) = \sum_i \norm{B_i}_1$. Extend the definition of $N(K)$ to any subset of $[n]$, i.e. $N(L) = \{j \in [p]: A_{Lj} \neq 0\}$. It suffices to show that there exists a row index $i \in [n]$ such that $F(A_{[n], N(\{i\})^c}) \leq (1 - \tau/|V|)F(A)$. Indeed, suppose that this holds. The matrix $A_{[n],N(\{i\})^c}$ also satisfies the density condition, so we can induct. After $k$ steps, we have a set $K \subseteq [n]$ of size $k$, such that $F(A_{[n], N(K)^c}) \leq (1 - \tau/|V|)^k F(A)$. Then by Lemma~\ref{lemma:2to1-approximation}, $$\norm{(A^T)_{N(K)^c}}_{2\to 1} \leq F(A_{[n], N(K)^c}) \leq e^{-\tau k/|V|} F(A) \leq ne^{-\tau k/|V|} \norm{A^T}_{2\to 1}.$$
Taking $k \geq (|V|/\tau)\log(n/\mu)$ yields the desired inequality. So it remains to prove that there exists a row index $i \in [n]$ such that $F(A_{[n], N(\{i\})^c}) \leq (1 - \tau/|V|)F(A)$. Let $\xi \sim \text{Unif}(V)$ be a uniformly random row index from $V$. Then $\Pr[j \in N(\{\xi\})^c] \leq 1 - \tau/|V|$, by the density condition. Hence, 
\begin{align*}
\EE F(A_{[n],N(\{\xi\})^c})
&= \EE \sum_{i \in [n]} \norm{A_{\{i\},N(\{\xi\})^c}}_1 \\
&= \EE \sum_{j \in N(\{\xi\})^c} \norm{A_{[n],j}}_1 \\
&= \sum_{j \in [p]} \norm{A_{[n],j}}_1 \Pr[j \in N(\{\xi\})^c] \\
&\leq (1 - \tau/|V|)\sum_{j \in [p]} \norm{A_{[n],j}}_1 \\
&= (1 - \tau/|V|) F(A).
\end{align*}
Thus, there must exist some $i \in V$ satisfying $F(A_{[n], N(\{i\})^c}) \leq (1 - \tau/|V|)F(A)$, which completes the proof.
\end{proof}

We also need the following simple lemma about the $2\to 1$ norm.

\begin{lemma}\label{lemma:2to1-subsampling}
Let $A \in \RR^{s \times n}$ be a matrix. Then there is some $i \in [n]$ such that the matrix $A_{[s],\{i\}} : s \times 1$ satisfies $\norm{A_{[s],\{i\}}}_{2 \to 1} \geq \frac{1}{n} \norm{A}_{2 \to 1}$.
\end{lemma}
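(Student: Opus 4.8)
The plan is to unwind the definition of the $2\to 1$ operator norm and reduce to a statement about the columns of $A$. Write $a_i := A_{[s],\{i\}} \in \RR^s$ for the $i$th column of $A$. A matrix with a single column, viewed as a map $\RR \to \RR^s$, has $2 \to 1$ operator norm $\norm{a_i}_{2\to 1} = \sup_{|t| = 1}\norm{t a_i}_1 = \norm{a_i}_1$, so the claim to prove is simply that some column satisfies $\norm{a_i}_1 \geq \frac{1}{n}\norm{A}_{2\to 1}$.

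Next I would bound $\norm{A}_{2\to 1}$ from above by the largest column $\ell_1$ norm, up to a factor of $n$. For any $x \in \RR^n$ with $\norm{x}_2 \le 1$ we have $|x_i| \le \norm{x}_2 \le 1$ for every coordinate, so by the triangle inequality
\[
\norm{Ax}_1 = \Bigl\lVert \sum_{i=1}^n x_i a_i \Bigr\rVert_1 \leq \sum_{i=1}^n |x_i|\,\norm{a_i}_1 \leq \sum_{i=1}^n \norm{a_i}_1 \leq n \max_{i \in [n]} \norm{a_i}_1.
\]
Taking the supremum over all such $x$ gives $\norm{A}_{2\to 1} \le n \max_{i} \norm{a_i}_1$. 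Now choosing $i \in [n]$ to attain this maximum, we get $\norm{A_{[s],\{i\}}}_{2\to 1} = \norm{a_i}_1 = \max_j \norm{a_j}_1 \geq \frac{1}{n}\norm{A}_{2\to 1}$, as desired.

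There is essentially no obstacle here: the only thing to be careful about is the (trivial) identification of the $2 \to 1$ norm of a single column with its $\ell_1$ norm, and the choice of the crude bound $|x_i| \le 1$ rather than $\norm{x}_1 \le \sqrt n$ (either works, since we only need the factor $n$). The lemma will then be applied downstream together with Lemma~\ref{lemma:2to1-norm-reduction} to pass from a bound on $\norm{(A^T)_{N(K)^c}}_{2\to 1}$ to control on an individual surviving coordinate.
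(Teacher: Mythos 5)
Your proof is correct and follows essentially the same route as the paper's: both bound $\norm{A}_{2\to 1}$ above by the sum of the columns' $\ell_1$-norms (the paper does this by decomposing over rows and applying Cauchy--Schwarz followed by $\ell_2 \le \ell_1$, you do it by decomposing over columns and using $|x_i| \le \norm{x}_2$), and both then take the column of largest $\ell_1$-norm, which contributes at least a $1/n$ fraction.
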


\begin{proof}
Let $v \in \RR^n$ satisfy $\norm{v}_2 = 1$ and $\norm{Av}_1 = \norm{A}_{2\to 1}$. Then if $A_j$ denotes row $j$ of $A$,
$$\norm{Av}_1 = \sum_{j=1}^s |\langle A_j, v\rangle| \leq \sum_{j=1}^s \norm{A_j}_2 \leq \sum_{j=1}^s \norm{A_j}_1 = \sum_{i=1}^n \norm{(A^T)_i}_1.$$
On the other hand, the $2\to 1$ norm of the $s \times 1$ matrix $A_{[s],\{i\}}$ is precisely $\norm{(A^T)_i}_1$. So the index $i \in [n]$ maximizing $\norm{(A^T)_i}_1$ has the desired property.
\end{proof}

We can now prove Theorem~\ref{theorem:dense-lasso-failure-l1}.

\begin{proof}[\textbf{Proof of Theorem~\ref{theorem:dense-lasso-failure-l1}}]
Let $D \subseteq [s]$ be the set of indices of columns of $S$ guaranteed by Theorem~\ref{thm:columns-dense-or-small-l1}. Every $j \in D$ satisfies $\dist_{\tau,V}((S^T)_j) \geq (\eta/3)\norm{(S^T)_j}_2 > 0$, so $S_{Vj}$ has at least $\tau$ non-zero entries. By Lemma~\ref{lemma:2to1-norm-reduction} applied to $S_{[n],D}$, there is a subset of row indices $K \subseteq V$ of size $K \leq 3(|V|/\tau)\log(n)$ such that $\norm{(S^T)_{D \setminus N(K)}}_{2\to 1} \leq n^{-2}\norm{(S^T)_D}_{2\to 1}$, where $N(K) = \{j \in D: S_{Kj} \neq 0\}$.
Append to $K$ the index of the largest row of $S$, i.e. $\argmax_{i \in [n]} \norm{S_i}_1$; by Lemma~\ref{lemma:2to1-subsampling}, this ensures that $\norm{(S_K)^T}_{2\to 1} \geq \norm{S^T}_{2\to 1}/n$.

By genericity, there is a ($\ell_2$) unit vector $w^* \in \RR^n$ supported on $K$ such that the support $U = \supp(S^T w^*)$ satisfies $U \supseteq N(K)$, and such that $\norm{S^T w^*}_1 \geq \norm{(S_K)^T}_{2\to 1}/3$.

By construction, $w^*$ is $k$-sparse. To show that $S$-preconditioned Lasso fails to recover the signal $w^*$, it suffices
to exhibit $d \in \RR^n$ such that $Xd = 0$ and 
\begin{equation}\label{eqn:l1-shrinks-l1}
\norm{(S^Td)_{U^c}}_1 < \langle S^T d, \sign(S^T w^*) \rangle,
\end{equation}
because this shows that $w^*$ does not satisfy first-order optimality conditions, i.e. for sufficiently small $\epsilon > 0$ the vector $w^* + \epsilon d$ is a solution to the linear system with smaller $\ell_1$ norm than $w^*$.

Define $z = \sign(S^T w^*)$. We'll take $d = (I - P)Sz$ where $P$ is the orthogonal projection matrix onto the row span of $X$, $$P = \Proj_{\rspan(X)} = X^T (XX^T)^{-1} X.$$
Then $I-P$ projects onto $\ker(X)$, so certainly $Xd = 0$. It remains to show \eqref{eqn:l1-shrinks-l1}. Since $U \supseteq N(K)$, we have $U^c \subseteq (D \setminus N(K)) \cup D^c$, so by subadditivity of $\norm{\cdot}_{2\to 1}$, we know that
\begin{align*}
\norm{(S^T d)_{U^c}}_1 
&\leq \norm{(S^T)_{U^c}}_{2\to 1} \norm{d}_2 \\
&\leq (\norm{(S^T)_{D\setminus N(K)}}_{2\to 1} + \norm{(S^T)_{D^c}}_{2\to 1})\norm{d}_2 \\
&\leq (n^{-2}+\delta) \norm{S^T}_{2\to 1} \norm{d}_2
\end{align*}
by the guarantees of Lemma~\ref{lemma:2to1-norm-reduction} and Theorem~\ref{thm:columns-dense-or-small-l1}. On the other hand,
$$\langle S^T d, z \rangle = z^T S^T (I-P)Sz = \norm{d}_2^2.$$
So to prove \eqref{eqn:l1-shrinks-l1}, it suffices to show that $\norm{d}_2 > (n^{-2}+\delta) \norm{S^T}_{2\to 1}$. By Lemma~\ref{lem:no-alignment} applied to the vector $Sz$ (which doesn't depend on $X$), we have $d^T d = z^T S^T (I-P)Sz \geq (1/8)\norm{Sz}_2^2$ with probability at least $1 - 4m/(3r) - \exp(-\Omega(m))$. But by Cauchy-Schwarz, $$\norm{w^*}_2 \norm{Sz}_2 \geq (w^*)^T Sz = \langle S^T w^*, \sign(S^T w^*) \rangle = \norm{S^T w^*}_1.$$
Hence, $$\norm{d}_2 \geq \frac{1}{2\sqrt{2}} \norm{Sz}_2 \geq \frac{1}{2\sqrt{2}} \frac{\norm{S^T w^*}_1}{\norm{w^*}_2} \geq \frac{1}{6\sqrt{2}} \norm{(S_K)^T}_{2\to 1} \geq \frac{1}{6n\sqrt{2}} \norm{S^T}_{2\to 1}.$$
Since we assumed that $\delta < 1/(10n)$, it therefore certainly holds that $\norm{d}_2 > (n^{-2}+\delta)\norm{S^T}_{2\to 1}$ for sufficiently large $n$. Therefore Equation~\eqref{eqn:l1-shrinks-l1} holds, and so $S$-preconditioned Lasso fails to perform exact recovery of $w^*$.
\end{proof}

\subsection{The lower bound framework}

We can now put together the results of the last two sections into the following theorem, which states conditions on a precision matrix under which we can prove a sample complexity lower bound against preconditioned Lasso, regardless of the preconditioner. Notably, the sample complexity is determined by the dimension of $\ker(\Theta)$, and the signal sparsity is determined by the density of the vectors in the kernel.

\begin{theorem}\label{theorem:lower-bound}
Let $\Theta \in \RR^{n \times n}$ be a PSD matrix. Let $k,m,s > 0$. Let $\tau > 0$ and $V \subseteq [n]$ and define $$\eta := \inf_{x \in \ker(\Theta) \setminus \{0\}} \frac{\dist_{\tau,V}(x)}{\norm{x}_2}.$$
Also, let $\lambda$ be the smallest non-zero eigenvalue of $\Theta$. Suppose that the following hold:
\begin{itemize}
    \item The rows (and columns) of $\Theta$ are $k$-sparse
    \item $r := \dim \ker(\Theta) > 2m$
    \item $k > 3(|V|/\tau)\log(n)$
\end{itemize}

Pick any positive $$\epsilon < \frac{\eta^2 \lambda^3}{16200n^3 \norm{\Theta}_F^2}.$$ Define $\tilde{\Theta} = \Theta + \epsilon I$. For any preconditioner $S \in \RR^{n \times s}$, there is some $k$-sparse signal such that $S$-preconditioned Lasso fails at exact recovery with probability at least $1 - \frac{4m}{3r} - \exp(-\Omega(m))$, over the randomness of independent covariates $X_1,\dots,X_m \sim N(0,\Thetat^{-1})$ and with noiseless responses $Y_i = \langle w^*, X_i \rangle$.
\end{theorem}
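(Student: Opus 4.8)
The plan is to derive this from the two main results of the preceding subsections, via a simple dichotomy on the weak compatibility ratio. Write $\Sigma = \Thetat^{-1}$; since $\Theta$ is PSD and $\epsilon>0$, the matrix $\Sigma$ is positive definite, so every weak-compatibility quantity below is well-defined. Set $\gamma = \beta^{(1)}_{\Sigma,S,m,k}/\alpha^{(1)}_{\Sigma,S,k}$ (this is exactly the quantity called $\gamma$ in Theorem~\ref{thm:columns-dense-or-small-l1}). We may assume $\eta>0$ and $\Theta\neq 0$ (hence $\lambda>0$), since otherwise the claimed upper bound on $\epsilon$ is unsatisfiable and the theorem holds vacuously.

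First, if $\gamma > 18$, I would apply Theorem~\ref{theorem:ill-conditioned-lasso-failure-l1} directly to the positive-definite covariance $\Sigma = \Thetat^{-1}$: it produces a $k$-sparse $w^*$ on which the $S$-preconditioned Lasso (equivalently, since the responses are noiseless, $S$-preconditioned basis pursuit) exactly recovers $w^*$ with probability at most $\exp(-\Omega(m))$. Hence it fails with probability at least $1 - \exp(-\Omega(m)) \ge 1 - \tfrac{4m}{3r} - \exp(-\Omega(m))$, which is the claimed bound.

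Otherwise $\gamma \le 18$, and I would invoke Theorem~\ref{theorem:dense-lasso-failure-l1} (which imports the hypotheses of Theorem~\ref{thm:columns-dense-or-small-l1}). All of the structural hypotheses required there — $\Theta$ PSD and $k$-sparse, $r = \dim\ker(\Theta) > 2m$, $\eta > 0$, and $k > 3(|V|/\tau)\log n$ — are precisely the standing hypotheses of the present theorem. The only remaining condition to check is $\delta := \sqrt{\epsilon\gamma n}\cdot 3\norm{\Theta}_F\,\eta^{-1}\lambda^{-3/2} < \tfrac{1}{10n}$; squaring shows this is equivalent to $\epsilon < \eta^2\lambda^3/(900\,n^3\gamma\norm{\Theta}_F^2)$, and since $\gamma\le 18$ it is implied by $\epsilon < \eta^2\lambda^3/(16200\,n^3\norm{\Theta}_F^2)$ — which is exactly our hypothesis on $\epsilon$ (the constant $16200 = 900\cdot 18$ is chosen for this purpose). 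Thus Theorem~\ref{theorem:dense-lasso-failure-l1} applies and yields a $k$-sparse signal on which $S$-preconditioned Lasso fails at exact recovery with probability at least $1 - \tfrac{4m}{3r} - \exp(-\Omega(m))$.

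Combining the two cases proves the theorem. There is no genuine obstacle here: the substantive content has already been established in Theorems~\ref{thm:columns-dense-or-small-l1} and~\ref{theorem:dense-lasso-failure-l1} (the structural fact that any preconditioner compatible with such a $\Theta$ must have dense columns, together with its conversion into a sample-complexity lower bound), and the present argument merely dispatches the complementary regime in which no preconditioner can be compatible at all. The one thing to be careful about is the constant bookkeeping in the second case, ensuring the hypothesis on $\epsilon$ exactly suffices to force $\delta<1/(10n)$ once the compatibility ratio is bounded by $18$.
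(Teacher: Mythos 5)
Your proof is correct and takes exactly the same route as the paper: a dichotomy on the weak compatibility ratio $\gamma$ at the threshold $18$, invoking Theorem~\ref{theorem:ill-conditioned-lasso-failure-l1} when $\gamma>18$ and Theorem~\ref{theorem:dense-lasso-failure-l1} when $\gamma\le 18$, with the constant $16200 = 900\cdot 18$ chosen precisely so the hypothesis on $\epsilon$ forces $\delta<1/(10n)$ in the second case. Your additional note about the vacuous case $\eta=0$ (or $\Theta=0$) is a harmless extra bit of care that the paper leaves implicit.
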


\begin{proof}
Fix $S \in \RR^{n \times s}$ and let $\gamma = \gamma_{\Thetat^{-1},s,m,k}^{(1)}$. If $\gamma > 18$, then by Theorem~\ref{theorem:ill-conditioned-lasso-failure-l1}, there is a $k$-sparse signal $w^* \in \RR^n$ such that the $S$-preconditioned Lasso exactly recovers $w^*$ from $m$ samples with probability at most $\exp(-\Omega(m))$.

On the other hand, if $\gamma \leq 18$, then by choice of $\epsilon$, the quantity $\delta := \sqrt{\epsilon \gamma n} \cdot 3\norm{\Theta}_F \eta^{-1}\lambda^{-3/2}$ satisfies $\delta < 1/(10n)$. Thus, Theorem~\ref{theorem:dense-lasso-failure-l1} implies that there is some $k$-sparse signal $w^* \in \RR^n$ such that the $S$-preconditioned Lasso fails to exactly recover $w^*$ with probability at least $1 - \frac{4m}{3r} - \exp(-\Omega(m))$.
\end{proof}

\section{The expander graph construction}\label{section:expander}

In this section, we instantiate Theorem~\ref{theorem:lower-bound}, the lower bound proved in the last two sections, with a precision matrix supported on an \emph{expander graph}. While this will not help us prove the main lower bound result for high-treewidth graphs, it is fairly simple compared to the full proof, and moreover achieves a stronger sample complexity lower bound. We'll prove the following existence theorem, and at the end of the section we'll apply it to Theorem~\ref{theorem:lower-bound}.

\begin{theorem}\label{theorem:expander}
Let $n \in \NN$. There is a graph $G$ with maximum degree $O(\log^2 n)$, a density parameter $k = \Omega(n/\log n)$, and a positive semi-definite matrix $\Theta$ supported on $G$, with the following properties:
\begin{itemize}
    \item $\dim \ker(\Theta) \geq n/2$
    \item For any $x \in \ker(\Theta)$ and any $k$-sparse $y \in \RR^n$, it holds that $$\norm{x-y}_2 \geq \frac{4}{5\sqrt{n}}\norm{x}_2.$$
    \item $\norm{\Theta}_F \leq O(n\log n)$
    \item The smallest nonzero eigenvalue $\lambda$ of $\Theta$ satisfies $\lambda \geq \Omega(n^{-2-\delta})$ for any constant $\delta>0$.
\end{itemize}
\end{theorem}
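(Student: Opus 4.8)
The plan is to take $\Theta = M^T M$ where $M \in \RR^{m \times n}$, $m = \lfloor n/2 \rfloor$, is a signed biadjacency matrix of a random unbalanced bipartite expander: each of the $n$ ``column'' vertices is joined to $d = \Theta(\log n)$ uniformly random ``row'' vertices, and each edge $(l,i)$ carries an independent Rademacher sign $\xi_{li}$, so $M_{li} = \xi_{li}$ if $i$ is joined to $l$ and $M_{li}=0$ otherwise. We let $G$ be the support graph of $\Theta$, i.e. $i \sim j$ iff columns $i,j$ of $M$ have a common nonzero coordinate. Three of the four bullets are then essentially bookkeeping. Since $\Theta = M^TM$ we have $\ker\Theta = \ker M$, and as $M$ has only $m = \lfloor n/2\rfloor$ rows, $\dim\ker\Theta \ge n/2$. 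For the degree bound: each column of $M$ has exactly $d$ nonzeros, and a Chernoff bound shows every row has $O(d) = O(\log n)$ nonzeros with high probability; $G$ is the union of $m$ cliques (one per row), each of size $O(\log n)$, and each vertex lies in at most $d$ of them, so $\deg_G = O(\log^2 n)$. For the Frobenius bound, Gershgorin applied to $MM^T$ --- whose diagonal is $O(\log n)$ and whose off-diagonal row sums are $O(\log^2 n)$ because distinct rows of $M$ rarely share columns --- gives $\norm{M}_{op} = O(\log n)$, hence $\norm{\Theta}_F = \norm{M^TM}_F \le \norm{M}_{op}^2 \sqrt{n} = O(\sqrt n\,\polylog(n)) = O(n\log n)$. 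We then fix the density parameter $k = c n / \log n$ for a sufficiently small absolute constant $c$; for such $k$ the standard probabilistic construction of unbalanced expanders produces, with high probability, a $(k,\epsilon)$-expander with expansion constant $\epsilon \le 1/20$, left-degree $d = \Theta(\log n)$, and $m \le n/2$ right vertices.

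The robust-density bullet follows from the $\ell_1$-restricted isometry property of expander matrices (in the spirit of Berinde--Gilbert--Indyk--Karloff--Strauss), which holds here with high probability even in the signed case: for every $k$-sparse $z$, $(1-2\epsilon) d \norm{z}_1 \le \norm{Mz}_1 \le d\norm{z}_1$, where the upper bound holds for all $z$ since each column of $M$ has $d$ unit-magnitude entries. Given this, suppose for contradiction that some nonzero $x \in \ker\Theta$ with $\norm{x}_2 = 1$ satisfies $\norm{x - y}_2 < \tfrac{4}{5\sqrt n}$ for some $k$-sparse $y$, and let $T = \supp(y)$. Then $\norm{x_{T^c}}_2 \le \norm{x - y}_2 < \tfrac{4}{5\sqrt n}$, so $\norm{x_T}_2^2 > 1 - \tfrac{16}{25n} > \tfrac{24}{25}$. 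As $x_T$ is $k$-sparse, $\norm{M x_T}_1 \ge (1-2\epsilon) d \norm{x_T}_1 \ge (1-2\epsilon) d \norm{x_T}_2 > \tfrac{24}{25}(1-2\epsilon) d$. But $Mx = 0$ forces $M x_T = -M x_{T^c}$, so $\norm{M x_T}_1 = \norm{M x_{T^c}}_1 \le d \norm{x_{T^c}}_1 \le d \sqrt n \norm{x_{T^c}}_2 < \tfrac{4}{5} d$. These are incompatible when $1-2\epsilon \ge \tfrac56$, i.e. whenever $\epsilon \le 1/12$; so the assumption $\epsilon \le 1/20$ rules this out, and the desired inequality $\norm{x-y}_2 \ge \tfrac{4}{5\sqrt n}\norm{x}_2$ holds for all $k$-sparse $y$ and all $x \in \ker\Theta$.

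The genuinely delicate bullet --- and the one I expect to be the main obstacle --- is the lower bound $\lambda = \Omega(n^{-2-\delta})$ on the smallest nonzero eigenvalue of $\Theta$, equivalently $\sigma_{\min}^{\ne 0}(M) \ge \Omega(n^{-1-\delta/2})$. Purely algebraic bounds are useless here: multiplying $\Theta$ by a suitable integer makes it an integer PSD matrix whose product of nonzero eigenvalues is a positive integer, but this only yields $\lambda \ge \norm{M}_{op}^{-\Omega(n)}$ (exponentially small) since $\Theta$ has $\Theta(n)$ nonzero eigenvalues; and the RIP-type structure forcing $\ker\Theta$ to be robustly dense is precisely what precludes any ``triangular'' shortcut. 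The route I would take is probabilistic: by the min--max characterization, $\sigma_{\min}^{\ne 0}(M) \ge \sigma_{\min}(M_S)$ for any set $S$ of $\mathrm{rank}(M)$ columns, so it suffices to exhibit a (roughly) square submatrix $M_S$ that is invertible with $\sigma_{\min}(M_S) \ge n^{-O(1)}$. Taking the expander (nearly) biregular and $S$ a balanced half of the columns makes $M_S$ a square sparse $\pm 1$ random matrix with $\Theta(\log n)$ nonzeros per row, and in this regime ($np = \Theta(\log n)$) invertibility together with a polynomial lower bound on the least singular value is exactly the content of the theory of invertibility of sparse random matrices (Basak--Rudelson / Cook / Tikhomirov-type arguments, via distance-to-subspace and anti-concentration estimates). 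Much stronger bounds are available there, but the weak bound $n^{-1-\delta/2}$ --- hence $\lambda \ge n^{-2-\delta}$ for every constant $\delta > 0$ --- is all that is needed; making this step fully self-contained (or pinning down the cleanest citable statement) is the bulk of the work. Finally, a union bound over the $o(1)$-probability failure events for the four items proves Theorem~\ref{theorem:expander}, after which one feeds $\Theta$ into Theorem~\ref{theorem:lower-bound} with $V = [n]$, $\tau$ just below $k$, $\eta = \tfrac{4}{5\sqrt n}$, and $\epsilon$ polynomially small.
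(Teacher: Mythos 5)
Your plan is essentially the paper's: set $\Theta = M^TM$ for a sparse random $M$ with roughly $n/2$ rows, deduce the kernel dimension immediately, use the $\ell_1$-restricted isometry property of bipartite expanders (Berinde et al.) for the robust-density bullet, and invoke least-singular-value theory for sparse random matrices for the conditioning bullet. The degree, Frobenius-norm, and robust-density arguments are all sound (the paper gets $\norm{\Theta}_F^2 \le (\sum_i \norm{M_i}_2^2)^2 = O(n^2\log^2 n)$ without needing any operator-norm control, so your Gershgorin detour is unnecessary but not wrong), and you correctly point to the interlacing fact $\sigma_{\min}^{\neq 0}(M) \ge \sigma_{\min}(M_S)$ for a square full-rank column subset $S$.

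The gap is in the least-singular-value step, and it is partly self-inflicted. You replace the paper's i.i.d.\ Bernoulli($p$) matrix with a column-$d$-regular, Rademacher-signed one. That change buys nothing for the other three bullets (the $\ell_1$-RIP proof is sign-insensitive, the degree and Frobenius bounds only use that entries lie in $\{0,\pm 1\}$), but it takes you out of the regime where you can simply cite. The result the paper uses, Theorem~1.1 of Basak--Rudelson \cite{basak2021sharp}, is stated for matrices with i.i.d.\ Bernoulli entries at density $p\gtrsim (\log m)/m$; it applies verbatim to the square row-submatrix $(M^T)_{[m]}$, giving $\sigma_{\min}(M^T) \ge c\,m^{-1 - C/\log\log m}$, which squares to the claimed $\lambda = \Omega(n^{-2-\delta})$. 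A column-regular matrix with random signs has row-dependent entries and is \emph{not} an i.i.d.\ model; the Basak--Rudelson/Tikhomirov line of work does not cover it out of the box, and the adjacent results for $d$-regular digraphs (Cook, Litvak--Lytova--Tikhomirov--Tomczak-Jaegermann--Youssef) are for square, genuinely bi-regular sign matrices with constant degree, not for the induced column submatrix of a rectangular left-regular bipartite graph. You flag this step as ``the bulk of the work,'' which is honest, but the cleanest fix is to drop the regularity and signs and use i.i.d.\ Bernoulli($p$) throughout, exactly as the paper does: then the expansion and degree lemmas follow from Chernoff bounds, the $\ell_1$-RIP from \cite{berinde2008sparse,berinde2008combining}, and the conditioning bound from a one-line application of \cite{basak2021sharp}, with no new random-matrix theory required.
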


The proof idea is as follows. We define $\Theta = M^T M$, where $M \in \RR^{n/2 \times n}$ has independent entries $M_{ij} \sim \text{Ber}(p)$ and $p = \Theta((\log n)/n)$, with the exact constant to be determined. The kernel of $\Theta$ is then the solution set of $n/2$ sparse random equations, which intuitively should not contain sparse or nearly-sparse vectors. Indeed, this can be formalized via the theory of \emph{expander graphs}.

By classical arguments, with high probability, $M$ is the adjacency matrix of a sparse, nearly regular unbalanced bipartite expander graph. Sparsity of $\Theta$ follows from row and column sparsity of $M$. Since $M$ has only $n/2$ rows, it's immediate that $\dim \ker(\Theta) \geq n/2$. The fact that no vector in $\ker(\Theta)$ is nearly sparse is due to an uncertainty principle about expander graphs (originally used to prove \emph{success} of compressed sensing techniques \cite{berinde2008sparse}), and the least nonzero eigenvalue bound holds with high probability due to recent work on least singular values of sparse random matrices \cite{basak2021sharp}.

To be more formal, let $n,m,d \in \NN$ with $m \leq n$, and $p = d/m \in (0,1)$ be chosen later. Define a random matrix $M \in \RR^{m \times n}$ with independent entries $M_{ij} \sim \text{Ber}(p)$. This defines a random bipartite graph with left vertex set $[n]$ and right vertex set $[m]$. Based on this interpretation, we make the following standard graph-theoretic definitions.

\begin{definition}
For $S \subseteq [n]$, define $N(S) \subseteq [m]$, the \emph{neighborhood} of $S$, to be $\{y \in [m]: \exists x \in S: M_{yx} = 1\}.$ Conversely, for $T \subseteq [m]$ define $N'(T) \subseteq [n]$ to be the neighborhood of $T$, i.e. $\{x \in [n]: \exists y \in T: M_{yx} = 1\}$. For sets $S \subseteq [n]$ and $T \subseteq [m]$, define $E(S:T) = \{(i,j): M_{ji} = 1\}$.
\end{definition}

The following result is folklore (see e.g. \cite{alon2004probabilistic}):

\begin{lemma}[Expansion of a random bipartite graph]\label{lemma:expansion}
Let $\epsilon \in (0,1)$. Let $k \leq \epsilon/(2p)$. Suppose that $p \geq 32\epsilon^{-2}(\log n)/m$. It holds with probability at least $1-2/n$ that for all $S \subseteq [n]$ with $|S| \leq k$, $$|N(S)| \geq d(1-\epsilon)|S|.$$
\end{lemma}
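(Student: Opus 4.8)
This is a standard first-moment (union bound over subsets) argument for vertex expansion of a random bipartite graph, so the proof is a routine probabilistic calculation; the only care needed is to track the constants so that $32\epsilon^{-2}(\log n)/m$ suffices. First I would fix a set $S \subseteq [n]$ with $|S| = s \le k$ and bound the probability that $|N(S)| < d(1-\epsilon)s$. The natural way is to bound $\mathbb{E}|E(S:N(S)^c)|$ or, more cleanly, to directly control $|N(S)|$ via a "many collisions" event: if $|N(S)| \le (1-\epsilon)ds$, then among the $\mathrm{Bin}(ms, p)$ total edges incident to $S$ (expected number $\approx ps\cdot m = ds$, but we actually want a lower bound on edges, which is delicate), there must be at least $\epsilon d s$ "repeat" edges landing in already-hit right vertices. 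The cleanest route I would actually take is the following: the event $|N(S)| < (1-\epsilon)ds$ implies there is a set $T \subseteq [m]$ with $|T| = \lceil (1-\epsilon)ds \rceil$ such that $N(S) \subseteq T$, i.e. every one of the $n$... no — such that all edges from $S$ stay inside $T$. Then
\[
\Pr[|N(S)| < (1-\epsilon)ds] \le \binom{m}{(1-\epsilon)ds}\Pr[\text{all edges from }S\text{ land in a fixed }T].
\]
For a fixed $T$ of that size, each of the $s$ left-vertices in $S$ independently has all its edges inside $T$; the probability that a single left vertex $v$ has $N(\{v\}) \subseteq T$ is $\prod_{j \notin T}(1-p) = (1-p)^{m - |T|} \le \exp(-p(m-|T|))$, and $m - |T| \ge m - ds \ge m/2$ using $s \le k \le \epsilon/(2p) = \epsilon m/(2d)$ so $ds \le \epsilon m / 2 \le m/2$. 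Hence the per-vertex probability is at most $e^{-pm/2} = e^{-d/2}$, and over all $s$ vertices it is at most $e^{-ds/2}$.

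Putting the pieces together, I would bound
\[
\Pr[\exists S, |S| = s : |N(S)| < (1-\epsilon)ds] \le \binom{n}{s}\binom{m}{(1-\epsilon)ds} e^{-ds/2} \le n^s \cdot m^{(1-\epsilon)ds} \cdot e^{-ds/2},
\]
and take logs: the exponent is $s\log n + (1-\epsilon)ds\log m - ds/2$. For this to be $\le -2\log n$ (say) for all $1 \le s \le k$, it suffices that $d/2 > \log n + (1-\epsilon)d\log m + (\text{slack})$, which fails as written — so the crude $N(S)\subseteq T$ bound is too lossy, and I would instead use the sharper "edge collision" version: the probability that the $\le ds$ edges from $S$ occupy $\le (1-\epsilon)ds$ right-vertices forces $\ge \epsilon ds$ edges to be "collisions," each contributing a factor roughly $ds/m \le \epsilon/2$; this yields a bound like $\binom{n}{s}\binom{ds}{\epsilon ds}(\epsilon/2)^{\epsilon ds}$, whose log is $\le s\log n + \epsilon ds \log(e/\epsilon) + \epsilon ds\log(\epsilon/2) = s\log n - \Omega(\epsilon d s)$ — wait, one must be careful that $\log(e/\epsilon) + \log(\epsilon/2) = \log(e/2) > 0$; the cancellation only works if the collision probability is small enough, i.e. one should write the collision factor as $\le (s d/m)^{\epsilon d s/2}$ and use $sd/m \le \epsilon/2$ to get $(\epsilon/2)^{\epsilon ds/2}$, so the log is $\le s\log n - (\epsilon ds/2)\log(2/\epsilon) + (\text{binomial entropy term } \le \epsilon ds\log(e/\epsilon))$. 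The main obstacle is exactly this: making the collision count and the choice of which right-vertices are hit line up so that the $\epsilon ds$ saved in the collision factor dominates both $\binom{n}{s} \le n^s$ (requiring $\epsilon d \gtrsim \log n$, consistent with the hypothesis $p \ge 32\epsilon^{-2}\log n / m$) and the entropy of choosing the collision structure.

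Finally, I would sum over $s$ from $1$ to $k$: each term is at most, say, $n^{-3}$ once the constant $32$ in the hypothesis is used to absorb the entropy factors (here is where the precise constant $32\epsilon^{-2}$ is spent — one needs roughly $\epsilon d \ge c \log n$ with $c$ a small absolute constant, and the $\epsilon^{-2}$ versus $\epsilon^{-1}$ discrepancy is the usual slack one builds in to handle both the $\binom{n}{s}$ and the $(e/\epsilon)^{\epsilon ds}$ entropy terms simultaneously), and so the union over all $s$ is at most $k \cdot n^{-3} \le 2/n$. I would present the collision-based estimate as the main computation, state the standard binomial/entropy inequalities $\binom{n}{s} \le (en/s)^s$ and $\binom{ds}{\ell} \le (eds/\ell)^\ell$ without proof, and note that all constants are non-optimal. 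Since the statement is labeled folklore and cites \cite{alon2004probabilistic}, a clean two-paragraph writeup at this level of detail is appropriate, with the collision argument being the one genuinely load-bearing step.
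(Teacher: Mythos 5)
Your proposal contains a genuine gap, and you identify it yourself but do not resolve it. In the collision-counting version of the argument, you note that the collision factor $(\epsilon/2)^{\epsilon ds/2}$ contributes $-(\epsilon ds/2)\log(2/\epsilon)$ to the log-probability, while the binomial entropy of selecting the collision positions costs $+\epsilon ds\log(e/\epsilon)$; the gain is only half the cost, so the argument does not close, and no choice of the hypothesis constant $32\epsilon^{-2}$ can fix this because the deficit is structural, not a question of constants. There is a second, unaddressed difficulty: in this Bernoulli model (each $M_{ij} \sim \mathrm{Ber}(p)$ independently) the number of edges leaving $S$ is itself a $\mathrm{Bin}(ms,p)$ random variable, so the statement ``$\ge\epsilon ds$ of the edges must be collisions'' presupposes control of the edge count, and even after conditioning on it the collision indicators are not independent, so one would need a martingale or negative-correlation argument that the proposal does not supply.

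The paper sidesteps both issues by decomposing over right vertices rather than over edges. Fix $S$ with $|S| = \ell \le k$. Then $\mathbf{1}[y \in N(S)]$ depends only on row $y$ of $M$, so the events $\{y \in N(S)\}_{y\in[m]}$ are independent; hence $|N(S)| = \sum_{y\in[m]}\mathbf{1}[y\in N(S)]$ is a sum of $m$ i.i.d.\ Bernoulli variables with parameter $1-(1-p)^{\ell}$. The hypothesis $k \le \epsilon/(2p)$ enters exactly to bound this parameter from below: $1-(1-p)^\ell \ge 1-e^{-p\ell} \ge p\ell - (p\ell)^2 \ge (1-\epsilon/2)p\ell$. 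A one-sided Chernoff bound then gives $\Pr[|N(S)| < (1-\epsilon)p\ell m] \le \exp(-\epsilon^2 p\ell m/16)$, and the hypothesis $p \ge 32\epsilon^{-2}(\log n)/m$ yields $\binom{n}{\ell}\exp(-\epsilon^2 p\ell m/16) \le \exp(-\ell\log n)$; summing over $\ell \le k$ gives the claimed $1-2/n$. This decomposition has no binomial-entropy term to cancel and absorbs the randomness of the edge count entirely into the Bernoulli parameter, which is why it works where the collision count stalls. You should look for this right-vertex-by-right-vertex (or ``column-wise'') independence before reaching for an encoding argument.
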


\begin{proof}
For $1 \leq l \leq k$ let $q_l$ be the probability that there exists some $S \subseteq [n]$ with $|S| = l$ and $|N(S)| < d(1-\epsilon)|S|$. To bound this probability, fix $S \subseteq [n]$ with $|S| = l$. For any $y \in [m]$, we have $$\Pr[y \in N(S)] = 1 - (1 - p)^l \geq 1 - e^{-pl} \geq pl - (pl)^2 \geq (1-\epsilon/2)pl$$ so long as $0 \leq pl \leq \epsilon/2$. Thus, by the Chernoff bound, $$\Pr[|N(S)| < (1-\epsilon)plm] \leq \exp(-(\epsilon/2)^2(1-\epsilon/2)plm/2) \leq \exp(-\epsilon^2plm/16).$$
By the union bound, if $\epsilon^2pm/16 \geq 2\log n$, we have that $$q_l \leq \binom{n}{l} \exp(-\epsilon^2 plm/16) \leq \exp(l\log n - \epsilon^2 plm/16) \leq \exp(-l\log n).$$
Finally, by a union bound over $1 \leq l \leq k$, the lemma holds with probability at least $$1 - \sum_{l=1}^k q_l \geq 1 - \sum_{l=1}^k n^{-l} \geq 1 - \frac{2}{n}$$ as claimed.
\end{proof}

We'll also need the following simple result:

\begin{lemma}[Degree bounds]\label{lemma:left-degree}
Let $\epsilon \in (0,1)$. Suppose that $p \geq 6\epsilon^{-2}(\log n)/m$. It holds with probability at least $1 - 1/n$ that $$|N(x)| \leq d(1+\epsilon)$$ for all $x \in [n]$. Similarly, it holds with probability at least $1 - 1/n$ that $|N'(y)| \leq (n/m)d(1+\epsilon)$ for all $y \in [m]$.
\end{lemma}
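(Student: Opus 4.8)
The plan is a direct application of the multiplicative Chernoff bound followed by a union bound; it is the same recipe used in the proof of Lemma~\ref{lemma:expansion}, only applied to individual rows and columns of $M$ rather than to unions of columns.

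First, for the left-degree bound: fix a left vertex $x \in [n]$. By definition $|N(x)| = \#\{y \in [m] : M_{yx} = 1\}$ is a sum of $m$ independent $\mathrm{Ber}(p)$ random variables, hence $|N(x)| \sim \mathrm{Binom}(m,p)$ with mean $mp = d$. The multiplicative Chernoff bound gives, for $\epsilon \in (0,1)$,
\[ \Pr\bigl[\,|N(x)| \geq (1+\epsilon)d\,\bigr] \leq \exp(-\epsilon^2 d/3). \]
The hypothesis $p \geq 6\epsilon^{-2}(\log n)/m$ is precisely $\epsilon^2 d/3 \geq 2\log n$, so the right-hand side is at most $n^{-2}$. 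A union bound over the $n$ choices of $x$ then shows $\Pr[\exists x : |N(x)| > (1+\epsilon)d] \leq n \cdot n^{-2} = 1/n$, which is the first claim.

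The right-degree bound is identical in structure. For $y \in [m]$ we have $|N'(y)| = \#\{x \in [n] : M_{yx} = 1\} \sim \mathrm{Binom}(n,p)$ with mean $np = (n/m)d$, so Chernoff gives $\Pr[|N'(y)| \geq (1+\epsilon)(n/m)d] \leq \exp(-\epsilon^2 np/3)$. Since $m \leq n$ we have $np \geq mp = d \geq 6\epsilon^{-2}\log n$, so this probability is again at most $n^{-2}$; a union bound over the $m \leq n$ choices of $y$ yields the bound with probability at least $1 - m/n^2 \geq 1 - 1/n$.

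There is no real obstacle here. The only points requiring a little care are (i) bookkeeping the constant in the Chernoff exponent so that it matches the stated hypothesis $p \ge 6\epsilon^{-2}(\log n)/m$ exactly, and (ii) observing that it is the assumption $m \le n$ that lets the same threshold on $p$ control both the left degrees (mean $d$) and the right degrees (mean $(n/m)d \ge d$).
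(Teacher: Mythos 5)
Your proof is correct and follows the same route as the paper: a multiplicative Chernoff bound on each binomial degree (mean $mp=d$ on the left, $np=(n/m)d$ on the right), verifying $\epsilon^2 d/3 \ge 2\log n$ from the hypothesis on $p$, and then a union bound over the $n$ (resp.\ $m\le n$) vertices. The bookkeeping and the use of $m\le n$ for the right-degree exponent are exactly as in the paper's argument.
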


\begin{proof}
Fix $x \in [n]$. By the Chernoff bound, $$\Pr[|N(x)| > (1+\epsilon)pm] \leq \exp(-\epsilon^2 pm/3).$$
Since $\epsilon^2 pm/3 \geq 2\log n$, this bound is at most $1/n^2$. Union bounding over $x \in [n]$ completes the proof of the first claim.

Similarly, fix $y \in [m]$. By the Chernoff bound, $$\Pr[|\{x: y \in N(x)\}| > (1+\epsilon)pn] \leq \exp(-\epsilon^2 pn/3).$$
Since $\epsilon^2 pn/3 \geq 2\log n$, this bound is at most $1/n^2$, and union bounding over $y \in [m]$ completes the proof.
\end{proof}

And we need the following result from random matrix theory:

\begin{lemma}[Theorem~1.1 of \cite{basak2021sharp}]\label{lemma:basak}
There are constants $c,C>0$ with the following property. Suppose that $1/2 \geq p \geq (\log m)/m$. Then $$\Pr[\sigma_\text{min}(M^T) \leq c m^{-1 - \frac{C}{\log \log m}}] \leq \frac{1}{2}$$ for sufficiently large $m$. 
\end{lemma}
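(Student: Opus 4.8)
Since transposition preserves singular values, $\sigma_\text{min}(M^T)=\sigma_\text{min}(M)=\min_{u\in\RR^m,\ \|u\|_2=1}\|M^Tu\|_2$, and writing $R_1,\dots,R_m\in\RR^n$ for the independent, sparse, Bernoulli rows of $M$, the quantity to bound from below is $\min_{\|u\|_2=1}\bigl\|\sum_{i=1}^m u_iR_i\bigr\|_2$. The plan is to follow the standard route for smallest singular values of random matrices --- the compressible/incompressible dichotomy of Rudelson--Vershynin together with the sparse-regime refinements of Basak and Basak--Rudelson --- exploiting that we only need a lower bound of order $m^{-1-C/\log\log m}$ with probability $\ge 1/2$ (not a high-probability or sharp-exponent estimate), which substantially relaxes the bookkeeping. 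Throughout one works on the intersection of a few $1-o(1)$-probability ``good'' events for the random bipartite graph defined by $M$: by Lemma~\ref{lemma:expansion} and Lemma~\ref{lemma:left-degree} the graph is a near-regular bipartite expander (small left-sets expand by $\approx d=pm$, and all degrees are $\Theta(d)$ on the left and $\Theta(dn/m)$ on the right); and, since $p\ge(\log m)/m$, there is no isolated right vertex and no small degenerate local configuration (the obstructions that would force $\sigma_\text{min}=0$ below the threshold).

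Step one handles the compressible vectors. Fix small $\delta,\rho\in(0,1)$ and let $\mathrm{Comp}(\delta,\rho)$ be the unit vectors within $\ell_2$-distance $\rho$ of some $\delta m$-sparse vector. For $u=u_{\mathrm{sp}}+u_{\mathrm{rest}}$ with $u_{\mathrm{sp}}$ $\delta m$-sparse and $\|u_{\mathrm{rest}}\|_2\le\rho$, the vector $\sum_i(u_{\mathrm{sp}})_iR_i$ is supported on $N(\supp u_{\mathrm{sp}})$, which by expansion has size $\ge d(1-\epsilon)\delta m$, and most of its nonzero coordinates equal a single entry $(u_{\mathrm{sp}})_i$ (few collisions, again by expansion), giving $\|\sum_i(u_{\mathrm{sp}})_iR_i\|_2\gtrsim\sqrt d\,\|u_{\mathrm{sp}}\|_2$; the remainder contributes at most $\|M\|_{\mathrm{op}}\rho\lesssim(\sqrt{dn/m}+\sqrt d)\rho$ with high probability, so taking $\rho$ a small multiple of $\sqrt{m/n}$ makes $\|M^Tu\|_2\gtrsim\sqrt d$ for all $u\in\mathrm{Comp}(\delta,\rho)$ simultaneously, after a union bound over a $\rho$-net of the low-complexity set $\mathrm{Comp}(\delta,\rho)$ (cardinality $e^{O(\delta m\log(1/\delta\rho))}$, beaten by the $e^{-\Omega(dm)}$ concentration tail).

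Step two handles the incompressible vectors, where the target exponent $m^{-1-C/\log\log m}$ actually appears. Here I would use the distance bound: any incompressible unit $u$ has some coordinate with $|u_k|\ge c/\sqrt m$, and deterministically $\|M^Tu\|_2\ge|u_k|\,\dist\!\bigl(R_k,\,H_k\bigr)$ with $H_k=\vspan\{R_i:i\ne k\}$, so (with no net needed over incompressible vectors) it suffices to show $\min_k\dist(R_k,H_k)\ge m^{-1/2-C/\log\log m}$ with probability $\ge 1/2$. Conditioning on $(R_i)_{i\ne k}$, choose a unit vector $v\in H_k^\perp$ --- nontrivial since $n\ge m$ --- which is independent of $R_k$; then $\dist(R_k,H_k)\ge|\langle v,R_k\rangle|$, a weighted sum of independent $\mathrm{Ber}(p)$ variables. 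The crux is a \emph{sparse-regime anti-concentration} estimate for this sum: in the dense regime it obeys a Berry--Esseen / Erd\H{o}s--Littlewood--Offord bound of Gaussian type, but near $p\approx(\log m)/m$ the sum equals $0$ with probability $\approx(1-p)^{|\supp v|}$, so one must first use the expander structure of the remaining rows (on the good events above) to rule out that $v$ is forced onto too few coordinates, and then run a L\'evy-concentration-function argument over the spread part of $v$.

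The step I expect to be the main obstacle is precisely this last one: producing the sparse anti-concentration bound and patching over the residual near-degenerate local structures of the graph, which is exactly where the polynomial-in-$m$ loss and the specific exponent $1+C/\log\log m$ come from, and where Basak's analysis does its delicate work. Everything else --- the operator-norm bound, the net for compressible vectors, and the deterministic reduction via $\dist(R_k,H_k)$ --- is essentially routine once the good events are in hand, and the weak success probability $1/2$ demanded by the statement means we never have to optimize constants or exponents.
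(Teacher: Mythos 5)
Your proposal sets out to reprove the cited result from scratch, whereas the paper's own proof is a two-line reduction: the lemma is, by design, a black-box invocation of Theorem~1.1 of \cite{basak2021sharp}, applied to the leading $m\times m$ square submatrix $(M^T)_{[m]}$, followed by the monotonicity observation $\sigma_\text{min}(M^T)\ge\sigma_\text{min}((M^T)_{[m]})$ (dropping rows of $M^T$ can only decrease $\|M^Tu\|_2$). Nothing beyond that citation is needed or intended here.

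Judged as a standalone argument, your sketch has a genuine gap, and you name it yourself: the entire content of the theorem lives in the incompressible case, in the anti-concentration of $\langle v,R_k\rangle$ where $v$ is a unit normal to $H_k=\vspan\{R_i:i\ne k\}$. In the regime $p\asymp(\log m)/m$ this inner product equals $0$ with probability $(1-p)^{|\supp v|}$, which can be of constant order unless one first proves a structure theorem showing that, on the good graph events, every such normal vector is spread over $\Omega(m)$ coordinates of comparable magnitude; one then still needs a L\'evy-concentration estimate for sparse weighted Bernoulli sums that loses only the factor $m^{-C/\log\log m}$. Neither piece follows from your expansion lemmas (which control $\ker M$, not the normals to spans of $m-1$ of the $n$-dimensional rows of $M^T$), and supplying them is precisely the delicate work of \cite{basak2021sharp}; an outline that defers this step has not established the claimed exponent. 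Two smaller points: the opening identity $\sigma_\text{min}(M^T)=\sigma_\text{min}(M)$ is false for rectangular $M$ with $n>m$, where $\sigma_\text{min}(M)=0$ — only the characterization $\sigma_\text{min}(M^T)=\min_{\|u\|_2=1}\|M^Tu\|_2$ that you actually use is correct. And since the statement is explicitly attributed to \cite{basak2021sharp}, the efficient proof is to quote it as the paper does rather than to reconstruct it.
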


\begin{proof}
We apply Theorem~1.1 from \cite{basak2021sharp} to the square submatrix $(M^T)_{[m]}$ and simply note that $\sigma_\text{min}((M^T)_{[m]}) \leq \sigma_\text{min}(M^T)$.
\end{proof}

Suppose that the conclusions of the above lemmas hold. Then $M$ is the adjacency matrix of a nearly regular bipartite expander graph. The next two lemmas are due to \cite{berinde2008sparse} and \cite{berinde2008combining}, slightly generalized to accommodate that our expander graph has left-degrees that are not exactly $d$ but rather $(1\pm \epsilon)d$. The proofs are essentially unchanged but we include them for completeness.

\begin{lemma}[Theorem~10 of \cite{berinde2008combining}]\label{lemma:rip-1}
Let $x \in \RR^n$ be $k$-sparse. Then $$\norm{Mx}_1 \geq d(1-5\epsilon)\norm{x}_1.$$
\end{lemma}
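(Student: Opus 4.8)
The plan is to follow the now-standard argument showing that bipartite expanders satisfy the RIP-1 property (due to \cite{berinde2008combining}), adapted to the fact that our graph is only approximately regular. Fix a $k$-sparse $x$, let $T = \supp(x)$ and $k' = |T| \le k$, and order $T = \{i_1,\dots,i_{k'}\}$ so that $|x_{i_1}| \ge |x_{i_2}| \ge \cdots \ge |x_{i_{k'}}|$. For each $j$ let $c_j$ be the number of ``collision'' edges at $i_j$, i.e.\ the number of $y \in N(i_j)$ that also lie in $N(\{i_1,\dots,i_{j-1}\})$ (so $c_1 = 0$). Since the edges incident to a fixed left vertex land on distinct right vertices, a telescoping computation gives the identity $|N(\{i_1,\dots,i_J\})| = \sum_{j\le J}(|N(i_j)| - c_j)$ for every $J \le k'$.

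The first step is to bound the number of collisions among the top $J$ coordinates. Applying the expansion bound (Lemma~\ref{lemma:expansion}) to $\{i_1,\dots,i_J\}$, a set of size at most $k$, gives $|N(\{i_1,\dots,i_J\})| \ge (1-\epsilon)dJ$; the degree upper bound (Lemma~\ref{lemma:left-degree}) gives $\sum_{j\le J}|N(i_j)| \le (1+\epsilon)dJ$; and applying expansion to singletons gives $|N(i_j)| \ge (1-\epsilon)d$ for all $j$. Combining these with the telescoping identity yields $\sum_{j=1}^{J} c_j \le 2\epsilon dJ$ for every $J \le k'$.

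Next I would lower-bound $\norm{Mx}_1$ coordinate-by-coordinate on the right side. For $y \in N(T)$ let $\pi(y)$ be the least index $j$ with $i_j \in N'(y)$; by the triangle inequality $|(Mx)_y| \ge |x_{i_{\pi(y)}}| - \sum_{j > \pi(y):\, i_j \in N'(y)} |x_{i_j}|$. Summing over $y$ and re-indexing the double sums by left vertices: the number of $y$ with $\pi(y) = j$ equals the number of new neighbors $i_j$ contributes, namely $|N(i_j)| - c_j$, so the positive contribution is $\sum_j |x_{i_j}|\,(|N(i_j)| - c_j)$; and the number of $y \in N(i_j)$ possessing a higher-priority neighbor is exactly $c_j$, so the negative contribution is $\sum_j |x_{i_j}|\,c_j$. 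Hence $\norm{Mx}_1 \ge \sum_j |x_{i_j}|\big(|N(i_j)| - 2c_j\big) \ge (1-\epsilon)d\,\norm{x}_1 - 2\sum_j |x_{i_j}|\,c_j$. Finally I would control the weighted collision sum using monotonicity of $|x_{i_j}|$ and the prefix bound from step one via Abel summation: writing $a_j = |x_{i_j}|$ and $C_J = \sum_{j\le J} c_j \le 2\epsilon dJ$, summation by parts gives $\sum_j a_j c_j = \sum_{j<k'} C_j(a_j - a_{j+1}) + C_{k'}a_{k'} \le 2\epsilon d\big(\sum_{j<k'} j(a_j - a_{j+1}) + k' a_{k'}\big) = 2\epsilon d\,\norm{x}_1$, the last equality being another summation by parts (with $a_{k'+1} = 0$). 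Substituting gives $\norm{Mx}_1 \ge (1-\epsilon)d\norm{x}_1 - 4\epsilon d\norm{x}_1 = d(1-5\epsilon)\norm{x}_1$.

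I expect the only genuine subtlety — the main obstacle — to be the bookkeeping in the third step: verifying that the count of right-vertices $y$ with $\pi(y) = j$ and the count of collision edges at $i_j$ match exactly the $c_j$ appearing in the expansion estimate, and that the weighted collision sum $\sum_j|x_{i_j}|c_j$ is governed by $\norm{x}_1$ rather than $\norm{x}_\infty$ — which is precisely what ordering the support by magnitude and applying Abel summation buys us.
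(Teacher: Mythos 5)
Your proposal follows the same route as the paper's proof: sort the support by magnitude, bound the weighted collision sum using the prefix estimate $\sum_{j\le J} c_j \le 2\epsilon dJ$ (derived from expansion plus the degree upper bound), and conclude via the inequality $\norm{Mx}_1 \ge \sum_j |x_{i_j}|(|N(i_j)| - 2c_j)$. The only cosmetic difference is that the paper argues the weighted collision bound by a brief extremal observation ("the way to maximize $\sum_{(i,j)\in E}|x_i|$ subject to the prefix constraint is to fill each column with $2\epsilon d$ collisions"), which your Abel summation makes explicit and rigorous.
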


\begin{proof}
Without loss of generality, assume that $|x_1| \geq \dots \geq |x_n|$. Let $E = \{(i,j) \in [n] \times [m]: M_{ji} = 1 \land \exists i'<i: M_{ji'} = 1\}$. For any $j \in [m]$, if $M_j$ is not identically zero, and if $i^*$ is the minimal index $i \in [n]$ such that $M_{ji} = 1$, we have that $$|M_j x| \geq |x_{i^*}| - \sum_{i>i^*} |M_{ji}x_i| \geq \sum_{i \in [n]} |M_{ji}x_i| - 2\sum_{i>i^*} |M_{ji}x_i|.$$
As a result, summing over $j$, we have that $$\norm{Mx}_1 \geq \sum_{j,i} |M_{ji}x_i| - 2\sum_{(i,j) \in E} |x_i|.$$
By the expansion property, every column of $M$ contains at least $d(1-\epsilon)$ ones. So in fact $$\norm{Mx}_1 \geq d(1-\epsilon)\norm{x}_1 - 2\sum_{(i,j) \in E} |x_i|.$$
So it remains to bound the last term. For any $i > k$, we have $|x_i| = 0$. For any $i \leq k$, the number of nonzero rows in $M_{[m],\{1,\dots,i\}}$ is at least $(1-\epsilon)di$ by the expansion property, whereas the number of ones in $M_{[m],\{1,\dots,i\}}$ is at most $(1+\epsilon)di$. So $|E \cap \{1,\dots,i\}\times [m]| \leq 2\epsilon di$. Subject to this constraint, the way to choose a set $E \subseteq [n]\times [m]$ to maximize $\sum_{(i,j) \in E} |x_i|$ is if there are $2\epsilon d$ elements of $E$ in each column $\{i\}\times [m]$, since $|x_i|$ is non-increasing in $i$. Thus, $$\sum_{(i,j) \in E} |x_i| \leq 2\epsilon d \norm{x}_1.$$
The lemma follows.
\end{proof}

\begin{lemma}[Lemma~1 of \cite{berinde2008sparse}]\label{lemma:l1-uncertainty}
Let $x \in \RR^n$ be such that $Mx = 0$. Let $S \subseteq [n]$ with $|S| \leq k$. Then $$\norm{x_S}_1 \leq \frac{4\epsilon}{1-5\epsilon}\norm{x}_1.$$
\end{lemma}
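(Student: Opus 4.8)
The plan is to follow the classical argument of \cite{berinde2008sparse,berinde2008combining} showing that the adjacency matrix of a good unbalanced bipartite expander enjoys the $\ell_1$ null-space property, adapting it to the mild irregularity of our graph (left-degrees lie in $[d(1-\epsilon),d(1+\epsilon)]$, by applying Lemma~\ref{lemma:expansion} to singletons and by Lemma~\ref{lemma:left-degree}). First I would reduce to the case where $S$ consists of the $k$ coordinates of largest magnitude: since $\norm{x_S}_1$ is maximized over all $|S|\le k$ by that choice, proving the bound there suffices. After relabelling coordinates assume $|x_1|\ge\cdots\ge|x_n|$ and $S=\{1,\dots,k\}$.

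For the main estimate, for each row $j$ with nonempty neighborhood let $a_j=\min N(\{j\})$ be its lowest-index neighbor. Since $(Mx)_j=0$ we obtain $|x_{a_j}|\le\sum_{i>a_j,\,M_{ji}=1}|x_i|$. I would sum this over the rows $j$ with $a_j\in S$, i.e.\ over $j\in N(S)$. On the left, grouping by the value of $a_j$ gives $\sum_{i\in S}|x_i|\bigl(|N(\{1,\dots,i\})|-|N(\{1,\dots,i-1\})|\bigr)$; since $|N(\{1,\dots,i\})|\ge d(1-\epsilon)i$ for $i\le k$ by expansion, Abel summation lower bounds this by $d(1-\epsilon)\norm{x_S}_1$. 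On the right, I separate the collision edges incident to two indices both inside $S$ from those running from $S^c$ into $N(S)$. The first group contributes $\sum_{i\in S}|x_i|\,|N(\{i\})\cap N(\{1,\dots,i-1\})|$, and because the prefix sums of the collision counts $|N(\{i\})\cap N(\{1,\dots,i-1\})|$ are at most $2\epsilon d i$ (from $|N(\{1,\dots,i\})|\ge d(1-\epsilon)i$ versus $\sum_{i'\le i}\deg(i')\le d(1+\epsilon)i$), Abel summation again bounds it by $2\epsilon d\norm{x_S}_1$.

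The crux is the remaining quantity $\sum_{i\in S^c}|x_i|\,|N(\{i\})\cap N(S)|$, which must be shown to be $O(\epsilon d)\norm{x}_1$; a naive per-edge bound only yields $O(d)\norm{x_{S^c}}_1$, giving a useless $O(1)$ relation rather than the desired $O(\epsilon)$ one. To extract the extra factor of $\epsilon$ I would partition $S^c$ into consecutive blocks $S_1,S_2,\dots$ of size $k$ in decreasing order of magnitude, so that $\sum_{i\in S^c}|x_i|\,|N(\{i\})\cap N(S)|$, viewed as an edge count, is at most $\sum_t\norm{x_{S_t}}_\infty\,|E(S_t:N(S))|$, where $|E(S_t:N(S))|$ denotes the number of edges from $S_t$ to $N(S)$. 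Each $|E(S_t:N(S))|$ is at most the number of edges incident to rows hit at least twice by $S\cup S_t$, which by expansion of $S\cup S_t$ (a set of size $\le 2k$) is $O(\epsilon d k)$; meanwhile magnitude monotonicity gives $\norm{x_{S_1}}_\infty\le\tfrac1k\norm{x_S}_1$ and $\norm{x_{S_{t+1}}}_\infty\le\tfrac1k\norm{x_{S_t}}_1$, whence $\sum_t\norm{x_{S_t}}_\infty\le\tfrac1k\norm{x}_1$ and the whole sum is $O(\epsilon d)\norm{x}_1$. Combining the three estimates gives $d(1-O(\epsilon))\norm{x_S}_1\le O(\epsilon d)\norm{x}_1$, and carefully tracking the constants with the sharp forms of Lemmas~\ref{lemma:expansion} and~\ref{lemma:left-degree} produces exactly $\norm{x_S}_1\le\frac{4\epsilon}{1-5\epsilon}\norm{x}_1$.

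I expect the main obstacle to be precisely this $\epsilon$-gain on the $S^c\to N(S)$ term: the block decomposition together with the magnitude-monotonicity trick is what separates the $O(\epsilon)$ null-space property we need from the mere $O(1)$ property that a direct argument yields, and the latter would be too weak downstream (Theorem~\ref{theorem:expander} needs $\ker\Theta$ to be dense with a near-unit constant $\eta=\Omega(1/\sqrt n)$). A secondary point is that this block argument invokes expansion for sets of size up to $2k$ rather than $k$; this is harmless, since Lemma~\ref{lemma:expansion} guarantees expansion of \emph{all} sets of size at most $\epsilon/(2p)$, so one simply instantiates it (and Lemma~\ref{lemma:left-degree}) with $2k$ in place of $k$. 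The rest is bookkeeping with Abel summation and the degree bounds.
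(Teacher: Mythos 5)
Your argument is essentially the paper's (which follows Berinde, Gilbert, Indyk, Karloff, and Strauss): reduce to $S$ being the top-$k$ coordinates, split $S^c$ into magnitude-ordered blocks $S_1, S_2, \dots$ of size $k$, bound each $|E(S_l : N(S))|$ by $O(\epsilon d k)$ via expansion of $S\cup S_l$, and gain the crucial factor via $\max_{i\in S_l}|x_i|\le \tfrac1k\|x_{S_{l-1}}\|_1$. The only stylistic difference is that where the paper invokes Lemma~\ref{lemma:rip-1} as a black box to handle the $\norm{M_{N(S)}x_S}_1$ term, you re-derive that bound inline via the lowest-index-neighbor ($a_j=\min N(\{j\})$) accounting; this is the same computation, just unbundled. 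Your observation about needing expansion of sets up to size $2k$ (for $S\cup S_l$) is a genuine bookkeeping subtlety that the paper's instantiation of Lemma~\ref{lemma:expansion} also implicitly relies on, and your proposed fix (instantiate the expansion and degree lemmas at $2k$) is the right one.
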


\begin{proof}
Without loss of generality, assume that $S$ contains the largest $k$ coordinates of $x$ in magnitude. Define $S_0 = S$ and sets $S_1,\dots,S_t$ such that the coordinates of $x$ in $S_i$ are no larger in magnitude than the coordinates of $x$ in $S_{i-1}$, for all $i>0$, and $|S_i| = k$ for all $i<t$. Observe that $\norm{M_{N(S)}x_S}_1 = \norm{Mx_S}_1 \geq d(1-5\epsilon)\norm{x_S}_1$ by Lemma~\ref{lemma:rip-1}. So
\begin{align*}
\norm{M_{N(S)}x}_1
&\geq \norm{M_{N(S)}x_S}_1 - \sum_{l \geq 1} \norm{Mx_{S_l}}_1 \\
&\geq d(1-5\epsilon)\norm{x_S}_1 - \sum_{l \geq 1} \sum_{i \in S_l \land j \in N(S) \land M_{ij}=1} |x_i| \\
&\geq d(1-5\epsilon)\norm{x_S}_1 - \sum_{l\geq 1} |E(S_l:N(S))| \cdot \min_{i \in S_l} |x_i| \\
&\geq d(1-5\epsilon)\norm{x_S}_1 - \frac{1}{k}\sum_{l\geq 1} |E(S_l:N(S))| \cdot \norm{x_{S_{l-1}}}_1
\end{align*}
where the last inequality is because every coordinate of $x$ in $S_l$ is no larger than every coordinate of $x$ in $S_{l-1}$. But now for any $l\geq 1$, we have $|N(S\cup S_l)| \geq d(1-\epsilon)|S\cup S_l|$ but there are at most $d(1+\epsilon)|S\cup S_l|$ edges out of $S\cup S_l$. Thus $|E(S_l:N(S))| \leq 4\epsilon d k$. As a result, $$\norm{M_{N(S)}x}_1 \geq d(1-5\epsilon)\norm{x_S}_1 - \sum_{l \geq 1} 4\epsilon d \norm{x_{S_{l-1}}}_1.$$
Since $M_{N(S)}x = 0$, we conclude that $$\norm{x_S}_1 \leq \frac{4\epsilon}{1-5\epsilon} \sum_{l\geq 1} \norm{x_{S_{l-1}}}_1 \leq \frac{4\epsilon}{1-5\epsilon}\norm{x}_1$$ as claimed.
\end{proof}

We can now prove Theorem~\ref{theorem:expander}.

\begin{proof}[\textbf{Proof of Theorem~\ref{theorem:expander}}]
Take $m = n/2$, $\epsilon = 1/10$, and $p = 3200(\log n)/m$. Let $k = \epsilon/(2p) = m/(64000\log n)$. Let $M$ be such that Lemmas~\ref{lemma:expansion}, \ref{lemma:left-degree}, and \ref{lemma:basak} are satisfied. Define $\Theta = M^T M$.

Sparsity of the graph $G$ on which $\Theta$ is supported follows from Lemma~\ref{lemma:left-degree}. Next, we must prove the four properties of $\Theta$. The first property follows since $M$ has only $n/2$ rows. To see the second property, let $x \in \ker(\Theta)$. Then $Mx = 0$, so by Lemma~\ref{lemma:l1-uncertainty} we get that for any size-$k$ subset $S \subseteq [n]$, $$\norm{x-x_S}_2 = \norm{x_{S^c}}_2 \geq \frac{1}{\sqrt{n}}\norm{x_{S^c}}_1 \geq \frac{4}{5\sqrt{n}}\norm{x}_1 \geq \frac{4}{5\sqrt{n}}\norm{x}_2.$$
The third property is because $\norm{\Theta}_F \leq \norm{M}_F^2$ and by Lemma~\ref{lemma:left-degree}, every column of $M$ has at most $O(\log n)$ nonzero entries.

For the last property, note that $\lambda = \sigma^2$, where $\sigma$ is the smallest nonzero singular value of $M$. But this is precisely the smallest singular value of $M^T$, which is lower bounded in Lemma~\ref{lemma:basak}.
\end{proof}

The following lower bound is a direct consequence of Theorem~\ref{theorem:lower-bound} and Theorem~\ref{theorem:expander}. Note that this result proves that $S$-preconditioned Lasso requires a \emph{linear} number of samples to succeed with high probability.

\begin{corollary}\label{corollary:expander-lower-bound}
Let $n \in \NN$. Then there is some $k = O(\log^2 n)$ and some positive-definite matrix $\Thetat$ with $O(\log^2 n)$-sparse rows and columns, and with $\cond(\Thetat) \leq \poly(n)$, with the following property: for any preconditioner $S \in \RR^{n \times s}$, there is a $k$-sparse signal $w^*$ such that $S$-preconditioned Lasso with $m$ samples fails at exact recovery with probability at least $1 - O(m/n) - \exp(-\Omega(m))$, over the randomness of covariates $X_1,\dots,X_m \sim N(0,\Thetat^{-1})$ and with noiseless responses $Y_i = \langle w^*, X_i \rangle$.
\end{corollary}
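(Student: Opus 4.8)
The plan is to instantiate the generic lower bound of Theorem~\ref{theorem:lower-bound} with the expander precision matrix supplied by Theorem~\ref{theorem:expander}, and then verify that the parameters line up. First I would apply Theorem~\ref{theorem:expander} to obtain a PSD matrix $\Theta$ supported on a graph of maximum degree $O(\log^2 n)$, with $r := \dim\ker(\Theta) \ge n/2$, with $\|\Theta\|_F = O(n\log n)$, with smallest nonzero eigenvalue $\lambda = \Omega(n^{-2-\delta})$ for any fixed $\delta>0$, and with the robust-density guarantee that $\|x - y\|_2 \ge \tfrac{4}{5\sqrt{n}}\|x\|_2$ for every $x \in \ker(\Theta)$ and every $\tau_0$-sparse $y$, where $\tau_0 = \Omega(n/\log n)$ denotes the density parameter of that theorem.

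Next I would feed $\Theta$ into Theorem~\ref{theorem:lower-bound} with $V = [n]$, $\tau = \tau_0$, and $k$ a sufficiently large constant multiple of $\log^2 n$. Since $\dist_{\tau,[n]} = \dist_{\tau}$ and $\tau \le \tau_0$, the robust-density guarantee gives $\eta := \inf_{x\in\ker(\Theta)\setminus\{0\}} \dist_{\tau,V}(x)/\|x\|_2 \ge \tfrac{4}{5\sqrt n}$, so $\eta^2 = \Omega(1/n)$. The three hypotheses of Theorem~\ref{theorem:lower-bound} are then checked as follows: the rows and columns of $\Theta$ are $k$-sparse because the maximum degree is $O(\log^2 n)$ and $k$ was chosen above this threshold; $r > 2m$ whenever $m < n/4$, and when $m \ge n/4$ the claimed bound $1 - O(m/n) - \exp(-\Omega(m))$ is nonpositive for a suitable choice of the implied constant, so the conclusion holds vacuously; and $k > 3(|V|/\tau)\log n = 3(n/\tau_0)\log n = O(\log^2 n)$, again by the choice of $k$.

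It then remains to confirm that the admissible window for $\epsilon$ in Theorem~\ref{theorem:lower-bound} contains an inverse-polynomial value. Substituting $\eta^2 = \Omega(1/n)$, $\lambda^3 = \Omega(n^{-6-3\delta})$, and $\|\Theta\|_F^2 = O(n^2\log^2 n)$ into the constraint $\epsilon < \eta^2\lambda^3/(16200\,n^3\|\Theta\|_F^2)$ shows that any $\epsilon \le n^{-C}$ for a large enough absolute constant $C$ is allowed. Fixing such an $\epsilon$ and setting $\tilde{\Theta} = \Theta + \epsilon I$, the matrix $\tilde{\Theta}$ is positive-definite with the same $O(\log^2 n)$-sparse support as $\Theta$, has largest eigenvalue at most $\|\Theta\|_F + \epsilon = O(n\log n)$ and smallest eigenvalue at least $\epsilon = n^{-C}$, hence $\cond(\tilde{\Theta}) \le \poly(n)$. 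Theorem~\ref{theorem:lower-bound} now yields, for every preconditioner $S \in \RR^{n\times s}$, a $k$-sparse signal $w^*$ on which $S$-preconditioned Lasso fails at exact recovery (for covariates $N(0,\tilde{\Theta}^{-1})$ and noiseless responses $Y_i = \langle w^*, X_i\rangle$) with probability at least $1 - \tfrac{4m}{3r} - \exp(-\Omega(m)) \ge 1 - \tfrac{8m}{3n} - \exp(-\Omega(m)) = 1 - O(m/n) - \exp(-\Omega(m))$, and $k = O(\log^2 n)$, which is exactly the claim.

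The derivation is essentially bookkeeping; the only point requiring care is that the symbol $k$ has different meanings in the two theorems being combined — in Theorem~\ref{theorem:expander} it is the density/robustness parameter (renamed $\tau_0$ above and passed in as $\tau$), whereas in Theorem~\ref{theorem:lower-bound} it is both the row-sparsity of $\Theta$ and the signal sparsity — so one must pick this latter $k \asymp \log^2 n$ large enough to dominate both the expander's maximum degree and $3(n/\tau_0)\log n$, while small enough to stay $O(\log^2 n)$. The one genuinely quantitative input is the inverse-polynomial lower bound on $\lambda$ from \cite{basak2021sharp}, which is what guarantees that $\epsilon$, and therefore $\cond(\tilde{\Theta})$, can be kept polynomial.
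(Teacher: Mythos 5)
Your proposal is correct and follows essentially the same route as the paper: instantiate Theorem~\ref{theorem:lower-bound} with the expander matrix from Theorem~\ref{theorem:expander}, compute $\eta$, $\lambda$, $\|\Theta\|_F$, and verify that an inverse-polynomial $\epsilon$ fits in the admissible window. You are slightly more explicit than the paper in two places that are worth keeping track of: you record that the $m \geq n/4$ case is vacuous because the claimed probability bound becomes nonpositive (the paper simply restricts to $m < n/4$ without comment), and you spell out the condition-number bound $\cond(\tilde{\Theta}) \leq \poly(n)$ by bounding $\lambda_{\max}(\tilde{\Theta}) \leq \|\Theta\|_F + \epsilon$ and $\lambda_{\min}(\tilde{\Theta}) \geq \epsilon$, whereas the paper asserts this implicitly. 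Your renaming of the density parameter of Theorem~\ref{theorem:expander} to $\tau_0$ to avoid the clash with the $k$ of Theorem~\ref{theorem:lower-bound} is exactly the bookkeeping point the paper's proof also navigates (it uses $f$ for this quantity). Everything else — $V=[n]$, $\tau=\tau_0$, $k \asymp \log^2 n$ large enough to dominate both the max degree and $3(n/\tau_0)\log n$, and the quantitative reliance on the least-nonzero-singular-value bound from \cite{basak2021sharp} — matches the paper.
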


\begin{proof}
Let $\Theta \in \RR^{n \times n}$ be the positive semi-definite matrix guaranteed by Theorem~\ref{theorem:expander}. Let $d = O(\log^2 n)$ be the maximum degree of the graph on which $\Theta$ is supported, and let $f$ be such that $\norm{x-y}_2 \geq (4/5\sqrt{n})\norm{x}_2$ for every $x \in \ker(\Theta)$ and $f$-sparse $y \in \RR^n$. Theorem~\ref{theorem:expander} guarantees that $f = \Omega(n/\log n)$. Now define $k = \max(d, 1 + 3(n/f)\log(n))$; it's clear that $k = O(\log^2 n)$.

Let $m < n/4$, so that $\dim \ker(\Theta) > 2m$. We know that the smallest non-zero eigenvalue $\lambda$ of $\Theta$ satisfies $\lambda = \Omega(n^{-2.1})$, and the Frobenius norm satisfies $\norm{\Theta}_F \leq O(n\log n)$. Moreover, $$\eta := \inf_{x \in \ker(\Theta)\setminus \{0\}} \frac{\dist_{f, [n]}(x)}{\norm{x}_2} \geq \frac{4}{5\sqrt{n}}.$$
Hence, there is some $\epsilon = \Omega(n^{-12.3}\log^{-2}(n))$ such that $$\epsilon < \frac{\eta^2 \lambda^3}{16200n^3\norm{\Theta}_F^2}.$$
Applying Theorem~\ref{theorem:lower-bound} to $\Thetat = \Theta + \epsilon I$ yields the desired result.
\end{proof}

\section{The grid graph construction}\label{section:grid-construction}

In the previous two sections, we showed that if $\Theta$ is a sparse PSD matrix with a high-dimensional kernel satisfying a robust density property, then preconditioned Lasso necessarily fails on a perturbation $\Theta + \epsilon I$ (for suitably small $\epsilon$, depending on the condition number of $\Theta$), unless the sample complexity is also high.

In this section, we construct a matrix supported on (a variant of) the grid graph and satisfying those conditions. Specifically, we define the following variant of the grid graph:

\begin{definition}
Let $N \in \NN$. The $N \times N$ \emph{up/right-simplicized} grid graph is the result of adding an up/right edge to every cell of the $N \times N$ grid graph (see Figure~\ref{fig:simplicized-grid} for an example).
\end{definition}

We'll prove the following theorem:

\begin{corollary}\label{cor:grid-dense-kernel}
Let $N \in \NN$ be sufficiently large, and let $n = N^2$. Let $G$ be the $N \times N$ up/right-simplicized grid graph on vertex set $[n]$. Then there is a $G$-sparse PSD matrix $\Theta$ and a subset $U \subseteq [n]$ with the following properties:
\begin{itemize}
\item $\dim \ker(\Theta) = \Omega(n^{1/4})$
\item For any $x \in \ker(\Theta)$, we have $$\dist_{|U|/100, U}(x) \geq \Omega(n^{-1/2}) \norm{x}_2$$
\item $\norm{\Theta}_F \leq O(n^{1/2})$
\item The least nonzero eigenvalue of $\Theta$ satisfies $\lambda \geq \Omega(1/n^2)$
\end{itemize}
\end{corollary}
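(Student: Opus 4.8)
Write $n = N^2$. The plan is to realize $\Theta = M^{T} M$, where $M \in \RR^{(n-r)\times n}$ is a matrix of ``equations'' supported on triangles and single edges of the up/right-simplicized grid, with $r := |\mathcal X| = \Theta(n^{1/4}) = \Theta(\sqrt N)$. Here $\mathcal X$ is a set of $r$ designated \emph{free} vertices in the top row and $\mathcal Y$ a set of $|\mathcal Y| = \Theta(N)$ vertices in the bottom row, and we will take $U = \mathcal Y$. The equations are arranged so that (i) every vertex outside $\mathcal X$ is the ``defining'' variable of exactly one equation, listed in a topological order with $\mathcal X$ first, so that $M$ is in row echelon form, has full row rank, and $\ker(M) = \{v : v \text{ satisfies all equations}\}$ is parametrized bijectively by $v_{\mathcal X} \in \RR^{\mathcal X}$; and (ii) on $\ker(M)$ one has $v_{\mathcal Y} = A\, v_{\mathcal X}$, where $A \in \RR^{|\mathcal Y| \times |\mathcal X|}$ has i.i.d.\ Gaussian entries truncated to magnitude $O(1)$. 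Since $\ker(\Theta) = \ker(M^{T}M) = \ker(M)$, this immediately gives $\dim \ker(\Theta) = r = \Omega(n^{1/4})$, the first bullet.

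The technical core is building such an $M$ inside the simplicized grid using only $O(n)$ vertices; this is the schematic of Figure~\ref{fig:circuit}. I would route a ``matrix--vector multiply circuit'': send the values of $\mathcal X$ through the grid along \emph{blue} paths (chains of equality constraints $v_a - v_b = 0$, each supported on a grid edge), and for each $y \in \mathcal Y$ use an accumulator chain $s_1, s_2, \dots, s_{r} = v_y$ with $s_1 = A_{yx_1} v_{x_1}$ and $s_t = s_{t-1} + A_{yx_t} v_{x_t}$, where a vertex that is a sum of two predecessors sits on a triangle and each \emph{orange} edge carrying a Gaussian weight sits on a grid edge; vertices on no path are constrained to $0$. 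The unavoidable crossings in a planar layout are replaced by constant-size \emph{swap gadgets} realizing $x \leftarrow x + y;\ y \leftarrow x - y;\ x \leftarrow x - y$, each using $O(1)$ auxiliary vertices and equations on triangles/edges; one checks that such a gadget outputs its two inputs with their physical positions exchanged and with coefficients $\pm 1$. Because $N = \Omega(|\mathcal X|^2)$, there is room to route all branches with total vertex count $O(|\mathcal X|\cdot|\mathcal Y|) = O(N^{3/2}) = o(n)$, and every row and column of $M$ stays $O(1)$-sparse. \textbf{Carrying out this layout --- the disjointness of the routing, the vertex budget, and the verification that each constraint lies in a single triangle --- is the main obstacle}, and constitutes the genuinely combinatorial content of the construction.

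For the robust-density bullet, fix $0 \neq v \in \ker(\Theta)$, so $v_{\mathcal X} \neq 0$; with high probability $A$ is injective, hence $v_{\mathcal Y} = A v_{\mathcal X} \neq 0$. Applying Theorem~\ref{thm:rmt} to $A$ and to every submatrix $A_T$ with $T \subseteq \mathcal Y$, $|T| \le |\mathcal Y|/100$, together with a union bound over the $\binom{|\mathcal Y|}{|\mathcal Y|/100} = e^{O(|\mathcal Y|)}$ such $T$ (using $|\mathcal X| = o(|\mathcal Y|)$), with probability $1 - e^{-\Omega(|\mathcal Y|)}$ we get $\sigma_{\min}(A) \ge \tfrac{9}{10}\sqrt{|\mathcal Y|}$ and $\sigma_{\max}(A_T) \le \tfrac12\sqrt{|\mathcal Y|}$ for all such $T$. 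Hence deleting the largest $|\mathcal Y|/100 = |U|/100$ coordinates of $v_{\mathcal Y} = v_U$ leaves a vector of norm $\ge \tfrac12 \|v_{\mathcal Y}\|_2 \ge \Omega(\sqrt{|\mathcal Y|})\,\|v_{\mathcal X}\|_2$. On the other hand, every coordinate of $v$ is a fixed linear combination $\sum_i c_i (v_{\mathcal X})_i$ whose coefficient vector has $\sum_i c_i^2 = O(|\mathcal X|)$ (equality and swap edges carry coefficient $\pm 1$, and each path meets at most one weighted edge of magnitude $O(1)$), so $\|v\|_2^2 \le n \cdot O(|\mathcal X|)\,\|v_{\mathcal X}\|_2^2$. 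Combining, $\dist_{|U|/100, U}(v) \ge \Omega\!\big(\sqrt{|\mathcal Y|}\,/\sqrt{n\,|\mathcal X|}\big)\,\|v\|_2 = \Omega(n^{-3/8})\,\|v\|_2$, which is $\Omega(n^{-1/2})\,\|v\|_2$, the second bullet.

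It remains to bound $\|\Theta\|_F$ and the least nonzero eigenvalue. Since the simplicized grid has bounded degree, $\Theta$ has $O(n)$ nonzero entries, each an inner product $\langle M_{\cdot i}, M_{\cdot j}\rangle$ of two $O(1)$-sparse columns with $O(1)$ entries, hence of magnitude $O(1)$; thus $\|\Theta\|_F = O(\sqrt n)$. For the eigenvalue, write (in the topological column order above) $M = [\,B \mid L\,]$ with $L$ square, lower triangular, unit diagonal, $O(1)$-sparse and $O(1)$-bounded; then $M M^{T} = B B^{T} + L L^{T} \succeq L L^{T}$, and since $M$ has full row rank, $\lambda = \sigma_{\min}^{+}(M)^2 \ge \sigma_{\min}(L)^2 \ge \|L^{-1}\|_{\mathrm{op}}^{-2}$. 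The entry $(L^{-1})_{ij}$ is a signed sum, over directed routes from vertex $j$ to vertex $i$ in the dependency DAG, of products of edge weights; because each gadget, each accumulator step, and each fan-out acts as a bounded-coefficient transformation with a tree-/path-like propagation structure, a unit perturbation at $j$ reaches $i$ along only $O(1)$ logical routes, each with product of weights $O(1)$, so $|(L^{-1})_{ij}| = O(1)$ and $\|L^{-1}\|_{\mathrm{op}} \le \|L^{-1}\|_F = O(n)$. Hence $\lambda \ge \Omega(1/n^2)$, the last bullet, and the corollary follows with this $\Theta$ and $U = \mathcal Y$. Besides the grid routing, the only other place requiring real care is this path-counting bound on the entries of $L^{-1}$.
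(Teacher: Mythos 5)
Your overall blueprint matches the paper's: realize $\Theta = M^{T}M$ by an arithmetic circuit on the simplicized grid using equality paths, accumulator chains, and constant-size swap gadgets at crossings, with a Gaussian random matrix $A$ supplying the weights; this is exactly the scheme of Theorem~\ref{theorem:grid}. However, your choice of $|\mathcal Y|$ and of $U$, and the resulting density argument, differ from the paper's and do not work out.

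You set $|\mathcal X| = \Theta(n^{1/4})$, $|\mathcal Y| = \Theta(N) = \Theta(n^{1/2})$, and $U = \mathcal Y$. The circuit you need is a complete bipartite multiplier from $\mathcal X$ to $\mathcal Y$, and on a planar grid every edge crossing of $K_{|\mathcal X|,|\mathcal Y|}$ must be replaced by a distinct $\Theta(1)$-size swap gadget. By the crossing number inequality, $\mathrm{cr}(K_{p,q}) = \Omega(p^{3}q)$ for $p \le q$, so with $p = n^{1/4}$ and $q = n^{1/2}$ this is $\Omega(n^{5/4}) \gg n$: no drawing fits inside the $N \times N$ grid. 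The paper's choice $|\mathcal X| = |\mathcal Y| = p$ with $N = 100p^{2}$ saturates the budget exactly ($p^{4} = \Theta(n)$), which is why the explicit routing in Theorem~\ref{theorem:grid} works. If instead you shrink $|\mathcal Y|$ to $\Theta(n^{1/4})$ so the routing fits, your one-sided RIP-style argument collapses: for a \emph{square} Gaussian matrix $A$ one has $\sigma_{\min}(A) = \Theta(1/\sqrt p)$, not $\Theta(\sqrt p)$, so removing $|\mathcal Y|/100$ of the largest coordinates of $v_{\mathcal Y} = A v_{\mathcal X}$ can destroy essentially all of its mass when $v_{\mathcal X}$ aligns with the bottom singular direction of $A$. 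This is precisely the case the paper's two-sided uncertainty principle (Lemma~\ref{lemma:random-uncertainty}) is designed to handle: the \emph{concatenation} $(v_{\mathcal X}, v_{\mathcal Y})$ is robustly dense because if $v_{\mathcal X}$ is near-sparse then $A v_{\mathcal X}$ is far from sparse, while if $v_{\mathcal X}$ is far from sparse it already supplies the density. That forces $U = \mathcal X \cup \mathcal Y$ rather than $U = \mathcal Y$, and is the key missing idea.

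Two smaller gaps. Your bound $\|v\|_2^2 \le n\cdot O(|\mathcal X|)\,\|v_{\mathcal X}\|_2^2$ is asserted without tracking how coefficient vectors compose through the swap gadgets (the intermediate node $x+y$ has coefficient vector equal to the \emph{sum} of those of $x$ and $y$, and this composes across many gadgets along a path); the paper proves the precise version $\|x\|_2 \le O((\sqrt n + \|A\|_F)\|x_{\mathcal X}\|_2)$ in Theorem~\ref{theorem:grid}. And your claim that $|(L^{-1})_{ij}| = O(1)$ because ``a unit perturbation at $j$ reaches $i$ along only $O(1)$ logical routes'' is a sketch of what the paper instead establishes by an explicit error-propagation argument along the circuit together with Lemma~\ref{lemma:least-nonzero-singular}, obtaining $\|x - y\| \le O(N(N + \|A\|_\infty))\|Mx\|_2$ for a nearby kernel vector $y$; the routes are not $O(1)$ in number, and the careful count (``every error appears in at most two paths'') is where the work lies.
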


At the end of the section, we then put together this result with the results of the previous sections to prove a concrete lower bound.

\paragraph{Proof overview.} In broad strokes, we define a subset $\mathcal{X}$ of the first row of the $N \times N$ simplicized grid $G$, and a subset $\mathcal{Y}$ of the last row, where $p := |\mathcal{X}| = |\mathcal{Y}| = O(\sqrt{N})$. The goal is that $\ker(\Theta)$ bijects with $\RR^\mathcal{X}$ (in the natural way), and for every vector $v \in \ker(\Theta)$, either $v_\mathcal{X}$ or $v_\mathcal{Y}$ is robustly dense. The construction proceeds by defining a constraint matrix $M \in \RR^{n - p \times n}$ with the desired kernel, and then defining $\Theta = M^T M$.

The kernel of $\Theta$ then consists of vectors $v \in \RR^{N \times N}$ satisfying the system of equations defined by $M$. Specifically, each row of $M$ defines an equation on the vertices of $G$. There will be one equation for every vertex of $G$, except for the vertices of $\mathcal{X}$. Identify the vertex set of $G$ with $\{0,\dots,N-1\}^2$, so that vertex $(0,j)$ lies on the top row of the grid. For the rest of this proof, we will let $v$ be a generic vector of $\ker(\Theta) = \ker(M)$, and we think of the entries $v(i,j)$ as variables which we will linearly constrain. To ensure that $\Theta$ is supported on the simplicized grid, the equation for vertex $(i,j)$ will have one of the following forms:
\begin{enumerate}[label = (\roman*)]
\item $v(i, j) = av(i-1, j) + bv(i,j-1)$
\item $v(i,j) = av(i-1,j) + bv(i-1,j+1)$
\item $v(i,j) = av(i-1,j+1) + bv(i,j+1)$
\item $v(i,j) = av(i+1,j-1) + bv(i+1,j)$
\end{enumerate}
Indeed, suppose $\Theta_{(i,j),(u,v)} \neq 0$. Then columns $(i,j)$ and $(u,v)$ of $M$ have a nonzero inner product, so there is some equation involving both $v(i,j)$ and $v(u,v)$. For each of the above equation types, this means that $(i,j)$ and $(u,v)$ are adjacent in $G$.

How do we use these types of equations to enforce that every solution is dense in either $\mathcal{X}$ or $\mathcal{Y}$? The main idea is to define an arithmetic circuit (with inputs $\mathcal{X}$ and outputs $\mathcal{Y}$) such that $x_\mathcal{Y} = Ax_\mathcal{X}$, for an appropriate matrix $A$. Essentially, we want to implement a complete bipartite graph between $\mathcal{X}$ and $\mathcal{Y}$, where the edge from the $i$th vertex of $X$ to the $j$th vertex of $\mathcal{Y}$ has weight $A_{ji}$. See Figure~\ref{fig:circuit} for a schematic of this implementation. Obviously this does not quite work, since the edges (unavoidably) cross. However, we can replace each crossing in the bipartite graph with a ``swap gadget" that can simulate the crossing (see Figure~\ref{fig:swap}) via the XOR/addition swapping trick $$x := x+y; \qquad y := x-y; \qquad x := x-y.$$

With these techniques, we can implement the constraint $x_\mathcal{Y} = Ax_\mathcal{X}$ for \emph{any} matrix $A$. For our purposes, $A$ will be a Gaussian random matrix; we elaborate on this later in the section. First, we describe the details of the construction (for general $A$) in the following theorem: 


\begin{theorem}\label{theorem:grid}
Let $p \in \NN$ and let $N = 100p^2$ and $n = N^2$. Let $G$ be the $N \times N$ up/right-simplicized grid graph on vertex set $[n]$, and let $A \in \RR^{p \times p}$ be an arbitrary matrix. There are subsets $\mathcal{X},\mathcal{Y} \subseteq [n]$ where $\mathcal{X}$ is contained in the first row of $G$, $\mathcal{Y}$ is contained in the last row, and $|\mathcal{X}| = |\mathcal{Y}| = p$, with the following property: there is a $G$-sparse PSD matrix $\Theta \in \RR^{n \times n}$ such that
\begin{itemize}
\item for any $x \in \ker(\Theta)$, we have $x_\mathcal{Y} = A\,x_\mathcal{X}$ and $\norm{x}_2 \leq O((\sqrt{n} + \norm{A}_F)\norm{x_\mathcal{X}}_2)$,
\item $\norm{\Theta}_F \leq O(\sqrt{n} + \norm{A}_F)$,
\item If $\lambda_1 \leq \dots \leq \lambda_N$ are the eigenvalues of $\Theta$, then $0 = \lambda_1 = \dots = \lambda_p$ and $\lambda_{p+1} \geq \Omega(1/(n(n+\norm{A}_\infty^2)))$.
\end{itemize}
\end{theorem}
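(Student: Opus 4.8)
The plan is to exhibit $M\in\RR^{(n-p)\times n}$ whose rows encode an arithmetic circuit drawn on $G$, and then set $\Theta=M^TM$; this is automatically PSD with $\ker\Theta=\ker M$. Identify the vertices of $G$ with $\{0,\dots,N-1\}^2$, place $\mathcal{X}$ on the top row and $\mathcal{Y}$ on the bottom row, each consisting of $p$ vertices spaced $\Theta(p)$ apart (possible since $N=100p^2\gg p$). The circuit should (a) fan out each input $v(x_i)$, $x_i\in\mathcal{X}$, into $p$ ``wires''; (b) route a wire from a copy of input $i$ to a neighborhood of output $j$ for every pair $(i,j)$, applying the scalar weight $A_{ji}$ at one designated edge of that wire; and (c) near each $y_j$ accumulate the $p$ incoming wires through a chain of binary additions, so that $v(y_j)$ is forced to equal $\sum_i A_{ji}v(x_i)$. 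Every vertex outside $\mathcal{X}$ receives exactly one equation: a copy equation $v(i,j)=v(\mathrm{pred})$ along a wire, one of the $\pm1$-combination equations realizing a swap gadget at a crossing, the weight equation $v(i,j)=A_{ji}v(\mathrm{pred})$, an addition equation $v(i,j)=v(\mathrm{pred}_1)+v(\mathrm{pred}_2)$ in an accumulator, or the trivial equation $v(i,j)=0$ for a vertex not used by the circuit. Each of these has one of the forms (i)--(iv), so the inner product of columns $(i,j)$ and $(u,v)$ of $M$ is nonzero only when the two vertices are equal or adjacent in $G$; hence $\Theta=M^TM$ is $G$-sparse, and $M$ has $n-p$ rows.

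The geometric heart of the construction, which I expect to be the main obstacle, is producing the explicit planar routing. There are $p^2$ wires, which must pairwise cross, giving $\Theta(p^4)=\Theta(n)$ crossings, each of which must be confined to an $O(1)\times O(1)$ block so that a constant-size swap gadget implementing $x:=x+y;\ y:=x-y;\ x:=x-y$ (via the allowed local equation types) can be inserted without disturbing neighboring wires. I would route the wires as monotone lattice paths through a $\Theta(p)\times\Theta(p)$ array of cells, using the extra up/right diagonal edges of the simplicized grid both to turn wires and to realize the three local updates of a swap gadget, cf.\ Figures~\ref{fig:circuit} and~\ref{fig:swap}; the bound $N=100p^2$ is exactly what is needed to give all $p^2$ wires and all crossings pairwise-disjoint room. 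The technical crux here is checking that the resulting dependency graph among the equations is a DAG (so a topological order on the non-$\mathcal{X}$ vertices exists) and that every swap-gadget equation is genuinely of type (i)--(iv).

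Given the routing, the remaining claims are bookkeeping. Order the non-$\mathcal{X}$ vertices consistently with the circuit DAG; then the submatrix $M'$ of $M$ on the non-$\mathcal{X}$ columns is lower triangular with unit diagonal, hence invertible, so $\dim\ker M=p$, the map $x\mapsto x_\mathcal{X}$ is a linear bijection $\ker\Theta\to\RR^p$, and by design the forced value of $y_j$ makes $x_\mathcal{Y}=A\,x_\mathcal{X}$; in particular $0=\lambda_1=\dots=\lambda_p$. For the norm bound, every coordinate of a kernel vector $x$ is a partial weighted sum of the $x_i$: wire and swap vertices carry a single $x_i$ or $A_{ji}x_i$ — the swap gadget preserves the value on each wire and only creates intermediate sums of two wire values, so there is no blow-up — and accumulator vertices carry sub-sums $\sum_{i\in T}A_{ji}x_i$, bounded by $\|A_{j,\cdot}\|_2\,\|x_\mathcal{X}\|_2$ via Cauchy--Schwarz. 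Summing squares over the $n$ coordinates yields $\|x\|_2\le O((\sqrt n+\|A\|_F)\,\|x_\mathcal{X}\|_2)$.

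Finally, for $\|\Theta\|_F$, note that $M$ has rows and columns with $O(1)$ nonzeros of magnitude $O(1+\max_{ij}|A_{ij}|)$, except for the $p^2$ weight edges whose squared magnitudes sum to $\|A\|_F^2$; expanding $\|\Theta\|_F^2=\|MM^T\|_F^2=\sum_{r,s}\langle M_r,M_s\rangle^2$ over this bounded overlap structure gives $\|\Theta\|_F\le O(\sqrt n+\|A\|_F)$. For the eigenvalue bound, the nonzero eigenvalues of $\Theta$ are the squared nonzero singular values of the full-rank matrix $M$, so $\lambda_{p+1}(\Theta)\ge\sigma_{\min}(M')^2\ge 1/\|(M')^{-1}\|_{\mathrm{op}}^2$; since $(M')^{-1}$ is the circuit's input-to-vertex map, its entries are products of edge weights along paths and stay $O(1+\max_{ij}|A_{ij}|)$ (again using that the swap gadget does not amplify values), while its support follows paths of length $O(N)=O(\sqrt n)$, and a Schur-test bound on $\|(M')^{-1}\|_{\mathrm{op}}$ then yields $\lambda_{p+1}\ge\Omega\!\left(1/(n(n+\|A\|_\infty^2))\right)$. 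Assembling these pieces proves the theorem.
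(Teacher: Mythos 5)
Your proposal matches the paper's proof in every essential respect: define $M$ as the constraint matrix of a planar arithmetic circuit on the simplicized grid (fan-out of $\mathcal{X}$ into multiple wires, routing all $p^2$ input-output pairs, swap gadgets via $x:=x+y;\ y:=x-y;\ x:=x-y$ at crossings, a final accumulation stage producing $\mathcal{Y}$), set $\Theta = M^T M$, and verify the four claims via the DAG structure — this is exactly Theorem~\ref{theorem:grid}'s proof, including the key observations that every non-$\mathcal{X}$ vertex has a unique defining equation of type (i)--(iv) and that the swap gadget does not amplify values. The only cosmetic difference is in the least-eigenvalue step: you propose a Schur bound on $\|(M')^{-1}\|_{\mathrm{op}}$, while the paper equivalently constructs $y\in\ker M$ with $y_\mathcal{X}=x_\mathcal{X}$, tracks the accumulated $\ell_1$ error $\|y-x\|_1\lesssim(N+\|A\|_\infty)\|Mx\|_1$, converts to $\ell_2$ losing a factor $\sqrt{n}$, and invokes Lemma~\ref{lemma:least-nonzero-singular}; you should double-check that your row/column sparsity estimates for $(M')^{-1}$ actually recover $\Omega(1/(n(n+\|A\|_\infty^2)))$, since the naive Schur estimate on the accumulator rows can pick up an extra factor of $p$.
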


\begin{figure}
\centering
\begin{subfigure}[b]{0.3\textwidth}
\centering
\begin{tikzcd}
& & \, \ar[d, dashed] & \\
& & x \ar[d,"1"]& \\
\, \ar[r, dashed] & y \ar[r,"1"] \ar[d,"1"] & x+y \ar[r,"1"] \ar[d,"1"] & y \ar[r, dashed] & \, \\
& y \ar[r,"-1"] & x \ar[d, dashed] \ar[ru, "-1"]& \\
& & \, &
\end{tikzcd}
\caption{Down/Right Gadget}\label{fig:swapdr}
\end{subfigure}
\hspace{0.1\textwidth}%
\begin{subfigure}[b]{0.3\textwidth}
    \centering
    \begin{tikzcd}
    &   &   \, \ar[d, dashed]   & &   \\
    &   &   x  \ar[r,"1"]\ar[ld,"-1"']  & x \ar[ld,"1"] & \\
    \, & y \ar[l,dashed] & x+y \ar[l,"1"] \ar[d,"1"] & y \ar[l,"1"] \ar[ld,"-1"] & \, \ar[l, dashed] \\
    & & x \ar[d,dashed] & & \\
    & & \, & &
    \end{tikzcd}
    \caption{Down/Left Gadget}\label{fig:swapdl}
\end{subfigure}
\caption{The ``swap gadgets" which are used in place of each path crossing. For a gadget centered at vertex $(i,j)$, the goal is equations which enforce that $v(i-1,j) = v(i+1,j)$ and $v(i,j-1) = v(i,j+1)$, and fall into the allowable equation types (i) -- (iv). Gadget (a) is used when a path heading down intersects a path heading right (i.e. $v(i-1,j)$ and $v(i,j-1)$ are defined in terms of previous vertices on their paths), and gadget (b) is used when a path heading down intersects a path heading left (i.e. $v(i-1,j)$ and $v(i,j+1)$ are defined in terms of previous vertices on their paths). In gadget (a), the constraint on vertex $(i,j)$ is that $v(i,j) = v(i-1,j) + v(i,j-1)$; the constraint on vertex $(i,j+1)$ is that $v(i,j+1) = v(i,j) - v(i+1,j)$, and so forth.}\label{fig:swap}
\end{figure}
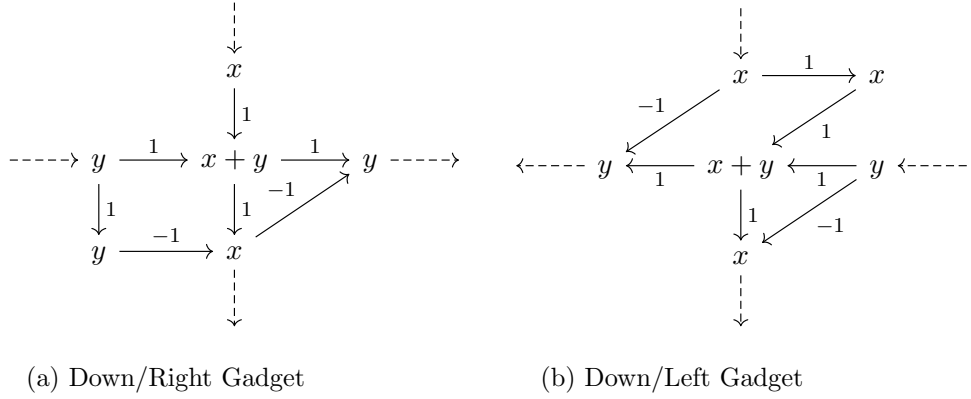

\begin{proof} 
Identify the vertex set of $G$ with $\{0,\dots,N-1\} \times \{0,\dots,N-1\}$ so that $\{0\} \times \{0,\dots,N-1\}$ is the top row. Define $$\mathcal{X} = \{(0, 6pj): 0 \leq j < p\}$$
$$\mathcal{Y} = \{(1+6p^2+6p, 6pj+3): 0 \leq j < p\}.$$

In the second row, we have constraints
$$v(1, 6pj) = v(0, 6pj) \qquad \forall 0 \leq j < p$$
$$v(1, k) = v(1, k-1) \qquad \forall k \not \equiv 0 \bmod{6p}.$$
Thus, we have $6p$ copies of each vertex of $X$. In the next $6p^2$ rows, we want to define constraints so that $v(1, 6pj + 6k) = v(1+6p^2, 6pk + 6j + 3)$. Once this is done, the last $6p$ rows can be used to implement the constraint $$v(1+6p^2+6p, 6pk + 3) = \sum_{j=0}^{p-1} A_{kj} v(1+6p^2, 6pk + 6j + 3)$$ which precisely means that $v(1+6p^2+6p, 6pk + 3) = A_kv_X.$
It remains to show how to implement the $p^2$ constraints $v(1,6pj+6k) = v(1+6p^2,6pk+6j+3)$ using the above equation types.

First, let's ignore the issue of crossings (i.e. interference between different constraints). To every pair $(j,k)$ we assign a row $2+6pj+6k$. We define a path of equal vertices
\begin{align*}
v(1,6pj+6k) 
&= \dots = v(2+6pj+6k, 6pj+6k) \\
&= \dots = v(2+6pj+6k, 6pk+6j+3) \\
&= \dots = v(1+6p^2, 6pk+6j+3).
\end{align*}
See Figure~\ref{fig:circuit} for a depiction of these constraints. Note that these paths are edge-disjoint but there are many vertices which lie at the crossing of two paths, which is a problem. As it stands, there are vertices $(i,j)$ such that we have both of the following constraints:
$$v(i-1,j) = v(i,j) = v(i+1,j)$$
$$v(i,j-1) = v(i,j) = v(i,j+1).$$
This causes interference between the two paths (i.e. the vertices on one path must now be all equal to the vertices on the other path, which is a constraint we do not want), and moreover the constraints no longer define a circuit. However, this is fixable. Every such crossing has a $3 \times 3$ neighborhood which is disjoint from all other $3\times 3$ neighborhoods of crossings. Thus, to deal with the crossings, we replace the equations of the $3 \times 3$ grid around each crossing with a ``swap gadget" (either as in Figure~\ref{fig:swapdr} or Figure~\ref{fig:swapdl}, depending on whether the horizontal path in the crossing is heading right or left).

Formally, to implement gadget (a) at vertex $(i,j)$, we have the following equations:
\begin{align*}
v(i,j) &= v(i-1,j) + v(i,j-1) \\
v(i+1,j-1) &= v(i,j-1) \\
v(i+1,j) &= v(i,j) - v(i+1,j-1) \\
v(i,j+1) &= v(i,j) - v(i+1,j).
\end{align*}

This defines a circuit with inputs $v(i-1,j)$ and $v(i,j-1)$ and outputs $v(i+1,j)$ and $v(i,j+1)$. It simulates the desired constraints that $v(i+1,j) = v(i-1,j)$ and $v(i,j+1) = v(i,j-1)$ while only using the allowable equation types (i)--(iv), and without introducing unwanted constraints. Gadget (b) is analogous, except the inputs are $v(i-1,j)$ and $v(i,j+1)$ and outputs are $v(i+1,j)$ and $v(i,j-1)$. These gadgets complete the implementation of the $p^2$ constraints $v(1,6pj+6k) = v(1+6p^2,6pk+6j+3)$, and therefore the constraint that $v_\mathcal{Y} = Av_\mathcal{X}$.

Finally, for every unconstrained vertex $(i,j)$, introduce a constraint $v(i,j) = 0$. Now, every vertex in $[n] \setminus \mathcal{X}$ is defined by exactly one equation in terms of previous vertices (i.e. there is an ordering of vertices such that this holds). Moreover, there are indeed $n-p$ constraints.

\paragraph{Properties of $\ker(\Theta)$} By construction, if $x \in \ker(\Theta) = \ker(M)$ then it holds that $x_\mathcal{Y} = Ax_\mathcal{X}$. Moreover, $\dim \ker(\Theta) = p$. Finally, let $x \in \ker(\Theta)$. For every $(i,j) \in [n]$, the value $x(i,j)$ is either zero, or equal to some element of $x_X$, or the sum of two elements of $x_\mathcal{X}$, or a partial sum of the inner product $A_kx_\mathcal{X}$ for some $k$. Altogether, $\norm{x}_2^2 \lesssim (N + \norm{A}_F^2)\norm{x_\mathcal{X}}_2^2$.

\paragraph{Bounding the Frobenius norm $\norm{\Theta}_F$.} Every entry of $M$ is in $\{0,-1,1\}$ except for $p^2$ entries which are each entry of $A_{ij}$. As a consequence, every column of $M$ has norm $O(1)$, except for $p^2$ columns with squared norms $O(1) + A_{ij}^2$. Therefore $$\norm{\Theta}_F^2 = \sum_{i,j} \langle M^T_i, M^T_j \rangle^2 \leq \sum_{i,j} \norm{M^T_i}_2^2 \norm{M^T_j}_2^2 \leq O(n + \norm{A}_F^2).$$

\paragraph{Least nonzero singular value.} Let $x \in \RR^n$ and define $\epsilon = \norm{Mx}_2$. By construction, the directed graph defined by $M$ is acyclic, so there is an ordering of the vertices such that the value at any vertex $(i,j)$ is determined by the values of previous vertices in the ordering. We can therefore inductively define a vector $y \in \ker(\Theta)$ satisfying $y_\mathcal{X} = x_\mathcal{X}$. We wish to bound $\norm{y-x}_2$. Let $E(i,j)$ be the signed error of $x$ in the equation at vertex $(i,j)$. Obviously, for any vertex $(i,j)$ defined by $v(i,j) = v(i-1,j)$ we have $$|y(i,j) - x(i,j)| \leq  |y(i-1,j) - x(i-1,j)| + |E(i,j)|,$$
and a similar identity holds if the vertex is defined by any other two-variable equation.
For a down/right swap gadget centered at vertex $(i,j)$, we have $y(i+1,j) = y(i-1,j)$ and $y(i,j+1) = y(i,j-1)$. Moreover
\begin{align*}
x(i+1,j)
&= -x(i+1,j-1) + x(i,j) + E(i+1,j) \\
&= -x(i,j-1) + x(i,j-1) + x(i-1,j) + E(i+1,j) - E(i+1,j-1) + E(i,j)
\end{align*}
so that 
\begin{align*}
|x(i+1,j) - y(i+1,j)| 
&\leq |x(i-1,j) - y(i-1,j)| + |x(i+1,j) - x(i-1,j)| \\
&\leq |x(i-1,j) - y(i-1,j)| + |E(i+1,j)| + |E(i+1,j-1)| + |E(i,j)|
\end{align*}
Similarly,
$$|x(i,j+1) - y(i,j+1)| \leq |x(i,j-1) - y(i,j-1)| + |E(i,j+1)| + |E(i+1,j)| + |E(i+1,j-1)|.$$
Analogous bounds hold for the down/left swap gadget. Finally, if vertex $(i,j)$ is constrained by $v(i,j) = 0$, then $|x(i,j) - y(i,j)| \leq |E(i,j)|.$
Thus, inducting over the vertex ordering and then summing over all vertices $(1+6p^2, 6pk+6j+3)$, we have that $$\sum_{j,k} |y(1+6p^2, 6pk+6j+3) - x(1+6p^2, 6pk+6j+3)| \leq 2\sum_{i,j \in [N]} |E(i,j)| \leq 2\norm{Mx}_1$$
since every error appears in at most $2$ of the paths $v(1,6pj+6k) \to v(1+6p^2, 6pk+6j+3)$. The same bound holds not only for vertex $(1+6p^2, 6pk+6j+3)$ but also for previous vertices in its path, so $$\sum_{i,j \in [N]: i \leq 1+6p^2} |y(i,j) - x(i,j)| \leq 4N\norm{Mx}_1.$$
Now we must handle the last $6p$ rows. For any vertex $(i,j)$ in the last $6p$ rows, for any path from $(i,j)$ to row $1+6p^2$, the product of path weights is either $1$ or $A_{kl}$ for some $k,l$. Therefore $$\sum_{i,j \in [N]: j > 1+6p^2} |y(i,j) - x(i,j)| \leq 6p\norm{Mx}_1 + \norm{A}_\infty \sum_{j,k} |y(1+6p^2,6pk+6j+3) - x(1+6p^2, 6pk+6j+3)|.$$
In total, we conclude that $$\norm{y-x}_1 \leq (4N+6p+2\norm{A}_\infty)\norm{Mx}_1$$
and thus $$\norm{y-x}_2 \leq O(N(N+\norm{A}_\infty)\norm{Mx}_2).$$
The least nonzero singular value bound follows from Lemma~\ref{lemma:least-nonzero-singular}, below.
\end{proof}

\begin{lemma}\label{lemma:least-nonzero-singular}
Let $\Theta = M^T M$ be a positive semi-definite matrix with eigenvalues $$0 = \lambda_1 = \dots = \lambda_k < \lambda_{k+1} \leq \dots \leq \lambda_n.$$
Suppose that for any $x \in \RR^n$ there is some $y \in \ker(\Theta)$ with $\norm{x-y}_2 \leq \alpha \norm{Mx}_2$. Then $\lambda_{k+1} \geq 1/\alpha^2$.
\end{lemma}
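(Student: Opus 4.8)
\textbf{Proof plan for Lemma~\ref{lemma:least-nonzero-singular}.} The plan is to test the hypothesis on a well-chosen vector $x$, namely a unit eigenvector of $\Theta$ associated with the smallest nonzero eigenvalue $\lambda_{k+1}$. First I would record the two basic facts about $\Theta = M^TM$: since $\|Mx\|_2^2 = x^T M^T M x = x^T \Theta x$, we have $\ker(M) = \ker(\Theta)$, and more quantitatively $\|Mx\|_2^2 = x^T\Theta x$ for every $x$. In particular, if $x$ is a unit eigenvector with $\Theta x = \lambda_{k+1} x$, then $\|Mx\|_2 = \sqrt{\lambda_{k+1}}$.

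Next I would use the spectral theorem to note that this eigenvector $x$ lies in the orthogonal complement of $\ker(\Theta)$ (the eigenspaces for distinct eigenvalues are orthogonal, and $\ker(\Theta)$ is the eigenspace for eigenvalue $0$). Hence for \emph{any} $y \in \ker(\Theta)$ we have $\langle x, y\rangle = 0$, so by the Pythagorean identity $\|x - y\|_2^2 = \|x\|_2^2 + \|y\|_2^2 \ge \|x\|_2^2 = 1$, giving the lower bound $\|x - y\|_2 \ge 1$ uniformly over $y \in \ker(\Theta)$.

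Finally I would combine this with the hypothesis: applied to our chosen $x$, the hypothesis produces some $y \in \ker(\Theta)$ with $\|x - y\|_2 \le \alpha\|Mx\|_2 = \alpha\sqrt{\lambda_{k+1}}$. Chaining the two inequalities yields $1 \le \alpha\sqrt{\lambda_{k+1}}$, i.e.\ $\lambda_{k+1} \ge 1/\alpha^2$, which is exactly the claim. There is essentially no obstacle here — the only thing to be a little careful about is the degenerate case $k = n$ (i.e.\ $\Theta = 0$, $M = 0$), where there is no eigenvalue $\lambda_{k+1}$ and the statement is vacuous; I would dispatch that case in one line at the start. (In the application to Theorem~\ref{theorem:grid}, $\alpha = O(N(N + \|A\|_\infty))$, so this gives $\lambda_{p+1} \ge \Omega(1/(n(n + \|A\|_\infty^2)))$ as stated there.)
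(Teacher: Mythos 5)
Your proof is correct and takes essentially the same route as the paper: the paper applies the hypothesis to an arbitrary $v \in \rspan(\Theta) = \ker(\Theta)^\perp$ and invokes the Rayleigh-quotient characterization $\lambda_{k+1} = \inf_{v \in \rspan(\Theta)\setminus\{0\}} \|Mv\|_2^2/\|v\|_2^2$, whereas you specialize to a single unit eigenvector for $\lambda_{k+1}$, but both arguments rest on the identical two observations (the Pythagorean identity for $v - y$ with $y \in \ker(\Theta)$, and $\|Mv\|_2^2 = v^T\Theta v$). The difference is purely cosmetic.
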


\begin{proof}
Note that $$\lambda_{k+1} = \inf_{v \in \rspan(\Theta) \setminus \{0\}} \frac{\norm{Mv}_2^2}{\norm{v}_2^2}.$$
Let $v \in \rspan(\Theta)$. Let $y \in \ker(\Theta)$ be such that $\norm{v-y}_2 \leq \alpha \norm{Mv}_2$. By the Pythagorean Theorem, $\norm{v - y}_2^2 = \norm{v}_2^2 + \norm{y}_2^2$ and so in particular we have $\norm{v-y}_2 \geq \norm{v}_2$, so $\norm{Mv}_2 \geq \alpha^{-1} \norm{v}_2$ as desired.
\end{proof}

To prove Corollary~\ref{cor:grid-dense-kernel} from Theorem~\ref{theorem:grid}, it remains to exhibit a matrix $A \in \RR^{p \times p}$ such that for all $x \in \RR^p$, either $x$ or $Ax$ is robustly dense. The following lemma shows that a Gaussian random matrix $A$ satisfies this version of the uncertainty principle; for any vector $x$, at least one of $x$ or $A x$ must be far from sparse. Furthermore this principle is quite strong as it holds even for a sum of sparsities linear in the dimension $M$; this is not true for $A$ the Fourier transform, for example, due to the existence of the Dirac comb which has sparsity $\sqrt{M}$ in both position and frequency bases, see e.g. the discussion in \cite{candes2006robust}.
\begin{lemma}\label{lemma:random-uncertainty}
Let $A \in \RR^{M \times M}$ have independent $N(0,1/M)$ entries. Then with probability $1 - \exp(-cM)$ it holds that $$\dist_{M/50}\begin{bmatrix} x \\ Ax \end{bmatrix} \geq \frac{1}{33} \norm{x}_2.$$
(Recall from Definition~\ref{def:dist} that $\dist_k(v) = \inf_{w \in B_0(k)} \norm{v-w}_2$.)
\end{lemma}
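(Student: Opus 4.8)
The plan is to prove the statement by a union bound over all possible "bad" support patterns, combined with standard Gaussian concentration. Fix a sparsity budget $k = M/50$ for the combined vector $\begin{bmatrix} x \\ Ax \end{bmatrix} \in \RR^{2M}$. A near-sparse approximant is determined (up to the values on the support) by a subset $T \subseteq [2M]$ with $|T| \le k$; write $T = T_1 \sqcup T_2$ where $T_1 \subseteq [M]$ indexes coordinates of $x$ and $T_2$ indexes coordinates of $Ax$. The distance $\dist_k$ is at least $\min_{|T| \le k}$ of the $\ell_2$-norm of $\begin{bmatrix} x \\ Ax \end{bmatrix}$ restricted to $T^c$. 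So it suffices to show that with high probability, simultaneously over all such $T$ and all unit vectors $x$,
\[ \|x_{T_1^c}\|_2^2 + \|(Ax)_{T_2^c}\|_2^2 \ge (1/33)^2. \]
First I would dispense with the case where $\|x_{T_1^c}\|_2 \ge 1/33$ trivially; so it remains to handle $\|x_{T_1}\|_2 \ge 1 - 1/33 = 32/33$ and show that $\|(Ax)_{T_2^c}\|_2$ cannot be too small.

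The key step: for a \emph{fixed} subset $T_2 \subseteq [M]$ with $|T_2| \le k$ and a \emph{fixed} unit vector $x$, the vector $(Ax)_{T_2^c}$ has i.i.d.\ $N(0, \|x\|_2^2/M) = N(0,1/M)$ entries (since the rows of $A$ are independent and each row has i.i.d.\ $N(0,1/M)$ entries, so $\langle A_i, x\rangle \sim N(0,\|x\|^2/M)$). Hence $\|(Ax)_{T_2^c}\|_2^2$ is a $\chi^2$ random variable with $|T_2^c| \ge M - k = (49/50)M$ degrees of freedom, scaled by $1/M$; its mean is $\ge 49/50$ and by standard $\chi^2$ lower tail bounds, $\Pr[\|(Ax)_{T_2^c}\|_2^2 \le 1/4] \le \exp(-c_0 M)$ for an absolute constant $c_0 > 0$. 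Now I net over $x$: restricting to $\|x\|_2 = 1$, a standard $\epsilon$-net argument on the sphere $S^{M-1}$ (net size $(3/\epsilon)^M$) together with the operator-norm bound $\|A\|_{op} \le 3$ (which holds except with probability $\exp(-\Omega(M))$ by Theorem~\ref{thm:rmt}, since $A$ is $M\times M$ with $N(0,1/M)$ entries) lets me upgrade the fixed-$x$ bound to a uniform-over-$x$ bound, at the cost of shrinking the constant; the Lipschitz constant of $x \mapsto \|(Ax)_{T_2^c}\|_2$ is at most $\|A\|_{op} \le 3$.

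Finally I union bound over the choice of $T_2$: there are at most $\binom{M}{k} \le (eM/k)^k = (50e)^{M/50} \le e^{M/5}$ choices (using $\log(50e) < 5$). Choosing the net fine enough that the net cardinality $(3/\epsilon)^M$ times $e^{M/5}$ is still dominated by $\exp(c_0 M / 2)$ — which requires $\epsilon$ a small absolute constant — gives that with probability $1 - \exp(-cM)$, for every $T_2$ and every unit $x$ with $\|x_{T_1}\|_2 \ge 32/33$ we have $\|(Ax)_{T_2^c}\|_2 \ge 1/4 - 3\epsilon \ge 1/5 > 1/33$. Combining with the trivial case handled above yields the claim (the constant $1/33$ is not tight but comfortably achievable). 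The main obstacle is bookkeeping the constants through the net argument and the two cases so that the final constant comes out at least $1/33$ with sparsity budget exactly $M/50$; conceptually everything reduces to the $\chi^2$ lower tail plus a union bound, which is routine. One should also double-check the reduction from $\dist_{M/50}$ of the combined vector to the "best support" formulation, i.e.\ that the optimal $\tau$-sparse approximant simply zeroes out the $2M - \tau$ smallest coordinates, so the bound on a worst-case fixed $T$ indeed controls $\dist_{M/50}$.
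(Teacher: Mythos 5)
Your general plan — decompose the support into $T_1$ (coordinates of $x$) and $T_2$ (coordinates of $Ax$), reduce to the case $\|x_{T_1}\|_2 \geq 32/33$, and lower-bound $\|(Ax)_{T_2^c}\|_2$ via a concentration bound plus a union bound — is the right skeleton, and the case split at the start is the same one the paper uses. The gap is in the uniform-over-$x$ step: an $\epsilon$-net over all of $S^{M-1}$ has cardinality $(3/\epsilon)^M = e^{M\log(3/\epsilon)}$, and for the net to give you a useful constant in the conclusion you need $\epsilon$ at most roughly $1/60$ (since you later subtract $3\epsilon$ from $1/4$), i.e.\ the net has at least $e^{5M}$ points. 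But the $\chi^2$ lower tail $\Pr[\|(Ax)_{T_2^c}\|^2 \le 1/4]$ only decays like $e^{-c_0 M}$ with $c_0 \approx 0.3$ (using $\Pr[\chi^2_d \leq \gamma d] \leq (\gamma e^{1-\gamma})^{d/2}$ with $\gamma \approx 1/4$ and $d \approx 0.98 M$). Even lowering the threshold does not save you: to get a tail $e^{-CM}$ you must shrink the threshold to roughly $e^{-1.02 C}$, while your Lipschitz error $3\epsilon$ only shrinks like $e^{-C}$ when the net has $e^{CM}$ points, so the threshold is always dominated by the error term. The approach fails by a constant factor in the exponent for every choice of parameters; this is not a bookkeeping issue that can be patched.

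What you are missing is to actually \emph{use} the near-sparsity of $x$ in the netting step, not merely note it. The paper's proof passes from $x$ to a $k$-sparse approximant $y$ (costing $\|A\|_{\mathrm{op}} \cdot \|x-y\|_2 \leq 3/33$ in the conclusion via the triangle inequality, which is the same Lipschitz idea you used), and then the relevant random-matrix event is $\sigma_{\min}(A_{S T}) \geq 1/8$ for the $(M-k)\times k$ submatrix $A_{ST}$ with $S = T_2^c$, $T = \mathrm{supp}(y)$. The point is that $\sigma_{\min}$ of a \emph{fixed} rectangular Gaussian matrix already gives you a bound uniform over the corresponding unit vectors $y_T$, and the concentration $\Pr[\sigma_{\min}(A_{ST}) < 1/8] \leq e^{-\Omega(M)}$ (Theorem~\ref{thm:rmt}, with the slack coming from $\sqrt{M-k}-\sqrt{k}$ being a constant fraction of $\sqrt{M}$) involves only a covering of the $k$-dimensional column sphere — $e^{O(k)} = e^{O(M/50)}$ points, not $e^{\Theta(M)}$. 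Combined with the union bound over the $e^{2Mh(k/M)} \approx e^{0.2 M}$ choices of $(S,T)$, the exponents balance. In short: net over supports and a $k$-dimensional sphere, not over the full $M$-dimensional sphere.
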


\begin{proof}
First, $\Pr[\norm{A}_\text{op} > 3] \leq \exp(-cM)$ by Theorem~\ref{thm:rmt}. By a second application of the same theorem, for $k \leq M/50$ and sets $S,T \subseteq [M]$ with $|S| = M-k$ and $|T| = k$, $$\Pr[\sigma_\text{min}(A_{ST}) \leq 1/8] \leq \exp(-0.2M).$$
Recall from standard estimates\footnote{Explicitly, by considering $X \in \{0,1\}^M$ uniformly distributed over indicator vectors of sets with at most $k$ elements and using $\log \sum_{i \le k} {M \choose i} = H(X) \le \sum_i H(X_i) \le M \cdot h(k/M)$ where $H(X) = \EE[-\log \mu(X)]$ denotes the Shannon entropy of random variable $X \sim \mu$, see \cite{cover1999elements}.} that the number of sets $T$ with at most $k$ elements is at most $e^{M \cdot h(k/M)}$ where $h(p) = -p \log p - (1 - p) \log(1 - p)$, and the same estimate upper bounds the number of possible sets $S$. Hence, union bounding over subsets $S,T$ of those sizes, we get $$\Pr[\exists S,T: \sigma_\text{min}(A_{ST}) \leq 1/2] \leq \exp(2Mh(k/M)-0.2M) = \exp(-\Omega(m)).$$
We union bound and condition on no bad events, i.e. no such $S$ or $T$ exists and $\norm{A}_\text{op} \leq 3$. Now let $x \in \RR^M$ be an arbitrary vector with $\|x\|_2 = 1$ and consider two cases:
\begin{enumerate}
    \item There exists a $k$-sparse vector $y$ with $k = M/50$ such that $\norm{x - y}_2 \leq 1/33$. In this case, $\dist_k(A y) \ge (1/8) \norm{y} \geq 4/33$ by the event we conditioned on, and by the triangle inequality
    \[ |\dist_k(A x) - \dist_k(A y)| \le \norm{A x - A y}_2 \le \norm{A}_{\text{op}} \norm{x - y}_2 \le 1/11 \]
    so $\dist_k(A x) \ge 4/33 - 1/11 = 1/33$.
    \item No such $y$ exists. Then by definition $\dist_{k} x \ge 1/33$.
\end{enumerate}
In either case, we have
\[ \dist_k \begin{bmatrix} x \\ A x \end{bmatrix} \geq 1/33\]
as desired.
\end{proof}

Instantiating Theorem~\ref{theorem:grid} with the matrix $A$ guaranteed by the above lemma yields a precision matrix $\Theta$ with all the desired properties:

\begin{proof}[\textbf{Proof of Corollary~\ref{cor:grid-dense-kernel}}]
Let $A \in \RR^{p \times p}$ have independent $N(0,1/p)$ entries. By standard concentration results, it holds that $\norm{A}_F \leq O(p)$ and $\norm{A}_\infty \leq O(p\log p)$ with probability $1 - \exp(-\Omega(p))$. By Lemma~\ref{lemma:random-uncertainty}, it holds that $$\dist_{p/50}\begin{bmatrix} x \\ Ax \end{bmatrix} \geq \frac{1}{33} \norm{x}_2$$
for all $x \in \RR^p$, with probability $1 - \exp(-\Omega(p))$. As a result, a matrix $A$ satisfying all of these properties exists. Apply Theorem~\ref{theorem:grid} to this matrix. Let $U = \mathcal{X} \cup \mathcal{Y}$, where $\mathcal{X},\mathcal{Y} \subseteq [n]$ are as defined in the theorem, and let $\Theta$ be the given PSD matrix. We have that $\dim \ker(\Theta) = p = \Omega(n^{1/4})$. Moreover, for any $x \in \ker(\Theta)$, we have from Theorem~\ref{theorem:grid} that $x_\mathcal{Y} = Ax_\mathcal{X}$ and $\norm{x}_2 \leq O(\sqrt{n})\cdot \norm{x_\mathcal{X}}_2$, so $$\dist_{|U|/100, U}(x) = \dist_{p/50} \begin{bmatrix} x_\mathcal{X} \\ x_\mathcal{Y} \end{bmatrix} \geq \frac{1}{33}\norm{x_\mathcal{X}}_2 \geq \Omega(n^{-1/2} \norm{x}_2).$$
Finally, Theorem~\ref{theorem:grid} also guarantees that $\norm{\Theta}_F \leq O(n^{1/2})$ and that the least nonzero eigenvalue is $\Omega(1/n^2)$.
\end{proof}

We can now show that there exists a positive-definite precision matrix supported on the simplicized grid graph such that $S$-preconditioned Lasso fails on some sparse signal with high probability, for any preconditioner. This is immediate from Corollary~\ref{cor:grid-dense-kernel} and Theorem~\ref{theorem:lower-bound}:

\begin{corollary}\label{corollary:grid-lower-bound}
Let $n \in \NN$ and let $G$ be the $N \times N$ up/right simplicized grid graph with $N = \sqrt{n}$. Then there is some $k = O(\log n)$ and some $G$-sparse positive-definite matrix $\Thetat$ with $\cond(\Thetat) \leq \poly(n)$ with the following property: for any preconditioner $S \in \RR^{n \times s}$, there is a $k$-sparse signal $w^*$ such that $S$-preconditioned Lasso with $m$ samples fails at exact recovery with probability at least $1 - O(m/n^{1/4}) - \exp(-\Omega(m))$, over the randomness of covariates $X_1,\dots,X_m \sim N(0,\Thetat^{-1})$ and with noiseless responses $Y_i = \langle w^*, X_i \rangle$.
\end{corollary}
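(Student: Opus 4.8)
The plan is to apply the generic lower-bound framework of Theorem~\ref{theorem:lower-bound} verbatim to the precision matrix produced by Corollary~\ref{cor:grid-dense-kernel}. First I would invoke Corollary~\ref{cor:grid-dense-kernel} with $N = \sqrt{n}$ to obtain a $G$-sparse PSD matrix $\Theta$ and a vertex set $U \subseteq [n]$ with $r := \dim\ker(\Theta) = \Omega(n^{1/4})$; with $\dist_{|U|/100,\,U}(x) \geq \Omega(n^{-1/2})\norm{x}_2$ for every $x \in \ker(\Theta)$; with $\norm{\Theta}_F = O(n^{1/2})$; and with least nonzero eigenvalue $\lambda = \Omega(1/n^2)$.

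Next I would instantiate the parameters of Theorem~\ref{theorem:lower-bound}: take $V = U$ and $\tau = |U|/100$, so that $|V|/\tau = 100$ and the density quantity is $\eta = \inf_{x \in \ker(\Theta)\setminus\{0\}} \dist_{\tau,V}(x)/\norm{x}_2 = \Omega(n^{-1/2}) > 0$. Since $G$ is the simplicized grid graph, which has bounded maximum degree, the rows and columns of $\Theta$ are $O(1)$-sparse, so any $k = \Theta(\log n)$ with $k \geq \max(O(1),\,1 + 300\log n)$ simultaneously satisfies the two sparsity requirements ($\Theta$ is $k$-sparse and $k > 3(|V|/\tau)\log n$). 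I would then restrict attention to $m$ with $2m < r$ — for $m = \Omega(n^{1/4})$ the claimed bound $1 - O(m/n^{1/4})$ is non-positive and the statement is vacuous — so the hypothesis $r > 2m$ holds.

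It remains to choose $\epsilon$ and verify the condition-number bound. Substituting the estimates above, $\eta^2\lambda^3/(16200 n^3 \norm{\Theta}_F^2) = \Omega(n^{-1}\cdot n^{-6}/n^{4}) = \Omega(n^{-11})$, so one may fix some $\epsilon = \Theta(1/\poly(n))$ below this threshold and set $\Thetat = \Theta + \epsilon I$, which is positive-definite and still $G$-sparse. Because $\ker(\Theta)\neq\{0\}$, the smallest eigenvalue of $\Thetat$ is exactly $\epsilon$, while $\lambda_{\max}(\Thetat) \leq \norm{\Theta}_F + \epsilon = O(n^{1/2})$, hence $\cond(\Thetat) = O(n^{1/2}/\epsilon) = \poly(n)$. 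Theorem~\ref{theorem:lower-bound} now yields that for every preconditioner $S \in \RR^{n\times s}$ there is a $k$-sparse $w^*$ on which $S$-preconditioned Lasso fails at exact recovery with probability at least $1 - \tfrac{4m}{3r} - \exp(-\Omega(m))$, and since $r = \Omega(n^{1/4})$ this equals $1 - O(m/n^{1/4}) - \exp(-\Omega(m))$, proving the corollary.

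Since every component is quoted directly from the preceding results, there is no substantive obstacle; the only point requiring care is tracking the polynomial exponents in $\eta$, $\lambda$, and $\norm{\Theta}_F$ so that the threshold on $\epsilon$ is inverse-polynomial rather than inverse-exponential, which is exactly what keeps $\cond(\Thetat)$ polynomial in $n$.
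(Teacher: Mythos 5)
Your proof is correct and follows exactly the paper's argument: invoke Corollary~\ref{cor:grid-dense-kernel}, set $V=U$ and $\tau=|U|/100$ so $\eta = \Omega(n^{-1/2})$, pick $k = \Theta(\log n)$ exceeding both $300\log n$ and the (constant) row-sparsity of $\Theta$, restrict to $2m < r$ (else the claim is vacuous), choose $\epsilon = \Omega(n^{-11})$ below the threshold $\eta^2\lambda^3/(16200 n^3\|\Theta\|_F^2)$, and apply Theorem~\ref{theorem:lower-bound} to $\Thetat = \Theta + \epsilon I$. Your condition-number verification is in fact slightly more explicit than the paper's (you observe $\lambda_{\min}(\Thetat) = \epsilon$ exactly because $\ker\Theta \ne \{0\}$), but the approach and the parameter bookkeeping are the same.
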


\begin{proof}
Let $\Theta$ be the $G$-sparse PSD matrix guaranteed by Corollary~\ref{cor:grid-dense-kernel}. Let $k > 300\log(n)$. Since $G$ has maximum degree $6$, we know that $\Theta$ is $k$-sparse. Since $\dim \ker(\Theta) = \Omega(n^{1/4})$, if $\dim\ker(\Theta) \leq 2m$ then the corollary statement is vacuous, so we may assume that $\dim \ker(\Theta) > 2m$. We know that $$\eta = \min_{x \in \ker(\Theta) \setminus \{0\}} \frac{\dist_{|U|/100, U}(x)}{\norm{x}_2} \geq \Omega(n^{-1/2}),$$
and $\norm{\Theta}_F \leq O(n^{1/2})$, and the least nonzero eigenvalue of $\Theta$ satisfies $\lambda \geq \Omega(1/n^2)$. Hence, we can find some $\epsilon = \Omega(n^{-11})$ such that $$\epsilon < \frac{\eta^2\lambda^3}{16200n^3\norm{\Theta}_F^2}.$$
Let $\Thetat = \Theta + \epsilon I$. Note that $\Thetat$ has polynomial condition number, since $\lambda_\text{min}(\Thetat) = \epsilon = \Omega(n^{-11})$ and $\norm{\Thetat}_F \leq O(n^{1/2})$. Appealing to Theorem~\ref{theorem:lower-bound} completes the proof.

\end{proof}

\section{Bootstrapping to high-treewidth graphs}\label{section:unminoring}

In this section, we extend the lower bound proof from the grid graph to any high-treewidth graph. There are three components to this bootstrapping process:

\begin{enumerate}
    \item The Grid Minor Theorem \cite{robertson1986graph, chekuri2016polynomial, chuzhoy2021towards}, which states that any graph with treewidth $t$ contains a polynomial-sized (square) grid minor
    \item A proof that for any graph $G$ and minor $H$, and any precision matrix $\Gamma$ supported on $H$, there is a precision matrix $\Theta$ supported on $G$, and a subset $Y \subseteq V(G)$ such that the Schur complement $\Theta/\Theta_{YY}$ approximates $\Gamma$ to any fixed accuracy
    \item A proof that if preconditioned Lasso succeeds with some probability $p$ on $\Theta$, then there is a preconditioner such that preconditioned Lasso succeeds with probability nearly $p$ on any matrix near $\Theta/\Theta_{YY}$.
\end{enumerate}

The first component, the Grid Minor Theorem, is a celebrated and well-known result. The third component is fairly simple: for any preconditioner $S$ for $\Theta$, we can define $T = S_{[n]\setminus Y}$. Then for any signal $w^* \in \RR^n$, the success probability $$\Pr_{X_1,\dots,X_m \sim N(0,\Theta^{-1})}\left[w^* \in \argmin_{w \in \RR^n: Xw = Xw^*} \norm{S^T w}_1 \right]$$
is at most
$$\Pr_{X'_1,\dots,X'_m \sim N(0,(\Theta/\Theta_{YY})^{-1})}\left[v^* \in \argmin_{v \in \RR^{[n]\setminus Y}: X'v = X'v^*} \norm{T^T v}_1\right]$$
if we define $v^* = (w^*)_{[n]\setminus Y}$, since $X'_1,\dots,X'_m$ are distributed exactly as the restrictions of $X_1,\dots,X_m$ to $[n]\setminus Y$. This means that if $T$-preconditioned Lasso fails on precision matrix $\Theta/\Theta_{YY}$, then $S$-preconditioned Lasso fails on precision matrix $\Theta$. Moreover, if the precision matrix is not exactly $\Theta/\Theta_{YY}$ but near it, then we can bound the discrepancy in success probability by a total variation argument.

The second component is more involved. To sketch the proof, let $G$ be a graph and let $H$ be a minor of $G$. Let $\Gamma$ be a precision matrix supported on $H$. Since $H$ is a minor of $G$, for every vertex $v \in H$ there is a connected component $Z_v$ in $G$, such that for every edge $(u,v)$ of $H$ there is an edge between $Z_u$ and $Z_v$, the corresponding components of $G$. For every component $Z_v$ we can identify a representative $x_v$, and we want to construct a GGM (with precision matrix $\Theta$ supported on $G$) such that the covariance of the representatives approximates $\Gamma^{-1}$. 

The difficulty is that even if there is an edge $(u,v) \in E(H)$, there may be no edge in $G$ between the corresponding representatives $x_u$ and $x_v$, so we cannot create a dependency between $x_u$ and $x_v$ just by choosing $\Theta_{x_ux_v}$ appropriately. Instead, all we know is that some vertex in $Z_u$ is adjacent to some vertex in $Z_v$. To deal with this difficulty, the main idea is to enforce that the variables in any one component $Z_v$ are all highly correlated. This essentially (additively) collapses the dependencies, so that any dependency between two vertices in different components induces approximately the same dependency between the components' representatives.

To be more precise, there are three pieces to the construction of $\Theta$:
\begin{itemize}
    \item For every component $Z_i$, we set $\Theta_{Z_iZ_i}$ to be a large multiple of the Laplacian $L_{G[Z_i]}$ of the induced subgraph $G[Z_i]$. This creates a \emph{Gaussian free field} \cite{sheffield2007gaussian} on each component, making the variables in each component highly positively correlated.
    \item For every edge $(i,j) \in E(H)$, we distribute the dependency $\Gamma_{ij}$ arbitrarily across the edges between $Z_i$ and $Z_j$. This ensures that the covariance between the variables on $x_i$ and $x_j$ is approximately $\Gamma_{ij}$.
    \item To fix the diagonal of the Schur complement (i.e. the variances of the variables on each $x_i$), we update each $\Theta_{x_ix_i}$ appropriately. These updates are completely independent and do not affect the off-diagonal, so it is always possible to fix the variance of $x_i$ to $\Gamma_{ii}$ for all $i$.
\end{itemize}

The following lemma describes this construction more formally and proves correctness. Notably, the ``additivity" property is not exact, and proving that it approximately holds requires expanding the Schur complement as a power series, approximating the first few terms, and truncating the rest.

\begin{lemma}\label{lemma:unminoring}
Let $\epsilon > 0$. Let $G, H$ be graphs with $n = |V(G)|$ and suppose that $H$ is a minor of $G$. Let $\Gamma$ be a positive semi-definite matrix supported on $H$. Then there is a positive semi-definite matrix $\Theta$ supported on $G$ such that $\norm{\Theta}_F \leq n^{21/2}\norm{\Gamma}_F/\epsilon$, and a Schur complement $\tilde{\Gamma}$ of $\Theta$ satisfying $\norm{\tilde{\Gamma} - \Gamma}_F \leq \epsilon\norm{\Gamma}_F$.

Moreover, if $\lambda_\text{min}(\Gamma) \geq 2\epsilon\norm{\Gamma}_F$ then $\Theta$ is positive-definite with $\lambda_\text{min}(\Theta) \geq O(\epsilon \norm{\Gamma}_F/n)$.
\end{lemma}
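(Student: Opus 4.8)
The plan is to realize $\Gamma$ (up to $\epsilon\|\Gamma\|_F$ error) as a Schur complement of a matrix of the form $\Theta = tL + B$ supported on $G$, where $t$ is a large scalar, $L$ is a block-diagonal sum of Laplacians along the branch sets of the minor (a ``Gaussian free field'' on each branch set, which forces the variables inside a branch set to collapse together as $t\to\infty$), and $B$ is a bounded correction carrying the entries of $\Gamma$. Concretely: since $H$ is a minor of $G$, I would fix connected, pairwise-disjoint branch sets $\{Z_i\}_{i \in V(H)} \subseteq V(G)$ such that for each $(i,j)\in E(H)$ some $G$-edge $(a_{ij},b_{ij})$ joins $Z_i$ to $Z_j$; choose a representative $x_i \in Z_i$, and set $R = \{x_i\}$, $Y = V(G)\setminus R$. (Vertices of $G$ in no branch set play no role and get a harmless $I$ entry, needed only so $\Theta_{YY}$ is invertible.) One may also assume $\Gamma \succ 0$: replacing $\Gamma$ by $\Gamma + \delta I$ with $\delta = \epsilon\|\Gamma\|_F/(2n)$ keeps it $H$-supported, costs only a factor $2$ in $\|\Gamma\|_F$, and perturbs it by at most $\tfrac\epsilon2\|\Gamma\|_F$ in Frobenius norm. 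Then take $L = \bigoplus_i L_{G[Z_i]}$ and let $B$ be symmetric with only the nonzero entries $B_{x_ix_i} = \Gamma_{ii}$ and $B_{a_{ij}b_{ij}} = B_{b_{ij}a_{ij}} = \Gamma_{ij}/2$. Both are $G$-sparse, $\|B\|_F \le 2\|\Gamma\|_F$, and $\|L\|_F \le n^2$.

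The core of the argument is to compute $\tilde\Gamma := \Theta/\Theta_{YY}$ and show $\tilde\Gamma = \Gamma + O(1/t)$. Each grounded Laplacian $L_{YY}$ is positive definite (within each $Z_i$ we delete exactly $x_i$, and deleting a vertex from a connected graph's Laplacian makes it PD), so for $t$ large $\Theta_{YY}^{-1}$ expands via $\Theta_{YY}^{-1} = \tfrac1t L_{YY}^{-1} - \tfrac1t L_{YY}^{-1} B_{YY} \Theta_{YY}^{-1}$, iterated, with remainder controlled by $\|\Theta_{YY}^{-1}\| \le 2/(t\lambda_{\min}(L_{YY}))$. Plugging into $\Theta/\Theta_{YY} = (tL_{RR}+B_{RR}) - (tL_{RY}+B_{RY})\Theta_{YY}^{-1}(tL_{YR}+B_{YR})$, the $t^2$- and $t^1$-order contributions assemble into $t\,(L_{RR} - L_{RY}L_{YY}^{-1}L_{YR}) = t\,(L/L_{YY})$, which \emph{vanishes}: the Schur complement of a connected graph's Laplacian onto one vertex is $0$ (because column sums of a Laplacian are zero), and $L$ is block-diagonal along the $Z_i$. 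The surviving $t^0$ term is exactly $W^\top B\,W$ with $W = \bigl[\begin{smallmatrix} I_R \\ -L_{YY}^{-1}L_{YR}\end{smallmatrix}\bigr]$ the $L$-harmonic extension from $R$; since $L\mathbf 1_{Z_i} = 0$, column $i$ of $W$ is $\mathbf 1_{Z_i}$, so $(W^\top B W)_{ij} = \mathbf 1_{Z_i}^\top B\, \mathbf 1_{Z_j} = \Gamma_{ij}$ by design. Hence $\tilde\Gamma = \Gamma + E_t$ with $\|E_t\|_F \le \poly(n)\,\|\Gamma\|_F^2 \big/ (t\,\lambda_{\min}(L_{YY})^{O(1)})$; as algebraic connectivities and grounded-Laplacian eigenvalues of $n$-vertex graphs are $\ge 1/\poly(n)$, the choice $t = \poly(n)\,\|\Gamma\|_F/\epsilon$ gives $\|E_t\|_F \le \epsilon\|\Gamma\|_F$, and then $\|\Theta\|_F \le t\|L\|_F + \|B\|_F \le n^{21/2}\|\Gamma\|_F/\epsilon$ after collecting exponents.

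For definiteness, decompose $v = v_0 + v_1$ with $v_0 = \sum_i c_i \mathbf 1_{Z_i} \in \ker L$ and $v_1 \perp \ker L$. Then $v^\top \Theta v = v_0^\top B v_0 + 2v_0^\top B v_1 + v_1^\top(tL+B)v_1 \ge c^\top \Gamma c - 2\|B\|\,\|v_0\|\,\|v_1\| + (t\lambda^+_{\min}(L) - \|B\|)\|v_1\|^2$. Using $c^\top \Gamma c \ge (\lambda_{\min}(\Gamma)/n)\|v_0\|^2$ (since $\|v_0\|^2 \le n\|c\|^2$), the resulting $2\times2$ quadratic form in $(\|v_0\|, \|v_1\|)$ is nonnegative once $t\lambda^+_{\min}(L)$ exceeds a $\poly(n)\,\|B\|^2/\lambda_{\min}(\Gamma)$ threshold, which our chosen $t$ does; this gives $\Theta \succeq 0$ (hence the plain PSD statement, via the $\delta I$ reduction when $\Gamma$ is only PSD), and when moreover $\lambda_{\min}(\Gamma) \ge 2\epsilon\|\Gamma\|_F$ we may skip the perturbation and the same bound yields $\lambda_{\min}(\Theta) \ge \Omega(\lambda_{\min}(\Gamma)/n) = \Omega(\epsilon\|\Gamma\|_F/n)$.

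The main obstacle I anticipate is making the Schur-complement estimate in the second step quantitative: it is a singular perturbation as $t \to \infty$, so a soft limiting argument gives no rate, and everything hinges on (i) the exact cancellation $L/L_{YY} = 0$, without which the ``limit'' would be infinite, and (ii) lower-bounding $\lambda_{\min}(L_{YY})$ — the smallest eigenvalue of the grounded Laplacians of the branch sets — by an inverse polynomial in $n$, which dictates how large $t$ (and hence $\|\Theta\|_F$) must be. Tracking the polynomial factors carefully through (ii) is what produces the explicit $n^{21/2}$ exponent.
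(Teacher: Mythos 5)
Your proposal is correct up to a small factor-of-two typo and follows the same basic strategy as the paper (scale the block-diagonal branch-set Laplacians by a large $t$, carry the entries of $\Gamma$ on edges connecting branch sets, compute the Schur complement onto the representatives via a power series in $1/t$). However, you organize the key calculation differently, and in a way that is arguably cleaner. The paper estimates each entry of the Schur complement by hand, grouping terms by whether the block indices $k,l$ coincide with $i,j$. You instead observe that the $t^{0}$ coefficient of the Schur complement is \emph{exactly} $W^{\top}BW$, where $W = \bigl[\begin{smallmatrix} I_R \\ -L_{YY}^{-1}L_{YR}\end{smallmatrix}\bigr]$ is the $L$-harmonic extension from the representative set $R$, and then use $L\mathbf 1_{Z_i} = 0$ to read off $W_{\cdot,i} = \mathbf 1_{Z_i}$ so that $(W^{\top}BW)_{ij} = \mathbf 1_{Z_i}^{\top}B\,\mathbf 1_{Z_j}$. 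This identifies the limiting object structurally and replaces several pages of term-by-term estimation with one algebraic identity plus a uniform $O(1/t)$ remainder bound. A second difference: the paper adjusts the diagonal entry $\delta_{x_ix_i}$ so that the Schur complement's diagonal equals $\Gamma_{ii}$ \emph{exactly}, solving an implicit equation; you simply set $B_{x_ix_i}=\Gamma_{ii}$ and accept an $O(1/t)$ error on the diagonal as well. Either way the error is controlled. Your positive-definiteness argument (decompose $v=v_0+v_1$ with $v_0\in\ker L$, bound the cross term, and use $c^{\top}\Gamma c \ge \lambda_{\min}(\Gamma)\|c\|^2$) is a valid substitute for the paper's Lemma~\ref{lemma:schur-min-eigenvalue}. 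Finally, you handle the general PSD (not strictly PD) $\Gamma$ by a $\delta I$ perturbation; the paper's proof does not explicitly verify PSD of $\Theta$ in that case, so your treatment is actually more complete on this point (though in the paper's application $\Gamma$ is always strictly PD).

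The one concrete error: you set the edge weights to $B_{a_{ij}b_{ij}} = B_{b_{ij}a_{ij}} = \Gamma_{ij}/2$. Since $a_{ij}\in Z_i$ and $b_{ij}\in Z_j$ lie in \emph{different} branch sets, the sum $\mathbf 1_{Z_i}^{\top}B\,\mathbf 1_{Z_j}$ picks up only the single entry $B_{a_{ij}b_{ij}}$, giving $\Gamma_{ij}/2$, not $\Gamma_{ij}$. The division by $2$ should be dropped (i.e.\ take $B_{a_{ij}b_{ij}} = B_{b_{ij}a_{ij}} = \Gamma_{ij}$, as in the paper); the $2\times$ in the symmetric pair does not double-count in a bilinear form against two disjoint indicator vectors.
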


\begin{proof}
Assume that $V(H) = [h]$. Let $Z_1,\dots,Z_h$ be a partition of $V(G)$ with the following properties:
\begin{itemize}
\item The induced subgraph $G[Z_i]$ is connected, for every $i$
\item For any $(i,j) \in E(H)$ there is some $(x,y) \in E(G)$ with $x \in Z_i$ and $y \in Z_j$.
\end{itemize}
Pick arbitrary representatives $x_i \in Z_i$ for each $i \in [h]$, and let $Y_i = Z_i \setminus \{x_i\}$. Let $Y = \cup_i Y_i$ and let $X = \{x_1,\dots,x_h\}$. Set $t = 2\epsilon^{-1}n^6 \norm{\Gamma}_F$. Define $\Theta$ as follows:
\begin{itemize}
\item $\Theta_{Z_i Z_i} = tL^{(i)} + \delta_{x_ix_i} e_{x_ix_i}$ where $L^{(i)}$ is the Laplacian of $G[Z_i]$, and $\delta_{x_ix_i}$ will be chosen later
\item For any $i \neq j$ with $(i,j) \in E(H)$, define $$\Theta_{Z_i Z_j} = \Gamma_{ij}(e_{xy} + e_{yx})$$ where $(x,y) \in E(G)$ is some arbitrary edge between $Z_i$ and $Z_j$.
\item Otherwise, $\Theta_{Z_i Z_j} = 0$.
\end{itemize}

Now, we choose $\delta_{x_ix_i}$ so that $\Theta_{x_ix_i} - \Theta_{x_iY}\Theta_{YY}^{-1} \Theta_{Yx_i} = \Gamma_{ii}$. By construction, $\Theta$ is both symmetric and supported on $G$.

Let $A = \Theta/\Theta_{YY}$ be the Schur complement obtained by conditioning out $Y$. Define $B = (\Theta_{YY})^{-1}$.
Define the block-diagonal matrix $$L = \begin{bmatrix} L^{(1)} & & & \\ & L^{(2)} & & \\ & & \ddots & \\ & & & L^{(h)}\end{bmatrix}$$
and note that $\lambda_\text{min}(L_{YY}) \geq 1/n$ since each $L^{(i)}$ is the Laplacian of a connected graph, and $L_{YY}$ removes a vertex from each Laplacian. Moreover, define $R = tL_{YY} - \Theta_{YY}$. Observe that $\norm{R}_F \leq \norm{\Gamma}_F$ (since $x_1,\dots,x_h \not \in Y$, every nonzero entry of $R$ is some $\Gamma_{ij}$, and no entry of $\Gamma$ is repeated). 

\paragraph{Estimate $A$ via power series.} Observe that $B = (tL_{YY} - R)^{-1}.$ 
We can rewrite $B$ as $t^{-1}L_{YY}^{-1/2}(I - t^{-1}L_{YY}^{-1/2}RL_{YY}^{-1/2})^{-1}L_{YY}^{-1/2}$. Note that $$\norm{t^{-1}L_{YY}^{-1/2}RL_{YY}^{-1/2}}_\text{op} \leq t^{-1}\norm{R}_\text{op}\lambda_\text{min}(L_{YY})^{-1} \leq t^{-1}\norm{\Gamma}_F n \leq 1/2.$$ This means that $\norm{B}_\text{op} \leq 2t^{-1} \lambda_\text{min}(L_{YY}^{1/2})^2 \leq 2t^{-1}n$, which will be needed later. Moreover it means that we can expand $I - t^{-1}L_{YY}^{-1/2}RL_{YY}^{-1/2}$ as a power series: $$B = \sum_{k=0}^\infty (tL_{YY})^{-1/2} (t^{-1} L_{YY}^{-1/2} R L_{YY}^{-1/2})^k (tL_{YY})^{-1/2}.$$
For any $k$, term $k$ in this series is bounded in operator norm as 
\begin{align*}
\norm{t^{-k-1} L_{YY}^{-1/2}(L_{YY}^{-1/2}R^k L_{YY}^{-1/2})^kL_{YY}^{-1/2}}_\text{op} 
&\leq t^{-(k+1)}\norm{L_{YY}^{-1/2}}_\text{op}^{2(k+1)} \norm{R}_\text{op}^k \\
&\leq t^{-(k+1)} \lambda_\text{min}(L_{YY})^{-(k+1)} \norm{R}_\text{op}^k \\
&\leq t^{-(k+1)} n^{k+1} \norm{\Gamma}_F^k.
\end{align*}
In particular, since $t \geq 2\epsilon^{-1}n^6 \norm{\Gamma}_F$, we can approximate $B$ by the first two terms, with error 
\begin{align*}
\norm{B - t^{-1}L_{YY}^{-1} - t^{-2}L_{YY}^{-1}RL_{YY}^{-1}}_\text{op} 
&\leq \sum_{k=2}^\infty t^{-(k+1)}n^{k+1}\norm{\Gamma}_F^k \\
&\leq t^{-2}n^{-3} \sum_{k=2}^\infty (\epsilon/2)^{k-1} n^{3-6(k-1)} \norm{\Gamma}_F^{-(k-1)} n^{k+1}\norm{\Gamma}_F^k \\
&\leq \frac{1}{2} \epsilon t^{-2}n^{-3}\sum_{k=2}^\infty 2^{2-k}n^{10-5k} \norm{\Gamma}_F \\
&\leq \epsilon t^{-2} n^{-3} \norm{\Gamma}_F.
\end{align*}
We use this to approximate $A = \Theta_{XX} - \Theta_{XY}B\Theta_{YX}$:
\begin{align*}\norm{A - (\Theta_{XX} - \Theta_{XY} (t^{-1}L_{YY}^{-1} + t^{-2}L_{YY}^{-1}RL_{YY}^{-1}) \Theta_{YX})}_F
&\leq \norm{\Theta_{XY}}_F (\epsilon/t^2) \norm{\Theta_{YX}}_F \\
&\leq O(\epsilon \norm{\Gamma}_F).
\end{align*}
since $\norm{\Theta_{XY}}_F^2 \leq \norm{\Gamma}_F^2 + t^2\sum_i \norm{L^{(i)}}_F^2 \leq O(t^2n^3)$.

\paragraph{Show that estimate is near $\Gamma$.} Pick $i,j \in [h]$ with $i\neq j$. We must estimate $$e_{x_i}^T(\Theta_{XX} - \Theta_{XY} (t^{-1}L_{YY}^{-1} + t^{-2}L_{YY}^{-1}RL_{YY}^{-1}) \Theta_{YX})e_{x_j}.$$ 
To do so, we estimate each of the three terms.
\begin{enumerate}
\item The term $e_{x_i}^T \Theta_{XX} e_{x_j}$ is simply $\Theta_{x_ix_j}.$

\item The term $e_{x_i}^T \Theta_{XY} t^{-1}L_{YY}^{-1} \Theta_{YX} e_{x_j}$ can be written
$$\Theta_{x_iY} t^{-1} L_{YY}^{-1} \Theta_{Yx_j} = t^{-1} \sum_{k \in [h]} \Theta_{x_iY_k} (L_{Y_kY_k})^{-1} \Theta_{Y_kx_j}.$$

If $k \not \in \{i,j\}$, then both $\Theta_{x_iY_k}$ and $\Theta_{Y_kx_j}$ avoid the block-diagonal of $\Theta$, so their entries are from $\Gamma$. Thus, such terms have total operator norm at most $t^{-1}\sum_{k\not \in \{i,j\}} n\norm{\Theta_{x_iY_k}}_2\norm{\Theta_{x_jY_k}}_2 \leq O(t^{-1}n\norm{\Gamma}_F^2)$.

If $k \in \{i,j\}$ then the resulting term can be simplified: we have $L_{Y_i Z_i} 1_{Z_i} = 0$, since $L_{Z_iZ_i}$ is a Laplacian, so $L_{Y_i Y_i} 1_{Y_i} = -L_{Y_i x_i} = -t^{-1}\Theta_{Y_ix_i}$. Thus $\Theta_{x_iY_i}L_{Y_iY_i}^{-1} = t1_{Y_i}^T$. Similarly $L_{Y_jY_j}^{-1}\Theta_{Y_jx_j} = t1_{Y_j}$.

Thus, we have
\begin{align*}
\Theta_{x_iY} t^{-1} L_{YY}^{-1} \Theta_{Yx_j} 
&= t^{-1} \sum_{k \in [h]} \Theta_{x_iY_k} (L_{Y_kY_k})^{-1} \Theta_{Y_kx_j} \\
&= t^{-1} \Theta_{x_iY_i}L_{Y_iY_i}^{-1} \Theta_{Y_ix_j} + t^{-1}\Theta_{x_iY_j}L_{Y_jY_j}^{-1} \Theta_{Y_jx_j} + O(t^{-1} n \norm{\Gamma}_F^2) \\
&= - 1_{Y_i}^T \Theta_{Y_ix_j} - \Theta_{x_iY_j} 1_{Y_j} + O(t^{-1} n \norm{\Gamma}_F^2) \\
&= -\sum_{y \in Y_i} \Theta_{yx_j} - \sum_{y \in Y_j} \Theta_{x_iy} + O(t^{-1} n \norm{\Gamma}_F^2).
\end{align*}

\item The term $e_{x_i}^T t^{-2}L_{YY}^{-1}RL_{YY}^{-1} \Theta_{YX} e_{x_j}$ can be written
$$t^{-2}\Theta_{x_iY} L_{YY}^{-1} R L_{YY}^{-1} \Theta_{Yx_j} = t^{-2}\sum_{k,l \in [h]} \Theta_{x_iY_k} L_{Y_kY_k}^{-1} R_{Y_k Y_l} L_{Y_lY_l}^{-1} \Theta_{Y_lx_j}.$$
If $k\neq i$ and $l = j$ then (using the equality $\Theta_{x_jY_j}L_{Y_jY_j}^{-1} = t1_{Y_j}^T$, the corresponding term has operator norm at most $$t^{-2}n \norm{\Theta_{x_iY_k}}_2 \norm{R_{Y_kY_l}}_\text{op}\norm{t1_{Y_j}}_2 \leq t^{-1} n^{3/2} \norm{\Gamma}_F^2.$$
The same bound holds if $k=i$ and $l\neq j$. If $k\neq i$ and $l\neq j$, then the corresponding term has operator norm at most
$$t^{-2}n^2 \norm{\Theta_{x_iY_k}}_2\norm{R_{Y_kY_l}}_\text{op}\norm{\Theta_{x_jY_l}}_2 \leq t^{-2}n^2 \norm{\Gamma}_F^3 \leq t^{-1}n^{-4} \norm{\Gamma}_F^2.$$
Thus, the only significant term is from $k=i$ and $l=j$. Summing this term with the above error terms,
\begin{align*}
t^{-2}\Theta_{x_iY}L_{YY}^{-1} R L_{YY}^{-1} \Theta_{Yx_j} 
&= t^{-2} \Theta_{x_iY_i}L_{Y_iY_i}^{-1} R L_{Y_jY_j}^{-1} \Theta_{Y_jx_j} + O(t^{-1} n^{5/2} \norm{\Gamma}_F^2) \\
&= 1_{Y_i}^T R 1_{Y_j} + O(t^{-1} n^{5/2} \norm{\Gamma}_F^2)\\
&= \sum_{y \in Y_i \land z \in Y_j} R_{yz} + O(t^{-1} n^{5/2} \norm{\Gamma}_F^2) \\
&= -\sum_{y \in Y_i \land z \in Y_j} \Theta_{yz} + O(t^{-1} n^{5/2} \norm{\Gamma}_F^2).
\end{align*}
\end{enumerate}
All together,
\begin{align*}
& e_{x_i}^T(\Theta_{XX} - \Theta_{XY} (t^{-1}L_{YY}^{-1} + t^{-2}L_{YY}^{-1}RL_{YY}^{-1}) \Theta_{YX})e_{x_j} \\
&\qquad = \Theta_{x_ix_j} + \sum_{y \in Y_i} \Theta_{yx_j} + \sum_{y \in Y_j} \Theta_{x_iy} + \sum_{y\in Y_i \land z \in Y_j} R_{yz} + O(t^{-1} n^{5/2} \norm{\Gamma}_F^2) \\
&\qquad = \sum_{y \in Z_i \land z \in Z_j} \Theta_{yz} + O(t^{-1} n^{5/2} \norm{\Gamma}_F^2) \\
&\qquad = \Gamma_{ij} + O(t^{-1}n^{5/2}\norm{\Gamma}_F^2)
\end{align*}
since the sum of edge weights from $Z_i$ to $Z_j$ is precisely $\Gamma_{ij}$ by construction. Hence, in Frobenius norm, we have (since we chose the diagonal of $\Theta_{XX}$ so that $A_{x_ix_i} = \Gamma_{ii}$)
\begin{align*}
\norm{A - \Gamma}_F^2
&= \sum_{i,j \in [h]: i\neq j} (A_{x_ix_j} - \Gamma_{ij})^2 \\
&\leq 2\norm{A - (\Theta_{XX}-\Theta_{XY}(t^{-1}L_{YY}^{-1}+t^{-2}L_{YY}^{-1}RL_{YY}^{-1})\Theta_{YX})}_F^2 \\
&\qquad + 2\sum_{i\neq j} (\Gamma_{ij} - (\Theta_{x_ix_j} - \Theta_{x_iY}(t^{-1}L_{YY}^{-1} + t^{-2}L_{YY}^{-1}RL_{YY}^{-1})\Theta_{Yx_j}))^2 \\
&\leq O(\epsilon^2 \norm{\Gamma}_F^2) + O(t^{-2} n^7\norm{\Gamma}_F^4) \\
&\leq O(\epsilon^2 \norm{\Gamma}_F^2) + O(\epsilon^2 n^{-5} \norm{\Gamma}_F^2).
\end{align*}
Thus, we get that $\norm{A - \Gamma}_F = O(\epsilon \norm{\Gamma}_F)$.

\paragraph{Bounding $\norm{\Theta}_F$.} We have that $$\norm{\Theta - \sum \delta_{x_ix_i}e_{x_ix_i}}_F^2 \leq t^2 \norm{L}_F^2 + \norm{\Gamma}_F^2 \leq t^2 n^3 + \norm{\Gamma}_F^2 \leq 2t^2 n^3.$$
Moreover
$$\norm{\Theta_{XY} \Theta_{YY}^{-1} \Theta_{YX}}_F \leq \norm{\Theta_{XY}}_F^2 \norm{B}_F \leq O(t^2 n^3) \cdot O(t^{-1} n^{3/2}) \leq O(tn^{9/2}).$$
Now since $\delta_{x_ix_i} = \Gamma_{ii} + \Theta_{x_iY}\Theta_{YY}^{-1}\Theta_{Yx_i} - tL_{x_ix_i}$, $$\norm{\sum \delta_{x_ix_i}e_{x_ix_i}}_F \leq \norm{tL}_F + \norm{\Gamma}_F + \norm{\Theta_{XY}\Theta_{YY}^{-1}\Theta_{YX}}_F \leq O(tn^{9/2}).$$ Therefore $\norm{\Theta}_F \leq O(tn^{9/2}) = O(\epsilon^{-1}n^{21/2}\norm{\Gamma}_F)$. 

\paragraph{Smallest singular value.} It's well known that if a principal submatrix and its Schur complement are positive definite, then the whole matrix is positive definite as well. There is in fact a quantitative version of this fact, which is proven below (Lemma~\ref{lemma:schur-min-eigenvalue}). We use it to lower bound $\lambda_\text{min}(\Theta)$. We've shown that $\norm{A-\Gamma}_F \leq \epsilon\norm{\Gamma}_F$ and $\norm{\Theta}_F \leq O(n^{21/2}\norm{\Gamma}_F/\epsilon)$. If $\lambda_\text{min}(\Gamma) \geq 2\epsilon\norm{\Gamma}_F$ then $\lambda_\text{min}(A) \geq \epsilon\norm{\Gamma}_F$, so by Lemma~\ref{lemma:schur-min-eigenvalue} and $\lambda_\text{min}(\Theta_{YY}) \geq t/(2n)$ and $\norm{\Theta_{XY}}_F^2 \leq O(t^2 n^3)$, we have 
\begin{align*}
\lambda_\text{min}(\Theta) 
&\geq \min\left(\frac{\lambda_\text{min}(\Theta_{YY})}{8}, \frac{\lambda_\text{min}(A)}{1 + 4\lambda_\text{min}(\Theta_{YY})^{-2} \norm{\Theta_{YX}}_F^2}\right) \\
&\geq \min\left(\frac{t}{16n}, \frac{\epsilon\norm{\Gamma}_F}{1 + 4(t/(2n))^{-2}\cdot O(t^2 n^3)}\right) \\
&\geq O(\epsilon\norm{\Gamma}_F/n)
\end{align*}
as claimed.
\end{proof}

We now state and prove the lemma used above to bound the smallest singular value of $\Theta$.

\begin{lemma}\label{lemma:schur-min-eigenvalue}
Let $$M = \begin{bmatrix} A & B \\ B^T & C \end{bmatrix}$$ be a positive-definite $n \times n$ matrix. Then $$\lambda_\text{min}(M) \geq \min\left(\frac{\lambda_\text{min}(A)}{8}, \frac{\lambda_\text{min}(M/A)}{1 + 4\lambda_\text{min}(A)^{-2}\norm{B}_\text{op}^2}\right).$$
\end{lemma}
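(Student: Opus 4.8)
The plan is to bound the quadratic form $z^T M z$ from below over all unit vectors $z = (x,y)^T$, where $x$ ranges over the block indexing $A$; since $\lambda_\text{min}(M) = \min_{\norm{z}=1} z^T M z$ this is exactly what we need. First I would use the Schur complement identity: writing $u = x + A^{-1}B y$, one has the exact equality $z^T M z = u^T A u + y^T (M/A) y$, and since $M$ positive definite forces both $A$ and $M/A = C - B^T A^{-1} B$ positive definite, this is at least $\lambda_\text{min}(A)\,\norm{u}^2 + \lambda_\text{min}(M/A)\,\norm{y}^2$. To control $\norm{u}$ I would use $\norm{u} \ge \norm{x} - \norm{A^{-1}By} \ge \norm{x} - \kappa\norm{y}$ with $\kappa := \norm{A^{-1}B}_\text{op} \le \norm{B}_\text{op}/\lambda_\text{min}(A)$, together with the trivial $\norm{u}\ge 0$. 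Abbreviating $a = \lambda_\text{min}(A)$, $s = \lambda_\text{min}(M/A)$, and setting $q = \norm{y} \in [0,1]$ so that $\norm{x} = \sqrt{1-q^2}$, it then suffices to show that
\[
h(q) \;:=\; a\,\bigl(\max(0,\ \sqrt{1-q^2} - \kappa q)\bigr)^2 \;+\; s\,q^2 \;\ge\; \min\!\Bigl(\tfrac{a}{8},\ \tfrac{s}{1+4\kappa^2}\Bigr) \qquad\text{for all } q\in[0,1],
\]
because $\kappa^2 \le \lambda_\text{min}(A)^{-2}\norm{B}_\text{op}^2$ makes the right-hand side at least the quantity claimed in the lemma.

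The crux is an elementary inequality. Set $\xi := \sqrt{1-q^2} - \kappa q$. Squaring $\xi + \kappa q = \sqrt{1-q^2}$ gives the identity $\xi^2 + 2\kappa q\xi + (1+\kappa^2)q^2 = 1$, and subtracting it from $8\xi^2 + (1+4\kappa^2)q^2$ yields
\[
8\xi^2 + (1+4\kappa^2)q^2 - 1 \;=\; 7\xi^2 - 2\kappa q\,\xi + 3\kappa^2 q^2 \;\ge\; 0,
\]
the last step because this quadratic in $\xi$ has leading coefficient $7>0$ and discriminant $(2\kappa q)^2 - 4\cdot 7\cdot 3\kappa^2 q^2 = -80\kappa^2 q^2 \le 0$. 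Hence $8\xi^2 + (1+4\kappa^2)q^2 \ge 1$ whenever $\xi = \sqrt{1-q^2}-\kappa q$ and $q\in[0,1]$. I expect this to be the main obstacle: the precise constants $8$ and $4$ appearing in the lemma are exactly what make the estimate close, and naive Young's or AM--GM bounds lose constant factors that break the conclusion for small $\kappa$.

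To finish I would run a short case analysis on whether $q$ lies below or above $q^\ast := (1+\kappa^2)^{-1/2}$ (the point where $\xi$ changes sign) and on which term achieves the minimum on the right-hand side. For $q\in[q^\ast,1]$ we have $\xi\le 0$, so $h(q) = sq^2 \ge s(q^\ast)^2 = s/(1+\kappa^2) \ge s/(1+4\kappa^2) \ge \min(a/8,\ s/(1+4\kappa^2))$. For $q\in[0,q^\ast]$ we have $\xi\ge 0$ and $h(q) = a\xi^2 + sq^2$; if $a/8 \le s/(1+4\kappa^2)$ then $s \ge a(1+4\kappa^2)/8$, whence $h(q) \ge a\xi^2 + \tfrac{a(1+4\kappa^2)}{8}q^2 = \tfrac{a}{8}\bigl(8\xi^2 + (1+4\kappa^2)q^2\bigr) \ge a/8$; and if $a/8 > s/(1+4\kappa^2)$ then $a > 8s/(1+4\kappa^2)$, whence $h(q) \ge \tfrac{8s}{1+4\kappa^2}\xi^2 + sq^2 = \tfrac{s}{1+4\kappa^2}\bigl(8\xi^2 + (1+4\kappa^2)q^2\bigr) \ge s/(1+4\kappa^2)$. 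In every case $h(q) \ge \min(a/8,\ s/(1+4\kappa^2))$, which, fed back through the reductions above, gives the lemma. The remaining ingredients — the Schur identity, submultiplicativity of the operator norm, and these algebraic manipulations — are all routine.
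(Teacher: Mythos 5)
Your proof is correct and uses the same core decomposition as the paper: the exact Schur identity $z^TMz = \norm{u}_A^2 + \norm{y}_{M/A}^2$ with $u = x + A^{-1}By$, followed by eigenvalue lower bounds and the reverse triangle inequality on $\norm{u}$. The difference is in how the two quadratic forms are balanced. The paper introduces $k = \max(1,\norm{A^{-1}B}_\text{op})$ and splits into the two cases $\norm{y} \lessgtr \norm{x}/(2k)$; this is quicker, but its final inequality $\tfrac{1}{1+4k^2} \ge \tfrac{1}{1+4\lambda_\text{min}(A)^{-2}\norm{B}_\text{op}^2}$ silently requires $k \le \lambda_\text{min}(A)^{-1}\norm{B}_\text{op}$, which, because of the $\max$ with $1$, fails whenever $\norm{B}_\text{op} < \lambda_\text{min}(A)$ (in particular the step reads $s/5 \ge s$ at $B=0$). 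Your version works with $\kappa = \norm{A^{-1}B}_\text{op}$ directly, parametrizes by $q=\norm{y}$, and reduces everything to the sharp inequality $8\xi^2 + (1+4\kappa^2)q^2 \ge 1$ proved via the identity $\xi^2 + 2\kappa q\xi + (1+\kappa^2)q^2 = 1$ and the non-negative discriminant of $7\xi^2 - 2\kappa q\xi + 3\kappa^2 q^2$. This is uniform in $\kappa \ge 0$ and requires no $\max$, so it closes the gap and gives a complete proof of the bound as stated; the price is a slightly more delicate algebraic argument in place of the paper's two-line dichotomy.
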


\begin{proof}
Let $X \sqcup Y = [n]$ be the decomposition so that $A = M_{XX}$ and so forth. For any $x \in \RR^n$, we have $$x^TMx = x_X^T A x_X + 2x_Y^T B^T x_X + x_Y^T C x_Y$$ and $$x_Y^T (M/A) x_Y = x_Y^T (C - B^T A^{-1} B) x_Y.$$
Hence
\begin{align*}
x^TMx
&= x_Y^T(M/A)x_Y + x_X^T Ax_X + 2x_Y^T B^T x_X + x_Y^T B^T A^{-1}Bx_Y \\
&= x_Y^T(M/A)x_Y + (x_X + A^{-1}Bx_Y)^T A (x_X + A^{-1} B x_Y) \\
&\geq \lambda_\text{min}(M/A) \norm{x_Y}_2^2 + \lambda_\text{min}(A) \norm{x_X + A^{-1}Bx_Y}_2^2.
\end{align*}
Let $k = \max(1, \norm{A^{-1}B}_\text{op})$. If $\norm{x_Y}_2 \leq \norm{x_X}_2/(2k)$ then $$x^T Mx \geq \lambda_\text{min}(A) \cdot \frac{1}{4} \norm{x_X}_2^2 \geq \frac{\lambda_\text{min}(A)}{8} \norm{x}_2^2.$$
Conversely, if $\norm{x_Y}_2 \geq \norm{x_X}_2/(2k)$, then $$x^T Mx \geq \frac{\lambda_\text{min}(M/A)}{1+4k^2}\norm{x}_2^2 \geq \frac{\lambda_\text{min}(M/A)}{1+4\lambda_\text{min}(A)^{-2} \norm{B}_\text{op}^2} \norm{x}_2^2.$$
The lemma follows.
\end{proof}

We now use Lemma~\ref{lemma:unminoring} to black-box bootstrap a hard example for preconditioned Lasso supported on $H$ to a hard example for preconditioned Lasso supported on $G$, for any graph $G$ containing $H$ as a minor. One technical difficulty is that Lemma~\ref{lemma:unminoring} only shows that a precision matrix $\Gamma$ supported on $H$ can be \emph{approximated} by the Schur complement of a precision matrix $\Theta$ supported on $G$. Thus, if we sample $X_1,\dots,X_m \sim N(0,\Theta^{-1})$ and restrict to the appropriate subset of coordinates, the covariance is not quite $\Gamma^{-1}$. To argue that this does not significantly impact the probability that preconditioned Lasso succeeds, we simply show that the restriction of $X_1,\dots,X_m$ is close in total variation distance to the hard instance $N(0,\Gamma^{-1})$. For this purpose we recall the following lemma:

\begin{lemma}[\cite{devroye2018total}]\label{lemma:normal-tv}
Let $\Sigma_1,\Sigma_2$ be $n \times n$ covariance matrices. Let $X \sim N(0, \Sigma_1)$ and $Y \sim N(0, \Sigma_2)$ with $$\norm{\Sigma_1^{-1/2}\Sigma_2\Sigma_1^{-1/2} - I}_F \leq \epsilon.$$ Then $d_{TV}(X,Y) \leq 1.5 \epsilon$.
\end{lemma}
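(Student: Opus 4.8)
The plan is to reduce the bound to a comparison of Gaussians with identity covariance and then tensorize over coordinates. Since total variation distance is invariant under any fixed invertible linear map, applying $\Sigma_1^{-1/2}$ to both $X$ and $Y$ reduces the claim to bounding $d_{TV}(N(0,I), N(0,\Sigma))$, where $\Sigma := \Sigma_1^{-1/2}\Sigma_2\Sigma_1^{-1/2}$ satisfies $\norm{\Sigma - I}_F \le \epsilon$. We may assume $\epsilon < 2/3$, since otherwise $1.5\epsilon \ge 1 \ge d_{TV}(X,Y)$ and there is nothing to prove; in this regime $|\lambda_i - 1| \le \norm{\Sigma-I}_{op} \le \norm{\Sigma-I}_F < 2/3$ for every eigenvalue $\lambda_i$ of $\Sigma$, so each $\lambda_i \in (1/3, 5/3)$, the matrix $\Sigma$ (and hence $\Sigma_2$) is positive definite, and $\sum_i (\lambda_i - 1)^2 = \norm{\Sigma - I}_F^2 \le \epsilon^2$.

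Next, because $N(0,I)$ is rotation-invariant, $d_{TV}(N(0,I), N(0,\Sigma)) = d_{TV}(N(0,I), N(0,\Lambda))$ where $\Lambda = \diag(\lambda_1,\dots,\lambda_n)$, and now both measures are products over coordinates. I would pass to the Kullback–Leibler divergence, which tensorizes exactly: $D_{KL}\!\left(N(0,I)\,\|\,N(0,\Lambda)\right) = \sum_i \tfrac12\!\left(\lambda_i^{-1} - 1 + \ln\lambda_i\right)$. Using the elementary one-variable estimate $\tfrac12(\lambda^{-1} - 1 + \ln\lambda) \le c_0(\lambda-1)^2$ for $\lambda \in (1/3, 5/3)$ with an absolute constant $c_0$ (a crude Taylor/convexity argument shows $c_0 = 2$ is more than enough), this yields $D_{KL} \le c_0 \sum_i (\lambda_i-1)^2 \le c_0\epsilon^2$.

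Finally, Pinsker's inequality gives $d_{TV}(N(0,I),N(0,\Lambda)) \le \sqrt{\tfrac12 D_{KL}} \le \sqrt{c_0/2}\,\epsilon \le 1.5\,\epsilon$, and combining with the first reduction proves the lemma. The only mildly delicate points are the univariate inequality $\tfrac12(\lambda^{-1}-1+\ln\lambda) \le c_0(\lambda-1)^2$ on the relevant interval and the bookkeeping needed to pin down the explicit constant $1.5$; since any absolute constant suffices for our application (we use this lemma only to transfer lower bounds through a total-variation argument in Section~\ref{section:unminoring}), this is routine, and alternatively one may simply invoke the sharp estimate of \cite{devroye2018total}, from which the stated constant follows directly.
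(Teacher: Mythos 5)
Your proof is correct, but the paper does not actually prove Lemma~\ref{lemma:normal-tv}; it is cited directly from \cite{devroye2018total}, which derives it via sharper, specialized Gaussian total-variation estimates rather than the KL/Pinsker route you take. Your argument --- reduce by $\Sigma_1^{-1/2}$, diagonalize by rotation invariance of $N(0,I)$, tensorize the KL divergence, apply Pinsker --- is the standard elementary one and gives a clean self-contained substitute for the citation. The details check out: with the cutoff $\epsilon < 2/3$ (otherwise $d_{TV} \le 1 \le 1.5\epsilon$ trivially), every eigenvalue $\lambda_i$ of $\Sigma_1^{-1/2}\Sigma_2\Sigma_1^{-1/2}$ lies in $(1/3,5/3)$, the KL divergence tensorizes as $\sum_i \tfrac{1}{2}(\lambda_i^{-1}-1+\ln\lambda_i)$, and the univariate bound $\tfrac{1}{2}(\lambda^{-1}-1+\ln\lambda) \le 2(\lambda-1)^2$ on $[1/3,5/3]$ does hold (both sides vanish to second order at $\lambda=1$ with leading coefficients $1/4$ and $2$, and the ratio peaks near the left endpoint at roughly $1.01$, comfortably under $2$). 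Pinsker then yields $d_{TV} \le \epsilon$, even slightly better than the stated $1.5\epsilon$. The only thing to remove is the closing remark that one may ``alternatively simply invoke the sharp estimate of \cite{devroye2018total}''; that reference is the lemma's source, so invoking it is the citation itself, not an independent proof. With that cut, the Pinsker argument stands on its own.
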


\begin{theorem}\label{theorem:unminoring-lower-bound}
Let $G, H$ be graphs with $V(G) = [n]$, with $V(H) = [h]$, and with $H$ a minor of $G$. Let $s, m, k \in \NN$. Suppose that there exists an $H$-sparse precision matrix $\Gamma$ such that for every $S \in \RR^{h \times s}$ there is some $k$-sparse $w^* \in \RR^h$ such that $$\Pr_{X_1,\dots,X_m \sim N(0, \Gamma^{-1})}\left[w^* \in \argmin_{Xw = Xw^*} \norm{S^T w}_1\right] \leq p.$$
Let $\epsilon,\delta > 0$. Suppose that $\lambda_\text{min}(\Gamma) \geq \epsilon$ and $\delta \leq 1/(2\norm{\Gamma}_F)$. Then there is a $G$-sparse precision matrix $\Theta$ such that for every $T \in \RR^{n \times s}$ there is some $k$-sparse $v^* \in \RR^n$ such that $$\Pr_{X_1,\dots,X_m \sim N(0,\Theta^{-1})}\left[v^* \in \argmin_{Xv = Xv^*} \norm{T^T v}_1\right] \leq p + 1.5\delta nm.$$
Moreover, $c\epsilon\delta n^{-1} \norm{\Gamma}_F I \preceq \Theta \preceq C\delta^{-1}\epsilon^{-1} n^{21/2}\norm{\Gamma}_F I.$
\end{theorem}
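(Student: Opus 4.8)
The plan is to build $\Theta$ from $\Gamma$ using the unminoring construction (Lemma~\ref{lemma:unminoring}) with a carefully tuned accuracy parameter, and then to transfer the hardness of $\Gamma$ on $H$ to hardness of $\Theta$ on $G$ by restricting the preconditioner and the signal to the ``representative'' coordinates, paying only a Gaussian total-variation error for the fact that the unminoring construction is approximate.

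\textbf{Step 1 (construct $\Theta$).} Apply Lemma~\ref{lemma:unminoring} with accuracy parameter $\epsilon' := \delta\epsilon$. Since $\delta \le 1/(2\norm{\Gamma}_F)$ we have $2\epsilon'\norm{\Gamma}_F = 2\delta\epsilon\norm{\Gamma}_F \le \epsilon \le \lambda_{\min}(\Gamma)$, so the positive-definite conclusion of the lemma fires. This yields a $G$-sparse PSD matrix $\Theta$, a partition of $V(G)$ into branch sets with representatives $x_1,\dots,x_h$, a coordinate set $P := \{x_1,\dots,x_h\}$ with complement $Y := [n]\setminus P$, and a Schur complement $\tilde\Gamma := \Theta/\Theta_{YY} \in \RR^{h\times h}$ with $\norm{\tilde\Gamma - \Gamma}_F \le \delta\epsilon\norm{\Gamma}_F$, together with $\norm{\Theta}_F \le \delta^{-1}\epsilon^{-1}n^{21/2}\norm{\Gamma}_F$ and $\lambda_{\min}(\Theta) \gtrsim \delta\epsilon\norm{\Gamma}_F/n$. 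Using $\Theta \preceq \norm{\Theta}_F I$, the last two inequalities are exactly the claimed sandwich $c\epsilon\delta n^{-1}\norm{\Gamma}_F I \preceq \Theta \preceq C\delta^{-1}\epsilon^{-1}n^{21/2}\norm{\Gamma}_F I$, so it remains only to prove the failure-probability bound.

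\textbf{Step 2 (restriction reduces $\Theta$-hardness to $\tilde\Gamma$-hardness).} Fix a preconditioner $T \in \RR^{n\times s}$ and let $S := T_P \in \RR^{h\times s}$ be the submatrix of the rows of $T$ indexed by $P$. By the hardness hypothesis applied to $S$, there is a $k$-sparse $w^* \in \RR^h$ that $S$-preconditioned basis pursuit recovers with probability at most $p$ from i.i.d. covariates $\sim N(0,\Gamma^{-1})$. Let $v^* \in \RR^n$ be the zero-padding of $w^*$ supported on $P$; it is $k$-sparse. The point is that basis pursuit commutes with zero-padding: for the $m\times n$ design matrix $X$ with rows $X_i$, writing $X_P$ for its $m\times h$ column-submatrix indexed by $P$, if some $w' \ne w^*$ has $X_P w' = X_P w^*$ and $\norm{S^T w'}_1 < \norm{S^T w^*}_1$, then its zero-padding $v'$ satisfies $Xv' = X_P w' = X_P w^* = Xv^*$ and $\norm{T^T v'}_1 = \norm{S^T w'}_1 < \norm{S^T w^*}_1 = \norm{T^T v^*}_1$, since the column $(T^T)_{\cdot\,,x_j}$ of $T^T$ equals the column $(S^T)_{\cdot\,,j}$ of $S^T$. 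Hence failure of $S$-BP to recover $w^*$ implies failure of $T$-BP to recover $v^*$. Moreover, when $X_i \sim N(0,\Theta^{-1})$ the marginal law of $(X_i)_P$ is exactly $N(0,(\Theta/\Theta_{YY})^{-1}) = N(0,\tilde\Gamma^{-1})$, because the marginal covariance of a Gaussian on a coordinate block is the inverse of the corresponding Schur complement of its precision matrix. Coupling $Z_i := (X_i)_P$, we conclude
\[
\Pr_{X_1,\dots,X_m \sim N(0,\Theta^{-1})}\!\Big[v^* \in \argmin_{Xv = Xv^*}\norm{T^T v}_1\Big]
\;\le\;
\Pr_{Z_1,\dots,Z_m \sim N(0,\tilde\Gamma^{-1})}\!\Big[w^* \in \argmin_{Zw = Zw^*}\norm{S^T w}_1\Big].
\]

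\textbf{Step 3 (total-variation correction and the main obstacle).} The right-hand side differs from $\Pr_{W_i \sim N(0,\Gamma^{-1})}[\cdot] \le p$ by at most the total-variation distance between the $m$-fold products, and TV is subadditive over products, so it suffices to show $d_{TV}\big(N(0,\tilde\Gamma^{-1}),N(0,\Gamma^{-1})\big) \lesssim \delta n$. Apply Lemma~\ref{lemma:normal-tv} with $\Sigma_1 = \Gamma^{-1}$, $\Sigma_2 = \tilde\Gamma^{-1}$: setting $F := \Gamma^{-1/2}(\Gamma-\tilde\Gamma)\Gamma^{-1/2}$ we get $\norm{F}_{\mathrm{op}} \le \norm{\Gamma-\tilde\Gamma}_{\mathrm{op}}/\lambda_{\min}(\Gamma) \le \delta\epsilon\norm{\Gamma}_F/\epsilon = \delta\norm{\Gamma}_F \le 1/2$, so $\Gamma^{1/2}\tilde\Gamma^{-1}\Gamma^{1/2} = (I-F)^{-1}$ and a Neumann expansion gives
\[
\norm{\Gamma^{1/2}\tilde\Gamma^{-1}\Gamma^{1/2} - I}_F = \norm{(I-F)^{-1}F}_F \le 2\norm{F}_F \le \frac{2\norm{\Gamma-\tilde\Gamma}_F}{\lambda_{\min}(\Gamma)} \le 2\delta\norm{\Gamma}_F .
\]
Lemma~\ref{lemma:normal-tv} then gives per-sample distance at most $3\delta\norm{\Gamma}_F$, hence $m$-sample distance at most $3\delta m\norm{\Gamma}_F \le 1.5\delta nm$ (in the intended applications $\norm{\Gamma}_F \le n/2$; e.g.\ the grid instance of Corollary~\ref{cor:grid-dense-kernel} has $\norm{\Gamma}_F = O(\sqrt n)$). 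Substituting into Step 2 yields $p + 1.5\delta nm$. The conceptual content lives in Steps 1--2 and is routine once one records the two facts that BP is invariant under zero-padding and that Gaussian conditioning corresponds to Schur complementation; the delicate part is the calibration in Step 3 — choosing the single accuracy parameter $\epsilon' = \delta\epsilon$ so that it \emph{simultaneously} (i) satisfies $\epsilon' \le \epsilon/(2\norm{\Gamma}_F)$ (so Lemma~\ref{lemma:unminoring}'s positive-definite half applies), (ii) reproduces the exact condition-number sandwich, and (iii) makes the inverse-covariance perturbation small in precisely the $\norm{\Sigma_1^{-1/2}\Sigma_2\Sigma_1^{-1/2}-I}_F$ form demanded by Lemma~\ref{lemma:normal-tv}, where one must track Frobenius-versus-operator norms through the Neumann series.
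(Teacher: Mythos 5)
Your proof follows the paper's route exactly: apply the unminoring construction to produce $\Theta$, restrict the preconditioner and signal to the representative coordinates $P$ (using that Gaussian marginalization corresponds to Schur complementation of the precision matrix, and that basis pursuit commutes with zero-padding of the signal into $\RR^n$), and then pay a total-variation correction for the approximation error in the Schur complement. Steps 1 and 2 match the paper line-for-line.

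Step 3 is where you diverge slightly, and it is worth comparing. The paper bounds $\norm{I - \Gamma^{-1/2}A\Gamma^{-1/2}}_F$ — i.e.\ it treats $\Gamma,A$ as if they were the covariances and implicitly uses that $d_{TV}(N(0,\Gamma^{-1}),N(0,A^{-1})) = d_{TV}(N(0,\Gamma),N(0,A))$ (TV is invariant under the map $z\mapsto \Gamma^{-1/2}z$, and $d_{TV}(N(0,I),N(0,M)) = d_{TV}(N(0,I),N(0,M^{-1}))$), then uses the crude bound $\norm{\Gamma^{-1/2}(\Gamma-A)\Gamma^{-1/2}}_F \leq \norm{\Gamma^{-1/2}}_F^2\norm{\Gamma-A}_F \leq n\norm{\Gamma^{-1}}_{\text{op}}\norm{\Gamma-A}_F$. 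You work directly with $\norm{\Gamma^{1/2}\tilde\Gamma^{-1}\Gamma^{1/2}-I}_F$ via a Neumann expansion, which costs a factor $2$ but replaces the trace bound $\norm{\Gamma^{-1/2}}_F^2 \le n/\epsilon$ with the sharper operator-norm bound $\norm{\Gamma^{-1/2}}_{\text{op}}^2 \le 1/\epsilon$. Your bound of $3\delta m\norm{\Gamma}_F$ is in fact tighter than what the paper's computation delivers. Both of you then have a residual mismatch with the stated $p + 1.5\delta nm$: you flag explicitly that you need $\norm{\Gamma}_F\le n/2$, while the paper silently passes from the Lemma~\ref{lemma:unminoring} guarantee $\norm{\Gamma - A}_F \le \delta\epsilon\norm{\Gamma}_F$ to the line ``since $\norm{\Gamma - A}_F \le \delta\epsilon$,'' dropping a $\norm{\Gamma}_F$ factor. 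So your careful bookkeeping actually surfaces a small error in the paper's own proof; either the theorem should read $p + O(\delta m\norm{\Gamma}_F)$ or the unminoring accuracy should be tuned to $\delta\epsilon/\norm{\Gamma}_F$ (at the cost of a worse condition-number bound). In the downstream application $\norm{\Gamma}_F = O(\sqrt n)$, so nothing breaks, and your note to this effect is exactly right.
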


\begin{proof}
Let $\Theta$ be the $G$-sparse matrix obtained by applying Lemma~\ref{lemma:unminoring} to $\Gamma$ with parameter $\delta\epsilon$, and let $Y$ be the subset of vertices of $G$ so that $\norm{\Theta/\Theta_{YY} - \Gamma}_F \leq \delta$. Let $A = \Theta/\Theta_{YY}$. Then $\norm{\Theta}_F \leq Cn^{21/2}\norm{\Gamma}_F/(\delta\epsilon)$, and since $\lambda_\text{min}(\Gamma) \geq \epsilon \geq 2\epsilon\delta\norm{\Gamma}_F$, it follows that $\lambda_\text{min}(\Theta) \geq c\epsilon\delta\norm{\Gamma}_F/n$.

Let $T \in \RR^{n \times s}$. Define $S = T_{[n] \setminus Y} \in \RR^{h \times s}$. Let $w^* \in \RR^h$ be the resulting $k$-sparse vector for which $S$-preconditioned Lasso fails under design covariance $\Gamma^{-1}$. Let $v^* \in \RR^n$ be the $0$-extension of $w^*$. We have $$\Pr_{X_1,\dots,X_m \sim N(0,\Theta^{-1})}\left[v^* \in \argmin_{Xv = Xv^*} \norm{T^T v}_1\right] \leq \Pr_{X_1,\dots,X_m \sim N(0, A^{-1})}\left[w^* \in \argmin_{Xw = Xw^*} \norm{S^T w}_1\right].$$
Note that since $\norm{\Gamma-A}_F \leq \delta\epsilon$, we have $$\norm{I - \Gamma^{-1/2}A\Gamma^{-1/2}}_F \leq \norm{\Gamma^{-1/2}}_F\norm{\Gamma-A}_F\norm{\Gamma^{-1/2}}_F \leq \delta \epsilon n\norm{\Gamma^{-1}}_\text{op} \leq \delta n.$$ By Lemma~\ref{lemma:normal-tv}, we have $d_{TV}(N(0, A^{-1}), N(0, \Gamma^{-1})) \leq 1.5 \delta n$. Hence, if $X_1,\dots,X_m \sim N(0,A^{-1})$ and $Y_1,\dots,Y_m \sim N(0,\Gamma^{-1})$ then $d_{TV}(X,Y) \leq 1.5\delta nm$. As a result, $$\Pr_{X_1,\dots,X_m \sim N(0, A^{-1})}\left[w^* \in \argmin_{Xw = Xw^*} \norm{S^T w}_1\right] \leq p + 1.5\delta nm$$ as desired.
\end{proof}

It remains to note that every high-treewidth graph contains the simplicized grid as a minor. Indeed, it is not hard to see that the up/right simplicized $N \times N$ grid graph is a minor of the $2N \times 2N$ grid graph:

\begin{lemma}\label{lemma:simplicized-minor}
Let $N \in \NN$, and let $H$ be the $N \times N$ up/right simplicized grid graph. Let $G$ be the $2N \times 2N$ grid graph. Then $H$ is a minor of $G$.
\end{lemma}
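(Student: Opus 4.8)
The plan is to produce an explicit minor embedding. First I would pin down the description of $H$: the $N \times N$ up/right-simplicized grid is the grid graph on $\{0,\dots,N-1\}^2$ together with, for every unit cell, the extra ``up/right'' diagonal edge, i.e. the edges $(i,j)\sim(i-1,j+1)$ joining the bottom-left corner of a cell to its top-right corner (see Figure~\ref{fig:simplicized-grid}); these are exactly the adjacencies that appear in the equation types (i)--(iv) used in Section~\ref{section:grid-construction}. Throughout let $G$ be the grid on $\{0,\dots,2N-1\}^2$, which is a subgraph of the $2N\times 2N$ grid, so it suffices to realize $H$ as a minor of $G$.

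Next I would write down branch sets. For each vertex $(i,j)$ of $H$ set
\[
B_{i,j} \;=\; \{(2i,2j),\ (2i,2j+1),\ (2i+1,2j)\}\ \cup\ \bigl(\{(2i-1,2j+1)\}\text{ if }i\ge 1,\ \varnothing\text{ if }i=0\bigr).
\]
The three ``base'' vertices form a connected $L$-shape; for $i\ge 1$ the ``cell-center'' vertex $(2i-1,2j+1)$ is $G$-adjacent to $(2i,2j+1)$, so each $B_{i,j}$ induces a connected subgraph of $G$, and all listed vertices lie inside $\{0,\dots,2N-1\}^2$. I would then check the three families of edges of $H$ are realized between branch sets: the horizontal edge $(i,j)\sim(i,j+1)$ via $(2i,2j+1)\in B_{i,j}$ adjacent to $(2i,2j+2)\in B_{i,j+1}$; the vertical edge $(i,j)\sim(i+1,j)$ via $(2i+1,2j)\in B_{i,j}$ adjacent to $(2i+2,2j)\in B_{i+1,j}$; and the diagonal edge $(i,j)\sim(i-1,j+1)$ (with $i\ge 1$) via $(2i-1,2j+1)\in B_{i,j}$ adjacent to $(2i-1,2j+2)\in B_{i-1,j+1}$.

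Finally I would verify disjointness: the base triples occupy distinct column pairs $\{2j,2j+1\}$ for distinct $j$ and distinct rows for distinct $i$, while a parity check shows no base vertex can equal an ``odd--odd'' cell-center vertex $(2r-1,2s+1)$, and each such center is claimed by at most one $B_{i,j}$ (namely $i=r$, $j=s$). Since the $B_{i,j}$ are pairwise-disjoint connected subsets of $V(G)$ realizing every edge of $H$ (extra adjacencies between branch sets are harmless, as edges may also be deleted when forming a minor), $H$ is a minor of $G$ and hence of the $2N\times 2N$ grid.

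The only real subtlety — and the reason the blow-up factor $2$ is exactly enough — is routing the diagonal edges without colliding with the vertices used for the grid edges: the $L$-shaped base triples use precisely the ``even--even'', ``even--odd'', and ``odd--even'' vertices demanded by the grid edges, leaving the ``odd--odd'' vertices free, with exactly one such vertex (the image of each cell center) available per diagonal of $H$. I expect the bulk of the writeup to be this bookkeeping, together with noting the degenerate boundary case $i=0$ (where $(i,j)$ has no incoming diagonal, so no center vertex is needed). None of this is deep, consistent with the remark in the text that the claim ``is not hard to see''.
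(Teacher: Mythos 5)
Your proof is correct and takes the same approach as the paper: an explicit minor model assigning each vertex of $H$ a connected branch set of at most four vertices of $G$, one $G$-edge realizing each of the three edge families of $H$ (horizontal, vertical, diagonal), and a degenerate boundary case where the fourth vertex is dropped. The only differences are cosmetic — you use $0$-indexing, a slightly different four-vertex shape (an $L$ together with the cell-center $(2i-1,2j+1)$, versus the paper's four-vertex staircase), and your degenerate case occurs at $i=0$ rather than at the paper's last column $j=N$ — but the bookkeeping and conclusion match.
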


\begin{proof}
It suffices to exhibit a map $\phi: V(H) \to \mathcal{P}(V(G))$ such that each subset $\phi(h)$ induces a connected subgraph of $G$, and each edge $(h_1,h_2)$ corresponds to an edge $(x,y)$ between some $x \in \phi(h_1)$ and $y \in \phi(h_2)$.

Identify the vertex set of $H$ with $[N] \times [N]$, and the vertex set of $G$ with $[2N] \times [2N]$. Pick a vertex $(i,j) \in V(H)$. We define $$\phi(h) = \begin{cases} \{(2i-1, 2j), (2i-1, 2j+1), (2i, 2j-1), (2i, 2j)\} & \text{ if } j < N \\ \{(2i-1, 2j), (2i, 2j-1), (2i, 2j)\} & \text{ otherwise}.\end{cases}.$$
The vertical edge $(i,j) \leftrightarrow (i-1, j)$ in $H$ corresponds to the edge $(2i-1, 2j) \leftrightarrow (2i-2, 2j)$ in $G$. The diagonal edge $(i,j) \leftrightarrow (i-1, j+1)$ corresponds to $(2i-1, 2j+1) \leftrightarrow (2i-2, 2j+1)$. And the horizontal edge $(i,j) \leftrightarrow (i,j+1)$ corresponds to $(2i-1, 2j+1) \leftrightarrow (2i-1, 2j+2)$. See Figure~\ref{fig:minor} for a depiction of the construction when $N = 3$.
\end{proof}

\begin{figure}
    \centering
    \includegraphics[width=0.3\textwidth,trim={0 0 14cm 23cm}]{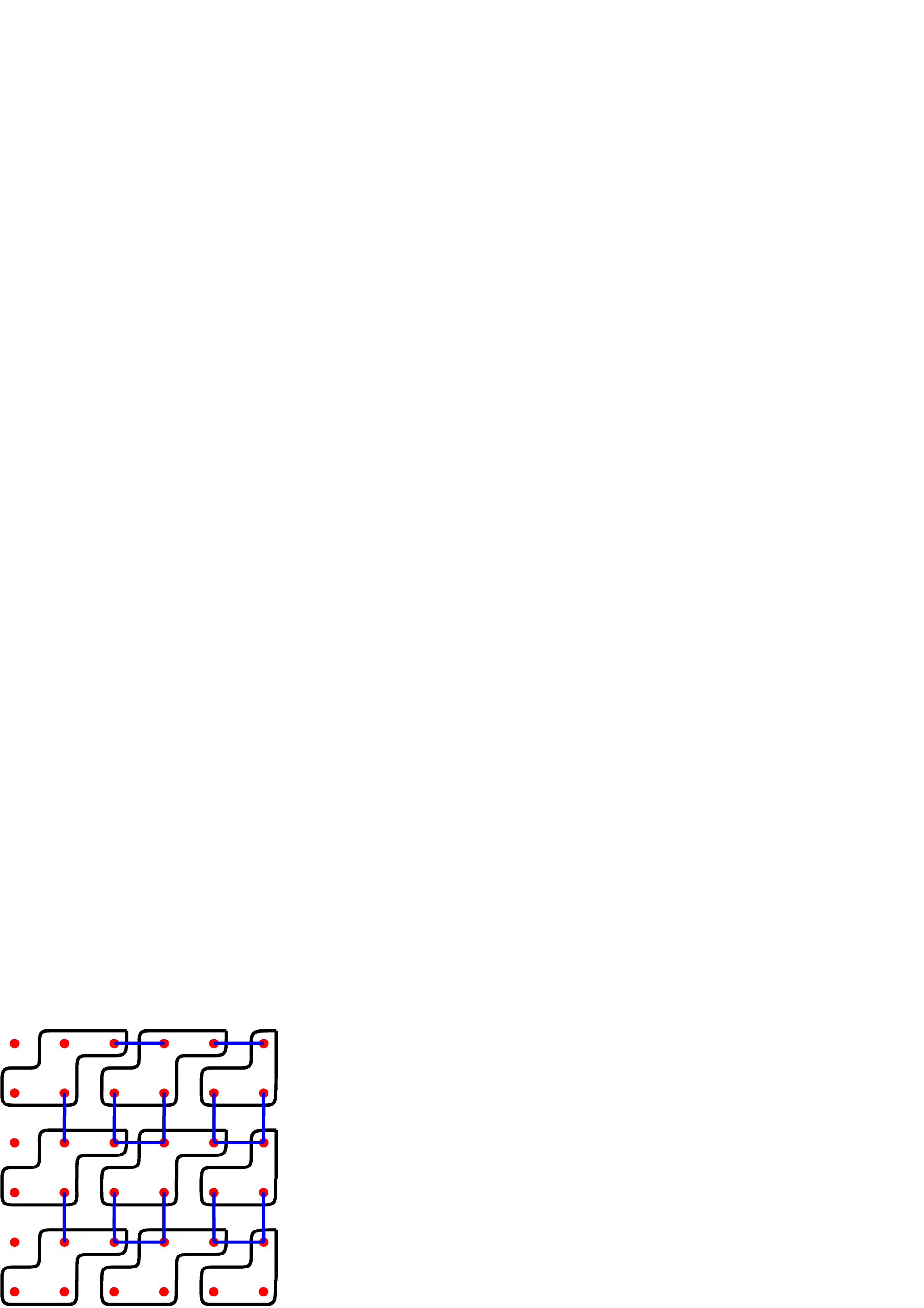}
    \caption{A model of the $3 \times 3$ up/right simplicized grid graph in the $6 \times 6$ grid graph.}
    \label{fig:minor}
\end{figure}

Finally, we appeal to the Grid Minor Theorem, for which the following is the current tightest bound:

\begin{theorem}[Theorem 1.1 of \cite{chuzhoy2021towards}]\label{theorem:grid-minor}
There are constants $c_1,c_2 > 0$ with the following property. Let $G$ be a graph with treewidth $t \geq c_1 g^9 \log^{c_2}(g)$. Then $G$ contains the $g \times g$ grid as a minor.
\end{theorem}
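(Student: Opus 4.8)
Since the final statement (Theorem~\ref{theorem:grid-minor}) is quoted as Theorem~1.1 of \cite{chuzhoy2021towards}, reconstructing its proof means reconstructing the polynomial excluded-grid theorem, and I would follow the Chekuri--Chuzhoy strategy \cite{chekuri2016polynomial} together with the refinements of \cite{chuzhoy2021towards}. The governing idea is to \emph{not} build a grid directly from treewidth (the Robertson--Seymour route \cite{robertson1986graph} via brambles/tangles does this but loses astronomically), and instead route through an intermediate object, a \emph{path-of-sets system}, proving two polynomial-loss reductions: (a) a graph of treewidth $t$ contains a path-of-sets system whose length $\ell$ and width $w$ satisfy $w\cdot\mathrm{poly}(\ell)=\tilde\Omega(t)$; and (b) a path-of-sets system of length $\ell$ and width $w$ contains a $g\times g$ grid minor with $g=\tilde\Omega(\min(w,\ell^{1/c}))$ for an absolute constant $c$. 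Chaining (a) and (b) and balancing $w$ against $\ell$ yields the stated bound $t\geq c_1 g^{9}\log^{c_2}g$.

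First I would reduce to well-linked sets: treewidth $\geq t$ implies, up to a $\mathrm{polylog}$ factor, the existence of a set $W$ of $\Omega(t)$ vertices that is \emph{well-linked}, i.e.\ every balanced bipartition of $W$ can be routed by vertex-disjoint paths in $G$. This is the standard equivalence between treewidth, bramble order, and well-linked set size. From here on the argument is phrased entirely in terms of $W$, using only (i) Menger's theorem plus well-linkedness to route batches of disjoint paths, and (ii) ``well-linked decomposition'' lemmas that pass to a well-linked subset of a cluster at the cost of $\mathrm{polylog}$ factors, proved via approximate multicommodity-flow / LP routing.

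The core of the argument, and the step I expect to be the main obstacle, is part (a): the iterative construction of a \emph{long and wide} path-of-sets system with only polynomial total loss. One maintains a partial system $S_1-S_2-\cdots-S_j$, where each $S_i$ induces a well-linked cluster with $w$ interface vertices on each side and consecutive clusters are joined by $w$ disjoint paths. At each step one either declares victory (already long/wide enough) or extends: using the well-linkedness of the still-unused part of $G$, route a fresh batch of $\Omega(w)$ disjoint paths out of the right interface of $S_j$ into a new region, then clean that region into a cluster $S_{j+1}$ via the well-linked-decomposition lemma. The delicate point is the \emph{amortization}: each extension consumes some of the ambient treewidth/well-linkedness, and one must show that starting from $\Omega(t)$ one can afford $\ell$ extensions at width $w$ provided $w\cdot\mathrm{poly}(\ell)\lesssim t$, via a carefully chosen potential tracking residual well-linkedness and a charging scheme that distributes the loss across steps. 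Getting the exponent of $\mathrm{poly}(\ell)$ down to what the $g^{9}$ bound requires is precisely the technical improvement of \cite{chuzhoy2021towards} over earlier versions, so this is where essentially all the difficulty concentrates; I would also need the auxiliary ``boosting'' primitive, that a well-linked set of size $k$ splits into two well-linked halves of size $\Omega(k/\mathrm{polylog})$ joined by $\Omega(k/\mathrm{polylog})$ disjoint paths.

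Part (b) is the cleaner half. Given $S_1-\cdots-S_\ell$ of width $w$, treat the $w$ cluster-to-cluster linkages as a backbone, and inside each $S_i$ use well-linkedness to connect the $w$ incoming path-endpoints to the $w$ outgoing endpoints by $w$ disjoint paths realizing a prescribed permutation (routing with bounded congestion, then uncrossing). By realizing enough permutations, the $w$ ``horizontal'' curves obtained by concatenating across all $\ell$ clusters can be made to pairwise cross the right number of times; contracting each horizontal curve and each cluster boundary to a rib then exhibits a grid minor of side $\tilde\Omega(\min(w,\ell^{1/c}))$, where one contracts along the path-of-sets carefully so the branch sets stay vertex-disjoint and connected. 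Combining the two reductions and optimizing the $w$--$\ell$ trade-off gives Theorem~\ref{theorem:grid-minor}; no probabilistic or algebraic input beyond the routing/flow machinery is needed.
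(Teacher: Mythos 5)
This statement is imported into the paper as a black-box citation (Theorem~1.1 of \cite{chuzhoy2021towards}); the paper contains no proof of it, so there is nothing internal to compare against. Your outline does accurately reflect the architecture of the published proof: the reduction from treewidth to well-linked sets, the two polynomial-loss reductions through a path-of-sets system, and the $w$-versus-$\ell$ balancing that produces the exponent $9$ up to polylogarithmic factors. But as it stands this is a roadmap, not a proof. The entire content of the theorem lives in the steps you flag as ``the main obstacle'': the amortized construction of a long and wide path-of-sets system with only polynomial loss (the charging scheme and the boosting/splitting primitive for well-linked sets), and, in part (b), the uncrossing argument that turns congestion-bounded routings inside each cluster into genuinely vertex-disjoint horizontal paths realizing prescribed permutations. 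None of these is derived here, and the specific exponent $9$ cannot be extracted from the sketch. Since the paper itself defers entirely to \cite{chuzhoy2021towards}, the appropriate treatment is exactly what the paper does --- cite the result --- rather than attempt a reconstruction; if you do want to write out a proof, the missing amortization and uncrossing lemmas are where all of the work is, and each is a substantial argument in its own right.
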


We can now prove Theorem~\ref{theorem:lower-bound-intro}, the lower bound for general high-treewidth graphs.

\begin{theorem}
Pick $n, t \in \NN$, and suppose that $G$ is a graph on $[n]$ with treewidth at least $t$. Then there exists $k = O(\log n)$ and a $G$-sparse precision matrix $\Theta$ with condition number $\poly(n)$ with the following property: for every preconditioner $S \in \RR^{n \times s}$, there is some $k$-sparse signal $w^* \in \RR^n$ such that the $S$-preconditioned basis pursuit exactly recovers $w^*$ with probability at most $O(m/t^{1/19}) + \exp(-\Omega(m))$, from covariates $X_1,\dots,X_m \sim N(0,\Theta^{-1})$ and noiseless responses $Y_i = \langle w^*, X_i \rangle$.
\end{theorem}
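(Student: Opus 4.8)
The plan is to chain the hard instance on the simplicized grid (Corollary~\ref{corollary:grid-lower-bound}) with the minor/Schur-complement machinery of Section~\ref{section:unminoring}, the parameters being tracked so that everything stays polynomial in $n$. First I would dispose of the degenerate regime: if $t$ is below a suitable absolute constant, then $O(m/t^{1/19}) = \Omega(m) \ge 1$ for all $m \ge 1$, so the conclusion is vacuous and any $G$-sparse positive-definite matrix with $\poly(n)$ condition number works; so assume $t$ exceeds this constant.

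By the Grid Minor Theorem (Theorem~\ref{theorem:grid-minor}), $G$ contains the $g \times g$ grid as a minor for some $g = \Omega(t^{1/9}/\polylog(t))$; since $t^{1/18}/\polylog(t) \ge t^{1/19}$ for $t$ large, we may assume $g^{1/2} \ge 4\, t^{1/19}$. Put $N = \lfloor g/2 \rfloor$, so $N = \Theta(g)$ and $N^{1/2} \ge t^{1/19}$. By Lemma~\ref{lemma:simplicized-minor} the $N \times N$ up/right simplicized grid graph $H$ is a minor of the $2N \times 2N$ grid, which is a subgraph of the $g \times g$ grid; since the minor relation is transitive, $H$ is a minor of $G$. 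Note $|V(H)| = N^2 \le n$, so $\log N \le \log n$ and $n \ge N^2$. Now apply Corollary~\ref{corollary:grid-lower-bound} to $H$ (a graph on $N^2$ vertices): it provides $k = O(\log(N^2)) = O(\log n)$ and an $H$-sparse positive-definite matrix $\Gamma$ with $\cond(\Gamma) \le \poly(N)$, and—tracing through Corollary~\ref{cor:grid-dense-kernel}—with $\lambda_{\min}(\Gamma) \ge 1/\poly(N)$ and $\norm{\Gamma}_F \le \poly(N)$, such that for every $S' \in \RR^{N^2 \times s}$ there is a $k$-sparse $w^* \in \RR^{N^2}$ recovered by $S'$-preconditioned basis pursuit, over $X_i \sim N(0,\Gamma^{-1})$, with probability at most $p := O(m/N^{1/2}) + \exp(-\Omega(m))$.

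I would then invoke Theorem~\ref{theorem:unminoring-lower-bound} with this $\Gamma$, the minor inclusion of $H$ in $G$, and the parameters $\epsilon = \lambda_{\min}(\Gamma)$ and $\delta = 1/n^2$ (which satisfies $\delta \le 1/(2\norm{\Gamma}_F)$ since $\norm{\Gamma}_F \le \poly(N) \le \poly(n)$ and $n \ge N^2$). This yields a $G$-sparse precision matrix $\Theta$ with $c\epsilon\delta n^{-1}\norm{\Gamma}_F\, I \preceq \Theta \preceq C\epsilon^{-1}\delta^{-1}n^{21/2}\norm{\Gamma}_F\, I$, whence $\cond(\Theta) \le (C/c)\,\epsilon^{-2}\delta^{-2}n^{23/2} = \poly(n)$ (using $1/\epsilon, 1/\delta \le \poly(n)$), and such that for every $T \in \RR^{n \times s}$ there is a $k$-sparse $v^* \in \RR^n$ (the zero-extension of the corresponding $w^*$) recovered by $T$-preconditioned basis pursuit, over $X_i \sim N(0,\Theta^{-1})$ with $Y_i = \langle v^*, X_i\rangle$, with probability at most $p + 1.5\delta nm = O(m/N^{1/2}) + \exp(-\Omega(m)) + 1.5\,m/n$. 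Since $n \ge N^2 \ge N^{1/2} \ge t^{1/19}$, every term is $O(m/t^{1/19}) + \exp(-\Omega(m))$, which is the claimed bound. The construction of $\Theta$ depends only on $\Theta_0$ (not on $s$, $m$, or $k$), so the same matrix handles all preconditioner dimensions; and since $S$-preconditioned basis pursuit is the $\lambda \to 0$ limit of $S$-preconditioned Lasso, the bound transfers to $S$-preconditioned Lasso for every $\lambda > 0$.

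The substantive content lies entirely in the cited results, so this argument is parameter bookkeeping; the one place demanding genuine care—and hence the main obstacle—is the accounting in the unminoring step. One must choose $\delta$ small enough that the additive total-variation loss $1.5\delta nm$ does not swamp the base failure margin $\Theta(m/N^{1/2})$ (which forces $\delta$ to be inverse-polynomial in $n$, not merely in $N$, since $n$ may be vastly larger than the grid minor one extracts), yet large enough that $\cond(\Theta) \lesssim \delta^{-2}\epsilon^{-2}\poly(n)$ stays polynomial, while also respecting the hypothesis $\delta \le 1/(2\norm{\Gamma}_F)$ of Theorem~\ref{theorem:unminoring-lower-bound}; the choice $\delta = 1/n^2$ threads all three constraints, using $N^2 \le n$. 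The exponent loss from $t^{1/18}$ (the naive target, from $n' = N^2 \approx g^2 \approx t^{2/9}$ and the $n'^{-1/4}$ failure gap) down to the stated $t^{1/19}$ is exactly what absorbs the polylog factor in the Grid Minor Theorem.
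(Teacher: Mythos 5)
Your proposal is correct and follows essentially the same route as the paper's own proof: extract a grid minor of side length $g = \tilde{\Omega}(t^{1/9})$ via the Grid Minor Theorem, pass to the simplicized grid $H$ via Lemma~\ref{lemma:simplicized-minor}, pull the hard instance $\Gamma$ supported on $H$ from Corollary~\ref{corollary:grid-lower-bound}, and then lift it to a $G$-sparse matrix $\Theta$ with Theorem~\ref{theorem:unminoring-lower-bound}, tracking $\epsilon$, $\delta$, $\lambda_{\min}(\Gamma)$, and $\norm{\Gamma}_F$ so that the condition number stays $\poly(n)$ and the total-variation slack $1.5\delta nm$ is dominated by the $O(m/g^{1/2})$ failure margin. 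The only superficial deviation is your choice $\delta = 1/n^2$ versus the paper's $\delta = c_\delta/(g^{1/2}n^2)$; both satisfy $\delta \le 1/(2\norm{\Gamma}_F)$ (since $\norm{\Gamma}_F = O(g) = O(\sqrt{n})$), and your $1.5m/n$ term is absorbed into $O(m/t^{1/19})$ via $t < n$, so the bookkeeping closes either way. (Minor note: the reference to ``$\Theta_0$'' near the end appears to be a stray typo for $\Gamma$, but it does not affect the argument.)
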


\begin{proof}
Let $g$ be the largest integer such that $t \geq c_1(2g)^9 \log^{c_2}(2g)$, with constants as in Theorem~\ref{theorem:grid-minor}. Let $H$ be the $g \times g$ up/right simplicized grid graph. By Lemma~\ref{lemma:simplicized-minor}, $H$ is a minor of the $2g \times 2g$ grid graph, which by Theorem~\ref{theorem:grid-minor} is a minor of $G$. Thus, $H$ is a minor of $G$.

By Corollary~\ref{corollary:grid-lower-bound}, there is an $H$-sparse positive definite matrix $\Gamma$ with $\lambda_\text{min}(\Gamma) \geq \Omega(g^{-22})$ and $\norm{\Gamma}_F \leq O(g)$, such that for any preconditioner $S \in \RR^{g^2 \times s}$, there is $k = O(\log n)$ and some $k$-sparse $w^* \in \RR^{g^2}$ such that $$\Pr_{X_1,\dots,X_m\sim N(0,\Gamma^{-1})}\left[ w^* \in \argmin_{w: Xw = Xw^*} \norm{S^T w}_1\right] \leq \frac{Cm}{g^{1/2}} + \exp(-\Omega(m))$$ for any $m = O(g^{1/2})$.

Let $\delta = c_\delta/(g^{1/2}n^2)$ be such that $\delta \leq 1/(2\norm{\Gamma}_F)$, which holds for some absolute constant $c_\delta>0$. Then by Theorem~\ref{theorem:unminoring-lower-bound}, there is a $G$-sparse precision matrix $\Theta$ such that $\lambda_\text{min}(\Theta) \geq O(g^{-45/2}n^{-3}\norm{\Gamma}_F)$ and $\norm{\Theta}_F \leq O(g^{45/2}n^{25/2}\norm{\Gamma}_F)$. Rescaling by $1/\norm{\Gamma}_F$ provides the desired condition number bound. Moreover, Theorem~\ref{theorem:unminoring-lower-bound} guarantees that for every preconditioner $T \in \RR^{n \times s}$, there is some $k$-sparse $v^* \in \RR^n$ such that $$\Pr_{X_1,\dots,X_m \sim N(0,\Theta^{-1})}\left[v^* \in \argmin_{v:Xv=Xv^*}\norm{T^Tv}_1\right] \leq \frac{Cm}{g^{1/2}} + \exp(-\Omega(m)) + \frac{1.5c_\delta}{g^{1/2}}$$ for any $m = O(g^{1/2})$. Since $g = \Omega(t^{2/19})$, the claim follows.
\end{proof}

\section{Discussion}\label{section:discussion}
Our results give an answer to the question of when preconditioning the Lasso can make sparse linear regression problems tractable.
For future work, it would be interesting to prove lower bounds for sparse linear regression against an even larger class of algorithms, and we expect some of the tools developed in this work may be useful in this direction. One natural candidate framework is the Statistical Query (SQ) model \cite{kearns1998efficient}, which considers algorithms that have a limited form of access to data. As discussed in \cite{vempala2019gradient},
some care must be taken in formulating the precise SQ model for real-valued/regression problems.

Conversely, it would be equally interesting if sparse linear regression is in fact tractable on the random designs we constructed. There is a notable lack of algorithms which succeed outside the regime of preconditioned Lasso, so it seems likely that this would require developing new algorithmic techniques.

\paragraph{Acknowledgements.} We thank Ankur Moitra, Pablo Parrilo, Arsen Vasilyan, Philippe Rigollet, Guy Bresler, Dylan Foster, Tselil Schramm, and Matthew Brennan for valuable conversations on related topics.

\bibliographystyle{amsalpha}
\bibliography{bib}

\newcommand{\etalchar}[1]{$^{#1}$}
\providecommand{\bysame}{\leavevmode\hbox to3em{\hrulefill}\thinspace}
\providecommand{\MR}{\relax\ifhmode\unskip\space\fi MR }
\providecommand{\MRhref}[2]{%
  \href{http://www.ams.org/mathscinet-getitem?mr=#1}{#2}
}
\providecommand{\href}[2]{#2}
\begin{thebibliography}{VDGB{\etalchar{+}}09}

\bibitem[ABX08]{applebaum2008basing}
Benny Applebaum, Boaz Barak, and David Xiao, \emph{On basing lower-bounds for
  learning on worst-case assumptions}, 2008 49th Annual IEEE Symposium on
  Foundations of Computer Science, IEEE, 2008, pp.~211--220.

\bibitem[AS04]{alon2004probabilistic}
Noga Alon and Joel~H Spencer, \emph{The probabilistic method}, John Wiley \&
  Sons, 2004.

\bibitem[BACCT17]{botelho2017exact}
Sara Botelho-Andrade, Peter~G Casazza, Desai Cheng, and Tin Tran, \emph{The
  exact constant for the $\ell_1-\ell_2 $ norm inequality}, arXiv preprint
  arXiv:1707.00631 (2017).

\bibitem[BB20]{brennan2020reducibility}
Matthew Brennan and Guy Bresler, \emph{Reducibility and
  statistical-computational gaps from secret leakage}, Conference on Learning
  Theory, PMLR, 2020, pp.~648--847.

\bibitem[BCDH10]{baraniuk2010model}
Richard~G Baraniuk, Volkan Cevher, Marco~F Duarte, and Chinmay Hegde,
  \emph{Model-based compressive sensing}, IEEE Transactions on information
  theory \textbf{56} (2010), no.~4, 1982--2001.

\bibitem[BD09a]{blumensath2009iterative}
Thomas Blumensath and Mike~E Davies, \emph{Iterative hard thresholding for
  compressed sensing}, Applied and computational harmonic analysis \textbf{27}
  (2009), no.~3, 265--274.

\bibitem[BD09b]{brockwell2009time}
Peter~J Brockwell and Richard~A Davis, \emph{Time series: theory and methods},
  Springer science \& business media, 2009.

\bibitem[BDD{\etalchar{+}}16]{bodlaender2016c}
Hans~L Bodlaender, P{\aa}l~Gr{\'\o}n{\aa}s Drange, Markus~S Dregi, Fedor~V
  Fomin, Daniel Lokshtanov, and Micha{\l} Pilipczuk, \emph{A c\^{}kn
  5-approximation algorithm for treewidth}, SIAM Journal on Computing
  \textbf{45} (2016), no.~2, 317--378.

\bibitem[Bel18]{bellec2018noise}
Pierre~C Bellec, \emph{The noise barrier and the large signal bias of the lasso
  and other convex estimators}, arXiv preprint arXiv:1804.01230 (2018).

\bibitem[BGI{\etalchar{+}}08]{berinde2008combining}
Radu Berinde, Anna~C Gilbert, Piotr Indyk, Howard Karloff, and Martin~J
  Strauss, \emph{Combining geometry and combinatorics: A unified approach to
  sparse signal recovery}, 2008 46th Annual Allerton Conference on
  Communication, Control, and Computing, IEEE, 2008, pp.~798--805.

\bibitem[BI08]{berinde2008sparse}
Radu Berinde and Piotr Indyk, \emph{Sparse recovery using sparse random
  matrices}, preprint (2008).

\bibitem[Bis06]{bishop2006pattern}
Christopher~M Bishop, \emph{Pattern recognition and machine learning},
  springer, 2006.

\bibitem[BJ94]{baraniuk1994signal}
Richard~G Baraniuk and Douglas~L Jones, \emph{A signal-dependent time-frequency
  representation: Fast algorithm for optimal kernel design}, IEEE Transactions
  on Signal Processing \textbf{42} (1994), no.~1, 134--146.

\bibitem[BKM{\etalchar{+}}16]{bertsimas2016best}
Dimitris Bertsimas, Angela King, Rahul Mazumder, et~al., \emph{Best subset
  selection via a modern optimization lens}, Annals of statistics \textbf{44}
  (2016), no.~2, 813--852.

\bibitem[BLT{\etalchar{+}}18]{bellec2018slope}
Pierre~C Bellec, Guillaume Lecu{\'e}, Alexandre~B Tsybakov, et~al., \emph{Slope
  meets lasso: improved oracle bounds and optimality}, Annals of Statistics
  \textbf{46} (2018), no.~6B, 3603--3642.

\bibitem[Bod05]{bodlaender2005discovering}
Hans~L Bodlaender, \emph{Discovering treewidth}, International Conference on
  Current Trends in Theory and Practice of Computer Science, Springer, 2005,
  pp.~1--16.

\bibitem[BR21]{basak2021sharp}
Anirban Basak and Mark Rudelson, \emph{Sharp transition of the invertibility of
  the adjacency matrices of sparse random graphs}, Probability Theory and
  Related Fields (2021), 1--76.

\bibitem[BRT{\etalchar{+}}09]{bickel2009simultaneous}
Peter~J Bickel, Ya’acov Ritov, Alexandre~B Tsybakov, et~al.,
  \emph{Simultaneous analysis of lasso and dantzig selector}, The Annals of
  statistics \textbf{37} (2009), no.~4, 1705--1732.

\bibitem[CC16]{chekuri2016polynomial}
Chandra Chekuri and Julia Chuzhoy, \emph{Polynomial bounds for the grid-minor
  theorem}, Journal of the ACM (JACM) \textbf{63} (2016), no.~5, 1--65.

\bibitem[CGZ16]{chen2016bayes}
Xi~Chen, Adityanand Guntuboyina, and Yuchen Zhang, \emph{On bayes risk lower
  bounds}, The Journal of Machine Learning Research \textbf{17} (2016), no.~1,
  7687--7744.

\bibitem[Cha82]{chazelle1982theorem}
Bernard Chazelle, \emph{A theorem on polygon cutting with applications}, 23rd
  Annual Symposium on Foundations of Computer Science (sfcs 1982), IEEE, 1982,
  pp.~339--349.

\bibitem[Cov99]{cover1999elements}
Thomas~M Cover, \emph{Elements of information theory}, John Wiley \& Sons,
  1999.

\bibitem[CRT06]{candes2006robust}
Emmanuel~J Cand{\`e}s, Justin Romberg, and Terence Tao, \emph{Robust
  uncertainty principles: Exact signal reconstruction from highly incomplete
  frequency information}, IEEE Transactions on information theory \textbf{52}
  (2006), no.~2, 489--509.

\bibitem[CT05]{candes2005decoding}
Emmanuel~J Candes and Terence Tao, \emph{Decoding by linear programming}, IEEE
  transactions on information theory \textbf{51} (2005), no.~12, 4203--4215.

\bibitem[CT{\etalchar{+}}07]{candes2007dantzig}
Emmanuel Candes, Terence Tao, et~al., \emph{The dantzig selector: Statistical
  estimation when p is much larger than n}, Annals of statistics \textbf{35}
  (2007), no.~6, 2313--2351.

\bibitem[CT21]{chuzhoy2021towards}
Julia Chuzhoy and Zihan Tan, \emph{Towards tight (er) bounds for the excluded
  grid theorem}, Journal of Combinatorial Theory, Series B \textbf{146} (2021),
  219--265.

\bibitem[Dem72]{dempster1972covariance}
Arthur~P Dempster, \emph{Covariance selection}, Biometrics (1972), 157--175.

\bibitem[DHL{\etalchar{+}}17]{dalalyan2017prediction}
Arnak~S Dalalyan, Mohamed Hebiri, Johannes Lederer, et~al., \emph{On the
  prediction performance of the lasso}, Bernoulli \textbf{23} (2017), no.~1,
  552--581.

\bibitem[DK08]{das2008algorithms}
Abhimanyu Das and David Kempe, \emph{Algorithms for subset selection in linear
  regression}, Proceedings of the fortieth annual ACM symposium on Theory of
  computing, 2008, pp.~45--54.

\bibitem[DK11]{das2011submodular}
\bysame, \emph{Submodular meets spectral: Greedy algorithms for subset
  selection, sparse approximation and dictionary selection}, arXiv preprint
  arXiv:1102.3975 (2011).

\bibitem[DLY20]{dong2020nearly}
Sally Dong, Yin~Tat Lee, and Guanghao Ye, \emph{A nearly-linear time algorithm
  for linear programs with small treewidth: A multiscale representation of
  robust central path}, arXiv preprint arXiv:2011.05365 (2020).

\bibitem[DMR18]{devroye2018total}
Luc Devroye, Abbas Mehrabian, and Tommy Reddad, \emph{The total variation
  distance between high-dimensional gaussians}, arXiv preprint arXiv:1810.08693
  (2018).

\bibitem[Don06]{donoho2006compressed}
David~L Donoho, \emph{Compressed sensing}, IEEE Transactions on information
  theory \textbf{52} (2006), no.~4, 1289--1306.

\bibitem[DS89]{donoho1989uncertainty}
David~L Donoho and Philip~B Stark, \emph{Uncertainty principles and signal
  recovery}, SIAM Journal on Applied Mathematics \textbf{49} (1989), no.~3,
  906--931.

\bibitem[EKD{\etalchar{+}}18]{elenberg2018restricted}
Ethan~R Elenberg, Rajiv Khanna, Alexandros~G Dimakis, Sahand Negahban, et~al.,
  \emph{Restricted strong convexity implies weak submodularity}, Annals of
  Statistics \textbf{46} (2018), no.~6B, 3539--3568.

\bibitem[FG94]{foster1994risk}
Dean~P Foster and Edward~I George, \emph{The risk inflation criterion for
  multiple regression}, The Annals of Statistics (1994), 1947--1975.

\bibitem[FHL08]{feige2008improved}
Uriel Feige, MohammadTaghi Hajiaghayi, and James~R Lee, \emph{Improved
  approximation algorithms for minimum weight vertex separators}, SIAM Journal
  on Computing \textbf{38} (2008), no.~2, 629--657.

\bibitem[FHT08]{friedman2008sparse}
Jerome Friedman, Trevor Hastie, and Robert Tibshirani, \emph{Sparse inverse
  covariance estimation with the graphical lasso}, Biostatistics \textbf{9}
  (2008), no.~3, 432--441.

\bibitem[FKT15]{foster2015variable}
Dean Foster, Howard Karloff, and Justin Thaler, \emph{Variable selection is
  hard}, Conference on Learning Theory, PMLR, 2015, pp.~696--709.

\bibitem[FLQ11]{fan2011sparse}
Jianqing Fan, Jinchi Lv, and Lei Qi, \emph{Sparse high-dimensional models in
  economics}.

\bibitem[FS11]{foygelfast}
Rina Foygel and Nathan Srebro, \emph{Fast rate and optimistic rate for
  l1-regularized regression}, Tech. report, Toyota Technological Institute.
  arXiv: 1108.037 v1, 2011.

\bibitem[GHL{\etalchar{+}}87]{guibas1987linear}
Leonidas Guibas, John Hershberger, Daniel Leven, Micha Sharir, and Robert~E
  Tarjan, \emph{Linear-time algorithms for visibility and shortest path
  problems inside triangulated simple polygons}, Algorithmica \textbf{2}
  (1987), no.~1-4, 209--233.

\bibitem[GZ17]{gamarnik2017sparse}
David Gamarnik and Ilias Zadik, \emph{Sparse high-dimensional linear
  regression. algorithmic barriers and a local search algorithm}, arXiv
  preprint arXiv:1711.04952 (2017).

\bibitem[Haa11]{haar1911theorie}
Alfred Haar, \emph{Zur theorie der orthogonalen funktionensysteme},
  Mathematische Annalen \textbf{71} (1911), no.~1, 38--53.

\bibitem[HIKP12]{hassanieh2012simple}
Haitham Hassanieh, Piotr Indyk, Dina Katabi, and Eric Price, \emph{Simple and
  practical algorithm for sparse fourier transform}, Proceedings of the
  twenty-third annual ACM-SIAM symposium on Discrete Algorithms, SIAM, 2012,
  pp.~1183--1194.

\bibitem[HKZ12]{hsu2012random}
Daniel Hsu, Sham~M Kakade, and Tong Zhang, \emph{Random design analysis of
  ridge regression}, Conference on learning theory, JMLR Workshop and
  Conference Proceedings, 2012, pp.~9--1.

\bibitem[HPIM16]{har2016approximate}
Sariel Har-Peled, Piotr Indyk, and Sepideh Mahabadi, \emph{Approximate sparse
  linear regression}, arXiv preprint arXiv:1609.08739 (2016).

\bibitem[HR16]{hutter2016optimal}
Jan-Christian H{\"u}tter and Philippe Rigollet, \emph{Optimal rates for total
  variation denoising}, Conference on Learning Theory, PMLR, 2016,
  pp.~1115--1146.

\bibitem[JR{\etalchar{+}}15]{jia2015preconditioning}
Jinzhu Jia, Karl Rohe, et~al., \emph{Preconditioning the lasso for sign
  consistency}, Electronic Journal of Statistics \textbf{9} (2015), no.~1,
  1150--1172.

\bibitem[JRD16]{jain2016structured}
Prateek Jain, Nikhil Rao, and Inderjit Dhillon, \emph{Structured sparse
  regression via greedy hard-thresholding}, arXiv preprint arXiv:1602.06042
  (2016).

\bibitem[JTK14]{jain2014iterative}
Prateek Jain, Ambuj Tewari, and Purushottam Kar, \emph{On iterative hard
  thresholding methods for high-dimensional m-estimation}, arXiv preprint
  arXiv:1410.5137 (2014).

\bibitem[Kea98]{kearns1998efficient}
Michael Kearns, \emph{Efficient noise-tolerant learning from statistical
  queries}, Journal of the ACM (JACM) \textbf{45} (1998), no.~6, 983--1006.

\bibitem[KKMM20]{kelner2019learning}
Jonathan Kelner, Frederic Koehler, Raghu Meka, and Ankur Moitra, \emph{Learning
  some popular gaussian graphical models without condition number bounds},
  Proceedings of Neural Information Processing Systems (NeurIPS), 2020.

\bibitem[KM14]{koltchinskii2014l_1}
Vladimir Koltchinskii and Stanislav Minsker, \emph{$ l\_1 $-penalization in
  functional linear regression with subgaussian design}, Journal de
  l’{\'E}cole polytechnique-Math{\'e}matiques \textbf{1} (2014), 269--330.

\bibitem[Kri15]{krivelevich2015phase}
Michael Krivelevich, \emph{The phase transition in site percolation on
  pseudo-random graphs}, the electronic journal of combinatorics \textbf{22}
  (2015), P00.

\bibitem[Lau96]{lauritzen1996graphical}
Steffen~L Lauritzen, \emph{Graphical models}, vol.~17, Clarendon Press, 1996.

\bibitem[LF81]{levy1981reconstruction}
Shlomo Levy and Peter~K Fullagar, \emph{Reconstruction of a sparse spike train
  from a portion of its spectrum and application to high-resolution
  deconvolution}, Geophysics \textbf{46} (1981), no.~9, 1235--1243.

\bibitem[LS88]{lauritzen1988local}
Steffen~L Lauritzen and David~J Spiegelhalter, \emph{Local computations with
  probabilities on graphical structures and their application to expert
  systems}, Journal of the Royal Statistical Society: Series B (Methodological)
  \textbf{50} (1988), no.~2, 157--194.

\bibitem[Mal99]{mallat1999wavelet}
St{\'e}phane Mallat, \emph{A wavelet tour of signal processing}, Elsevier,
  1999.

\bibitem[MB{\etalchar{+}}06]{meinshausen2006high}
Nicolai Meinshausen, Peter B{\"u}hlmann, et~al., \emph{High-dimensional graphs
  and variable selection with the lasso}, The annals of statistics \textbf{34}
  (2006), no.~3, 1436--1462.

\bibitem[Men14]{mendelson2014learning}
Shahar Mendelson, \emph{Learning without concentration}, Conference on Learning
  Theory, PMLR, 2014, pp.~25--39.

\bibitem[Nat95]{natarajan1995sparse}
Balas~Kausik Natarajan, \emph{Sparse approximate solutions to linear systems},
  SIAM journal on computing \textbf{24} (1995), no.~2, 227--234.

\bibitem[NT09]{needell2009cosamp}
Deanna Needell and Joel~A Tropp, \emph{Cosamp: Iterative signal recovery from
  incomplete and inaccurate samples}, Applied and computational harmonic
  analysis \textbf{26} (2009), no.~3, 301--321.

\bibitem[NW13]{needell2013stable}
Deanna Needell and Rachel Ward, \emph{Stable image reconstruction using total
  variation minimization}, SIAM Journal on Imaging Sciences \textbf{6} (2013),
  no.~2, 1035--1058.

\bibitem[Oue81]{ouellette1981schur}
Diane~Valerie Ouellette, \emph{Schur complements and statistics}, Linear
  Algebra and its Applications \textbf{36} (1981), 187--295.

\bibitem[Pea09]{pearl2009causality}
Judea Pearl, \emph{Causality}, Cambridge university press, 2009.

\bibitem[PJS17]{peters2017elements}
Jonas Peters, Dominik Janzing, and Bernhard Sch{\"o}lkopf, \emph{Elements of
  causal inference: foundations and learning algorithms}, The MIT Press, 2017.

\bibitem[RH15]{rigollet2015high}
Phillippe Rigollet and Jan-Christian H{\"u}tter, \emph{High dimensional
  statistics}, Lecture notes for course 18S997 \textbf{813} (2015), 814.

\bibitem[RS86]{robertson1986graph}
Neil Robertson and Paul~D Seymour, \emph{Graph minors. v. excluding a planar
  graph}, Journal of Combinatorial Theory, Series B \textbf{41} (1986), no.~1,
  92--114.

\bibitem[RV06]{rudelson2006sparse}
Mark Rudelson and Roman Vershynin, \emph{Sparse reconstruction by convex
  relaxation: Fourier and gaussian measurements}, 2006 40th Annual Conference
  on Information Sciences and Systems, IEEE, 2006, pp.~207--212.

\bibitem[RWY10]{raskutti2010restricted}
Garvesh Raskutti, Martin~J Wainwright, and Bin Yu, \emph{Restricted eigenvalue
  properties for correlated gaussian designs}, The Journal of Machine Learning
  Research \textbf{11} (2010), 2241--2259.

\bibitem[RXZ19]{reeves2019all}
Galen Reeves, Jiaming Xu, and Ilias Zadik, \emph{The all-or-nothing phenomenon
  in sparse linear regression}, Conference on Learning Theory, PMLR, 2019,
  pp.~2652--2663.

\bibitem[Saa03]{saad2003iterative}
Yousef Saad, \emph{Iterative methods for sparse linear systems}, SIAM, 2003.

\bibitem[She07]{sheffield2007gaussian}
Scott Sheffield, \emph{Gaussian free fields for mathematicians}, Probability
  theory and related fields \textbf{139} (2007), no.~3-4, 521--541.

\bibitem[She10]{she2010sparse}
Yiyuan She, \emph{{Sparse regression with exact clustering}}, Electronic
  Journal of Statistics \textbf{4} (2010), no.~none, 1055 -- 1096.

\bibitem[SS86]{santosa1986linear}
Fadil Santosa and William~W Symes, \emph{Linear inversion of band-limited
  reflection seismograms}, SIAM Journal on Scientific and Statistical Computing
  \textbf{7} (1986), no.~4, 1307--1330.

\bibitem[SSBD14]{shalev2014understanding}
Shai Shalev-Shwartz and Shai Ben-David, \emph{Understanding machine learning:
  From theory to algorithms}, Cambridge university press, 2014.

\bibitem[SSK13]{sharpnack2013detecting}
James Sharpnack, Aarti Singh, and Akshay Krishnamurthy, \emph{Detecting
  activations over graphs using spanning tree wavelet bases}, Artificial
  Intelligence and Statistics, PMLR, 2013, pp.~536--544.

\bibitem[SST10]{srebro2010optimistic}
Nathan Srebro, Karthik Sridharan, and Ambuj Tewari, \emph{Optimistic rates for
  learning with a smooth loss}, arXiv preprint arXiv:1009.3896 (2010).

\bibitem[Tib96]{tibshirani1996regression}
Robert Tibshirani, \emph{Regression shrinkage and selection via the lasso},
  Journal of the Royal Statistical Society: Series B (Methodological)
  \textbf{58} (1996), no.~1, 267--288.

\bibitem[Tsy08]{tsybakov2008introduction}
Alexandre~B Tsybakov, \emph{Introduction to nonparametric estimation}, Springer
  Science \& Business Media, 2008.

\bibitem[TT11]{tibshirani2011solution}
Ryan~J Tibshirani and Jonathan Taylor, \emph{The solution path of the
  generalized lasso}, The annals of statistics \textbf{39} (2011), no.~3,
  1335--1371.

\bibitem[VDG18]{van2018tight}
Sara Van De~Geer, \emph{On tight bounds for the lasso}, Journal of Machine
  Learning Research \textbf{19} (2018), 46.

\bibitem[VDGB{\etalchar{+}}09]{van2009conditions}
Sara~A Van De~Geer, Peter B{\"u}hlmann, et~al., \emph{On the conditions used to
  prove oracle results for the lasso}, Electronic Journal of Statistics
  \textbf{3} (2009), 1360--1392.

\bibitem[vdGL{\etalchar{+}}13]{van2013lasso}
Sara van~de Geer, Johannes Lederer, et~al., \emph{The lasso, correlated design,
  and improved oracle inequalities}, From Probability to Statistics and Back:
  High-Dimensional Models and Processes--A Festschrift in Honor of Jon A.
  Wellner, Institute of Mathematical Statistics, 2013, pp.~303--316.

\bibitem[Ver10]{vershynin2010rmt}
Roman Vershynin, \emph{Introduction to the non-asymptotic analysis of random
  matrices}, arXiv preprint arXiv:1011.3027 (2010).

\bibitem[Ver18]{vershynin2018high}
\bysame, \emph{High-dimensional probability: An introduction with applications
  in data science}, vol.~47, Cambridge university press, 2018.

\bibitem[VW19]{vempala2019gradient}
Santosh Vempala and John Wilmes, \emph{Gradient descent for one-hidden-layer
  neural networks: Polynomial convergence and sq lower bounds}, Conference on
  Learning Theory, PMLR, 2019, pp.~3115--3117.

\bibitem[Wai09]{wainwright2009sharp}
Martin~J Wainwright, \emph{Sharp thresholds for high-dimensional and noisy
  sparsity recovery using l1-constrained quadratic programming (lasso)}, IEEE
  transactions on information theory \textbf{55} (2009), no.~5, 2183--2202.

\bibitem[Wai19]{wainwright2019high}
\bysame, \emph{High-dimensional statistics: A non-asymptotic viewpoint},
  vol.~48, Cambridge University Press, 2019.

\bibitem[WCH{\etalchar{+}}09]{wu2009genome}
Tong~Tong Wu, Yi~Fang Chen, Trevor Hastie, Eric Sobel, and Kenneth Lange,
  \emph{Genome-wide association analysis by lasso penalized logistic
  regression}, Bioinformatics \textbf{25} (2009), no.~6, 714--721.

\bibitem[WJ08]{wainwright2008graphical}
Martin~J Wainwright and Michael~Irwin Jordan, \emph{Graphical models,
  exponential families, and variational inference}, Now Publishers Inc, 2008.

\bibitem[WJJ13]{wauthier2013comparative}
Fabian~L Wauthier, Nebojsa Jojic, and Michael~I Jordan, \emph{A comparative
  framework for preconditioned lasso algorithms}, Advances in Neural
  Information Processing Systems \textbf{26} (2013), 1061--1069.

\bibitem[Zho09]{zhou2009restricted}
Shuheng Zhou, \emph{Restricted eigenvalue conditions on subgaussian random
  matrices}, arXiv preprint arXiv:0912.4045 (2009).

\bibitem[ZWJ14]{zhang2014lower}
Yuchen Zhang, Martin~J Wainwright, and Michael~I Jordan, \emph{Lower bounds on
  the performance of polynomial-time algorithms for sparse linear regression},
  Conference on Learning Theory, 2014, pp.~921--948.

\bibitem[ZWJ{\etalchar{+}}17]{zhang2017optimal}
Yuchen Zhang, Martin~J Wainwright, Michael~I Jordan, et~al., \emph{Optimal
  prediction for sparse linear models? lower bounds for coordinate-separable
  m-estimators}, Electronic Journal of Statistics \textbf{11} (2017), no.~1,
  752--799.

\end{thebibliography}

\appendix
\section{An example with large subgaussian constant}\label{apdx:example}
In this section, we discuss an example application of our theory to a problem with a very large subgaussian constant. This helps clarify some connections between our upper bound in the special case of the simple random walk and some other work in the signal processing literature.

We consider a random-design version of a 1d signal processing problem where 
\[ Z_0 \sim Uni\,\{1,\ldots,n\}, \quad (X_0)_i = \bone(Z_0 \le i) - i/n \]
and 
\[ Y_0 = \langle w^*, X \rangle \]
with $w^*$ $k$-sparse. This is the same as a (nonlinear) regression problem where $Y_0 = f(Z_0)$, we assumed the problem is centered so that $\EE Y_0 = 0$, and the function $f$ is assumed to be piecewise constant with at most $k$ change points.
In this case the population covariance matrix is
\[ \Sigma_{ij} = \EE \bone(Z_0 \le i) \bone(Z_0 \le j) - ij/n^2 = \min\{i,j\}/n - ij/n^2. \]
If $\Theta_1$ is the precision matrix of simple random walk with steps scaled by $1/n$ and $u_i = i/n$  then by the Sherman-morrison formula,
\[ \Theta = \Sigma^{-1} = (\Theta_1^{-1} - uu^T)^{-1} = \Theta_1 + \frac{\Theta_1 u u^T \Theta_1}{1 - u^T \Theta u} \]
and observe $\Theta_1 u = (1/n) e_n$ so the precision matrix is sparse and supported on a path. (It's the covariance matrix of a discretized Brownian bridge.) 

The covariates in this example have a subgaussian constant which is $poly(n)$. Therefore, the straightforward subgaussian generalization of our upper bound result (Theorem~\ref{thm:upper-bound-intro}) says that given $m = \Omega(poly(n))$ the preconditioned Lasso obtains a statistical rate of $O(\sigma^2 k \log^3(n)/m)$ which is close to optimal. Since the underlying graph is a path, the preconditioner from our Theorem essentially rewrites the problem to estimating the signal $f$ in the Haar wavelet basis. 

In the signal processing literature, a variant of this problem with fixed instead of random design has been extensively studied. In this variant, we are given a noisily observed version of the signal at each timestep, instead of at random times; this problem is a \emph{1-d denoising problem} and the fact that Haar wavelets can be used with the Lasso is known: see \cite{needell2013stable,hutter2016optimal}. 
In fact, it's also known in this setting that the \emph{unpreconditioned} Lasso (i.e. regression with a total-variation penalty) actually achieves optimal statistical rates \cite{dalalyan2017prediction}. 

It may appear surprising that the unpreconditioned Lasso is known to perform well in the 1-d denoising example, which seems related to the path example from the Introduction, since for the latter problem which we have an explicit lower bound (Theorem~\ref{thm:random-walk-lb-intro}) showing the suboptimality of the Lasso. A key difference between this setup and the path example is the very large subgaussian constant: in the random design regression example described above, it's clear that exact recovery of the signal $f$ is not information-theoretically possible from $o(n)$ samples, and so the fact that the Lasso fails from $o(n)$ samples does not correspond to a gap in performance versus the information-theoretic optimal estimator.

\section{Oracle Inequality}\label{apdx:oracle}
In this section, we will consider the following a general setup for sparse linear regression which allows for misspecification, i.e. situations where the response only approximately follows a sparse linear prediction rule.
It follows straightforwardly by combining our Theorem~\ref{thm:sst} with the oracle inequality for the Lasso from \cite{bickel2009simultaneous}. One motivation for considering this general setup is its consistency with the relatively gradual improvement of the Preconditioned BP error in Figure~\ref{fig:experiment}, vs. the sharp ``phase transition'' type behavior observed with BP; after preconditioning, the signal ends up spread across multiple coordinates and the oracle inequality reflects a tradeoff where the Lasso competes with an approximation of the signal using fewer coordinates (this corresponds to selecting $k$ smaller than the ground truth sparsity in the bound below).

We first formally describe the usual oracle inequality setup.
The algorithm is given as input a matrix $X : m \times n$ with rows sampled independently as $X_i \sim N(0,\Sigma)$, as well as a noisy \emph{response vector} $Y \in \mathbb{R}^m$ generated as
\[ Y = Y^* + \xi \]
where $Y^* \in \mathbb{R}^m$ is an arbitrary vector, unknown to the algorithm, and $\xi$ is a random vector which, conditional on $X$, is both mean-zero (i.e. $\EE[\xi | X] = 0$) and $(\sigma^2,I)$-subgaussian. 
The goal in this section is to establish upper bounds for recovering $Y^*$, more specifically to show an \emph{oracle inequality} of the form
\[ \|Y^* - X w\|_2^2 \le (1 + \alpha)\min_{w^* \in B_0(k)} \left[\|Y^* - X w^*\|_2^2 + \epsilon \right] \]
where $\alpha > 0$ can be chosen arbitrarily,
 $w$ is the output of the algorithm, and the goal is to make $\epsilon$ (which will depend on $\alpha$) as small as possible. This bound is called an oracle inequality because it shows that our algorithm achieves an error comparable with a sparse \emph{oracle} $w^*$ which has access to the true $Y^*$ (equivalently, an infinite amount of data), up to a constant factor and a small error term $\epsilon = o(1)$. See \cite{rigollet2015high,tsybakov2008introduction} for further background on oracle inequalities. It's also possible to state results for more of a learning-theoretic misspecification model (see e.g. \cite{shalev2014understanding}) where the Bayes predictor $\EE[Y_i | X_i] = f(X)$ is given by an approximately linear function $f$ and the goal is to minimize $\EE[(f(X_0) - \langle w^*, X_0 \rangle)^2]$, an objective which includes the cost of generalization to fresh samples, but for simplicity we stick to the oracle inequality setup described above.

To prove guarantees for the Lasso combined with our preconditioner, we just need to combine our Theorem~\ref{thm:sst} with the oracle inequality for the Lasso proved in \cite{bickel2009simultaneous}.
\begin{theorem}[Theorem 6.1 of \cite{bickel2009simultaneous}]\label{thm:lasso-oracle}
For any $\alpha > 0$ the following result holds. Suppose $X : m \times n$
is an arbitrary matrix with columns of $\ell_2$ norm at most $\sqrt{m}$ and $\hat \Sigma = \frac{1}{m} X^T X$ satisfies $RE(k,3 + 4/\alpha)$. Suppose $Y = Y^* + \xi$
where $Y^* \in \mathbb{R}^m$ and $\xi$ is a random vector which is mean-zero and $(\sigma^2,I)$-subgaussian. Then the Lasso with regularization parameter $\lambda = A\sigma \sqrt{\frac{\log n}{m}}$ outputs $w$ satisfying the oracle inequality
\[ \frac{1}{m}\|Y^* - Xw\|_2^2 \le (1 + \alpha) \min_{\|w^*\|_0 \le k} \left[\frac{1}{m} \|Y^* - X w^*\|_2^2 + \frac{C(\alpha) A^2 \sigma^2}{\kappa^2(s,3 + 4/\alpha)} \frac{\|w^*\|_0 \log(n)}{m}\right]\]
where $C(\alpha) > 0$ depends only on $\alpha$, $\|w^*\|_0$ denotes the sparsity (i.e. number of nonzeros) of $w^*$, and the result holds
with probability at least $1 - n^{1 - A^2/8}$ over the randomness of the noise. 
\end{theorem}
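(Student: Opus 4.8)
The plan is to run the classical ``basic inequality'' argument for the Lasso (following Bickel--Ritov--Tsybakov), adapted to the misspecified model $Y = Y^* + \xi$ in which $Y^*$ need not be close to the column span of $X$ at a sparse point. The only probabilistic ingredient is a tail bound on $\norm{X^T\xi}_\infty$; everything else — in particular the $(1+\alpha)$ multiplicative slack and the appearance of the cone constant $3 + 4/\alpha$ — comes from a deterministic case analysis. Since this is verbatim Theorem~6.1 of \cite{bickel2009simultaneous}, one could simply cite that reference, but here is the route I would take.

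First I would use optimality of the Lasso: since $w$ minimizes $\frac1m\norm{Y - Xw}_2^2 + \lambda\norm{w}_1$, for any $k$-sparse competitor $w^*$ with $J := \supp(w^*)$, expanding $Y = Y^* + \xi$ and cancelling $\norm{\xi}_2^2$ gives
\[ \frac1m\norm{Y^* - Xw}_2^2 \le \frac1m\norm{Y^* - Xw^*}_2^2 + \frac2m\langle \xi, X(w-w^*)\rangle + \lambda(\norm{w^*}_1 - \norm{w}_1). \]
Next I would define the noise event: each column $X_j$ has $\norm{X_j}_2 \le \sqrt m$, so conditionally on $X$ the scalar $\langle X_j,\xi\rangle$ is $\sigma^2 m$-subgaussian, and a union bound over $j\in[n]$ shows that with probability at least $1 - n^{1-A^2/8}$ we have $\frac2m\norm{X^T\xi}_\infty \le \lambda/2$ for $\lambda = A\sigma\sqrt{\log(n)/m}$ (this is exactly where the stated failure probability comes from). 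Working on that event, $\frac2m\langle\xi,X(w-w^*)\rangle \le \frac\lambda2\norm{w-w^*}_1$. Then I would set $h = w - w^*$, split $\norm{h}_1$ across $J$ and $J^c$, use $\norm{w^*}_1 - \norm{w}_1 \le \norm{h_J}_1 - \norm{h_{J^c}}_1$, and substitute back to obtain
\[ \frac1m\norm{Y^* - Xw}_2^2 + \frac\lambda2\norm{h_{J^c}}_1 \le \frac1m\norm{Y^* - Xw^*}_2^2 + \frac{3\lambda}2\norm{h_J}_1. \]
Now I would split into two cases. If $\frac{3\lambda}2\norm{h_J}_1 \le \alpha \cdot \frac1m\norm{Y^*-Xw^*}_2^2$, the display directly yields $\frac1m\norm{Y^*-Xw}_2^2 \le (1+\alpha)\frac1m\norm{Y^*-Xw^*}_2^2$, which is stronger than claimed. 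Otherwise $\frac1m\norm{Y^*-Xw^*}_2^2 < \frac{3\lambda}{2\alpha}\norm{h_J}_1$, which together with the display forces $\norm{h_{J^c}}_1 \le (3+4/\alpha)\norm{h_J}_1$, i.e. $h$ lies in the cone $\mathcal{C}(3+4/\alpha, J)$. In that case I invoke $RE(k, 3+4/\alpha)$ to get $\norm{h_J}_1 \le \sqrt{|J|}\,\kappa^{-1}(k,3+4/\alpha)\,m^{-1/2}\norm{Xh}_2$, bound $\norm{Xh}_2 \le \norm{Y^*-Xw}_2 + \norm{Y^*-Xw^*}_2$, and plug into the display; writing $a = m^{-1/2}\norm{Y^*-Xw}_2$, $b = m^{-1/2}\norm{Y^*-Xw^*}_2$ this becomes a scalar inequality $a^2 \le b^2 + c\,\kappa^{-1}\sqrt{|J|}\,\lambda\,(a+b)$, and an AM--GM split tuned to $\alpha$ absorbs the $a$-term into $a^2$, the $b$-term into $(1+\alpha)b^2$, and leaves a residual $O_\alpha(1)\,\kappa^{-2}|J|\lambda^2 = \frac{C(\alpha)A^2\sigma^2\norm{w^*}_0\log n}{\kappa^2(k,3+4/\alpha)m}$. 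Taking the minimum over all $k$-sparse $w^*$ finishes the proof.

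The main obstacle is purely bookkeeping in the last two steps: matching the exact cone parameter $3+4/\alpha$ and the multiplicative factor $1+\alpha$ requires choosing the case-split threshold and the AM--GM weights consistently, and it is precisely here that the tradeoff appears that $\kappa(k,3+4/\alpha)$ degrades as $\alpha\to 0$. The subgaussian tail bound and the basic inequality are routine, so no serious difficulty is expected there.
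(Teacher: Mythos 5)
Your reconstruction is correct and is essentially the argument of Bickel--Ritov--Tsybakov itself; the paper offers no proof of this statement but simply cites it (noting only that the sub-Gaussian extension of the noise bound is routine, which is exactly the union-bound step you carry out). The only loose ends are constant-level: your threshold choice yields the cone constant $3+3/\alpha$ rather than $3+4/\alpha$ (harmless, since the smaller cone is contained in the larger one on which $RE$ is assumed), and the exponent in the failure probability depends on whether the noise event is $\tfrac2m\|X^T\xi\|_\infty\le\lambda$ or $\le\lambda/2$ --- both of which you already flag as bookkeeping.
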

We note that in the original paper \cite{bickel2009simultaneous} it was assumed the noise is Gaussian distributed, but the sub-Gaussian generalization stated above follows by essentially the same proof (see e.g. \cite{rigollet2015high}). We also have used notation consistent with the rest of this paper: e.g. in the language of \cite{bickel2009simultaneous} the columns of $X$ would be referred to as dictionary elements and denoted $f_1,\ldots,f_n$.
\begin{theorem}
Provided $m = \Omega(k \tw(\Theta) \log^2(n)\log(n/\delta))$, the output $\hat{w}$ of the preconditioned Lasso, using the preconditioner from Theorem~\ref{thm:sst}, satisfies
\[ \frac{1}{m}\|Y^* - Xw\|_2^2 \le 2 \min_{w^* : \|w^*\|_0 \le k} \left[\frac{1}{m} \|Y^* - X w^*\|_2^2 + C \frac{\sigma^2 \|w^*\|_0 \tw(\Theta) \log^2(n/\delta)}{m}\right]\]
with probability at least $1 - \delta$.
\end{theorem}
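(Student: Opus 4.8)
The plan is to combine the RIP guarantee for the preconditioned design (Theorem~\ref{thm:sst}) with the Lasso oracle inequality (Theorem~\ref{thm:lasso-oracle}), following exactly the same template as the proof of Theorem~\ref{thm:lasso-sst}. First, I would fix $\alpha = 1$ (so that $1+\alpha = 2$), run \textsc{GraphicalCholesky} on $\hat\Sigma$ to build the preconditioner $S$ and associated centroid tree $\mathcal{T}$, and set $X' = X(S^T)^{-1}$, $u = S^T w$. As computed in the proof of Theorem~\ref{thm:lasso-sst}, the empirical covariance of the rows of $X'$ is $S^{-1}X^TX(S^T)^{-1}$, and for $u = (S^T)^{-1}v$ one has $\langle u, X^TXu\rangle = \langle v, S^{-1}X^TX(S^T)^{-1}v\rangle$ while $\langle u, SS^Tu\rangle = \langle v,v\rangle$; so Theorem~\ref{thm:sst} with parameter $\epsilon$ small enough (a constant suffices, or $\epsilon = O(1)$ chosen below $c(\alpha)$ from Lemma~\ref{lem:rip-to-re}) gives that $\hat\Sigma_{X'}$ is $(k',O(\epsilon))$-RIP for $k' = O(k\,\tw(\Theta)\log n)$, hence by Lemma~\ref{lem:rip-to-re} satisfies $RE(k', 3+4/\alpha) = RE(k',7)$ with $\kappa^2$ bounded below by an absolute constant. (One should also rescale columns of $X'$ to have norm $\sqrt m$, or equivalently absorb a diagonal matrix into $S$; this is standard and preserves everything up to constants.)

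Next I would transfer the misspecified model through the change of basis: writing $Y = Y^* + \xi$ exactly as in the hypothesis, the model $Y = X'u^* + \xi$ holds for any $u^*$ with $(S^T)^{-1}u^* = w^*$, and the key point is that if $w^*$ is $k$-sparse then by Lemma~\ref{lem:tree-sparsity-preserving} the vector $u^* = S^T w^*$ is $O(k\log n)$-group-tree-sparse with respect to $\mathcal{T}$, hence $\|u^*\|_0 \le k' = O(k\,\tw(\Theta)\log n)$ in the ordinary sense. Applying Theorem~\ref{thm:lasso-oracle} to $X'$ with $\alpha = 1$, $A^2 = \Theta(1 + \log(2/\delta)/\log n)$, and noting $\frac{1}{m}\|Y^* - X'u^*\|_2^2 = \frac{1}{m}\|Y^* - Xw^*\|_2^2$ and $\|u^*\|_0 \le O(\|w^*\|_0 \tw(\Theta)\log n)$, we obtain with probability $\ge 1 - \delta$ that the Lasso output $\hat u$ (and hence $\hat w = (S^T)^{-1}\hat u$, since $X\hat w = X'\hat u$) satisfies
\[ \frac{1}{m}\|Y^* - X\hat w\|_2^2 \le 2\min_{w^*:\|w^*\|_0\le k}\left[\frac{1}{m}\|Y^* - Xw^*\|_2^2 + \frac{C' A^2\sigma^2 \|w^*\|_0\,\tw(\Theta)\log n}{\kappa^2 m}\right],\]
and substituting $A^2\log n = \Theta(\log^2(n/\delta))$ and the constant lower bound on $\kappa^2$ gives the claimed bound with an absolute constant $C$. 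The sample complexity requirement $m = \Omega(k\,\tw(\Theta)\log^2(n)\log(n/\delta))$ is exactly what Theorem~\ref{thm:sst} needs to make the RIP constant a small enough constant on $k'$-sparse vectors (recall $k' = O(k\,\tw(\Theta)\log n)$ and Theorem~\ref{thm:sst} wants $m = \Omega(k'\log^2(n)\log(2n/\delta)/\epsilon^2)$), which also suffices for $RE$ via Lemma~\ref{lem:rip-to-re}.

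The only genuinely non-routine point — and the one I would state carefully — is that Theorem~\ref{thm:lasso-oracle} produces a possibly-dense $\hat u$, so unlike in the exact-recovery setting there is no projection step available to control sparsity of $\hat u$ itself; however, we do not need it, because the oracle inequality bounds $\frac{1}{m}\|Y^* - X'\hat u\|_2^2$ directly and we only assert a prediction-error (in-sample MSE) bound, not a parameter-recovery bound. Relatedly, because $\hat w = (S^T)^{-1}\hat u$ need not be sparse, one cannot invoke Theorem~\ref{thm:sst} a second time to convert the in-sample MSE into the population Mahalanobis error $\|\hat w - w^*\|_\Sigma^2$ (this is the subtlety already flagged in the Preliminaries under ``slow rate''); hence the theorem is stated in terms of $\frac{1}{m}\|Y^* - X\hat w\|_2^2$ only, and no further argument is required. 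Everything else is bookkeeping: tracking the constant factors from $\alpha = 1$, from Lemma~\ref{lem:rip-to-re}, and from the $\log n$ absorbed into $k'$ (which is why a single power of $\log(n/\delta)$ in the numerator of Theorem~\ref{thm:lasso-oracle} becomes $\log^2(n/\delta)$ here), and checking the high-probability events of Theorem~\ref{thm:sst} and Theorem~\ref{thm:lasso-oracle} hold simultaneously with probability $\ge 1-\delta$ after adjusting constants.
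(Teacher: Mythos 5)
Your proposal matches the paper's (extremely terse) proof: combine Theorem~\ref{thm:sst} with the oracle inequality Theorem~\ref{thm:lasso-oracle} following the template of Theorem~\ref{thm:lasso-sst}, using Lemma~\ref{lem:rip-to-re} to pass from RIP to the RE condition and Lemma~\ref{lem:tree-sparsity-preserving} to control $\|S^T w^*\|_0$. Your remark about why the conclusion is phrased in terms of in-sample MSE rather than the $\Sigma$-Mahalanobis error (because $\hat u$, hence $\hat w$, is not controlled to be sparse, so Theorem~\ref{thm:sst} cannot be invoked a second time) is exactly the right observation.

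One small bookkeeping correction: in your displayed intermediate bound the penalty term should carry $\log^2 n$, not $\log n$, since Theorem~\ref{thm:lasso-oracle} contributes a factor $\|u^*\|_0\log n$ and $\|u^*\|_0 = O(\|w^*\|_0\,\tw(\Theta)\log n)$; and correspondingly $A^2\log n = \Theta(\log(n/\delta))$, not $\Theta(\log^2(n/\delta))$. Putting $A^2 \cdot \log^2 n = \Theta(\log(n/\delta)\cdot\log n) \le O(\log^2(n/\delta))$ gives the stated bound. Your closing parenthetical shows you understood the source of the $\log^2$, so this is only a slip in the displayed formula, not a gap in the argument.
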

\begin{proof}
This follows by combining Theorem~\ref{thm:sst} and Theorem~\ref{thm:lasso-oracle} as in Theorem~\ref{thm:lasso-recovery}.
\end{proof}

\section{Variant of IHT Analysis}\label{apdx:iht}
In this Appendix, we prove the variant of the IHT guarantee we stated
in Lemma~\ref{lem:iht}. The proof we give is a variant of the IHT analysis
from \cite{blumensath2009iterative}.\\\\
\noindent Algorithm~\textbf{IHT}($\mathcal{S},T,X,Y$):
\begin{enumerate}
    \item Set $w_0 = 0$.
    \item For $t = 1$ to $T$:
    \begin{enumerate}
        \item Set $u_{t} = w_{t - 1} + \frac{1}{m} X^T(Y - X w_{t - 1})$.
        \item Set $w_{t} = \Proj_{w : \supp(w) \in \mathcal{S}}[u_t]$.
    \end{enumerate}
    \item Return $w_T$.
\end{enumerate}
\begin{lemma}\label{lem:projection-fact}
Suppose $\mathcal{S}$ is a family of subsets of $[n]$ and let $\Proj_{\mathcal S}(v) = \arg\min_{\supp(v) \subset S \in \mathcal{S}} v$ be the corresponding projection map onto $\mathcal{S}$-sparse vectors (where among minimizers, the result is chosen by a fixed but arbitrary rule). Then for any $v \in \mathbb{R}^n$, let $w = \Proj_{\mathcal S}(u)$ be supported on $S$ and suppose $w'$ is any vector supported on $S' \in \mathcal{S}$. Then for any $T \supset S \cup S'$ we have
\[ \|w - v_T\| \le \|w' - v_T\| \]
where $v_T$ denotes the projection of $v$ onto vectors supported on $T$. (Informally, this means $w = \Proj_{\mathcal{S}} v_T$.)
\end{lemma}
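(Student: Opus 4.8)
The plan is to reduce everything to the Pythagorean theorem applied to the orthogonal splitting of $\RR^n$ into coordinates in $T$ and coordinates in $T^c$. The key observation is that for \emph{any} vector $u$ whose support is contained in $T$, the difference $v - v_T$ lives in the span of the coordinates $T^c$ while $v_T - u$ lives in the span of the coordinates $T$, so these two vectors are orthogonal and
\[
\|v - u\|^2 = \|v - v_T\|^2 + \|v_T - u\|^2 .
\]

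First I would apply this identity to $u = w$ and to $u = w'$; this is legitimate since $\supp(w) \subseteq S \subseteq T$ and $\supp(w') \subseteq S' \subseteq T$ by the hypothesis $T \supseteq S \cup S'$. This gives
\[
\|v - w\|^2 = \|v - v_T\|^2 + \|v_T - w\|^2, \qquad \|v - w'\|^2 = \|v - v_T\|^2 + \|v_T - w'\|^2 .
\]
Next I would invoke the definition of $w = \Proj_{\mathcal S}(v)$: since $w'$ is supported on $S' \in \mathcal S$, it is a feasible point for the minimization defining $\Proj_{\mathcal S}$, so $\|v - w\| \le \|v - w'\|$. Subtracting the common term $\|v - v_T\|^2$ from both displayed identities then yields $\|v_T - w\|^2 \le \|v_T - w'\|^2$, which is exactly the desired inequality $\|w - v_T\| \le \|w' - v_T\|$.

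There is essentially no hard step here; the only things to be careful about are (i) making sure the orthogonality claim is stated correctly (the two pieces are supported on disjoint coordinate sets, hence orthogonal regardless of the values of $v$), and (ii) that both $w$ and $w'$ are admissible in the Pythagorean identity because their supports sit inside $T$. One should also note the harmless convention that ties in the $\arg\min$ are broken by a fixed rule, so $w$ is well-defined; the inequality we prove does not depend on which minimizer is chosen. The informal consequence worth recording afterward is that $w = \Proj_{\mathcal S}(v_T)$ as well, since the same computation shows $v_T$ and $v$ induce the same ordering of candidate $\mathcal S$-sparse vectors supported inside $T$.
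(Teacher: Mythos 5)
Your proof is correct and follows the same Pythagorean-theorem argument the paper uses: decompose $\|v-u\|^2 = \|v-v_T\|^2 + \|v_T-u\|^2$ for any $u$ supported in $T$, apply it to $u=w$ and $u=w'$, and then subtract the common term using the optimality $\|v-w\|\le\|v-w'\|$. You also correctly read the (apparent) typo $\Proj_{\mathcal S}(u)$ as $\Proj_{\mathcal S}(v)$, which is clearly what is intended.
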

\begin{proof}
By the Pythagorean Theorem,
\[ \|w - v\|^2 = \|w - v_T\|^2 + \|v_T - v\|^2 \]
since $v_T - v$ is supported outside of $T$ and $w - v_T$ is supported inside of $T$. The analogous identity also holds for $w'$. Hence
\[ \|w - v_T\|^2 = \|w - v\|^2 - \|v - v_T\|^2 \le \|w' - v\|^2 - \|v - v_T\|^2 = \|v - v_T\|^2 \]
where the inequality follows from the projection property, and this proves the result. 
\end{proof}
Next we recall a deviation inequality for the norm of a sub-Gaussian random vector, which follows from a net argument.
\begin{lemma}[Theorem 1.19 of \cite{rigollet2015high}]\label{lem:norm-bound}
Suppose that $X$ is a mean-zero random vector in $\mathbb{R}^n$ satisfying the sub-Gaussianity inequality
\[ \max_{w : \|w\|_2 = 1} \log \EE \exp \lambda \langle w, X \rangle \le \sigma^2 \lambda^2/2 \]
for some $\sigma > 0$.
Then with probability at least $1 - \delta$,
\[ \|X\|_2 \lesssim \sigma \sqrt{n} + \sigma \sqrt{\log(2/\delta)}. \]
\end{lemma}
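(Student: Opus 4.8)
The plan is to prove this by the standard $\epsilon$-net (discretization) argument on the unit sphere $S^{n-1} = \{w \in \RR^n : \norm{w}_2 = 1\}$. The hypothesis is precisely the statement that every one-dimensional marginal $\langle w, X\rangle$ of $X$ (for $w$ a unit vector) is a $\sigma^2$-sub-Gaussian real random variable, and $\norm{X}_2 = \sup_{w \in S^{n-1}} \langle w, X\rangle$, so it suffices to control the supremum of a collection of sub-Gaussian variables indexed by the sphere. Replacing the sphere by a fine finite net reduces this to a union bound.

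Concretely, I would proceed as follows. \textbf{Step 1 (one direction).} For a fixed unit vector $w$, the Chernoff bound together with the hypothesis $\log\EE\exp(\lambda\langle w,X\rangle) \le \sigma^2\lambda^2/2$, optimized over $\lambda > 0$, gives $\Pr[\langle w,X\rangle \ge t] \le \exp(-t^2/(2\sigma^2))$ for all $t \ge 0$. \textbf{Step 2 (net).} Fix a $1/2$-net $\mathcal{N}$ of $S^{n-1}$, i.e. a subset such that every point of $S^{n-1}$ is within Euclidean distance $1/2$ of $\mathcal{N}$; the usual volumetric bound gives $|\mathcal{N}| \le 5^n$. \textbf{Step 3 (approximation).} I would check the classical inequality $\norm{X}_2 \le 2\max_{w \in \mathcal{N}} \langle w, X\rangle$: if $w^*$ attains $\norm{X}_2 = \langle w^*, X\rangle$ and $v \in \mathcal{N}$ satisfies $\norm{w^*-v}_2 \le 1/2$, then $\langle w^*,X\rangle = \langle v,X\rangle + \langle w^*-v,X\rangle \le \max_{w\in\mathcal{N}}\langle w,X\rangle + \tfrac12\norm{X}_2$, and rearranging yields the bound. \textbf{Step 4 (union bound).} Combining Steps 1--3, for any $t \ge 0$,
\[ \Pr\Big[\norm{X}_2 \ge 2t\Big] \le \Pr\Big[\max_{w\in\mathcal{N}}\langle w,X\rangle \ge t\Big] \le |\mathcal{N}|\exp(-t^2/(2\sigma^2)) \le \exp\big(n\log 5 - t^2/(2\sigma^2)\big). \]
Choosing $t = \sigma\sqrt{2}\big(\sqrt{n\log 5} + \sqrt{\log(1/\delta)}\big)$ makes the right-hand side at most $\delta$ (using $(\sqrt a+\sqrt b)^2 \ge a+b$), so with probability at least $1-\delta$ we get $\norm{X}_2 \le 2t \lesssim \sigma\sqrt{n} + \sigma\sqrt{\log(1/\delta)} \le \sigma\sqrt{n} + \sigma\sqrt{\log(2/\delta)}$, which is the claim.

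I do not expect any genuine obstacle here: the argument is entirely routine and is exactly the proof of Theorem 1.19 of \cite{rigollet2015high}. The only points needing minor care are the bookkeeping of the net cardinality $|\mathcal{N}| \le 5^n$ (and the corresponding $\log 5$ constant inside the exponent) and the verification of $\norm{X}_2 \le 2\max_{w\in\mathcal{N}}\langle w,X\rangle$; both are standard, and one could alternatively simply invoke the cited reference and omit the net computation.
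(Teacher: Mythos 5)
Your proof is correct and is exactly the standard $\epsilon$-net argument; the paper itself supplies no proof and simply cites Theorem~1.19 of \cite{rigollet2015high}, whose proof is this same discretization-plus-union-bound computation. All the bookkeeping (the tail bound from the MGF, the $5^n$ net cardinality, the inequality $\|X\|_2 \le 2\max_{w\in\mathcal{N}}\langle w,X\rangle$, and the choice of $t$) checks out.
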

\begin{lemma}\label{lem:process-bound}
Suppose $X : m \times n$ with $\hat{\Sigma} = \frac{1}{m} X^T X$ is $(kr,\beta)$-RIP and $\xi$ is a random vector in $\mathbb{R}^m$ with independent $\sigma^2$-sub-Gaussian entries, then with probability at least $1 - \delta$
\[ \max_S \frac{1}{\sqrt{m}} \|(X^T \xi)_S\|_2 \lesssim (1 + \beta) \sigma \sqrt{kr + \log(2/\delta)} \]
where the maximum ranges over all $k$-group-tree-sparse sets $S$, where the groups corresponding to the nodes of the tree have size at most $r$.
\end{lemma}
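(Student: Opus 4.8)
\textbf{Proof plan for Lemma~\ref{lem:process-bound}.} The plan is to bound the restricted norm $\frac{1}{\sqrt m}\|(X^T\xi)_S\|_2$ for a single fixed $k$-group-tree-sparse support $S$ by a one-dimensional subgaussian argument combined with the RIP hypothesis, and then to union bound over all such supports using Lemma~\ref{lem:num-tree-sparse}, which says there are only $e^{O(k)}$ of them. This last point is the whole reason the bound is free of $\log n$: a naive union bound over all $(\le kr)$-element subsets of $[n]$ would cost $\Theta(kr\log n)$ in the exponent and reintroduce the logarithmic factor this lemma is designed to avoid. Note that in this lemma $X$ (hence $\hat\Sigma$) is deterministic, so all probabilities are over the randomness of $\xi$ alone, and no conditioning subtlety arises.

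First I would fix a $k$-group-tree-sparse support $S$; since it is a union of $k$ node-groups of size at most $r$, we have $|S|\le kr$. For any unit vector $w$ with $\supp(w)\subseteq S$, $\langle w, X^T\xi\rangle = \langle Xw,\xi\rangle$, and since the entries of $\xi$ are independent and $\sigma^2$-subgaussian, $\log\EE\exp(\lambda\langle Xw,\xi\rangle)\le \tfrac12\sigma^2\lambda^2\|Xw\|_2^2$. By the $(kr,\beta)$-RIP hypothesis applied to the $|S|$-sparse vector $w$, $\|Xw\|_2^2 = m\, w^T\hat\Sigma w \le (1+\beta)m\|w\|_2^2$. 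Hence $(X^T\xi)_S$, regarded as a random vector in $\mathbb{R}^{|S|}$, has all of its directional marginals $\sigma^2(1+\beta)m$-subgaussian, which is exactly the hypothesis of Lemma~\ref{lem:norm-bound}. Applying it, for any $\delta'\in(0,1)$ we get with probability at least $1-\delta'$ that
\[ \|(X^T\xi)_S\|_2 \lesssim \sigma\sqrt{(1+\beta)m}\left(\sqrt{|S|}+\sqrt{\log(2/\delta')}\right) \le \sigma\sqrt{(1+\beta)m}\left(\sqrt{kr}+\sqrt{\log(2/\delta')}\right). \]

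Next I would set $\delta' = \delta(k+1)/(2e)^k$ and union bound over the at most $(2e)^k/(k+1)$ distinct $k$-group-tree-sparse supports guaranteed by Lemma~\ref{lem:num-tree-sparse}, so that the displayed bound holds simultaneously for every such $S$ with probability at least $1-\delta$. Since $\log(2/\delta') = \log(2/\delta)+k\log(2e)-\log(k+1) \le \log(2/\delta)+O(k)$ and $r\ge 1$ gives $k\le kr$, we have $\sqrt{kr}+\sqrt{\log(2/\delta')} \lesssim \sqrt{kr+\log(2/\delta)}$. Dividing through by $\sqrt m$ and using $\sqrt{1+\beta}\le 1+\beta$ yields $\frac{1}{\sqrt m}\|(X^T\xi)_S\|_2 \lesssim (1+\beta)\sigma\sqrt{kr+\log(2/\delta)}$, as claimed. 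The argument is essentially routine; the only real content is (i) invoking RIP at the right sparsity level $kr$ to get the subgaussian parameter, and (ii) running the union bound over group-tree-sparse \emph{supports} rather than arbitrary sparse sets.
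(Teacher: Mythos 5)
Your proof is correct and takes essentially the same route as the paper: fix a $k$-group-tree-sparse support $S$ (so $|S|\le kr$), observe that $(X^T\xi)_S$ is sub-Gaussian with parameter controlled by $\sigma^2(X^TX)_{SS}\preceq (1+\beta)mI$ via RIP, apply Lemma~\ref{lem:norm-bound}, and union bound over the $e^{O(k)}$ supports from Lemma~\ref{lem:num-tree-sparse}. You are somewhat more explicit than the paper in spelling out the choice $\delta'=\delta(k+1)/(2e)^k$ and in noting that $\log(2/\delta')=\log(2/\delta)+O(k)$ absorbs into $kr$, but this is just filling in arithmetic the paper leaves implicit.
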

\begin{proof}
The vector $(X^T \xi)_S$ is a sub-Gaussian vector  with covariance matrix $\sigma^2 (X^TX)_{SS}$. By the RIP property we know $\frac{1}{m} (X^T X)_{SS} \preceq 1 + \beta$ and so
with probability at least $1 - \delta$
\[ \frac{1}{\sqrt{m}} \|(X^T \xi)_S\|_2 \lesssim (1 + \beta) \sigma \sqrt{kr} + \sigma\sqrt{\log(2/\delta)} \]
by Lemma~\ref{lem:norm-bound}.
Taking the union bound over the $e^{O(k)}$ possible supports $S$ given by Lemma~\ref{lem:num-tree-sparse} gives the result.
\end{proof}
\begin{lemma}\label{lem:iht-apdx}
Suppose $X : m \times n$ and $\hat \Sigma = \frac{1}{m} X^T X$ is $(3kr,\beta)$-RIP 
with $\beta < 1/3$ and $\mathbb{Y} = X w^* + \xi$ where $w^*$ is $k$-group-tree-sparse with respect to a tree with node groups of size at most $r$, 
and $\xi$ is a random vector in $\mathbb{R}^m$ with independent $\sigma^2$-subGaussian entries. Then Iterative Hard Thresholding with projection onto the set of $k$-tree-sparse vectors and $T = O(\log(\|w^*\|nm/\sigma))$ succeeds to recover $w$ such that
\[ \|w^* - w_t\| \lesssim \sigma \sqrt{\frac{kr + \log(2/\delta)}{m}} \]
with probability at least $1 - \delta$ over the randomness of $\xi$.
\end{lemma}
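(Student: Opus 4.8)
\textbf{Proof plan for Lemma~\ref{lem:iht-apdx}.} The plan is to run the standard Blumensath--Davies per-iteration contraction argument, with the only twist being that all supports we encounter remain \emph{group-tree-sparse} so that the order-$3kr$ RIP hypothesis suffices. Fix an iteration $t$ and write $u_t = w_{t-1} + \tfrac1m X^T(Y - Xw_{t-1})$, so that $w_t = \Proj_{\mathcal S}(u_t)$ where $\mathcal S$ is the family of $k$-group-tree-sparse supports. Let $T_t = \supp(w_t)\cup\supp(w^*)\cup\supp(w_{t-1})$. Since each of $\supp(w_t)$, $\supp(w^*)$, $\supp(w_{t-1})$ is contained in a union of $k$ node groups forming a rooted subtree of $\mathcal T$, their union is contained in a union of at most $3k$ node groups forming a rooted subtree, i.e. $T_t$ is $3k$-group-tree-sparse and hence $|T_t|\le 3kr$. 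By Lemma~\ref{lem:projection-fact}, $w_t = \Proj_{\mathcal S}\big((u_t)_{T_t}\big)$, and since $w^*\in\mathcal S$ is itself supported inside $T_t$, optimality of the projection gives $\|w_t - (u_t)_{T_t}\| \le \|w^* - (u_t)_{T_t}\|$, whence by the triangle inequality $\|w_t - w^*\| \le 2\,\|w^* - (u_t)_{T_t}\|$.

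Next I would expand the right-hand side. Because $w_{t-1}$ and $w^*$ are supported on $T_t$, writing $\hat\Sigma = \tfrac1m X^TX$ we have
\[ (u_t)_{T_t} - w^* = \big(I - \hat\Sigma_{T_tT_t}\big)(w_{t-1} - w^*) + \tfrac1m\big(X^T\xi\big)_{T_t}. \]
Since $|T_t|\le 3kr$, the $(3kr,\beta)$-RIP assumption gives $\big\|I - \hat\Sigma_{T_tT_t}\big\|_{\mathrm{op}} \le \beta$, and Lemma~\ref{lem:process-bound} (applied with $3k$ in place of $k$, which only changes constants since $\beta<1/3$) gives, on a single event of probability at least $1-\delta$ over $\xi$, that
\[ \rho := \sup_{S\ 3k\text{-group-tree-sparse}} \tfrac1m\big\|(X^T\xi)_S\big\|_2 \lesssim \frac{\sigma\sqrt{kr + \log(2/\delta)}}{\sqrt m}. \]
Combining, on this event and uniformly over $t$,
\[ \|w_t - w^*\| \le 2\beta\,\|w_{t-1} - w^*\| + 2\rho. \]

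Finally I would unroll this recursion. With $w_0 = 0$ and $2\beta < 2/3 < 1$ we get $\|w_T - w^*\| \le (2\beta)^T\|w^*\| + \tfrac{2\rho}{1-2\beta} \le (2\beta)^T\|w^*\| + 6\rho$. Since $\rho \ge \Omega(\sigma\sqrt{kr}/\sqrt m) \ge \Omega(\sigma/m)$, choosing $T = O(\log(\|w^*\|/\rho)) = O(\log(\|w^*\| nm/\sigma))$ makes $(2\beta)^T\|w^*\| \le \rho$, giving $\|w_T - w^*\| \le 7\rho \lesssim \sigma\sqrt{(kr+\log(2/\delta))/m}$, as claimed. (If $\sigma$ is zero or extremely small relative to $\|w^*\|/(nm)$ one simply runs the argument with an error parameter $\sigma' > \sigma$ substituted for $\sigma$, as in the Remark following Lemma~\ref{lem:iht}.) I do not expect a genuine obstacle here: the entire argument is routine once the contraction inequality is in hand, and the only point requiring care is the combinatorial observation that a union of three rooted $k$-subtrees is a rooted $3k$-subtree, which is exactly what makes the order-$3kr$ RIP (rather than order-$3k\cdot(\text{ambient})$) the right hypothesis, and the corresponding use of the group-tree-sparse version of Lemma~\ref{lem:process-bound} with its $e^{O(k)}$ (rather than $n^{O(k)}$) union bound from Lemma~\ref{lem:num-tree-sparse}.
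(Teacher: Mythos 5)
Your proof is correct and takes essentially the same route as the paper's: a Blumensath--Davies contraction step obtained by comparing $w_t$ with the restriction of $u_t$ to the combined support, using $(3kr,\beta)$-RIP on that set, applying Lemma~\ref{lem:process-bound} to bound the noise term uniformly over group-tree-sparse supports, and unrolling the recursion. Your bookkeeping is a bit cleaner (you work with $T_t = \supp(w_t)\cup\supp(w^*)\cup\supp(w_{t-1})$ throughout rather than the paper's two-stage choice of $S$ and $S\cup\supp(w_{t-1})$, and you carry the $\sqrt{kr}$ and $\sqrt{m}$ factors correctly where the paper's last display drops them), but the argument is the same.
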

\begin{proof}
Given Lemma~\ref{lem:process-bound}, this result follows by adapting the analysis of IHT from \cite{blumensath2009iterative}; we include the detailed proof here for completeness. The main step in the analysis is a per-timestep inequality. Define $S$ to be $\supp(w_t) \cup \supp(w^*)$ and $v_t := (u_t)_S$, and observe that
\begin{equation}\label{eqn:projection-fact}
\|w_t - v_t\| \le \|w^* - v_t\| 
\end{equation}
as otherwise $w^*$ would be a $k$-tree-sparse vector closer to $u_t$ than the projection $w_t$: this is due to the Pythagorean Theorem identity
\[ \|w_t - u_t\|^2 = \|w_t - v_t\|^2 + \|v_t - u_t\|^2 \]
which follows from the fact that $w_t - v_t$ and $v_t - u_t$ have disjoints supports,
and the corresponding identity for $w^*$ which follows for the same reason.
Hence by the triangle inequality and \eqref{eqn:projection-fact},
\[ \|w^* - w_t\| \le \|w^* - v_t\| + \|v_t - w_t\| \le 2 \|w^* - v_t\|. \]
Since
\[ u_t = w_{t - 1} + \frac{1}{m} X^T X (w^* - w_{t - 1}) + \frac{1}{m} X^T \xi = w^* + (\frac{1}{m} X^T X - I)(w^* - w_{t - 1}) + \frac{1}{m}X^T \xi \]
we see that
\[ v_t - w^* = \left[(\frac{1}{m} X^T X - I)(w^* - w_{t - 1})\right]_S + \frac{1}{m}(X^T \xi)_S\]
we conclude from the RIP property applied to the set $S \cup \supp(w_{t - 1})$ of tree size at most $3k$ 
that
\[ \|w^* - w_t\|_2 \le 2 \|w^* - v_t\| \le 2 \beta \|w^* - w_{t - 1}\| + \frac{2}{m}\|(X^T \xi)_S\|.\]
Using $\beta < 1/3$ and $\frac{2}{m} \|(X^T \xi)_S\| \le \sigma\sqrt{k + \log(2/\delta)}$ by Lemma~\ref{lem:process-bound} and applying this guarantee inductively guarantees we achieve the desired error guarantee $\|w^* - w_T\|_2 = O(\sigma\sqrt{\frac{k + \log(2/\delta)}{m}})$.
\end{proof}
\section{Sparse Linear Regression with a Sparse Covariance}\label{sec:sparse-covariance}
In this setting, we consider the setting where the \emph{covariance} matrix $\Sigma$ is $d$-sparse instead of the \emph{precision} matrix, so that many pairs of coordinates of $X_0 \sim N(0,\Sigma)$  are uncorrelated with each other. In this setting, we use a randomized preconditioner based on a site percolation process on the support of $\Sigma$ and show it succeeds with probability $1/d^{O(k)}$ where $k$ is the sparsity of the weight vector, so running multiple times gives a polynomial time and sample efficient algorithm when $k \log d = O(\log n)$.
\begin{theorem}[Theorem 1 of \cite{krivelevich2015phase}]\label{thm:site-percolation}
There exists absolute constants $C,\epsilon_0 > 0$ such that the following is true.
Suppose that $G$ is a graph on vertex set $[n]$ where every vertex has maximum degree $d$.
Suppose that $U \subset [n]$ is formed by including each vertex of $G$ independently with probability $p$ (this is called site percolation). For any $\epsilon \in (0,\epsilon_0)$  and $p = (1 - \epsilon)/d$, the probability that the largest connected component of $U$ has more than $4\log(n)/\epsilon^2$ vertices is at most\footnote{The explicit upper bound on the probability is not stated but can easily be extracted from the proof.}  $C n^{-1/6}$.
\end{theorem}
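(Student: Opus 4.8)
The plan is to prove this via the standard subcritical-percolation exploration argument: bound the size of the component of a fixed vertex by a Binomial (branching-process) tail, and then union-bound over all $n$ vertices. Throughout, view the percolation as assigning to each vertex $w\in[n]$ an independent label $\zeta_w\sim\mathrm{Ber}(p)$, with $\zeta_w=1$ meaning $w\in U$.

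First I would fix $v\in[n]$, condition on $v\in U$, and run a lazy breadth-first exploration of the component $C(v)$ of $v$ in the percolated graph, revealing labels only as needed. Starting from the (open) root $v$, each already-revealed vertex triggers at most $d-1$ queries of previously-unseen neighbours (and $v$ itself at most $d$), each query reading one fresh $\mathrm{Ber}(p)$ label. The combinatorial heart of the argument is that to discover the $k$-th vertex of $C(v)$ the exploration performs at most $N:=(d-1)(k-1)+1\le dk$ label queries (by induction on the BFS order, vertex $u_j$ for $j\ge 2$ is discovered within the first $d+(d-1)(j-2)=(d-1)(j-1)+1$ queries), and among those queries at least $k-1$ returned $1$ — these are exactly the queries that revealed vertices $u_2,\dots,u_k$. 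Since the $\le N$ queries touch distinct iid $\mathrm{Ber}(p)$ labels, this gives, by monotonicity of Binomial tails in the number of trials,
\[ \Pr\bigl[|C(v)|\ge k \,\big|\, v\in U\bigr]\le \Pr\bigl[\mathrm{Bin}(dk,p)\ge k-1\bigr], \]
and hence, a fortiori, $\Pr[|C(v)|\ge k]\le \Pr[\mathrm{Bin}(dk,p)\ge k-1]$.

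Next I would estimate the Binomial tail by a Chernoff bound. With $p=(1-\epsilon)/d$ the mean is $dkp=k(1-\epsilon)$, which for $k>1/\epsilon$ lies strictly below the target $k-1$; writing $k-1=(1+\delta)\,k(1-\epsilon)$ one gets $\delta\ge \epsilon(1-o(1))$, and the upper-tail bound $\Pr[\mathrm{Bin}(n,q)\ge(1+\delta)nq]\le\exp(-nq\delta^2/3)$ yields $\Pr[|C(v)|\ge k]\le\exp\bigl(-\tfrac13 k(1-\epsilon)\epsilon^2 + O(1)\bigr)$. Plugging in $k\ge 4\log n/\epsilon^2$, the exponent is at least $\tfrac43(1-\epsilon)\log n - O(1)$, which for $\epsilon<\epsilon_0$ with $\epsilon_0$ a small absolute constant (e.g. $\epsilon_0=1/8$) exceeds $\tfrac76\log n - O(1)$, so $\Pr[|C(v)|\ge 4\log n/\epsilon^2]\le C'\,n^{-7/6}$. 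Finally, since any connected component of $U$ of size more than $4\log n/\epsilon^2$ contains some vertex $v$ with $|C(v)|$ that large, a union bound over the $n$ choices of $v$ gives that the largest component exceeds $4\log n/\epsilon^2$ with probability at most $C'\,n\cdot n^{-7/6}=C\,n^{-1/6}$, proving the claim.

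The main obstacle is purely bookkeeping rather than conceptual: making the three constants $4$, $\epsilon_0$, and $1/6$ line up forces an honest (not crude) Chernoff estimate and careful handling of the lower-order $-1$'s appearing in $k-1$ and in $N=(d-1)(k-1)+1$ (equivalently, controlling the $O(1)$ and $o(1)$ terms above, e.g. by restricting to $n$ above an absolute threshold or shrinking $\epsilon_0$ slightly). One should also double-check the claim that $(d-1)(k-1)+1$ queries suffice to discover $k$ vertices, but this is the short induction indicated above. Beyond this there are no analytic subtleties; the statement is a textbook subcritical site-percolation estimate and everything goes through with the stated parameters.
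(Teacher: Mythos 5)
The paper does not prove this statement at all: it is imported verbatim as Theorem~1 of the cited reference, with only a footnote remarking that the explicit probability bound ``can easily be extracted from the proof'' there. So there is no in-paper argument to compare against; what you have written is a self-contained proof of a black-box ingredient, which is strictly more than the paper provides. Your argument is the standard subcritical exploration/branching-process bound, and it is correct. The combinatorial count checks out: the root issues at most $d$ queries and each later vertex at most $d-1$, so $u_j$ is discovered within $d+(d-1)(j-2)=(d-1)(j-1)+1$ queries, all to distinct vertices, hence distinct i.i.d.\ labels; the event $|C(v)|\ge k$ forces at least $k-1$ successes among at most $dk$ such labels, and monotonicity of the binomial tail in the number of trials is the standard coupling. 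The Chernoff bookkeeping also works out with room to spare: writing the exponent exactly as $(k\epsilon-1)^2/(3k(1-\epsilon))\ge k\epsilon^2/3-1$ for $\epsilon<1/2$, the choice $k\ge 4\log n/\epsilon^2$ gives a tail of order $n^{-4/3}$ per vertex, so the union bound yields $O(n^{-1/3})\le Cn^{-1/6}$ without even needing to shrink $\epsilon_0$ below $1/2$; your more conservative $\epsilon_0=1/8$ and $n^{-7/6}$ route is also fine. The only thing I would make explicit in a written version is the coupling justifying $\Pr[|C(v)|\ge k\mid v\in U]\le\Pr[\mathrm{Bin}(dk,p)\ge k-1]$ (pad the at most $N$ revealed labels with unrevealed i.i.d.\ ones to reach exactly $dk$ trials), but that is routine.
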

Observe that under the site percolation process of Theorem~\ref{thm:site-percolation}, the probability that any fixed set $S$ with $|S| = k$ is contained in $U$ is $p^k$. Take $\epsilon = \epsilon_0/2$ so $p^{-k} = (cd)^k$ for some $c > 1$.
Therefore if $n^{1/6} = \Omega((cd)^k)$, it will take $O(\log(1/\delta) (c d)^k)$ many percolations to sample a $U$ containing $S$ and with largest component size $O(\log n)$.

Now suppose we want to solve a sparse linear regression where $\Sigma$ is $d$-sparse, $w$ is $k$-sparse, and $S$ is the support of $w$. Observe that for $U$ satisfying the conclusion of Theorem~\ref{thm:site-percolation} $\Sigma_{UU}$ is block-diagonal with blocks of size $O(\log n)$ corresponding to the connected components. This means that if $w$ is supported on $U$, we can recover $w$ by whitening the connected components and running the Lasso on data which now has isotropic covariance.
Using the analysis of the Lasso as above, we see that performing repeated site percolations and running the preconditioned Lasso gives a $poly((c d)^k, n,\log(1/\delta))$ time algorithm which achieves a rate of $\|\hat w - w^*\|_{\Sigma}^2 = O(\sigma^2 k^2\log(n/\delta)/m)$.
\end{document}